\newcommand{\ip}[1]{\langle {#1} \rangle }
\newcommand{\clipc}{\textup{clip}_c}
\newcommand{\clip}{C_{\textup{clip}}}
\newcommand{\diff}{\mathrm{d}}
\newcommand{\erf}{\mathrm{erf}}
\def\P{\mathbb{P}}
\def\Cov{\mathrm{Cov}}
\def\Var{\mathrm{Var}}
\def\tr{\mathrm{tr}}
\def\R{\mathbb{R}}
\def\cF{\mathcal{F}}
\def\cK{\mathcal{K}}
\newcommand{\opnorm}[1]{\left\lVert#1\right\rVert_{\textup{op}}}
\def\b0{{0}}
\def\>{\rangle}
\newcommand{\E}{\mathbb{E}}
\newcommand{\norm}[1]{\left\|#1\right\|}
\newcommand{\evmax}[1]{\lambda_{\rm max}\left(#1\right)}
\newcommand{\evmin}[1]{\lambda_{\rm min}\left(#1\right)}
\def\op{\mathrm{op}}
\def\tr{\mathop{\rm tr}\nolimits}
\def\argmax{\mathop{\rm arg\,max}\limits}
\def\min{\mathop{\rm min}\nolimits}
\def\max{\mathop{\rm max}\nolimits}
\def\tdeta{\tilde{\eta}}
\numberwithin{equation}{section}
\newtheorem{claim}{Claim}[section]
\newtheorem{lemma}[claim]{Lemma}
\newtheorem{remark}[claim]{Remark} 
\newtheorem{assumption}{Assumption}
\newtheorem{theorem}{Theorem}
\newtheorem{proposition}[claim]{Proposition}
\newtheorem{corollary}[claim]{Corollary}
\newtheorem{definition}[claim]{Definition}
\author{Simone Bombari\thanks{Institute of Science and Technology Austria (ISTA). Emails: \texttt{\{simone.bombari, marco.mondelli\}@ist.ac.at}.}\;,
\;\;Jialei Luo\thanks{University of Oxford. Email: \texttt{jialei.luo@queens.ox.ac.uk}.}\;,\;\;Inbar Seroussi\thanks{Tel Aviv University. Email: \texttt{inbarser@tauex.tau.ac.il}.}\;,
\;\;Marco Mondelli\footnotemark[1]}
\title{High-Dimensional Private Linear Regression \\ with Optimal Rates}
\begin{document}

\newtheorem*{innercustomthm}{\customthmname}
\newcommand{\customthmname}{}
\newenvironment{customthm}[1]
  {\renewcommand{\customthmname}{#1}\begin{innercustomthm}}
  {\end{innercustomthm}}

\maketitle

\begin{abstract}
    Differentially private (DP) linear regression has received significant attention in the recent theoretical literature, with several approaches proposed to improve error rates.
    Our work considers the popular high-dimensional regime with random data, where the number of training samples $n$ and the input dimension $d$ grow at a proportional rate $d / n \to \gamma$, and it studies a family of one-pass DP gradient descent (DP-GD) algorithms satisfying $\rho^2 / 2$ zero concentrated DP. In this setting, we establish a deterministic equivalent for the DP-GD trajectory in terms of a system of ordinary differential equations. This 
    allows to analyze the effect of gradient clipping constants that are smaller than the typical norm of the per-sample gradients --- a setup 
    shown to improve performance in practice. For well-conditioned data, we show that DP-GD, upon properly choosing clipping constant and learning rate, achieves the non-asymptotic risk of $O(\gamma + \gamma^2 / \rho^2)$, and we establish that this rate is minimax optimal.
    Then, we consider the ill-conditioned case where the data covariance spectrum follows a power-law distribution, and we show that the risk displays a power-like scaling law in $\gamma$, highlighting the change in the exponent as a function of the privacy parameter $\rho$. Overall, our analysis demonstrates the benefits of practical algorithmic design choices, including aggressive gradient clipping and decaying learning rate schedules. 
\end{abstract}

\section{Introduction}\label{sec:intro}
    Differential privacy (DP) \citep{dwork2006} has consolidated as the standard framework for privacy guarantees and data protection in machine learning. This has motivated an extensive research effort on theoretical problems connected to statistics and optimization \citep{chaudhuri11a, bassily2014differentially, bassily19, kamath19a, kamath20a, cai21, cai2025scoreattacklowerbound}, on its implementation in training large scale deep learning architectures \citep{li2022large, de2022, mckenna2025}, and on its adoption for the release of public data \citep{Hawes2020Implementing, desfontainesblog20211001}.
    This framework quantifies privacy through numerical parameters (e.g.\ $\rho$ in Definition \ref{def:zcdp}), which upper bound the impact a single data point can have on the output of the algorithm. Then, guaranteeing a fixed level of DP imposes specific constraints on the learning algorithm, which come with a performance cost: the more stringent the privacy requirements (corresponding to smaller values of $\rho$), the higher is such cost. 
    For example, in first-order optimization, these constraints are typically implemented through \emph{clipping} the per-sample gradients if the loss is not Lipschitz, limiting the number of data access via \emph{early stopping}, and adding independent \emph{noise} at every update \citep{song13dpsgd, bassily2014differentially, Abadi2016}.

    Understanding the cost of privacy in different learning settings and how to achieve optimal performance has been at the center of a rich line of work spanning over a decade, see e.g. \cite{chaudhuri11a, bassily2014differentially, bassily19, song2021evading, cai21, varshney2022nearly}. On the one hand, this involves establishing statistical lower bounds on the expected risk 
    any algorithm must incur when learning with differential privacy; on the other hand, this involves  defining efficient algorithms that provably achieve this risk. 
    A notorious example is the problem of \emph{DP linear regression}, where one is given $n$ input-label pairs $(x_i, y_i)$, sampled i.i.d.\ from a joint distribution $P_{XY}$ such that $y_i = x_i^\top \theta^* + z_i$, where $\theta^* \in \R^{d}$, $x_i \in \R^d$ has mean-0 and covariance $\Sigma \in \R^{d\times d}$, and $z_i \in \R$ is independent label noise with mean-0 and variance $\zeta^2$.
    Informally, the goal is to provide an algorithm $\mathcal M$ that, given the training set, outputs a parameter $\theta^p$ guaranteeing a required privacy budget and minimizing the \emph{test risk}:
    \begin{equation}\label{eq:Pintro}
        \mathcal R(\mathcal M) = \frac{1}{2} \E_{(x, y) \sim P_{XY}} \left[ \left( x^\top \theta^p - y \right)^2 \right] - \frac{\zeta^2}{2} = \frac{\norm{\Sigma^{1/2} \left( \theta^p - \theta^* \right)}_2^2}{2}.
    \end{equation}
On the side of statistical lower bounds, 
\cite{cai21} study the maximum risk over a set $\mathcal B$ of possible $\theta^*$, i.e. $\sup_{\theta^* \in \mathcal B} \mathcal R(\mathcal M)$, and Theorem 4.1 therein shows that, if $\mathbb M$ represents the set of all $\rho^2 / 2$ zCDP algorithms\footnote{\cite{cai21} give their result in terms of $(\varepsilon, \delta)$-DP, and \eqref{eq:lowerboundintro} can be obtained through the conversion from zCDP to approximated DP stated in Proposition \ref{prop:zCDP}.} and $\mathcal B$ is the unit Euclidean ball, the \emph{minimax rates} of this problem satisfy  
    \begin{equation}\label{eq:lowerboundintro}
        \inf_{\mathcal M \in \mathbb M} \sup_{\theta^* \in \mathcal B} \E \left[ \mathcal R (\mathcal M) \right] = \Omega \left( \frac{d}{n} + \frac{d^2}{\rho^2 n^2 \log n } \right),
    \end{equation}
    where the expectation is taken with respect to the randomness of the data and of the private algorithm. 

On the algorithmic side, a number of recent papers gives efficient algorithms with theoretical guarantees on the test risk \citep{wang2018revisiting, cai21, milionis22a, varshney2022nearly, liu2023near, brown2024private}. In particular, \cite{varshney2022nearly, liu2023near, brown2024private} propose gradient-based algorithms for which $n = \tilde \Omega(d)$ samples are sufficient to achieve non trivial performance when $\rho$ is of constant order (here, $d$ denotes the input dimension and the $\sim$ hides logarithmic factors). For example, \cite{brown2024private} show that a $\rho^2/2$ zero concentrated DP (zCDP) gradient descent (DP-GD) algorithm $\mathcal M$ achieves an error of
    \begin{equation}\label{eq:upperboundintro}
        \mathcal R(\mathcal M) = O \left( \frac{d}{n} + \frac{d^2}{n^2 \rho^2} \, \mathrm{poly} \log n \right).
    \end{equation}
We note that %
 \cite{varshney2022nearly, liu2023near, brown2024private} consider gradient updates on the square loss $(x_i^\top \theta - y_i)^2$. As this loss is non Lipschitz, in order to bound the \emph{sensitivity} of the parameter updates with respect to any single training data point, the gradient is \emph{clipped}. The clipping then ensures that the $\ell_2$ norm of the gradient does not exceed a custom hyper-parameter $\clip$, which in turn defines the amount of private noise introduced at every iteration (see, e.g., Algorithm 1 in \cite{brown2024private}). Notably, a common approach in these works is to set the \emph{clipping constant} $\clip$ to be sufficiently large so that, with high probability, gradient clipping effectively \emph{does not take place} throughout the algorithm. This simplifies the analysis, as the optimization becomes more easily comparable to a quadratic problem with noisy gradient updates, but it introduces a larger amount of private noise at every iteration, which gives the additional poly-logarithmic factor in \eqref{eq:upperboundintro}.
    In fact, the practical benefit of avoiding gradient clipping is unclear, 
    and it is empirically pointed out in \cite{brown2024private} that the lowest error occurs under \emph{significant clipping}, see Figure 3 therein. This last conclusion is also in agreement with 
    experimental evidence for deep learning optimization \citep{Kurakin2022, li2022large, de2022}, which supports setting a sufficiently small $\clip$, with the caveat of properly rescaling either the learning rate or the number of training iterations. At the same time, other empirical work suggests an inherent trade-off in the choice of $\clip$ \citep{amin2019a, andrew2021differentially}, due to the bias induced by small clipping constants \citep{Xiangyi2020, song2021evading}.
    In short, this picture remains poorly understood, also on the theoretical side: to the best of our knowledge, there is no analysis of the dynamics of DP-GD when $\clip$ is of the order of the per-sample gradients, nor precise results quantifying the potential benefits of small clipping constants. Hence, practitioners are left without clear guidance, highlighting the need for a theoretical understanding of the impact of this hyper-parameter. 

    In this work, we consider a high-dimensional setting with random data, where the number of training samples $n$ grows proportionally with the number of input dimensions $d$ so that $d / n \to \gamma \in (0, +\infty)$, with constant-order privacy parameter $\rho$. This \emph{proportional regime} has been object of extensive study in the context of high-dimensional statistics and machine learning \citep{hastie2019surprises, sur2019modern, mei2022generalization}, owing its popularity also to the fact that, in modern data science, the feature dimension of the dataset is often comparable to the number of samples.
We note that, in this regime,
    \emph{(i)} the lower bound in \eqref{eq:lowerboundintro} does not capture the cost of privacy, since the second term depending on $\rho$ is always negligible with respect to the first term; and \emph{(ii)} the upper bound in \eqref{eq:upperboundintro} gives vacuous guarantees, due to the extra poly-logarithmic factor. Non-trivial test risk guarantees in the proportional regime are established by \cite{dwork2025} for the 
    solution of minimization problems with output and objective perturbation. However, the analysis is limited to isotropic data covariance and it does not characterize how the risk behaves as a function of $\gamma$. As concerns gradient-based methods, \cite{dwork2025} provide a characterization in terms of dynamical mean-field theory (DMFT) equations \citep{celentano2021high, gerbelot2024rigorous, han2024entrywise}. However, this characterization is hard to interpret, as it does not provide an explicit trajectory or rate, but rather a self consistent, non-Markovian dynamical system (see, e.g., Theorem 7.1 in \cite{dwork2025}). In contrast, this work considers a DP-GD algorithm, deriving the non-asymptotic behavior of its test risk as $\gamma \to 0$. Our analysis allows to characterize the role of the hyper-parameters of the algorithm, such as the learning rate and the clipping constant, also in the regime where $\clip$ is smaller than the typical norm of the per-sample gradients. Then, upon optimally choosing these hyper-parameters, we show that DP-GD achieves the rate $O(\gamma + \gamma^2 / \rho^2)$, which we prove to be minimax optimal for algorithms respecting $\rho^2/2$-zCDP.

\subsection{Main results and contributions}

    More precisely, we focus on a one-pass DP-GD\footnote{We use the term GD rather than stochastic gradient descent (SGD) to emphasize that our algorithm does not incorporate techniques such as privacy amplification via sub-sampling or shuffling.} algorithm (see Algorithm \ref{alg:dp-sgd}), with privacy guarantees on its final output expressed in terms of zero concentrated DP (Proposition \ref{prop:DPguarantees}). 
    Our main technical contributions are summarized below.
    \begin{enumerate}[leftmargin=*]
        \item Following recent progress in high-dimensional optimization \citep{paquette24homogenization, collins2024hitting, collins2024high, marshall2025clip}, we show that, as $d, n \to \infty$ such that $d / n \to \gamma \in (0, +\infty)$, the test risk of DP-GD is well described by the solution of a deterministic family of $d$ coupled ordinary differential equations (ODEs), see 
        Theorem \ref{thm:deteq}.
        Importantly, this approach allows to study the setting where the clipping constant $\clip$ is smaller than the typical size of the per-sample gradients, 
        characterizing the error rates of DP-GD via the non-asymptotic behavior of the system of ODEs as $\gamma \to 0$.

        \item Next, we focus on the well-conditioned case, i.e., where the eigenvalues of the data covariance do not scale with $\gamma$. In this setting, we demonstrate the benefits of using small clipping constants and decaying the learning rate, see Proposition \ref{prop:alpha012}. Then, in Theorem \ref{thm:harmonicbody}, we show that, through a harmonically decreasing learning rate schedule and a properly set clipping constant, Algorithm \ref{alg:dp-sgd} achieves a test risk of
        \begin{equation}\label{eq:tightupperboundintro}
            \mathcal R(\mathcal M) = O \left( \gamma + \frac{\gamma^2}{\rho^2} \right),
        \end{equation}
        both in high probability and in expectation.
        These are the \emph{optimal rates} for this task: we give a matching minimax lower bound in Theorem \ref{thm:lowerbound}. 
        More precisely, we show that, if $\mathbb M$ represents the set of all $\rho^2 / 2$ zCDP algorithms and $\mathcal B$ is the unit Euclidean ball, then
        \begin{equation}\label{eq:tightlowerboundintro}
            \inf_{\mathcal M \in \mathbb M} \sup_{\theta^* \in \mathcal B} \E \left[ \mathcal R( \mathcal M) \right] = \Omega \left( \gamma + \frac{\gamma^2}{\rho^2} \right).
        \end{equation}

        \item Finally, we study the ill-conditioned case, i.e., where the covariance spectrum follows a power-law distribution. 
In this setting, we show that the 
test risk also follows a power law of the form $\gamma^{h}$, see Theorem \ref{thm:scalinglawsbody}. More precisely, via a tight analysis of the system of coupled ODEs, we characterize the exponent $h$ in terms of the privacy level $\rho$, the learning rate schedule and the decays of the covariance spectrum and of the target regression coefficients $\theta^*$. 
                
    \end{enumerate}

    Our results are illustrated via experiments on synthetically generated data as well as on standard datasets (MNIST, California housing prices). 
    These showcase \emph{(i)} the accuracy of the deterministic ODEs (Figure \ref{fig:ODElikeAlg}), \emph{(ii)} the predictive behavior of our proposed scaling laws (Figures \ref{fig:scaling_laws} and \ref{fig:scaling_laws_new}), \emph{(iii)} the benefits of taking a small clipping constant and the connection between clipping and an appropriate choice for the learning rate (Figures \ref{fig:heatmaps}, \ref{fig:heatmap_mnist}, \ref{fig:heatmap_housing}), and \emph{(iv)} the impact on performance of decaying the learning rate during training (Figures \ref{fig:schedules} and \ref{fig:schedules_real}).
    We finally remark that our analysis is fueled by a methodology that is both innovative compared to earlier work in the DP optimization literature \citep{varshney2022nearly, liu2023near, brown2024private} and rather general, thus laying the foundations for the study of DP algorithms in the high-dimensional setting.

\section{Related work}\label{sec:rel}

\paragraph{DP optimization.}
Since its introduction in \cite{dwork2006}, DP has provided the gold standard in the field of private data analysis, and different methods have been proposed with the purpose of learning with DP, such as objective and output perturbation \citep{chauduri2008, chaudhuri11a, kifer12} or different variants of DP-GD \citep{song13dpsgd, bassily2014differentially, Abadi2016}. In the last decade, a popular line of work has theoretically investigated private learning in various settings, such as Lipschitz and bounded optimization \citep{kifer12, bassily2014differentially, bassily19}, generalized linear models with Lipschitz loss \citep{jain14, song2021evading}, models in the over-parameterized regime \citep{bombari2024privacy}, and M-estimators \citep{avella2023differentially}.
In DP linear regression,
\cite{wang2018revisiting, milionis22a} require $n = \tilde \Omega(d^{3/2})$ samples, where 
the privacy budget is assumed of constant order, while \cite{anderson2025sample} require $n = \Omega(d^2)$ samples to also achieve robustness under a corrupted data model. 
DP-GD with adaptive clipping is shown to achieve a sample complexity of $n = \tilde \Omega(d)$ in \cite{varshney2022nearly}. This is ``nearly optimal'', in the sense that it matches, up to logarithmic factors, the minimax lower bound in \cite{cai21}. Similar nearly optimal rates were previously obtained by \cite{liu22b} (however with a computationally inefficient method), and more recently by \cite{liu2023near} and \cite{brown2024private}.
Notably, in the proportional regime $n = \Theta(d)$, if the privacy budget is of constant order ($\varepsilon / \sqrt{\ln(1 / \delta)}=\Theta(1)$ in \cite{varshney2022nearly, liu2023near}, or $\rho=\Theta(1)$ in \cite{brown2024private}), 
these bounds on the test risk diverge logarithmically either in $n$ \citep{varshney2022nearly} or in the failure probability \citep{liu2023near, brown2024private}\footnote{More precisely, in \cite{liu2023near} the number of samples would not be sufficient to achieve Eq.\ (4) with high probability.}.
Precise constant-order values for the test risk in the proportional regime have been provided by the recent work of \cite{dwork2025} for private algorithms based on output and objective perturbation in the case of isotropic data covariance, without focusing on the non-asymptotic rates in $\gamma$.



\paragraph{Minimax lower bounds.}

Minimax lower bounds in statistical estimation are classically derived via tools such as Le Cam’s method, Fano’s inequality, and Assouad lemma \citep{Tsybakov2009nonparametric, wainwright2019high}. 
In a linear model with Gaussian data, a convenient way to prove minimax lower bounds is to choose a prior on $\theta^*$ and compute the corresponding Bayes risk. 
Under DP, early minimax lower bounds were developed by \cite{barber2014privacy}, with \cite{acharya2021differentially} later formulating private analogues of Le Cam, Fano and Assouad. Fingerprinting codes and tracing attack arguments were introduced by \cite{bun2014fingerprinting, dwork2015robust}, and those were used to provide statistical lower bounds for high-dimensional mean estimation and linear regression  by \cite{cai21}. More recently, tracing attacks have been generalized to the score attack technique discussed in \cite{cai2025scoreattacklowerbound}, which allows to treat more general statistical models with a well defined score statistic.
In the context of local DP, where each sample is privatized prior to observation, the statistical cost of privacy has been studied in \cite{duchi2013local, Duchi2018minimax, rohde2020geometrizing, duchi2024right}.

%



\paragraph{Learning in high dimensions.}

The setting where the input dimension (or model size) $d$ scales with the number of training samples $n$ gained popularity for its ability to explain several empirical phenomena observed in modern data analysis and deep learning \citep{hastie2019surprises, belkin2019, bartlett2020benign, bartlett21deep}.
Early investigations of this regime appear in the work of \cite{Huber1973robust} in the context of robust regression. More recently, \cite{hastie2019surprises} studied linear regression in the proportional asymptotic regime $d / n \to \gamma$, analyzing the risk of ridge(-less) regression in both the under-parameterized ($\gamma < 1$) and over-parameterized ($\gamma > 1$) regime.
The proportional asymptotic in ridge regression had been previously considered by \cite{dicker2016ridge, dobriban2018high}, while \cite{cheng2024dimension} went beyond this regime establishing non-asymptotic guarantees with multiplicative error bounds.
\cite{sur2019modern} studied the maximum likelihood estimator in logistic regression in the proportional regime, highlighting its qualitative differences with respect to the case where $d = o(n)$; \cite{deng2022model, montanari2025generalization} studied logistic models, considering the effects of over-parameterization and benign overfitting.
Another line of work, closer to our technical approach, analyzes the behavior of one-pass GD algorithms in terms of high-dimensional stochastic differential equations or coupled ODEs \citep{paquette2022implicit, paquette24homogenization, collins2024hitting, marshall2025clip}. This strategy gives a deterministic equivalent for the gradient dynamics, which in turn leads to insights on optimization stability and the role of stochastic batching.
Another common analytical tool to analyze the high-dimensional regime is the convex Gaussian minimax theorem
\citep{stojnic2013framework, thrampoulidis2015regularized}, which constitutes the main technical tool in \cite{dwork2025}. 


%

\paragraph{Gradient clipping.}
In the context of private optimization with a non-Lipschitz loss, the role of clipping and the magnitude of the corresponding clipping constant $\clip$ has attracted attention due to its nuanced implications: while a small $\clip$ significantly affects the gradients, larger values force the addition of more private noise, 
suggesting that the choice of $\clip$ induces a bias-variance trade-off \citep{brendan2018learning, amin2019a, andrew2021differentially, das2023, brown2024private}. Prior work has argued that the bias induced by small clipping constants can prevent convergence \citep{amin2019a, Xiangyi2020, song2021evading}, which motivates an adaptive selection of $\clip$ based on (private) statistics of the magnitude of the gradients \citep{Abadi2016, andrew2021differentially, pichapati2019, Golatkar2022}. Recent experimental studies have given evidence that the best performance is achieved with a sufficiently small $\clip$ \citep{Kurakin2022, li2022large, de2022}, but it has also been shown that overly-aggressive clipping can be damaging in the context of model calibration \citep{bu2023on, brown2024private}. Theoretical insights on the benefits of small clipping constants have been provided in \cite{das2023, Xiangyi2020}, with \cite{Xiangyi2020} proving optimization guarantees when the gradient distribution is sufficiently symmetric and \cite{das2023} considering the setting where the Lipschitz constant of the loss is sample-dependent.
Recent work on DP linear regression \citep{varshney2022nearly, liu2023near, brown2024private} shares the common feature of setting the (possibly adaptive) $\clip$ a poly-logarithmic factor larger than the expected norm of the per-sample gradient. This ensures that, with high probability, at most a few gradients are clipped during training, and we note that these logarithmic factors are strongly tied to the consequent logarithmic divergence of the test risk guarantees discussed in the previous paragraph.  \cite{marshall2025clip} provide a precise analysis of a version of clipped-GD, although they do not  focus on privacy and, therefore, they do not add private noise. In a parallel work, \cite{lin2025high} add private noise but do not consider clipping, formulating instead a notion of privacy in terms of a diffusion surrogate of the algorithm which is different from a DP guarantee on the algorithm itself.



\paragraph{Scaling laws.}
Large scale experiments on modern AI systems revealed how the risk improves according to a power-law in the model size and number of training data \citep{kaplan2020scaling, hoffmann2022an}.
This evidence sparked a recent line of work establishing provable scaling laws in simplified settings such as linear regression and shallow neural networks \citep{paquette20244+, defilippis2024dimension, lin2024scaling, lin2025improved, ren2025emergence, ferbach2025dimensionadapted,wu2026improved}. These results generally achieve power-law type scaling laws assuming the data covariance spectrum itself follows a power-law distribution. 
Most similar to our setting (see Assumption \ref{ass:power_law_1}) is the work by \cite{collins2024high}, which establishes convergence rates for the risk in the context of one-pass GD with adaptive learning rates. 





\section{Preliminaries}

\paragraph{Notation.} Given a vector $v$, we denote by $\norm{v}_2$ its Euclidean norm. Given a matrix $A$, we denote by $\tr(A)$ and $\opnorm{A}$ its trace and operator (spectral) norm. Given a symmetric matrix $A$, we denote by $\evmin{A}$ ($\evmax{A}$) its smallest (largest) eigenvalue. The complexity notations $\Omega(\cdot)$, $O(\cdot)$, $\omega(\cdot)$, $o(\cdot)$ and $\Theta(\cdot)$ are understood for large data size $n$ and input dimension $d$, while the notation $\Omega_\gamma(\cdot), O_\gamma(\cdot), \Theta_\gamma(\cdot), \omega_\gamma(\cdot), o_\gamma(\cdot)$ is intended for sufficiently small $\gamma$.
We indicate with $C > 0$ a numerical constant independent of $n$ and $d$, whose value may change from line to line, and we say that an event holds with overwhelming probability if it holds with probability at least $1-e^{-\omega(\log d)}$.


\paragraph{Linear regression.} Let $(X, Y)$ be a labeled training dataset, where $X=[x_1, \ldots, x_n]^\top \in \R^{n \times d}$ contains the training data on its rows and $Y=[y_1, \dots, y_n]^\top \in \R^{n}$ contains the corresponding labels, such that the input-label pairs are sampled i.i.d.\ from a joint distribution $P_{XY}$.
We consider a 
\emph{linear regression} model, where the labels are defined as
\begin{equation}\label{eq:datamodel}
y_i = x_i^\top \theta^* + z_i,    
\end{equation}
where $\theta^* \in \R^{d}$, $x_i$ has mean-0 and covariance $\Sigma\in \R^{d\times d}$, and $z_i$ is independent label noise with mean $0$ and variance $\zeta^2$. We use the shorthand $(\omega_i, \lambda_i)$ to denote an eigenvector-eigenvalue pair of the data covariance $\Sigma$, and let $\lambda_{\max} = \evmax{\Sigma}, \lambda_{\min} = \evmin{\Sigma}$ denote its largest and smallest eigenvalue respectively.

The goal of a DP linear regression algorithm is to output a parameter $\theta^p$ that guarantees a required privacy budget and minimizes the \emph{test risk}:
\begin{equation}\label{eq:P}
    \mathcal P(\theta^p) = \frac{1}{2} \E_{(x, y) \sim P_{XY}} \left[ \left( x^\top \theta^p - y \right)^2 \right] = \frac{\norm{\Sigma^{1/2} \left( \theta^p - \theta^* \right)}_2^2 + \zeta^2}{2}.
\end{equation}
We also use the notation $\mathcal R(\theta^p) = \mathcal P(\theta^p) - \zeta^2 / 2$ to denote the excess test risk s.t.\ $\mathcal R(\theta^*) = 0$.
This notation slightly differs from the simplified presentation in Section \ref{sec:intro}, as $\mathcal R$ here is a function defined in parameter space. We will stick to this notation for the rest of the paper.


\paragraph{Differential privacy (DP).} The definition of DP builds on the notion of \emph{adjacent datasets}: a dataset $D'$ is said to be adjacent to a dataset $D$ if they differ by only one sample.
In this work, we will frame privacy in terms of zero-concentrated DP (zCDP) \citep{bun2016concentrated}, which is defined below.


\begin{definition}[Zero concentrated DP \citep{bun2016concentrated}]\label{def:zcdp}
Given $\alpha \in (1, +\infty)$ and two random variables $X$ and $X'$ with laws $p_X$ and $p_{X'}$,
their $\alpha$-Rényi Divergence \citep{Renyi61} is defined as
\begin{equation}
    D_{\alpha} \left( X \,\|\, X' \right) = \frac{1}{\alpha - 1} \ln \int \left( \frac{p_X (\theta)}{p_{X'} (\theta)} \right)^\alpha p_{X'} (\theta) \diff \theta.
\end{equation}
Then, a randomized algorithm $\mathcal A$ satisfies $\rho^2/2$-zero concentrated DP ($\rho^2/2$-zCDP) if, for any pair of adjacent datasets $D, D'$ and any $\alpha \in (1, +\infty)$, we have $D_{\alpha} \left( \mathcal{A}(D) \,\|\, \mathcal{A}(D') \right) \leq \alpha \rho^2 / 2$.
\end{definition}


Guarantees for zCDP can be translated to other 
formulations, such as $(\varepsilon, \delta)$-DP, and we provide conversion formulas in Appendix \ref{app:DP}.

\section{DP-GD and deterministic equivalent}\label{sec:DP-GD}


We consider a DP gradient descent algorithm performing a single pass on the $n$ training samples (Algorithm \ref{alg:dp-sgd}). 
In this formulation, both the learning rate $\eta_k$ and the magnitude of the private noise $\sigma_k$ at the $k$-th iteration follow the 
schedules $\{\eta_k\}_{k=1}^n$, $\{\sigma_k\}_{k=1}^n$ given as input, and 
$\eta_k$ is modified adaptively as a function of $\norm{x_k}_2$.
The output of the algorithm is 
the final parameter $\theta_n$, which is the \emph{only} object 
for which we provide privacy guarantees, due to our approach based on privacy amplification by iteration \citep{feldman2018iteration, feldman2020linear}. This differs from prior work where any intermediate step can be privately released \citep{varshney2022nearly, liu2023near, brown2024private}, which relies on advanced composition theorems \citep{Dwork2014}.

\setlength{\textfloatsep}{5pt} 
\begin{algorithm}[t]
\caption{DP-GD}
\label{alg:dp-sgd}
\begin{algorithmic}
\REQUIRE Training data $(X, Y)$, learning rate schedule $\{ \eta_k \}_{k=1}^n$, clipping constant $\clip$, noise multiplier schedule $\{ \sigma_k \}_{k=1}^n$, initialization $\theta_0 = 0$. \\
\vspace{0.1cm}

\FOR{$k \in \{1, \ldots, n\}$}
    \STATE Compute the gradient $g_k = \nabla_{\theta} \left(x_{k}^\top \theta_{k-1} - y_{k} \right)^2 / 2 = x_k \left( x_{k}^\top \theta_{k-1} - y_k \right)$.
    \vspace{0.1cm}
    
    \STATE Clip the gradient $\bar g_{k} = g_{k} \min \left(1, \frac{\clip}{\norm{g_k}_2} \right)$.
    \vspace{0.1cm}   
    
    \STATE Set the learning rate adaptively $\bar \eta_k = \min \left( \eta_k, \frac{2}{\norm{x_k}_2^2} \right)$.
    \vspace{0.1cm}
    
    \STATE Sample independent Gaussian noise $b_k \sim \mathcal{N}(0, I)$.
    \vspace{0.1cm}

    \STATE Update the model parameters $\theta_k = \theta_{k-1} - \bar \eta_k \bar g_{k} + 2 \clip \sigma_k b_k$.
\ENDFOR
\vspace{0.1cm}
\ENSURE Model parameters $\theta^p = \theta_n$.
\end{algorithmic}
\end{algorithm}


\begin{proposition}\label{prop:DPguarantees}
    Algorithm \ref{alg:dp-sgd} satisfies $(\rho^2 / 2)$-zCDP, where
    \begin{equation}
        \rho = \max_{k \in [n]} \frac{\eta_k}{\sqrt{\sum_{j = k}^n \sigma_j^2}}.
    \end{equation}
\end{proposition}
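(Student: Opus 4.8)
The plan is to apply the privacy amplification by iteration framework of Feldman et al.\ \cite{feldman2018iteration, feldman2020linear}, which is tailored exactly to the situation where only the final iterate of a noisy contractive iteration is released. First I would fix a pair of adjacent datasets $D, D'$ differing in a single sample, say at position $k^\star \in [n]$. Running Algorithm~\ref{alg:dp-sgd} on $D$ and on $D'$ produces two coupled sequences of iterates $\theta_0, \dots, \theta_n$ and $\theta_0', \dots, \theta_n'$ that agree (as maps of the shared randomness) for the first $k^\star - 1$ steps, then diverge at step $k^\star$, and thereafter are driven by identical update rules. The goal is to bound the $\alpha$-Rényi divergence between the laws of $\theta_n$ and $\theta_n'$ by $\alpha \rho^2 / 2$.

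The key steps, in order, are: \emph{(i)} Verify that each noiseless update map $\theta \mapsto \theta - \bar\eta_k x_k (x_k^\top \theta - y_k)$ is a contraction (i.e., $1$-Lipschitz). Indeed, this map is linear with Jacobian $I - \bar\eta_k x_k x_k^\top$, whose eigenvalues are $1$ (on the orthogonal complement of $x_k$) and $1 - \bar\eta_k \norm{x_k}_2^2$; the adaptive choice $\bar\eta_k = \min(\eta_k, 2/\norm{x_k}_2^2)$ guarantees $0 \le \bar\eta_k \norm{x_k}_2^2 \le 2$, hence $|1 - \bar\eta_k\norm{x_k}_2^2| \le 1$, so the map is nonexpansive — this is precisely why the adaptive learning rate cap appears in the algorithm. \emph{(ii)} Bound the one-step sensitivity at step $k^\star$: after the clipped gradient step, $\norm{\theta_{k^\star} - \theta_{k^\star}'}_2 = \bar\eta_{k^\star}\norm{\bar g_{k^\star} - \bar g_{k^\star}'}_2 \le 2\eta_{k^\star}\clip$, using that each clipped gradient has norm at most $\clip$ and $\bar\eta_{k^\star} \le \eta_{k^\star}$. \emph{(iii)} Invoke the contraction–coupling lemma of \cite{feldman2018iteration} (the ``shifted Rényi divergence'' / privacy amplification by iteration bound): for a sequence of updates, each being a contraction composed with addition of Gaussian noise $\mathcal N(0, 2\clip\sigma_j \cdot I)$ in step $j$, starting from inputs at $\ell_2$-distance $s$, the Rényi divergence of the final outputs is at most $\frac{\alpha s^2}{2 \sum_{j=k^\star}^n (2\clip\sigma_j)^2}$, since the noise injected after the divergence point can be "moved" to absorb the shift while contractions never increase it. \emph{(iv)} Substituting $s = 2\eta_{k^\star}\clip$ and the noise variances $(2\clip\sigma_j)^2$ gives a per-$k^\star$ bound of $\frac{\alpha \cdot 4\eta_{k^\star}^2\clip^2}{2 \cdot 4\clip^2 \sum_{j=k^\star}^n \sigma_j^2} = \frac{\alpha}{2}\cdot\frac{\eta_{k^\star}^2}{\sum_{j=k^\star}^n \sigma_j^2}$; maximizing over the location $k^\star$ of the differing sample yields $D_\alpha(\mathcal A(D)\|\mathcal A(D')) \le \frac{\alpha}{2}\max_{k\in[n]}\frac{\eta_k^2}{\sum_{j=k}^n\sigma_j^2} = \alpha\rho^2/2$, which is the claimed $(\rho^2/2)$-zCDP.

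I expect the main obstacle to be step \emph{(iii)}: correctly invoking the privacy amplification by iteration machinery, which requires casting the post-$k^\star$ trajectory as a sequence of "noisy contractive maps" in the precise sense of \cite{feldman2018iteration} and tracking the shifted Rényi divergence through each step. One subtlety is that the update maps are data-dependent (they depend on $x_j, y_j$ for $j > k^\star$, which are shared between $D$ and $D'$), so the contraction property must hold pathwise for every realization of the shared data and noise; this is fine because step \emph{(i)} establishes nonexpansiveness deterministically for any $x_j$. A second point requiring care is that the noise at step $j$ is $\mathcal N(0, (2\clip\sigma_j)^2 I)$, so the variance entering the amplification bound is $(2\clip\sigma_j)^2$; the factors of $2\clip$ cancel against the numerator's sensitivity $2\eta_{k^\star}\clip$, which is the clean reason the final $\rho$ depends only on the ratio $\eta_k / \sqrt{\sum_{j\ge k}\sigma_j^2}$ and not on $\clip$ itself. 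Everything else — the contraction check and the sensitivity bound — is short and self-contained.
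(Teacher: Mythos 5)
Your overall route is the same as the paper's: fix the index $k^\star$ of the differing sample, bound the one-step sensitivity by $2\eta_{k^\star}\clip$, and let the Gaussian noise $2\clip\sigma_j b_j$ added at steps $j \ge k^\star$ absorb the shift via privacy amplification by iteration, giving $\alpha\eta_{k^\star}^2/(2\sum_{j\ge k^\star}\sigma_j^2)$ and hence $(\rho^2/2)$-zCDP after maximizing over $k^\star$; the paper packages steps (ii)--(iv) by invoking Theorem 3.1 of Feldman et al.\ (2020) directly, and your cancellation of the $\clip$ factors is exactly right.

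There is, however, a genuine gap in your step (i): you verify nonexpansiveness only for the \emph{unclipped} update $\theta \mapsto \theta - \bar\eta_k x_k(x_k^\top\theta - y_k)$, via the eigenvalues of $I - \bar\eta_k x_k x_k^\top$, but the algorithm applies the \emph{clipped} update $\theta \mapsto \theta - \bar\eta_k \bar g_k(\theta)$ at every step, and it is this nonlinear map whose pathwise $1$-Lipschitzness the amplification-by-iteration machinery requires for all $j > k^\star$ (and whose structure you also implicitly use at step $k^\star$). The claim is true, but it does not follow from your linear-Jacobian computation; one must either observe that the map is piecewise affine with a.e.\ Jacobian equal to $I - \bar\eta_k x_k x_k^\top$ (in the unclipped region) or the identity (in the clipped region, where the update is a translation), both of operator norm at most $1$ when $\bar\eta_k\norm{x_k}_2^2 \le 2$, or — as the paper does — write $\bar g_k = \nabla_\theta \ell_{k,\clip}(x_k^\top\theta_{k-1} - y_k)$ for an explicit Huber-type loss $\ell_{k,\clip}$ that is convex, $\clip$-Lipschitz after composing with $x_k$, and $\norm{x_k}_2^2$-smooth, so that (after absorbing the adaptive factor $\min(1, 2/(\norm{x_k}_2^2\eta_k))$ into a rescaled loss $\bar\ell_{k,\clip}$ that is $(2/\eta_k)$-smooth) the update is a gradient step to which the contraction hypothesis of Feldman et al.'s theorem applies. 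This construction is not cosmetic: nonexpansiveness of a clipped gradient step relies on the clipped gradient being the gradient of a convex smooth surrogate, and it is precisely the piece your proposal leaves unproved.
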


Proposition \ref{prop:DPguarantees} (whose proof is deferred to Appendix \ref{app:DP}) states that each sample $x_k$ is ``protected'' by the overall noise introduced in the following updates $\sum_{j = k}^n \sigma_j^2$. For an assigned privacy guarantee, we can minimize the noise introduced by the algorithm $\sum_{j = 1}^n \sigma_j^2$ (and, therefore, optimize its performance) via the schedule below:
\begin{equation}\label{eq:schedulesdiff}
    \eta_k = \rho \sqrt{\sum_{j = k}^n \sigma_j^2}, \quad \textup{ or, equivalently, } \quad
    \rho^2 \sigma^2_k =
    \begin{cases}
    \eta_k^2 - \eta_{k+1}^2, \qquad k \in \{1, \ldots, n - 1\},\\
    \eta_k^2, \qquad\qquad \hspace{1.2em} k = n.
    \end{cases}
\end{equation}

Thus, from now on, we will restrict our study to the noise schedules defined in \eqref{eq:schedulesdiff}, making Algorithm \ref{alg:dp-sgd} uniquely defined given the privacy parameter $\rho$, the clipping constant $\clip$, and a non-increasing learning rate schedule $\{ \eta_k \}_{k = 1}^n$.
We now make two assumptions on data distribution and hyper-parameter scaling.

\begin{assumption}[Data distribution]\label{ass:data}
    $\{x_i\}_{i=1}^n$ are $n$ i.i.d.\ samples from the multivariate, mean-0, Gaussian distribution $\mathcal P_{X}$, with covariance $\Sigma := \E \left[xx^\top \right] \in \R^{d \times d}$. Furthermore, the noise $z_i$ is mean-0, Gaussian, with variance $\zeta^2 = \Theta(1) > 0$, and $\norm{\theta^*}_2 = \Theta(1)$, $\lambda_{\max} = O(1)$, and $\tr(\Sigma) = d$. 
\end{assumption}
Gaussian data was 
considered 
in \cite{dwork2025}, while \cite{varshney2022nearly, liu2023near, brown2024private} assume bounds on the tail of the data distributions. We focus on the Gaussian case for simplicity, 
but we expect to be possible to extend some of our results to data with sufficiently well-behaved tails (see Appendix A in \cite{marshall2025clip}).
The normalization $\tr(\Sigma) = d$ is also chosen to simplify the presentation of the results, and is not strictly necessary for our derivation. 

\begin{assumption}[Hyper-parameter scaling]\label{ass:learning_rate_schedules}
Let the clipping constant in Algorithm \ref{alg:dp-sgd} be 
\begin{equation}\label{eq:clippingconstant}
    \clip = c \sqrt{d},
\end{equation}
where $c > 0$ is a constant independent of $n$ and $d$. Furthermore, the learning rate schedule is given by
\begin{equation}\label{eq:tildeeta}
    \eta_k = \frac{\tilde \eta \left(k / n \right)}{n},
\end{equation}
where $\tilde \eta : [0, 1] \to \R$ is a function such that both $\tilde \eta^2 \left( \cdot \right)$ and the absolute value of its first and second derivatives are uniformly bounded from above by a constant independent of $n$ and $d$.
\end{assumption}

The norm of the gradient in Algorithm \ref{alg:dp-sgd} is $\norm{g_k}_2 = \norm{x_k}_2 | x_{k}^\top \theta_{k-1} - y_k | = \norm{x_k}_2 \sqrt{2 \mathcal P(\theta_{k-1})}$. Then, if $\mathcal P(\theta_{k - 1}) = \Theta(1)$, we have that $\norm{g_k}_2 = \Theta(\sqrt d)$, as $\norm{x_k}_2 = \Theta(\sqrt d)$ with high probability by Assumption \ref{ass:data}.
Note that the condition $\clip = c \sqrt d$ considers asymptotically smaller clipping constants than the ones considered e.g.\ in \cite{brown2024private}, which in this setting are of order $\clip = \Omega(\sqrt d \log^3 n)$ (see their Theorem 2.7).
Let us also introduce
\begin{equation}
    r(\theta, x, y) = x^\top \theta - y, \qquad r_c(\theta, x, y) = r(\theta, x, y) \min \left(1 , \frac{c}{\left| r(\theta, x, y) \right|} \right),
\end{equation}
where $r(\theta, x, y)$ represents the residual in $\theta$, and $r_c(\theta, x, y)$ is a clipped version of it. Then, as done in \cite{marshall2025clip}, we define the \emph{descent reduction factor} and the \emph{variance reduction factor}
\begin{equation}\label{eq:munu}
    \mu_c(\theta) = \frac{\norm{\E_{(x, y) \sim P_{XY}} \left[ r_c(\theta, x, y) \, x \right]}_2}{\norm{\E_{(x, y) \sim P_{XY}} \left[ r(\theta, x, y) \, x \right]}_2}, \qquad \nu_c(\theta) = \frac{\E_{(x, y) \sim P_{XY}} \left[ r_c(\theta, x, y)^2 \right]}{\E_{(x, y) \sim P_{XY}} \left[r(\theta, x, y)^2 \right]}.
\end{equation}

In Lemmas \ref{lemma:munu} and \ref{lemma:munubounds}, we show that, due to Assumption \ref{ass:data}, the functions $\mu_c(\theta)$ and $\nu_c(\theta)$ only depend on the risk $\mathcal R(\theta)$, i.e., $\mu_c(\theta_1) = \mu_c(\theta_2)$ if $\mathcal R(\theta_1) = \mathcal R(\theta_2)$ (and same for $\nu_c$). Then, with a slight abuse of notation, we will regard $\mu_c$ and $\nu_c$ as functions from $\R_{\geq 0}$ to $(0, 1)$ taking as argument directly the value of the risk. Furthermore, in the aforementioned Lemmas, we show that, for any fixed value of the risk $R$, $\mu_c(R)$ and $\nu_c(R)$ are monotonically increasing in $c$, going from $0$ (as $c \to 0$) to $1$ (as $c \to + \infty$), see Figure \ref{fig:munu}.

\begin{definition}[Deterministic equivalent]\label{def:deteq}
For any $t \in [0, 1)$, define the system of $d$ coupled ODEs
\begin{equation}\label{eq:ODEi}
    \diff D_i(t) = - 2 \lambda_i \bar{\eta}(t) \mu_c(R(t)) D_i \diff t + \lambda_i \bar{\eta}^2(t) \nu_c(R(t)) (R(t) + \zeta^2/2) \gamma_n \diff t + 2 c^2 \tilde \sigma^2(t) \gamma_n^2 \diff t,
\end{equation}
with $\gamma_n = d / n$, $\bar \eta(t) = \min(\tilde \eta(t), 2 / \gamma_n)$, 
$\tilde \sigma(t)$ such that $\rho^2 \tilde \sigma ^2(t) = - \diff \tilde \eta^2(t) / \diff t$,
\begin{equation}\label{eq:deteq}
    R(t) = \frac{1}{d} \sum_{i=1}^d \lambda_i D_i(t),
\end{equation}
and initial condition $D_i(0) = d \left(\omega_i^\top \theta^* \right)^2 / 2$.
\end{definition}

We remark that the previous system of ODEs has a unique solution, due to Picard–Lindel\"{o}f theorem (see Chapter 2 in \cite{teschl2012}), since the RHS of \eqref{eq:ODEi} is continuous in $t$, locally Lipschitz in $D_i$ ($\mu_c(R)$ and $\nu_c(R)$ are bounded and Lipschitz due to Lemma \ref{lemma:munu}), and since it respects the linear growth condition in Theorem 2.17 in \cite{teschl2012}.

\begin{theorem}\label{thm:deteq}
    Let Assumptions \ref{ass:data} and \ref{ass:learning_rate_schedules} hold. Let $\rho = \Theta(1)$, and $n, d \to \infty$ s.t.\ $\gamma_n = d / n \to \gamma \in (0,\infty)$. Denote by $\theta_k$ a realization of Algorithm \ref{alg:dp-sgd}, and by $R(t)$ the solution to the system of ODEs described by \eqref{eq:ODEi} and \eqref{eq:deteq}. Then, with overwhelming probability, we have
    \begin{equation}\label{eq:deteq1}
        \sup_{t\in[0,1)} \left| \mathcal{R}(\theta_{\lfloor tn\rfloor}) - R(t) \right| = O \left( \frac{\log^2 n}{\sqrt n}  \right).
    \end{equation}
    Furthermore, we also have
    \begin{equation}
        \sup_{t\in[0,1)} \left| \E \left[ \mathcal{R}(\theta_{\lfloor tn\rfloor}) \right] - R(t) \right| = O \left( \frac{\log^2 n}{\sqrt n} \right),
    \end{equation}
    where the expectation is taken with respect to both the data and the algorithm.
\end{theorem}
In words, Theorem \ref{thm:deteq} states that the risk of Algorithm \ref{alg:dp-sgd} can be well approximated in terms of the solution of a system of $d$ coupled deterministic ODEs. 
This system of ODEs then provides a deterministic equivalent for the dynamics of DP-GD, since it approximates sharply its risk without depending on the stochasticity of the algorithm and of the data.
Definition \ref{def:deteq} and Theorem \ref{thm:deteq} also implicitly introduce the continuous time variable $t \in [0, 1]$, which is mapped to the iterations via $k = \lfloor tn \rfloor$. 
%

Notice that, in the isotropic case where $\Sigma = I$, the system in \eqref{eq:ODEi} reduces to the single ODE
\begin{equation}\label{eq:isotropicODE}
    \diff R(t) = - 2 \bar{\eta}(t) \mu_c(R(t)) R(t) \diff t + \bar{\eta}^2(t) \nu_c(R(t)) (R(t) + \zeta^2/2) \gamma_n \diff t + 2 c^2 \tilde \sigma^2(t) \gamma_n^2 \diff t,
\end{equation}
with $R(0) = \norm{\theta^*}_2^2 / 2$, $\bar \eta(t) = \min(\tilde \eta(t), 2 / \gamma_n)$, and $\rho^2 \tilde \sigma ^2(t) = - \diff \tilde \eta^2(t) / \diff t$.
The RHS of \eqref{eq:isotropicODE} has three terms:
\emph{(i)} a negative term proportional to $R(t)$, which captures the descent towards the minimizer of $R$, \emph{(ii)} a positive term proportional to $R(t) + \zeta^2 / 2$, which captures the fact that Algorithm \ref{alg:dp-sgd} at step $k$ does not optimize $R$ directly, but rather an estimate based on $(x_k, y_k)$, and \emph{(iii)} another positive term proportional to $\tilde \sigma^2(t)$, which models the private noise in Algorithm \ref{alg:dp-sgd}.
We also remark that the first two terms on the RHS of \eqref{eq:isotropicODE} are proportional to $\bar \eta(t) = \min(\tilde \eta(t), 2 / \gamma_n)$, in analogy to the adaptive learning rate step that defines $\bar \eta_k$ in Algorithm \ref{alg:dp-sgd}.

For general covariance, it is possible to upper and lower bound the value of $R(t)$ as a function of the largest ($\lambda_{\max}$) and smallest ($\lambda_{\min}$) eigenvalues of the covariance matrix $\Sigma$. In particular, defining
\begin{equation}\label{eq:upperlowerODEsbodynew}
\begin{aligned}
    \diff \overline {R}(t) &= - 2 \lambda_{\min} \tilde{\eta}(t) \mu_c(\overline R) \overline{R} \diff t +  \lambda_{\max} \tilde{\eta}^2(t) \nu_c(\overline R) (\overline {R} + \zeta^2/2) \gamma_n \diff t + 2 c^2 \tilde \sigma^2(t) \gamma_n^2 \diff t, \\
    \diff \underline {R}(t) &= - 2 \lambda_{\max} \tilde{\eta}(t) \mu_c(\underline R) \underline{R} \diff t + \tilde{\eta}^2(t) \nu_c(\underline R) (\underline {R} + \zeta^2/2) \gamma_n \diff t + 2 c^2 \tilde \sigma^2(t) \gamma_n^2 \diff t,
\end{aligned}
\end{equation}
with $\overline R(0) = \underline R(0) = R(0) = \|\Sigma^{1 /2} \theta^*\|_2^2 /2$, we have, for every $t \in [0,1]$,
\begin{equation}
    \underline R(t) \leq R(t) \leq \overline R(t).
\end{equation}
We refer to Proposition \ref{prop:conditionsandwhich} in Appendix \ref{app:ODEs} for the formal statement and proof. Compared to the isotropic case, the upper bound $\overline R(t)$ reduces the descent term by a factor $\lambda_{\min} \leq 1$ and increases the second term in the RHS of \eqref{eq:isotropicODE} by a factor $\lambda_{\max} \geq 1$. Instead, the lower bound $\underline R(t)$ just increases the descent term by a factor $\lambda_{\max} \geq 1$.
In Section \ref{sec:optimalrates} we use these bounds to \emph{(i)} quantify the benefit of clipping, \emph{(ii)} establish the minimax optimality of DP-GD with a properly chosen learning rate schedule and, more generally, to \emph{(iii)} give convergence rates for a wide class of schedules. In Section \ref{sec:scalinglaws} we consider directly the system of $d$ ODEs in \eqref{eq:ODEi} to \emph{(iv)} derive scaling laws. 

\begin{remark}\label{remark:H} 
The risk of Algorithm \ref{alg:dp-sgd} can be compactly characterized also through the solution of the following stochastic differential equation
\begin{equation}\label{eq:HDPSGD}
    \diff \Theta_t = - 2 \bar \eta(t) {\mu_c(\Theta_t)}\nabla \mathcal{P}(\Theta_t) \diff t 
    + \bar \eta(t) \sqrt{\frac{2 \nu_c (\Theta_t) \mathcal{P}(\Theta_t) \Sigma}{n }} \diff B^s_t + 2 \frac{\sqrt d}{n} c \tilde \sigma(t) \diff B^p_t,
\end{equation}
where $\Theta_0 = \theta_0 = 0$, $B^s_t$ and $B^p_t$ are two independent standard Brownian motions in $\R^d$, $\bar \eta(t) = \min(\tilde \eta(t), 2n / d)$, and $\tilde \sigma(t)$ is such that $\rho^2 \tilde \sigma ^2(t) = - \diff \tilde \eta^2(t) / \diff t$. More precisely, in Appendix \ref{app:deteq}, 
we show that, with overwhelming probability,  
$$
    \sup_{t\in[0,1)} \left| \mathcal{R}(\theta_{\lfloor tn\rfloor}) - \mathcal{R}(\Theta_t) \right| = O \left( \frac{\log^2 n}{\sqrt n}  \right).
$$
This is a description of the dynamics of DP-GD through a single stochastic differential equation, rather than through a system of $d$ coupled ODEs.
\end{remark}


\paragraph{Proof ideas for Theorem \ref{thm:deteq}.}
The argument follows a similar strategy as the one proving Theorem 1 in \cite{marshall2025clip}. In particular, let $u_k = \theta_k - \theta^*$ and $V_t = \Theta_t - \theta^*$, where
$\Theta_t$ is defined in \eqref{eq:HDPSGD}. Define the set of quadratic functions
\begin{equation}
    q_z(v) = \frac{1}{2} v^\top (\Sigma - z I)^{-1} v, \qquad z \in \Omega = \{ z \in \mathbb C : |z| = 2 \opnorm{\Sigma} \}.
\end{equation}
On the one hand, we can write the Doob's decomposition of the update
\begin{equation}\label{eq:doobbody}
    q_z(u_{k+1}) - q_z(u_k) = \frac{1}{n} \mathcal F_z \left( u_k, R(u_k), k/n \right) + \Delta M_k(z) + \Delta E_k(z).
\end{equation}
Here, $\mathcal F_z$ denotes the predictable drift, with contributions coming from the deterministic descent term, the sampling-variance term, and the private-noise term (see \eqref{eq:SGD_q_Doob}):
\begin{equation}
\begin{aligned}
    \mathcal F_z(u, R, t) = & - \bar \eta(t) \mu_c(R) \left( \|u\|_2^2 + z q_z(u) \right)
    + \bar \eta(t)^2 \nu_c(R) \left(R + \zeta^2 / 2 \right) \gamma_n
    \frac{\tr \left( \Sigma (\Sigma - z I)^{-1} \right)}{d} \\
    & \qquad + 2 c^2 \tilde \sigma^2(t) \gamma_n^2
\frac{\tr  \left( (\Sigma - z I)^{-1} \right)}{d}.
\end{aligned}
\end{equation}
Combining Lemma 2 in \cite{marshall2025clip} with Lemmas \ref{lem:mar_noise} and \ref{lemma:step}, we show that the martingale and error terms in \eqref{eq:doobbody} ($\sum_{j \le k} \Delta M_j(z)$ and $\sum_{j\le k}\Delta E_j(z)$) are $\tilde O(n^{-1/2})$. 

On the other hand, applying It\^o's formula to \eqref{eq:HDPSGD} yields
\begin{equation}\label{eq:sdeevolbody}
    \diff q_z(V_t) = \mathcal F_z \left(V_t, R(\Theta_t), t \right)  \diff t + \diff M_t^{\mathrm{sde}}(z).
\end{equation}
Here, the predictable part $\mathcal F_z$ matches with the one of the discrete dynamics and, in Lemma \ref{lem:M_DPSGD}, we show that the remaining term is small, i.e., $| \int_0^t \diff M_s^{\mathrm{sde}}(z) | = \tilde O(n^{-1/2})$ with overwhelming probability.
Then, relying on the Lipschitz continuity of $\mathcal F_z$ with respect to its first argument and controlling the discretization errors, Gronwall's inequality yields (see the argument in \eqref{eq:differenceintm}-\eqref{eq:q_SGD_DPSGD})
\begin{equation}
    \sup_{t \in [0,1)} \left| q_z( u_{\lfloor tn\rfloor} )-q_z(V_t) \right| = \tilde O \left(n^{-1/2} \right),
\end{equation}
with overwhelming probability uniformly in $z \in \Omega$, due to a net argument on $\Omega$.
Now, intuitively, one can remove the martingale term from \eqref{eq:sdeevolbody} and consider the resulting deterministic evolution. As
\begin{equation}\label{eq:contoursbody}
    q_z(V) = \frac{1}{2} \sum_{i=1}^d \frac{ (V^\top \omega_i)^2 }{\lambda_i - z}, \qquad \norm{V}_2^2 = \frac{i}{\pi} \oint_\Omega q_z(V) \diff z, \qquad R(V + \theta^*) = \frac{i}{2\pi} \oint_\Omega z q_z(V) \diff z,
\end{equation}
where the last two equalitites follow from the Cauchy integral formula, we have that $q_z(V_t)$ is described by an integro-differential equation in $z$ and $t$ (see \eqref{eq:resolvantwithoutmartingale}). Then, due to Lemma 4.1 in \cite{collins2024hitting}, its unique solution is
\begin{equation}
    q_z(V_t) = \frac{1}{d} \sum_{i=1}^d \frac{D_i(t)}{\lambda_i - z},
\end{equation}
where $D_i(t)$ is defined in \eqref{eq:ODEi}. Using the last relation in \eqref{eq:contoursbody} recovers $R(t) = \sum_{i=1}^d \lambda_i D_i(t) / d$, which gives the desired result. The full argument is deferred to Appendix \ref{app:deteq}.


\begin{figure}
    \centering
    \includegraphics[width=\linewidth]{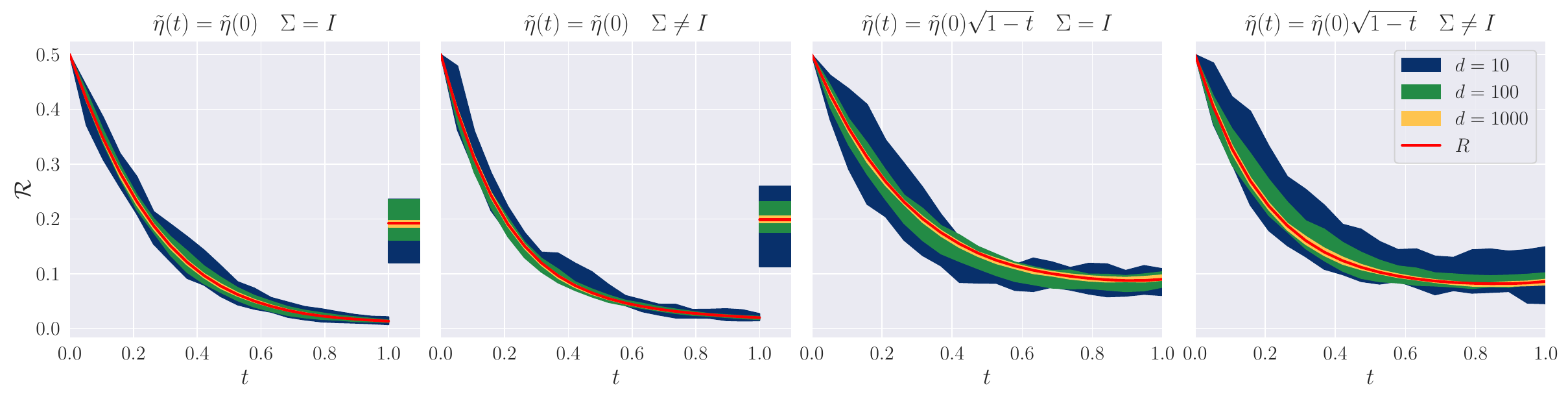}
    \caption{Numerical simulations for Algorithm \ref{alg:dp-sgd} ($d \in \{10, 100, 1000\}$) and the solution $R(t)$ of the system of ODEs in \eqref{eq:ODEi} and \eqref{eq:deteq}. We consider the two schedules $\tilde \eta(t) = \tilde \eta(0)$ (first and second panel) and $\tilde \eta(t) = \tilde \eta(0) \sqrt{1 - t}$ (third and fourth panel).
    We fix $\gamma = 0.1$, $\rho = 1$, $\zeta = 0.3$, $(\theta^*_i)^2 = 1/d$, $\tilde \eta(0) = 3$, $c = 1$, and consider both isotropic data ($\Sigma = I$, first and third panel) and data with diagonal covariance whose  eigenvalues are uniformly distributed in the interval $[0, 2]$ (second and fourth panel).
    For $\alpha = 0$, we also report for $t \geq 1$ the risk $\mathcal R(\theta^p)$, and the value of $R(1) + 2 c^2 \tilde \eta^2(1) \gamma^2 / \rho^2$
    with a red line.
    For each value of $d$, we report bands corresponding to 1 standard deviation around the mean over 10 independent trials of Algorithm \ref{alg:dp-sgd}.}
    \vspace{0.5cm}
    \label{fig:ODElikeAlg}
\end{figure}


\paragraph{Noise in the last iteration.} Theorem \ref{thm:deteq} holds for 
$\{\theta_k\}_{k=1}^{n-1}$, as the supremum in \eqref{eq:deteq1} is taken on the open interval $t \in [0, 1)$, and in general the equivalence \emph{does not hold} for the last iterate $\theta_n$ (corresponding to $t = 1$), which gives the (private) output of the algorithm. 
Intuitively, for $k < n$, \eqref{eq:schedulesdiff} suggests (with abuse of notation)
\begin{equation}\label{eq:heuristic}
    \rho^2 \sigma^2_k = - \frac{\diff}{\diff k} \eta^2_k = - \frac{1}{n} \frac{\diff}{\diff \left(k / n \right)} \frac{\tilde \eta^2 \left(k / n \right)}{n^2} \approx \frac{\rho^2}{n^3} \tilde \sigma^2(k / n),
\end{equation}
where in the second step we use Assumption \ref{ass:learning_rate_schedules} and $\tilde \sigma$ is given by Definition \ref{def:deteq}. In contrast, for the last iterate, again due to \eqref{eq:schedulesdiff}, we have $\rho^2 \sigma^2_n = \eta_n^2 = \tilde \eta^2(1) / n^2$. The additional $n$ factor in the denominator of \eqref{eq:heuristic} implies that, if $\tilde \eta(1) > 0$, the noise added to the last iterate is much larger. Thus, we treat the case separately via the result below (proved in Appendix \ref{app:deteq}), which quantifies the final test loss of $\theta_n=\theta^p$.


\begin{proposition}\label{prop:laststeprisk}
    Let Assumptions \ref{ass:data} and \ref{ass:learning_rate_schedules} hold.  Let $\rho = \Theta(1)$ and $n, d \to \infty$ s.t.\ $\gamma_n = d / n \to \gamma \in (0,\infty)$. Then, with overwhelming probability, we have
    \begin{equation}
        \left| \mathcal R(\theta^p) - R(1) - \frac{2 c^2 \tilde \eta^2(1) \gamma_n^2}{\rho^2} \right| = O \left(  \frac{ \log^2 n }{\sqrt n} \right).
    \end{equation}
    Furthermore, we also have
    \begin{equation}
        \left| \E \left[ \mathcal R(\theta^p) \right] - R(1) - \frac{2 c^2 \tilde \eta^2(1) \gamma_n^2}{\rho^2} \right| = O \left(  \frac{ \log^2 n  }{\sqrt n} \right).
    \end{equation}
\end{proposition}


The convergence predicted by Theorem \ref{thm:deteq} and Proposition \ref{prop:laststeprisk} is already evident at moderate values of $n, d$, as showcased by Figure \ref{fig:ODElikeAlg} for different schedules ($\tilde \eta(t) = \tilde \eta(0)$ and $\tilde \eta(t) = \tilde \eta(0) \sqrt{1 - t}$) and different data covariances. In the panels for $\tilde \eta(t) = \tilde \eta(0)$ of Figure \ref{fig:ODElikeAlg}, we report in a short interval at $t \geq 1$ the risk $\mathcal R(\theta^p)$, after the noise in the last iteration is added, since $\tilde \eta(1) > 0$. 


\section{Optimal rates}\label{sec:optimalrates}

Putting together Theorem \ref{thm:deteq} and Proposition 
\ref{prop:laststeprisk} gives that the final risk $\mathcal R(\theta^p)$ asymptotically converges to a constant independent of $n, d$ and only dependent on $\gamma$. This is in sharp contrast with earlier upper bounds by  \cite{varshney2022nearly, liu2023near, brown2024private} which diverge in the proportional regime ($n$ scaling linearly in $d$) considered here. 
Notably, this is a consequence of taking $\clip=c\sqrt{d}$ (see Assumption \ref{ass:learning_rate_schedules}) with $c$ a constant independent of $n, d$, rather than $\clip = \omega(\sqrt d)$.

We now show that the characterization put forward in Section \ref{sec:DP-GD} is precise enough to capture how 
$\mathcal R(\theta^p)$ depends on $\gamma$ and $\rho$, as well as on the algorithm hyper-parameters $\tilde \eta(t)$ and $c$. While \cite{dwork2025} also derive a result which does not diverge in the proportional regime, their analysis does not draw statistical implications on how the final risk depends on sample size and privacy requirement. One significant challenge towards this goal is that, in the framework of \cite{dwork2025}, the trajectory of DP-GD is expressed via complex DMFT equations, which are then hard to analyze precisely.

Throughout the section, we focus on the non-asymptotic behavior of $\mathcal R(\theta^p)$ as $\gamma \to 0$. 
Importantly, the limit $\gamma\to 0$ is taken \emph{after} the limit $d, n \to \infty$, which informally means that $d$ and $n$ are incomparably larger than $1 / \gamma$. Thus, our bounds on $\mathcal R(\theta^p)$ neglect the smaller terms (vanishing as $d, n \to \infty$) from Theorem \ref{thm:deteq} and Proposition \ref{prop:laststeprisk}, including the difference $| \gamma - \gamma_n | = o(1)$.
We assume that $\lambda_{\min}, \lambda_{\max}, \norm{\theta^*}_2, \zeta = \Theta_\gamma(1)$, i.e., the condition number of the covariance, the test risk of the model at initialization and the label noise variance are strictly positive constants independent of $\gamma$; instead, the privacy parameter $\rho$ is allowed to depend on $\gamma$ (but not on $n, d$). 




\subsection{Output perturbation and constant private noise}

In this section, we consider 
the family of learning rate schedules
\begin{equation}\label{eq:polynomialsched}
    \tilde \eta(t) = \tilde \eta(0) (1 - t)^\alpha,
\end{equation}
for $\alpha = 0$, $\alpha = 1 / 2$ and $\alpha \geq 1$. Taking $\alpha = 0$ corresponds to output perturbation: the learning rate is fixed during the whole training, and the private noise is added only at the end of the algorithm, as \eqref{eq:schedulesdiff} implies $\sigma_k = 0$ for $k \in \{1, \ldots, n-1\}$ and $\sigma_n = \eta_1 / \rho = \tilde \eta(0) / (n \rho)$. 
Taking $\alpha = 1 / 2$ corresponds to a linearly decaying $\tilde \eta^2(t)$, which in turn implies a constant level of noise $\sigma_k = \eta_1 / (\sqrt n \rho) = \tilde \eta(0) / (n^{3/2} \rho)$ in the iterations of DP-GD. These two choices of learning rate schedule are analyzed in Proposition \ref{prop:alpha012}. At the end of the section, we also study the effect of taking $\alpha \geq 1$ in Proposition \ref{prop:alphabigger1body}, so that the private noise decays with time. Note that all these values of $\alpha$ are in agreement with Assumption \ref{ass:learning_rate_schedules}.


\begin{proposition}\label{prop:alpha012}
    Let Assumptions \ref{ass:data} and \ref{ass:learning_rate_schedules} hold. Let $\theta^p_0$ and $\theta^p_{1/2}$ be the solutions obtained with Algorithm \ref{alg:dp-sgd} with $\tilde \eta(t)$ given by \eqref{eq:polynomialsched} for $\alpha = 0$ and $\alpha = 1/2$ respectively, with $\tilde \eta(0) \leq 2 / \gamma$. Consider 
    $\lambda_{\min}, \lambda_{\max}, \norm{\theta^*}_2, \zeta = \Theta_\gamma(1)$, and $\rho = \Omega_{\gamma}\left( \gamma^{1 - h} \right)$ for some $h > 0$.
    Pick 
    \begin{equation}\label{eq:hyperparamsalpha012}
         c = O_\gamma(1), \qquad \tilde \eta(0) c = C \ln(1/\gamma),
    \end{equation}
    for a large enough constant $C$ which does not depend on $\gamma$. 
    Then, we have that, with overwhelming probability,
    \begin{equation}\label{eq:upperboundalg012}
    \begin{split}
        \mathcal R(\theta^p_0) &= O_\gamma \left( \gamma \ln(1 / \gamma) + \frac{\gamma^2 \ln^2 (1 / \gamma)}{\rho^2} \right),\\ \mathcal R(\theta^p_{1/2}) &= O_\gamma \left( \gamma \ln^{2/3}(1 / \gamma) + \frac{\gamma^2 \ln^{4 / 3}(1 / \gamma)}{\rho^2} \right).
        \end{split}
    \end{equation}
    Furthermore, for any choice of the hyper-parameters $c$ and $\tilde \eta(0)$, we have that
    \begin{equation}\label{eq:lowerboundalg012}
    \begin{split}
         \mathcal R(\theta^p_0) &= \Omega_\gamma \left( \gamma \ln(1 / \gamma) + \frac{ \gamma^2 \ln^2(1 / \gamma)}{\rho^2} \right), \\ \mathcal R(\theta^p_{1/2}) &= \Omega_\gamma \left( \gamma \ln^{2 / 3}(1 / \gamma) + \frac{ \gamma^2 \ln^{4/3}(1 / \gamma)}{\rho^2 } \right).
         \end{split}
    \end{equation}
\end{proposition}

Proposition \ref{prop:alpha012} (proved in Appendix \ref{app:ODEs}, also in the setting where $\lambda_{\max}$ and $\lambda_{\min}$ can depend on $\gamma$) gives upper and lower bounds for output perturbation ($\alpha=0$) and DP-GD with constant noise ($\alpha=1/2$).
This result has two remarkable consequences: \emph{(i)} the hyper-parameters in \eqref{eq:hyperparamsalpha012} are optimal in terms of rate, and \emph{(ii)} DP-GD with constant noise outperforms output perturbation when $\gamma$ is sufficiently small. After giving a proof sketch, we elaborate on these two points in the remainder of the section. 

\paragraph{Proof ideas for Proposition \ref{prop:alpha012}.}
For simplicity, we consider here the isotropic case (the case with general covariance is treated in the full argument in Appendix \ref{app:ODEs}). 
The ODE in \eqref{eq:isotropicODE} reads
\begin{equation}
    \diff R(t) = - 2 \tilde{\eta}(0) c \frac{\mu_c(R(t))}{c} R(t) \diff t + (\tilde{\eta}(0) c)^2 \frac{\nu_c(R(t)) (R(t) + \zeta^2/2)}{c^2} \gamma \diff t.
\end{equation}
If $c = O(1)$, the bounds on $\mu_c(R)$ and $\nu_c(R)$ in Lemma \ref{lemma:munubounds} yield that, 
uniformly in $t \in [0, 1]$, 
\begin{equation}
    \frac{\mu_c(R(t))}{c} = \Theta_\gamma(1), \qquad \frac{\nu_c(R(t)) (R(t) + \zeta^2/2)}{c^2} = \Theta_\gamma(1).
\end{equation}
Thus, due to ODE comparison arguments, we can characterize $R(t)$ studying the closed-form solution of the ODE
\begin{equation}\label{eq:closedformODEalpha0}
    \diff \tilde R(t) = - \tilde{\eta}(0) c \tilde R(t) \diff t + (\tilde{\eta}(0) c)^2 \gamma \diff t,
\end{equation}
with $\tilde R(0) = R(0)$, i.e., $\tilde R(t) = \left( R(0) - \gamma \tilde{\eta}(0) c \right) e^{- \tilde{\eta}(0) c t} + \gamma \tilde{\eta}(0) c$. Then, setting $t = 1$, Proposition \ref{prop:laststeprisk} gives that the risk is well approximated by
\begin{equation}
    \left( R(0) - \gamma \tilde{\eta}(0) c \right) e^{- \tilde{\eta}(0) c} + \gamma \tilde{\eta}(0) c + \frac{(\tilde{\eta}(0) c)^2 \gamma^2}{\rho^2}.
\end{equation}
Minimizing over $\tilde{\eta}(0) c$ gives the desired result.

If $c = \Omega(1)$, the bounds in Lemma \ref{lemma:munubounds} yield
\begin{equation}
    \mu_c(R(t)) = \Theta_\gamma(1), \qquad \nu_c(R(t)) (R(t) + \zeta^2/2) = \Omega_\gamma(1).
\end{equation}
Then, $R(t)$ is lower bounded (up to constants independent of $\gamma$) by the solution to the ODE
\begin{equation}\label{eq:bodycomega1}
    \diff \tilde R(t) = - \tilde{\eta}(0) \tilde R(t) \diff t + \tilde{\eta}(0)^2 \gamma \diff t.
\end{equation}
This ODE has the same form as in \eqref{eq:closedformODEalpha0} (after replacing $\tilde \eta(0) c$ with just $\tilde \eta(0)$) and, therefore, it does not give a lower final risk.

The case $\alpha = 1/2$ follows a similar strategy. In particular, if $c = O(1)$, $R(t)$ can be characterized via the ODE
\begin{equation}\label{eq:comparisonbodyalpha12}
    \diff \tilde R(t) =  - \tilde{\eta}(0) c \sqrt{1-t} \tilde R(t) \diff t + (\tilde{\eta}(0) c)^2 \gamma (1-t) \diff t + \frac{(\tilde{\eta}(0) c)^2 \gamma^2}{\rho^2} \diff t,
\end{equation}
with $\tilde R(0) = R(0)$.
The solution to \eqref{eq:comparisonbodyalpha12} can also be written in closed-form (see \eqref{eq:closedformODEalpha12}). 
Thus, setting $\tilde \eta(0) c$ according to \eqref{eq:hyperparamsalpha012} allows to obtain the improved upper bound.



\paragraph{Optimal hyper-parameters.}
We now comment on the optimal hyper-parameter choice in \eqref{eq:hyperparamsalpha012}.
First, notice that if $\tilde \eta(0) > 2 / \gamma$, the gradient update would be proportional to $\bar \eta_k < \eta_k$, while the private noise $\sigma_k$ still depends on $\eta_k$ via \eqref{eq:schedulesdiff}. This suggests the sub-optimality of having $\bar \eta_k < \eta_k$ and of the regime $\tilde \eta(0) > 2 / \gamma$.
Second, the choice $\tilde \eta(0) c = C \ln(1/\gamma)$ provides the optimal trade-off between 
two competing objectives:
on the one hand, $\tilde \eta(t) c$ controls the size of the first term of the ODEs in \eqref{eq:upperlowerODEsbodynew}
(for $c = O_\gamma(1)$ and bounded values of $R$, Lemma \ref{lemma:munubounds} gives that $\mu_c(R) / c$ is lower bounded by a constant), which in turn determines the speed of convergence towards $0$ of the risk;
on the other hand, a large product $\tilde \eta(0) c$ increases the last two terms of the ODEs, which have the opposite effect of increasing the risk.
Third, the choice $c = O_\gamma(1)$ is motivated by the fact that increasing $c$ beyond this point does not further increase $\mu_c(R) < 1$, which drives the risk to $0$. However, larger values of $c$ increase the private noise in DP-GD, and hence the last term in the RHSs of \eqref{eq:upperlowerODEsbodynew}, which increases the risk. 

These conclusions are supported by Figures \ref{fig:heatmaps}, \ref{fig:heatmap_mnist} and \ref{fig:heatmap_housing}: performance deteriorates if either $c$ exceeds $1$ (upper part of the heatmaps) or $\tilde \eta(0)$ exceeds $2 / \gamma$ (right part of the heatmaps); the lowest values of the risk are roughly parallel to the line $c \tilde \eta(0) = \ln(1 / \gamma)$. A more detailed discussion of the numerical experiments is in Section \ref{sec:numerical}.

\paragraph{Benefits of aggressive clipping.} 
Adding the further condition $\rho / c = \Omega(\gamma^{1 - h})$ for some $h > 0$, the lower bounds in Proposition \ref{prop:alpha012}  
can be improved to 
\begin{equation}\label{eq:suboptimalitycbody}
    \mathcal R(\theta^p_{0, 1/2}) = \tilde \Omega_\gamma \left(\gamma + \frac{\max(1, c^2) \gamma^2}{\rho^2}\right),
\end{equation}
where $\tilde \Omega_\gamma$ neglects poly-logarithmic factors. The term $\max(1, c^2)$ in the second quantity at the RHS of \eqref{eq:suboptimalitycbody} quantifies the negative effects on performance due to using a large clipping constant $c = \omega_\gamma(1)$. This is also evident from the numerical results in Figures \ref{fig:heatmaps}, \ref{fig:heatmap_mnist}, and \ref{fig:heatmap_housing}.
We note that the requirement $\rho / c = \Omega(\gamma^{1 - h})$ is analogous to the lower bound for $\rho$ in the statement of Proposition \ref{prop:alpha012}, and it guarantees that the second term in the RHS of  \eqref{eq:suboptimalitycbody} is $o_\gamma(1)$, making it comparable to the first term in the RHS. 

The lower bound in \eqref{eq:suboptimalitycbody} is formally discussed at the end of the proof of Proposition \ref{prop:alpha012} in Appendix \ref{app:proofalpha012}, and it informally follows from the argument that leads to \eqref{eq:bodycomega1}, which shows that the first two terms in the RHS of the ODE in \eqref{eq:isotropicODE} are qualitatively identical in the regimes $c = O_\gamma(1)$ and $c = \Omega_\gamma(1)$. In the first case ($c = O_\gamma(1)$), these two terms are proportional to $\tilde \eta(t) c$ and $(\tilde \eta(t) c)^2$; in the second case ($c = \Omega_\gamma(1)$), they are proportional to $\tilde \eta(t)$ and $\tilde \eta(t)^2$. Thus, with a change of variable, \eqref{eq:isotropicODE} behaves as the following ODE:
\begin{equation}
    \diff \tilde R(t) = - v(t) \tilde R(t) \diff t + v^2(t) \gamma \diff t + \frac{\max(1, c^2)}{\rho^2} \left( - \frac{\diff v^2(t)}{\diff t} \right) \gamma^2 \diff t,
\end{equation}
where $v(t) = \tilde \eta(t) c$ in the case $c = O_\gamma(1)$, and $v(t) = \tilde \eta(t)$ in the case $c = \Omega_\gamma(1)$.




\paragraph{Benefits of decaying the learning rate.} 
Motivated by the improvement of DP-GD with constant noise over output perturbation (Proposition \ref{prop:alpha012}), we consider reducing the noise added at the end of training, i.e., we pick a polynomial schedule as in \eqref{eq:polynomialsched} with $\alpha \ge 1$. This implies that $\tilde \sigma^2(t)$ is proportional to $2 \alpha (1 - t)^{2\alpha - 1}$, which corresponds to decaying the noise at least linearly.

\begin{proposition}\label{prop:alphabigger1body}
    Let Assumptions \ref{ass:data} and \ref{ass:learning_rate_schedules} hold. Let $\theta^p_\alpha$ be the solution obtained with Algorithm \ref{alg:dp-sgd}, with $\tilde \eta(t)$ given by \eqref{eq:polynomialsched} for $\alpha \geq 1$, with $\tilde \eta(0) \leq 2 / \gamma$. Consider $\lambda_{\min}, \lambda_{\max}, \norm{\theta^*}_2, \zeta = \Theta_\gamma(1)$, and 
    \begin{equation}\label{eq:extraassply}
        \ln^2(1 / \gamma) \gamma \left( \alpha + \frac{\gamma}{\rho^2} \right) = o_\gamma(1).
    \end{equation}
    Pick
    \begin{equation}\label{eq:hyperparamsalphalarger1}
        c = O_\gamma(1), \qquad \tilde \eta(0) c = C \alpha \ln(1/\gamma), 
    \end{equation}
    for a large enough constant $C$ which does not depend on $\gamma$. 
    Then, we have that, with overwhelming probability,
    \begin{equation}
        \mathcal R(\theta^p_\alpha) = O_\gamma \left(\alpha \gamma \ln^{\frac{1}{1+\alpha}}(1/\gamma) + \frac{\alpha^{2} \gamma^2 \ln^{\frac{2}{1+\alpha}}(1/\gamma)}{\rho^2} \right). 
    \end{equation}
\end{proposition}


Proposition \ref{prop:alphabigger1body} (proved in Appendix \ref{sec:bigalpha}, also in the setting where $\lambda_{\max}$ and $\lambda_{\min}$ can depend on $\gamma$) provides upper bounds on the final risk as the degree $\alpha$ of the polynomial schedule changes, matching the ones in Proposition \ref{prop:alpha012} for $\alpha = \{ 0, 1/2 \}$. 
We note that the condition in \eqref{eq:extraassply} allows for values of $\alpha$ as large as $1 / \gamma$ (neglecting poly-logarithmic factors), and it imposes a similar lower bound for $\rho$ as the one in Proposition \ref{prop:alpha012}.

An immediate consequence of Proposition \ref{prop:alphabigger1body} is that, 
by taking $\alpha= \ln \ln (1/\gamma)$, 
one has
\begin{equation}\label{eq:boundopts2}
    \mathcal R(\theta^p) =  O_\gamma \left(\gamma \ln\ln(1/\gamma) + \frac{\gamma^2}{\rho^2}(\ln\ln(1/\gamma))^{2} \right).
\end{equation}





Thus, for sufficiently small $\gamma$, it is convenient to increase $\alpha$ and decay the noise faster during training, up to a level $\alpha = \Theta_\gamma(\ln \ln(1 / \gamma))$. Values of $\alpha$ larger than that may then deteriorate performance.
This is illustrated in Figure \ref{fig:schedules} (discussed in detail in Section \ref{sec:numerical}), which compares different schedules after the hyper-parameters $\tilde \eta(0)$ and $c$ have been optimized numerically.


\subsection{Harmonic schedule and minimax rates}


Next, we consider a harmonically decreasing schedule of the form $\tilde \eta(t) = \beta / (t + \tau)$.

\begin{theorem}\label{thm:harmonicbody}
    Let Assumptions \ref{ass:data} and \ref{ass:learning_rate_schedules} hold. Consider 
    $\lambda_{\min}, \lambda_{\max}, \norm{\theta^*}_2, \zeta = \Theta_\gamma(1)$, and $\gamma^2 / \rho^2 = o_\gamma(1)$. Let $\theta^p_h$ be the solution of Algorithm \ref{alg:dp-sgd} with learning rate schedule
    \begin{equation}\label{eq:harmonicschedule}
        \tilde \eta(t) = \frac{\beta}{t + \tau}.
    \end{equation}
    Then, there exist values of $c$, $\tau$ and $\beta$ such that, with overwhelming probability,
    \begin{equation}\label{eq:optimalupperbound}
        \mathcal R(\theta^p_h) = O_\gamma \left( \gamma + \frac{\gamma^2}{\rho^2} \right),
    \end{equation}
    with the same upper-bound also holding for $\E \left[ \mathcal R(\theta^p) \right]$.
\end{theorem}

This result shows that the harmonically decreasing schedule $\tilde \eta(t) = \beta / (t + \tau)$, for an appropriate choice of $\beta$ and $\tau$, allows to get rid of the $\ln \ln (1 / \gamma)$ factors in \eqref{eq:boundopts2}, guaranteeing the upper bound in \eqref{eq:optimalupperbound}.
More precisely, $\beta$ and $c$ are chosen as constants independent of $\gamma$, but dependent on $\lambda_{\min}, \lambda_{\max}, \norm{\theta^*}_2, \zeta$, while $\tau$ scales with the maximum between $\gamma$ and $\gamma^2 / \rho^2$. The exact values are provided in the proof of Theorem \ref{thm:harmonicbody} in Appendix \ref{app:harmonic}, which also covers the setting where $\lambda_{\max}$ and $\lambda_{\min}$ can depend on $\gamma$. 

\paragraph{Proof ideas for Theorem \ref{thm:harmonicbody}.} 
To give an intuition of the benefits of a harmonically decreasing schedule, let us consider the simplified ODE
\begin{equation}\label{eq:closedformODEforharmonic}
    \diff \tilde R(t) = - \tilde{\eta}(t) c \tilde R(t) \diff t + (\tilde{\eta}(t) c)^2 \gamma \diff t.
\end{equation}
We note the resemblance between \eqref{eq:closedformODEforharmonic} and \eqref{eq:closedformODEalpha0}, which we used to study the constant learning rate schedule. Heuristically, to minimize $\tilde R(1)$, we wish to minimize the RHS point-wise in $t$. Setting to $0$ the derivative of the RHS with respect to $\tilde \eta(t) c$, we get
\begin{equation}\label{eq:Rtoscheduleharm}
    \tilde R(t) = 2 \tilde \eta(t) c \gamma.
\end{equation}
Plugging this into \eqref{eq:closedformODEforharmonic}, we obtain
\begin{equation}
    \diff \tilde R(t) = - \frac{\tilde R^2(t)}{4 \gamma} \diff t \quad \Rightarrow \quad R(t) = \frac{4 \gamma}{t + 4 \gamma / \tilde R(0)},
\end{equation}
which together with \eqref{eq:Rtoscheduleharm} gives the harmonic schedule $\tilde \eta(t) c = \frac{2}{t + 4 \gamma / \tilde R(0)}$.

Consider now the original ODE in \eqref{eq:isotropicODE}. Via the same comparison arguments used to obtain \eqref{eq:comparisonbodyalpha12}, when $c = O_\gamma(1)$ this ODE can be reduced to
\begin{equation}
\begin{split}    
    \diff \hat R(t) &= - \tilde \eta(t) c \hat R(t) \diff t + (\tilde \eta(t) c)^2 \gamma \diff t - \frac{\diff (\tilde \eta(t) c)^2}{\diff t} \frac{\gamma^2}{\rho^2} \diff t,
\end{split}
\end{equation}
with initial condition $\hat R(0) = R(0)$. Then, we have that (see Lemma \ref{lemma:ODEclosed})
\begin{equation}
    \hat R (t) = e^{- F(t)}\left( R(0) + \int_0^t e^{F(s)} \left( \gamma (\tilde \eta(s) c)^2 - 2 \frac{\gamma^2}{\rho^2} \tilde\eta(s) \tilde \eta'(s) c^2 \right)\diff s \right),
\end{equation}
where $\tilde \eta'(t)$ denotes the derivative of $\tilde \eta(t)$ and $F(t) = \int_0^t \tilde \eta(s) c \, \diff s$. Relying on the previous harmonic heuristic and carrying out explicit computations after plugging the schedule in \eqref{eq:harmonicschedule} yields the desired result.




\paragraph{Minimax rates.} Given the result of Theorem \ref{thm:harmonicbody}, it is natural to question if a different choice of schedule, or even a completely different choice of algorithm could improve the rates in \eqref{eq:optimalupperbound}. In the following, we prove that this is not the case.

\begin{theorem}\label{thm:lowerbound}
    Let Assumption \ref{ass:data} hold. Consider 
    $\lambda_{\min}, \lambda_{\max}, \zeta^2 = \Theta_\gamma(1)$ and $\gamma^2 / \rho^2 = O_\gamma(1)$.
    Let $\theta^p$ be the solution found by a generic algorithm $\mathcal M$ belonging to the set of all $\rho^2 / 2$-zCDP algorithms $\mathbb M$.
    Then, we have
    \begin{equation}\label{eq:minimaxratebody}
        \inf_{\mathcal M \in \mathbb M} \sup_{\norm{\theta^*}_2 < 1} \E \left[ \mathcal R(\theta^p) \right] = \Omega_\gamma \left( \gamma + \frac{\gamma^2}{\rho^2} \right).
    \end{equation}
\end{theorem}

The proof of Theorem \ref{thm:lowerbound} follows the same tracing attack technique considered in \cite{cai21}. 
We note that Theorem 4.1 in \cite{cai21} is stated for $(\varepsilon, \delta)$ approximate DP and, if applied directly to our setting, would yield an extra $\log n$ at the denominator of the privacy term. To avoid this, our minimax rates concern the set of $\rho^2/2$-zCDP algorithms, as per Definition \ref{def:zcdp}. This gives a minimax lower bound matching the upper bound of Theorem \ref{thm:harmonicbody}.

\paragraph{Proof ideas for Theorem \ref{thm:lowerbound}.} We define the random variable $L_i = p_{\mathcal{M}(D)}({\theta}) / p_{\mathcal{M}(D'_i)} ({\theta})$, 
where $p_{\mathcal{M}(D)}$ and $p_{\mathcal{M}(D'_i)}$ are the probability density functions of the parameters $\mathcal{M}(D)$ and $\mathcal{M}(D'_i)$, with $D$ and $D_i$ being neighboring datasets differing in their $i$-th entry.
Due to the definition of zCDP (see Definition \ref{def:zcdp}), we have that, for all 
$\alpha > 1$,
\begin{align}\label{eq:Lbody}
    \E\left[ L^\alpha_i \right] \leq \exp \left( \frac{\alpha (\alpha - 1) \rho^2 }{2}\right).
\end{align}
Consider now the tracing attack
\begin{align}
    A_i(\theta) = z_i x_i^\top \left(\theta - \theta^* \right).
\end{align}
Then, in Lemma \ref{lemma:omegad} (similar to Lemma 4.1 in \cite{cai21}) we show that 
there exists a prior $\pi$ on $\theta^*$ with support in $\norm{\theta^*}_2 < 1$ such that, for any $\rho^2 / 2$-zCDP algorithm $\mathcal M$,
\begin{equation}
    \E_\pi \left[ \sum_{i \in [n]} \E \left[ A_i (\mathcal M(D)) \right] \right] = \Omega(d).
\end{equation}
In Lemma \ref{lemma:privateminimax} we show that this bound, together with \eqref{eq:Lbody} for $\alpha = 2$, gives
\begin{equation}\label{eq:infbody}
    \inf_{\mathcal M \in \mathbb M} \E_{\pi, \mathcal M, X, Y} \left[ \mathcal R(\mathcal M(X, Y)) \right] = \Omega \left( \frac{d^2}{n^2 \left( e^{\rho^2} - 1 \right)} \right).
\end{equation}
Merging \eqref{eq:infbody} with the Bayes risk obtained in Lemma \ref{lemma:nonprivateminimax} gives the desired result, as the privacy cost dominates only for $\rho = O_\gamma(1)$ such that $\rho^2 = \Theta_\gamma(e^{\rho^2} - 1)$. The full argument is deferred to Appendix \ref{app:lowerbound}.

\section{Scaling laws}\label{sec:scalinglaws}

Let us now focus on the ill-conditioned setting, i.e., when the condition number of the data covariance $\Sigma$ depends on the number of dimensions $d$. In this setting, the bounds in \eqref{eq:upperlowerODEsbodynew} are not tight enough to describe the behavior of Algorithm \ref{alg:dp-sgd}, as $\lambda_{\max}/\lambda_{\min}$ diverges with $d$. Therefore, we analyze
the full system of ODEs in \eqref{eq:ODEi} and \eqref{eq:deteq}.
In particular, $D_i(t)$ has the following implicit solution
\begin{equation}\label{eq:ODEi_sol}
     D_i (t) = D_i(0)e^{- 2 \lambda_i\Gamma(t)} + \int_0^t \left(  \lambda_i  \bar \eta^2(s) \nu_c(R(s)) (R(s) + \zeta^2/2) \gamma + 2 c^2 \tilde \sigma^2(s) \gamma^2 \right) e^{- 2 \lambda_i(\Gamma(t)-\Gamma (s)) }\diff s,
\end{equation}
where we introduced the shorthand $\Gamma(t) = \int_0^t \bar \eta(s) \mu_c(R(s)) \diff s$. 
Multiplying by $\lambda_i$ both sides of \eqref{eq:ODEi_sol}, averaging over $i \in [d]$, and using \eqref{eq:deteq}, we obtain the following implicit equation for the risk $R(t)$: 
\begin{equation}\label{eq:implicitRt}
     R (t) = \cF(\Gamma(t)) + \int_0^t \left( \bar \eta^2(s) \nu_c(R(s)) (R(s) + \zeta^2/2) \gamma \mathcal{K}(\Gamma(t)-\Gamma (s))  + 2 c^2 \tilde \sigma^2(s) \gamma^2 \mathcal J (\Gamma(t)-\Gamma (s)) \right) \diff s,
\end{equation}
where we introduced the shorthands
\begin{equation}\label{eq:cFcK}
\cF(x) := \frac{1}{d} \sum_{i=1}^d D_i(0)\lambda_ie^{- 2 \lambda_i x}, \qquad \mathcal{K}(x) := \frac{1}{d} \sum_{i=1}^d \lambda_i^2 e^{- 2 \lambda_i x}, \qquad \mathcal{J}(x) := \frac{1}{d} \sum_{i=1}^d \lambda_i e^{- 2 \lambda_i x}.
\end{equation}

We assume a power-law distribution for the covariance spectrum. 
\begin{assumption}\label{ass:power_law_1}
   $\Sigma$ has a spectrum that converges weakly as $d \to \infty$ to the power law measure $p(\lambda) = (1 - \phi) C_\phi^{\phi - 1} \lambda^{-\phi}\mathbf{1}_{(0, C_\phi)}$, with $C_\phi = \frac{2 - \phi}{1 - \phi}$, for some $\phi < 1$. Furthermore, for some $\psi < 1 - \phi$, we have that $D_i(0) = d \left(\omega_i^\top \theta^* \right)^2 / 2 = \Theta_\gamma(\lambda_i^{-\psi}) $ uniformly for $i \in [d]$, with $\limsup_{d \to \infty} \sum \lambda_i^{-\psi} / d < + \infty$.
\end{assumption}

In words, $\phi$ regulates the distribution of the spectrum. For $\phi = 0$, the spectrum is uniformly distributed in the interval $[0, 2]$; as $\phi \to 1$ more eigenvalues are concentrated towards 0; and as $\phi\to-\infty$, the covariance approaches the identity. The multiplicative factor $C_\phi$ is to ensure that the density $p(\lambda)$ is normalized and that $\tr(\Sigma) / d \to 1$, in agreement with Assumption \ref{ass:data}.
The coefficient $\psi$ describes how the projections of $\theta^*$ on the eigenvectors $\omega_i$ of the covariance depend on the corresponding eigenvalues. For $\psi = 0$, their squares are distributed uniformly; as $\psi$ increases, more weight is on the directions where the eigenvalues are small; and as $\psi$ decreases, the opposite happens. The last condition is to guarantee $\norm{\theta^*}_2^2 < \infty$, and it directly implies the bound $\psi < 1 - \phi$.


We note that Assumption \ref{ass:power_law_1} differs from those considered in the kernel ridge regression literature, see \cite{caponnetto2007optimal, rudi2017generalization, defilippis2024dimension, paquette20244+}. 
Our assumption is instead inspired by \cite{collins2024high}, and it is better suited to the random matrix theory regime underlying Theorem \ref{thm:deteq}, where a deterministic equivalent is derived. More precisely, existing work on kernel ridge regression sets the eigenvalues of the covariance to $\lambda_i = i^{- \alpha}$, for $\alpha > 1$,  
which makes $\tr(\Sigma)$ converge to a quantity independent of $d$. This is in contrast with the requirement $\tr(\Sigma)=d$ in Assumption \ref{ass:data}, which 
in turn is used to prove the concentration argument in Theorem \ref{thm:deteq}. 

As in the previous section, we study the non-asymptotic behavior of $\mathcal R(\theta^p_\alpha)$ as $\gamma \to 0$ for the class of polynomial schedules $\tilde \eta(t) = \tilde \eta(0) (1 - t)^\alpha$ with $\tilde \eta(0) \leq 2 / \gamma$.
The following result (proved in Appendix \ref{app:scalinglaws}) considers a privacy parameter that scales  polynomially with $\gamma$, i.e., $\rho = \Theta_\gamma(\gamma^b)$, and it establishes the optimal risk obtained by optimizing the hyper-parameters $(c, \tilde\eta(0))$ for fixed $(\phi, \psi, \alpha, b)$. 

\begin{theorem}\label{thm:scalinglawsbody}
    Let Assumptions \ref{ass:data}, \ref{ass:learning_rate_schedules} and \ref{ass:power_law_1} hold. 
    Let $\theta^p_\alpha$ be the solution of Algorithm \ref{alg:dp-sgd}, with $\tilde \eta(t) = \tilde \eta(0) (1 - t)^\alpha$, where $\alpha = \{0 , 1/2\}$ and $\alpha \geq 1$ such that $\tilde \eta(0) \leq 2 / \gamma$. Let 
    $\rho = \Theta_\gamma(\gamma^b)$ with $b < 1$. Denote $K = (2 - \phi - \psi) (\alpha + 1)$, and set $c = O_\gamma(1)$ and $c \tilde \eta(0) = \gamma^a$. Then,
    \begin{itemize}
    \item if $\phi (\alpha + 1) < 2$ and $b \leq \frac{K}{2 (K+1)}$, set
    \begin{equation}\label{eq:scalinglawcase1}
        a = - \frac{\alpha+1}{K+1}, \qquad \text{and define} \qquad h = \frac{K}{K+1};
    \end{equation}
    
    \item If $\phi (\alpha + 1) < 2$ and $b > \frac{K}{2 (K+1)}$, set
    \begin{equation}\label{eq:scalinglawcase2}
        a = - \frac{2(1 - b)(\alpha + 1)}{K + 2}, \qquad \text{and define} \qquad h = \frac{2 K (1 - b)}{K + 2},
    \end{equation}

    \item If $\phi (\alpha + 1) \geq 2$ and $b \leq 1 - \frac{(2 - \psi)(\alpha + 1)}{2 (K + 1)}$, set $a$ and define $h$ as in \eqref{eq:scalinglawcase1};

    \item If $\phi (\alpha + 1) \geq 2$ and $b > 1 - \frac{(2 - \psi)(\alpha + 1)}{2 (K + 1)}$, set
    \begin{equation}\label{eq:scalinglawcase3}
        a = - \frac{2 (1 - b)}{2 - \psi}, \qquad \text{and define} \qquad  h = \frac{2 (2 - \phi - \psi) (1 - b)}{2 - \psi} + \frac{\ln(a \ln \gamma)}{\ln \gamma} \, \mathbf{1} \left\{ \phi (\alpha + 1) = 2\right\}.
    \end{equation}
    \end{itemize}
    For all of these cases, with overwhelming probability, we have $\mathcal R(\theta^p_\alpha) = \Theta_\gamma(\gamma^h)$.
    Furthermore, for all of these cases, with overwhelming probability, we have $\mathcal R(\theta^p_\alpha) = \Omega_\gamma(\gamma^h)$, for all choices of $c$ and all choices of $a$ independent of $\gamma$.
\end{theorem}

        



The scaling laws of Theorem \ref{thm:scalinglawsbody} highlight two regimes. On the one hand, if $b$ is sufficiently small (corresponding to a sufficiently large $\rho$), privacy is achieved without incurring a penalty in performance; in fact, in \eqref{eq:scalinglawcase1}, $h$ does not depend on $b$, which implies that the final risk with optimal hyper-parameter choice is independent of the privacy requirement. On the other hand, if $b$ exceeds a critical value (corresponding to a more stringent requirement in terms of $\rho$), then the cost of privacy dominates and the final risk depends on $b$. This is the case for both $\phi (\alpha + 1) < 2$ and $\phi (\alpha + 1) \geq 2$ (see respectively \eqref{eq:scalinglawcase2} and \eqref{eq:scalinglawcase3}), albeit the critical value of $b$ and the final risk change in their dependence on $(\phi, \psi, \alpha)$. We note that the critical value of $b$ ranges within the interval $b \in (1/4, 1/2)$ for $\phi (\alpha + 1) < 2$ and within the interval $b \in (0, 1/2)$ for $\phi (\alpha + 1) \geq 2$. Similarly to Proposition \ref{prop:alpha012}, Theorem \ref{thm:scalinglawsbody} shows that the optimal value of $c \tilde \eta(0)$ is an increasing function of $1 / \gamma$, since $a$ is always negative. Furthermore, given the admitted ranges of $\phi$ and $\psi$, we have that $a > -1$ in all cases, which guarantees the existence of $c = O_\gamma(1)$ such that $\tilde \eta(0) \leq 2 / \gamma$.

\begin{figure}
    \centering
    \includegraphics[width=\linewidth]{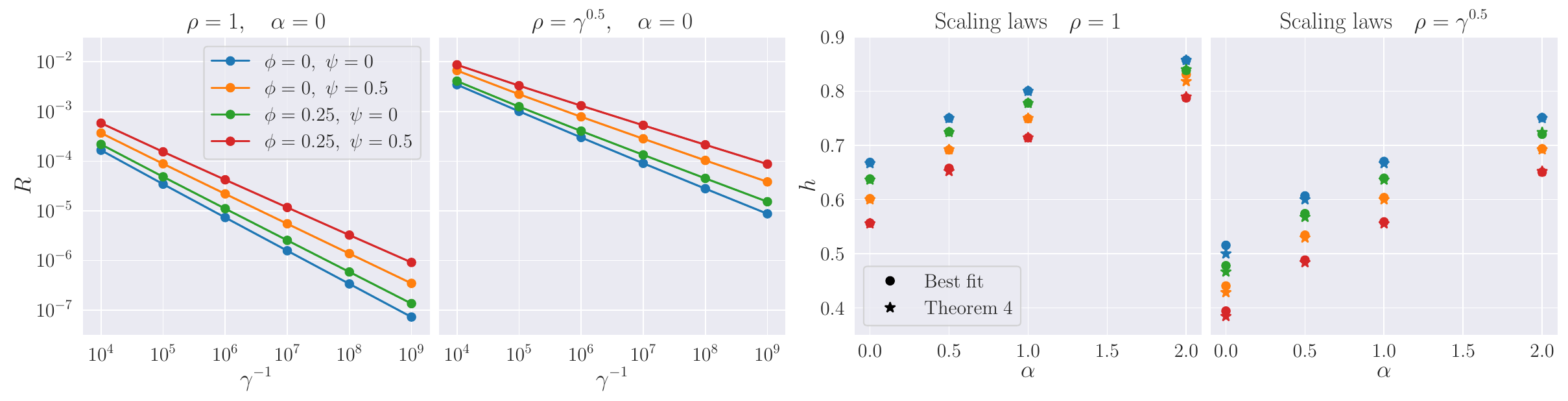}
    \caption{Numerical simulations for the system of ODEs in \eqref{eq:ODEi} and \eqref{eq:deteq}, for $d = 100000$, $\zeta = 0.3$ and $c = 0.1$. In the first and second panel, we report $R(1) + 2c^2 \tilde \eta^2(1) \gamma^2 / \rho^2$ for $\alpha = 0$, $\phi \in \{ 0, 0.25 \}$, $\psi \in \{ 0, 0.5 \}$, and $\rho \in \{1, \gamma^{0.5} \}$, after optimizing w.r.t.\ $\tilde \eta(0)$. In the third and fourth panel, we report the slope of the linear best fit in log-log scale for $\alpha \in \{0, 0.5, 1, 2 \}$ (circle marker), together with the values of $h$ given by Theorem \ref{thm:scalinglawsbody} (star marker).}
    \label{fig:scaling_laws}
\end{figure}


\begin{figure}
    \centering
    \includegraphics[width=\linewidth]{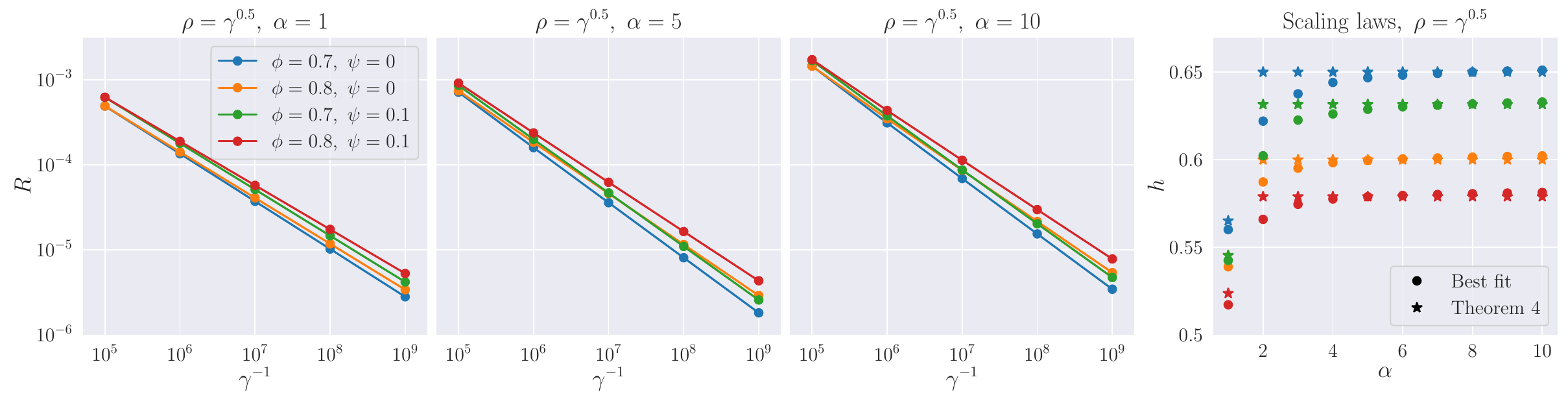}
    \caption{Numerical simulations for the system of ODEs in \eqref{eq:ODEi} and \eqref{eq:deteq}, for $d = 10000$, $\zeta = 0.3$ and $c = 0.1$. In the first three panels, we report $R(1) + 2c^2 \tilde \eta^2(1) \gamma^2 / \rho^2$ for $\alpha = 0$, $\phi \in \{ 0.7, 0.8 \}$, $\psi \in \{ 0, 0.1 \}$,  $\rho = \gamma^{0.5}$, after optimizing w.r.t.\ $\tilde \eta(0)$, for different schedules with $\alpha = \{ 1, 5, 10 \}$. In the fourth panel, we report the slope of the linear best fit in log-log scale for $\alpha \in \{1, \ldots, 10\}$ (circle marker), together with the values of $h$ given by Theorem \ref{thm:scalinglawsbody} (star marker).}
    \label{fig:scaling_laws_new}
\end{figure}

In Figures \ref{fig:scaling_laws} and \ref{fig:scaling_laws_new}, we numerically simulate the system of ODEs in \ref{eq:ODEi} and \eqref{eq:deteq} for different values of $(\phi, \psi,\alpha)$, as $\gamma$ decreases. We fix $c = 0.1$, optimize over $\tilde \eta(0)$, and report the minimum value of $R(1) + 2c^2 \tilde \eta^2(1) \gamma^2 / \rho^2$ for each $\gamma$. 
Then, we perform a linear best fit in log-log scale and report the slope as a function of $\alpha$. 
More precisely, 
in Figure \ref{fig:scaling_laws}, we consider small values of $(\phi, \psi, \alpha)$ and the privacy levels $\rho = 1$ and $\rho = \gamma^{0.5}$, which lead to the scaling laws in \eqref{eq:scalinglawcase1}-\eqref{eq:scalinglawcase2}. The simulation results (circles) agree well with the values of $h$ (stars) from Theorem \ref{thm:scalinglawsbody}, showing that our theory is predictive of the final risk with the optimal hyper-parameter choice. The plots also illustrate that $h$ increases with $K$ (as predicted by \eqref{eq:scalinglawcase1}-\eqref{eq:scalinglawcase2}), 
and therefore, it increases with both $\alpha$ and $2 - \phi - \psi$ ($h$ increases as the color changes from red to orange, green and blue). In Figure \ref{fig:scaling_laws_new}, we consider larger values of $\phi, \alpha$ and the privacy level $\rho = \gamma^{0.5}$, which lead to the scaling laws in \eqref{eq:scalinglawcase3}. For values of $\alpha$ large enough, the simulation results (circles) agree well with the values of $h$ (stars) from Theorem \ref{thm:scalinglawsbody}. We expect a better fit for small values of $\alpha$ if simulations were run for larger values of $\gamma^{-1}$, which is computationally more expensive. The plots illustrate how the value of $h$ saturates after a certain value of $\alpha$, in agreement with \eqref{eq:scalinglawcase3}.


We note that, for fixed $(\phi, \psi)$, $h$ is non decreasing in $\alpha$, suggesting that larger values of $\alpha$ improve performance when $\gamma$ is sufficiently small (see also the discussion after Proposition  \ref{prop:alphabigger1body}).  
This is formalized in the result below (also proved in Appendix \ref{app:scalinglaws}), which readily follows from Theorem \ref{thm:scalinglawsbody} after taking large enough  $\alpha$.


\begin{corollary}\label{cor:scalinglaws}
    Consider the setting of Theorem \ref{thm:scalinglawsbody}, and suppose $\phi \leq 0$. Then, for every $\epsilon > 0$, there exist $\alpha$, $c$ and $\tilde \eta(0)$ such that, with overwhelming probability,
    \begin{equation}
        \mathcal R(\theta^p_\alpha) = O_\gamma \left( \gamma^{1-\epsilon} + \left( \frac{\gamma^2}{\rho^2} \right)^{1 - \epsilon} \right).
    \end{equation}

    Suppose instead $\phi > 0$. Then, for every $\epsilon > 0$, there exist $\alpha$, $c$ and $\tilde \eta(0)$ such that, with overwhelming probability,
    \begin{equation}\label{eq:scalingbestrates2}
        \mathcal R(\theta^p_\alpha) = O_\gamma \left( \gamma^{1-\epsilon} + \left( \frac{\gamma^2}{\rho^2} \right)^{\frac{2 - \psi - \phi}{2 - \psi}} \right).
    \end{equation}
\end{corollary}


We recall that the setting $\phi < 0$ is closer to the well-conditioned case discussed in Section \ref{sec:optimalrates}, since the covariance approaches the identity as $\phi\to-\infty$. 
In fact, in this setting, the risk approaches the same rate $\Theta_\gamma(\gamma + \gamma^2 / \rho^2)$ given by Theorems \ref{thm:harmonicbody}-\ref{thm:lowerbound}. This implies that the privacy requirement has a performance cost when $b \geq 1/2$, corresponding to $\rho =O( \sqrt{\gamma})$.

In contrast, $\phi > 0$ is closer to an ill-conditioned setting, since more eigenvalues concentrate towards 0 as $\phi \to 1$. In this setting, the minimax optimal rate for well-conditioned covariance is not 
approached. In fact, if
\begin{equation}\label{eq:criticalbillconditioned}
    b > \frac{1}{2} - \frac{\phi}{2 (2 - \phi - \psi)},
\end{equation}
the second term in the RHS of \eqref{eq:scalingbestrates2} dominates the first one (when $\epsilon$ is sufficiently small). We note that the RHS of \eqref{eq:criticalbillconditioned} is smaller than $1 / 2$. This implies that, when $\phi > 0$, the privacy requirement has a performance cost for a larger privacy parameter $\rho$ than when $\phi < 0$. In fact, as $\phi$ approaches 1, even a mild privacy requirement (corresponding to $b$ close to $0$) yields a decrease in performance.

We now provide an interpretation of the higher cost of privacy incurred when $\phi>0$. If $\phi > 0$, then $\Sigma$ has many small eigenvalues, which corresponds to having several directions with low signal-to-noise ratio. In this subspace, estimating $\theta^*$ relies on rare samples with unusually large components, making the learning algorithm more dependent on atypical observations. The effect is further enhanced as more weight of $\theta^*$ is on the directions where the eigenvalues are small, which corresponds to having a larger value of $\psi$, as discussed after Assumption \ref{ass:power_law_1}.
This is reminiscent of a setting such that ``learning requires memorization\footnote{Memorization captures the influence an individual sample has on the final model.}'' \citep{feldman2020does}, and in this scenario enforcing differential privacy has been shown to deteriorate performance especially on atypical and under-represented groups \citep{cummings2019compatibility, fioretto2022differential}.

\begin{figure}
    \centering
    \includegraphics[width=\linewidth]{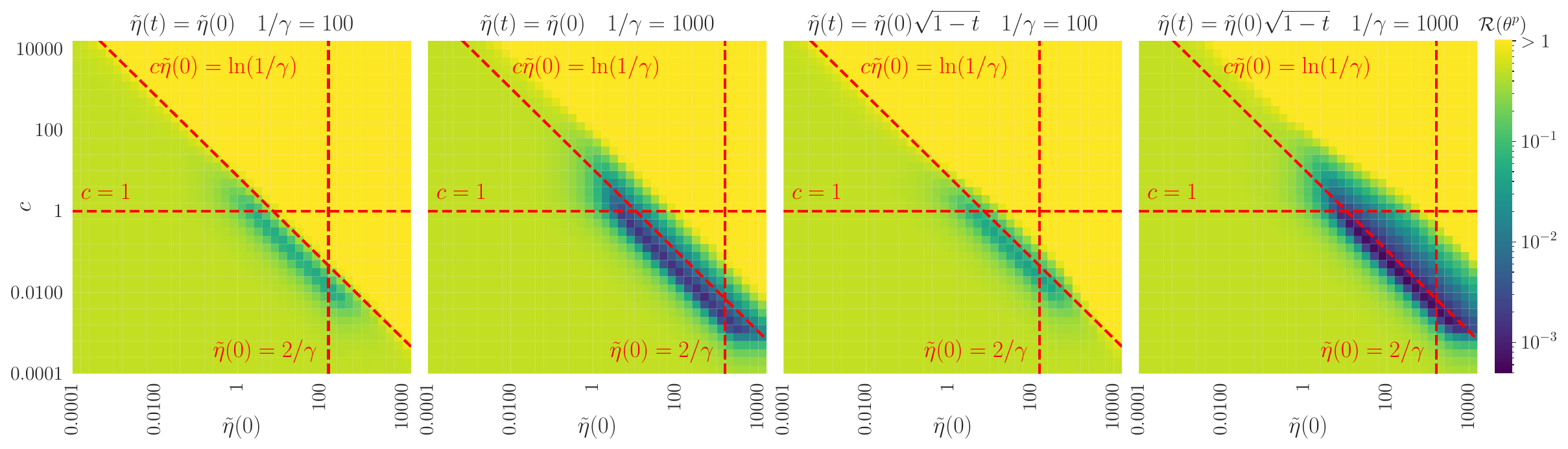}
    \caption{Numerical simulations for $\mathcal R(\theta^p)$ obtained via Algorithm \ref{alg:dp-sgd} for $d = 100$ and $\Sigma = I$, as a function of $c$ and $\tilde \eta(0)$. We consider the schedules corresponding to output perturbation ($\tilde \eta(t) = \tilde \eta(0)$, first and second panel) and constant private noise ($\tilde \eta(t) = \tilde \eta(0) \sqrt{1 - t}$, third and fourth panel),
    with fixed $\rho = 0.1$ and $\zeta = 0.3$. We set $\gamma = 0.01$ in the first and third panel, and $\gamma = 0.001$ in the second and fourth panel. The values of $\mathcal R(\theta^p)$ are capped at $1$, and $\theta^*$ is chosen such that $\mathcal R(\theta_0) = 0.5$. We indicate with red dashed lines the curves $c = 1$, $\tilde \eta(0) = 2 / \gamma$, and $c \tilde \eta(0) = \ln(1 / \gamma)$, and we display the average over 10 independent trials.}
    \label{fig:heatmaps}
\end{figure}

\begin{figure}
    \centering
    \includegraphics[width=\linewidth]{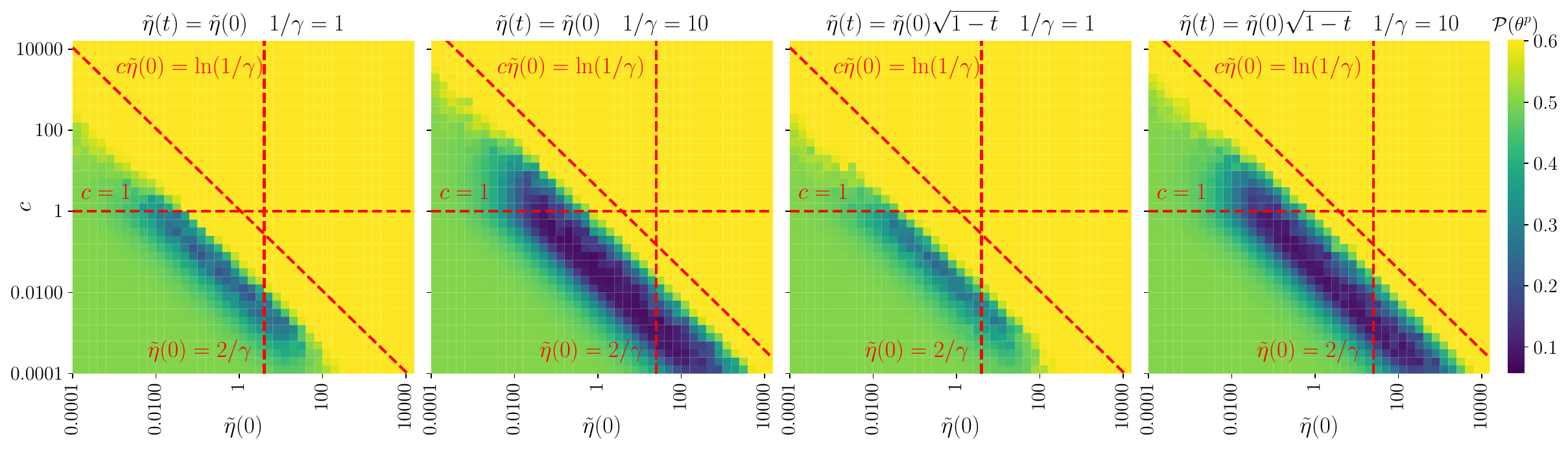}
    \caption{Numerical simulations for $\mathcal P(\theta^p)$ obtained via Algorithm \ref{alg:dp-sgd} on the MNIST dataset, 
    as a function of $c$ and $\tilde \eta(0)$. We consider the schedules corresponding to output perturbation ($\tilde \eta(t) = \tilde \eta(0)$, first and second panel) and constant private noise ($\tilde \eta(t) = \tilde \eta(0) \sqrt{1 - t}$, third and fourth panel), with fixed $\rho = 0.1$. We set $\gamma = 1$ in the first and third panel, and $\gamma = 0.1$ in the second and fourth panel. The values of $\mathcal P(\theta^p)$ are capped at $0.6$. We indicate with red dashed lines the curves $c = 1$, $\tilde \eta(0) = 2 / \gamma$, and $c \tilde \eta(0) = \ln(1 / \gamma)$, and we display the average over 10 independent trials.}
    \label{fig:heatmap_mnist}
\end{figure}

\begin{figure}
    \centering
    \includegraphics[width=\linewidth]{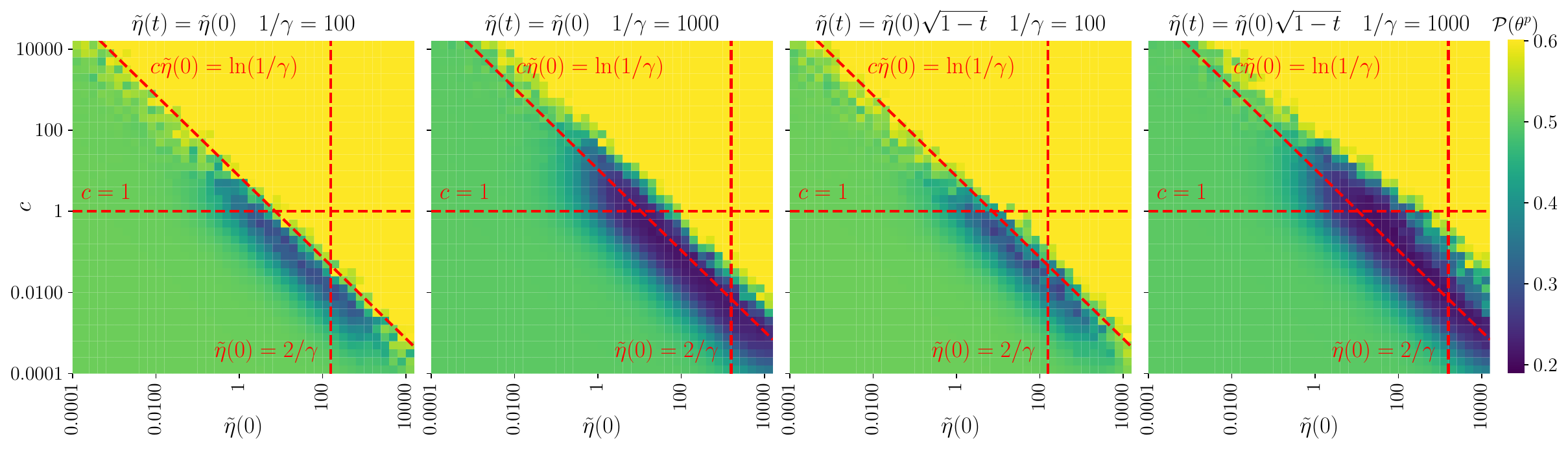}
        \caption{Numerical simulations for $\mathcal P(\theta^p)$ obtained via Algorithm \ref{alg:dp-sgd} on the California housing dataset, 
        as a function of $c$ and $\tilde \eta(0)$. We consider the schedules corresponding to output perturbation ($\tilde \eta(t) = \tilde \eta(0)$, first and second panel) and constant private noise ($\tilde \eta(t) = \tilde \eta(0) \sqrt{1 - t}$, third and fourth panel), with fixed $\rho = 0.1$. We set $\gamma = 0.01$ in the first and third panel, and $\gamma = 0.001$ in the second and fourth panel. The values of $\mathcal P(\theta^p)$ are capped at $0.6$. We indicate with red dashed lines the curves $c = 1$, $\tilde \eta(0) = 2 / \gamma$, and $c \tilde \eta(0) = \ln(1 / \gamma)$, and we display the average over 5 independent trials.}
    \label{fig:heatmap_housing}
\end{figure}

\section{Numerical results}\label{sec:numerical}

\paragraph{Experimental details.} 
We consider synthetic data (Figures \ref{fig:heatmaps} and \ref{fig:schedules}), the MNIST dataset (Figures \ref{fig:heatmap_mnist} and \ref{fig:schedules_real}), and the California housing dataset (Figures \ref{fig:heatmap_housing} and \ref{fig:schedules_real}). 
For experiments on synthetic data, we sample Gaussian data following Assumptions \ref{ass:data} and \ref{ass:power_law_1}. For experiments on MNIST, we cast the problem as a binary task by selecting two classes (digits ``1'' and ``7'') and labeling them with $y = \pm 1$. Each image is flattened into a vector in $\R^d$ with $d = 784$. For experiments on the California housing dataset, we download its scikit-learn version and consider as input covariates its $d = 8$ numerical features. We 
split datasets into a training subset, a disjoint normalization subset, and a held-out validation/test subset. Using the normalization subset, we compute (non-privately) the empirical mean and standard deviation of each input coordinate (and of the labels) and use them to normalize both the training and validation covariates (labels), so that each coordinate has zero mean and unit variance. The plotted quantity $\mathcal P$ in Figures \ref{fig:heatmap_mnist}, \ref{fig:heatmap_housing} and \ref{fig:schedules_real} corresponds to the average value of the final validation square loss. For the housing dataset, we notice that our algorithm becomes numerically unstable for larger values of $n$, with the value of the final loss being disproportionally large for a few values of the random seeds. We manually exclude such seeds from the plots. 

\paragraph{Hyper-parameter space.}

In Figures \ref{fig:heatmaps}, \ref{fig:heatmap_mnist} and \ref{fig:heatmap_housing}, we run Algorithm \ref{alg:dp-sgd} with a fixed privacy budget $\rho = 0.1$ on synthetic, MNIST and California housing data. We focus on the schedules $\tilde \eta(t) = \tilde \eta(0)$ and $\tilde \eta(t) = \tilde \eta(0) \sqrt{1 - t}$, for varying values of the renormalized clipping constant $c$ and learning rate $\tilde \eta(0)$. We report on the heatmaps the resulting values of the final loss as a function of $\tilde \eta(0)$ (on the $x$ axis) and $c$ (on the $y$ axis).
All the results are in agreement with our discussion on the hyper-parameters following Proposition \ref{prop:alpha012}: performance deteriorates if either $c$ exceeds $1$ (upper part of the heatmaps) or $\tilde \eta(0)$ exceeds $2 / \gamma$ (right part of the heatmaps), and the lowest values of the risk are roughly parallel to the line $c \tilde \eta(0) = \ln(1 / \gamma)$.
We remark that these scalings agree with the empirical practice in deep learning of using a sufficiently small clipping constant, with a learning rate renormalized by its value \citep{de2022, mckenna2025}.

\begin{figure}
  \begin{center}
    \includegraphics[width=\textwidth]{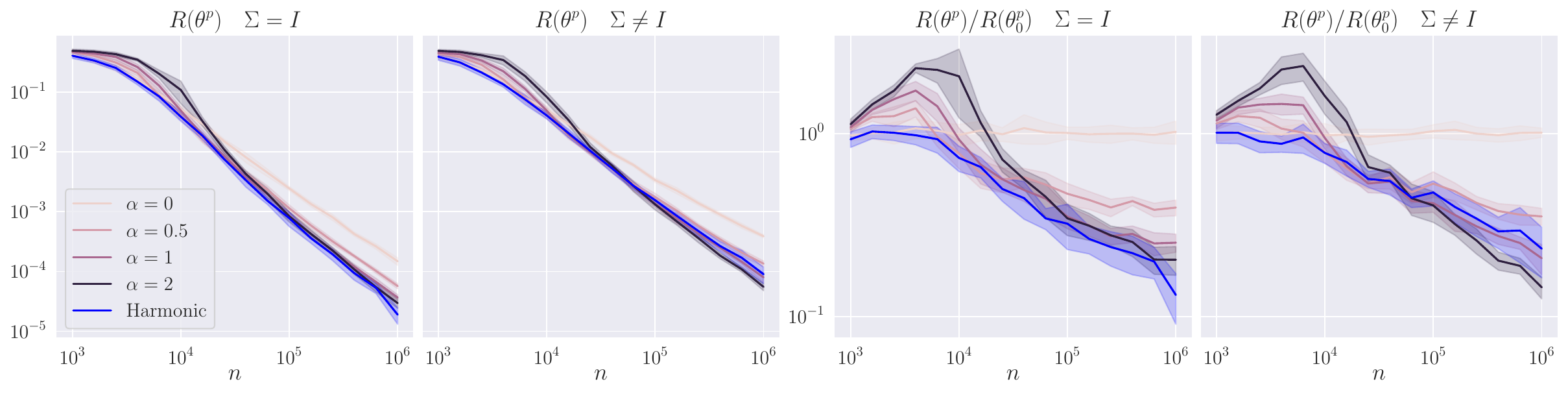}
  \end{center}
  \caption{Numerical simulations for $\mathcal R(\theta^p)$ obtained via Algorithm \ref{alg:dp-sgd} for $d = 100$, $\zeta = 0.3$, $(\theta^*_i)^2 = 1/d$, and $\rho = 0.1$, as a function of $n$. We fix $c = 0.1$ and consider the polynomial schedules in \eqref{eq:polynomialsched} for $\alpha \in \{0, 0.5, 1, 2\}$ (optimizing w.r.t.\ $\tilde \eta(0)$) and the harmonic schedule in \eqref{eq:harmonicschedule} (optimizing w.r.t.\ $\beta$ and $\tau$). We report the average over 10 independent trials, as well as the confidence interval corresponding to 1 standard deviation. We consider both isotropic data ($\Sigma = I$, first and third panel) and data with diagonal covariance whose eigenvalues are uniformly distributed in the interval $[0, 2]$ (second and fourth panel).}
  \label{fig:schedules}
\end{figure}

\paragraph{Effect of the learning rate schedule.}

In Figures \ref{fig:schedules} and \ref{fig:schedules_real}, we compare different schedules for the learning rate $\tilde \eta(t)$, after setting $c = 0.1$ and optimizing $\tilde \eta(0)$ numerically. We consider the polynomial schedules in \eqref{eq:polynomialsched} for $\alpha = \{0, 0.5, 1, 2\}$ and the harmonic schedule in \eqref{eq:harmonicschedule}.
The first and second panel in Figure \ref{fig:schedules} (which display the results for synthetic data) show that all schedules rapidly give better results as $n$ increases, but larger values of $\alpha$ are optimal only for large enough $n$. This effect is clearly shown in the third and fourth panel, which reports the same results normalized by the loss of output perturbation ($\alpha = 0$). \\
\begin{wrapfigure}{l}{0.55\textwidth}
  \centering
  \includegraphics[width=0.53\textwidth]{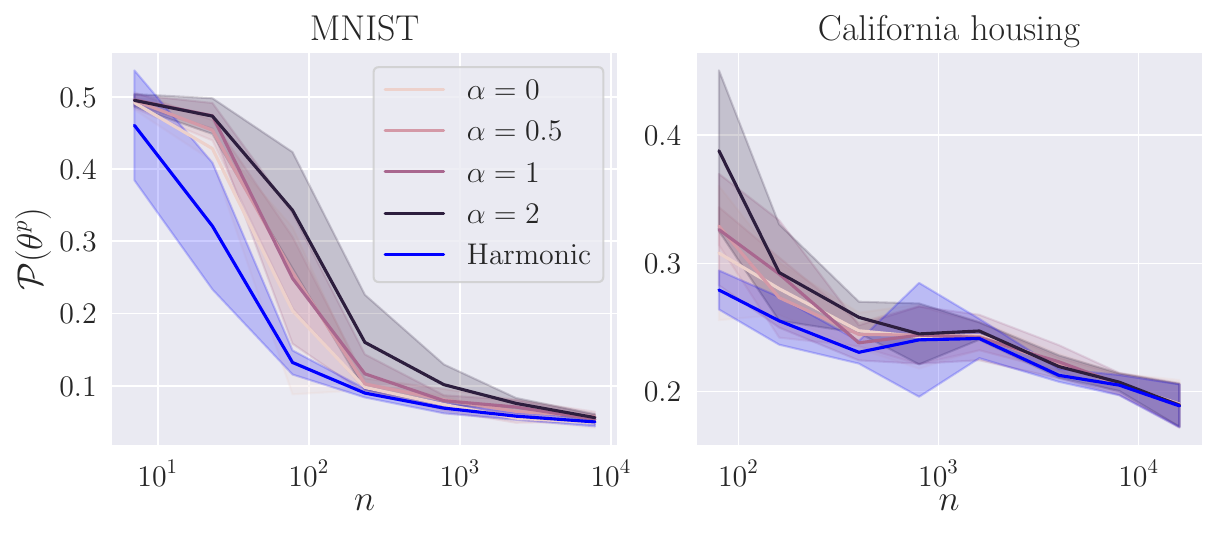}
  \caption{Numerical simulations for $\mathcal P(\theta^p)$ obtained via Algorithm \ref{alg:dp-sgd} with $\rho = 1$ for the MNIST and the California housing dataset as a function of $n$. The method is in common with the one in Figure \ref{fig:schedules}.}
  \label{fig:schedules_real}
\end{wrapfigure}
Hence, if $n$ is sufficiently small compared to $d$, output perturbation can outperform other schedules, in agreement with an observation made in \cite{dwork2025} regarding the comparison between output and objective perturbation. Furthermore, the benefits of using larger values of $\alpha$ for sufficiently large values of $n$ seem more apparent in the case of non-isotropic covariance, displayed in the second and fourth panel.
The results for the harmonic schedule (reported in blue) are obtained after optimizing numerically $\beta$ and $\tau$ as in \eqref{eq:harmonicschedule}. The third panel of Figure \ref{fig:schedules} clearly displays the benefits of this choice for both small and large values of $n$. In the non-isotropic case (fourth panel), this schedule performs optimally for sufficiently small values of $n$, and it is on par with other polynomial schedules when $n = 10^4 d$ (right part of the plot).
In Figure \ref{fig:schedules_real}, we run the same experiment for the MNIST and California housing datasets: we note that the harmonic schedule remains optimal, as suggested by our theory. 



\section{Discussion and future directions}\label{sec:disc}

We show optimal rates for DP linear regression in the proportional regime where the number of samples grows linearly with the input dimension.
To achieve this result, a crucial technical innovation is to establish a deterministic equivalent of the test risk of DP-GD via a family of deterministic ODEs, in the challenging setting where the clipping constant is of the order of the per-sample gradient. Analyzing the family of ODEs then gives bounds that are sharp enough to \emph{(i)} demonstrate the benefit of 
small clipping constants, and to \emph{(ii)} identify the optimal learning rate schedule. 


The optimality of the rate $\gamma + \gamma^2 / \rho^2$ holds for a well-conditioned covariance, as shown in the minimax lower bound of Theorem \ref{thm:lowerbound}. In the ill-conditioned setting, we consider a covariance spectrum distributed according to a power law (Assumption \ref{ass:power_law_1}), and we provide a  tight analysis of the DP-GD risk for polynomially-decaying learning rate schedules. As in the well-conditioned case, our results highlight the benefits of small clipping constants and decaying schedules. However, in contrast to the well-conditioned case, establishing minimax rates remains an interesting direction for future work. We also note that the optimality of our rates holds when the variance of the label noise $\zeta^2$ is a strictly positive constant independent of $\gamma$. We cautiously suspect that, for $\zeta = o_\gamma(1)$, the optimal rate is of order $\zeta^2 (\gamma + \gamma^2 / \rho^2)$ (see e.g. Theorem 4.1 in \cite{cai21}, where this dependence is tracked). In this regime, our bounds on $\mu_c(R)$ and $\nu_c(R)$ in Lemma \ref{lemma:munu} suggest that a sufficiently small value of the clipping constant is $c = O_\gamma(\sqrt{2R + \zeta^2})$ rather than $c = O_\gamma(1)$. If $\zeta = o_\gamma(\gamma)$, setting a fixed $c = O_\gamma(\zeta)$ might not allow to achieve the optimal rates (as in the right side of the panels in Figure \ref{fig:heatmaps}), suggesting the need for a time-dependent scheduling of $c$.
Interesting future directions also involve \emph{(i)} proving that the optimal non-asymptotic rates read $d / n + d^2 / (n \rho)^2$ beyond the proportional regime, and \emph{(ii)} better understanding the precise proportional asymptotics beyond the limit $\gamma \to 0$, for example showing the benefits of output perturbation w.r.t.\ schedules with decaying learning rate when $\gamma$ is sufficiently large, as displayed in Figure \ref{fig:schedules}. 



Finally, we remark that the technical approach of analyzing DP-GD through a deterministic equivalent can be used to study differentially private optimization beyond linear regression. A concrete setting for future work is provided e.g.\ by logistic regression, where the GD dynamics (in the absence of clipping and private noise) has been considered in \cite{collins2025exact}.

\section*{Acknowledgements}

The work of Jialei Luo was done while she was an ISTern at the Institute of Science and Technology Austria.
This research was funded in whole or in part by the Austrian Science Fund (FWF) 10.55776/COE12. For the purpose of open access, the authors have applied a CC BY public copyright license to any Author Accepted Manuscript version arising from this submission. 
Simone Bombari was supported by a Google PhD fellowship. 
Inbar Seroussi was partially supported by the Israel Science Foundation grant no. 777/25, the NSF-BSF grant no. 0603624011, Alon fellowship and the Council for Higher Education in Israel under the Moonshot Project.
The authors would like to thank (in alphabetical order) Edwige Cyffers, Mahdi Haghifam, Elliot Paquette, Thomas Steinke, Kabir Aladin Verchand and Yichen Wang for helpful discussions.

{
\small

\bibliographystyle{plainnat}
\bibliography{bibliography.bib}

\begin{thebibliography}{96}
\providecommand{\natexlab}[1]{#1}
\providecommand{\url}[1]{\texttt{#1}}
\expandafter\ifx\csname urlstyle\endcsname\relax
  \providecommand{\doi}[1]{doi: #1}\else
  \providecommand{\doi}{doi: \begingroup \urlstyle{rm}\Url}\fi

\bibitem[Abadi et~al.(2016)Abadi, Chu, Goodfellow, McMahan, Mironov, Talwar, and Zhang]{Abadi2016}
Martin Abadi, Andy Chu, Ian Goodfellow, H.~Brendan McMahan, Ilya Mironov, Kunal Talwar, and Li~Zhang.
\newblock Deep learning with differential privacy.
\newblock In \emph{ACM SIGSAC Conference on Computer and Communications Security}, 2016.

\bibitem[Acharya et~al.(2021)Acharya, Sun, and Zhang]{acharya2021differentially}
Jayadev Acharya, Ziteng Sun, and Huanyu Zhang.
\newblock Differentially private assouad, fano, and le cam.
\newblock In \emph{Algorithmic Learning Theory}, pages 48--78. PMLR, 2021.

\bibitem[Amin et~al.(2019)Amin, Kulesza, Munoz, and Vassilvtiskii]{amin2019a}
Kareem Amin, Alex Kulesza, Andres Munoz, and Sergei Vassilvtiskii.
\newblock Bounding user contributions: A bias-variance trade-off in differential privacy.
\newblock In \emph{International Conference on Machine Learning}, 2019.

\bibitem[Anderson et~al.(2025)Anderson, Bakshi, Majid, and Tiegel]{anderson2025sample}
Prashanti Anderson, Ainesh Bakshi, Mahbod Majid, and Stefan Tiegel.
\newblock Sample-optimal private regression in polynomial time.
\newblock In \emph{Proceedings of the 57th Annual ACM Symposium on Theory of Computing}, pages 2341--2349, 2025.

\bibitem[Andrew et~al.(2021)Andrew, Thakkar, McMahan, and Ramaswamy]{andrew2021differentially}
Galen Andrew, Om~Thakkar, Hugh~Brendan McMahan, and Swaroop Ramaswamy.
\newblock Differentially private learning with adaptive clipping.
\newblock In \emph{Advances in Neural Information Processing Systems}, 2021.

\bibitem[Avella-Medina et~al.(2023)Avella-Medina, Bradshaw, and Loh]{avella2023differentially}
Marco Avella-Medina, Casey Bradshaw, and Po-Ling Loh.
\newblock Differentially private inference via noisy optimization.
\newblock \emph{The Annals of Statistics}, 51\penalty0 (5):\penalty0 2067--2092, 2023.

\bibitem[Barber and Duchi(2014)]{barber2014privacy}
Rina~Foygel Barber and John~C Duchi.
\newblock Privacy and statistical risk: Formalisms and minimax bounds.
\newblock \emph{arXiv preprint arXiv:1412.4451}, 2014.

\bibitem[Bartlett et~al.(2020)Bartlett, Long, Lugosi, and Tsigler]{bartlett2020benign}
Peter~L. Bartlett, Philip~M. Long, G{\'a}bor Lugosi, and Alexander Tsigler.
\newblock Benign overfitting in linear regression.
\newblock \emph{Proceedings of the National Academy of Sciences}, 117\penalty0 (48):\penalty0 30063--30070, 2020.

\bibitem[Bartlett et~al.(2021)Bartlett, Montanari, and Rakhlin]{bartlett21deep}
Peter~L. Bartlett, Andrea Montanari, and Alexander Rakhlin.
\newblock Deep learning: a statistical viewpoint.
\newblock \emph{Acta Numerica}, 30:\penalty0 87–201, 2021.
\newblock \doi{10.1017/S0962492921000027}.

\bibitem[Bassily et~al.(2014)Bassily, Smith, and Thakurta]{bassily2014differentially}
Raef Bassily, Adam Smith, and Abhradeep Thakurta.
\newblock Private empirical risk minimization: Efficient algorithms and tight error bounds.
\newblock In \emph{2014 IEEE 55th Annual Symposium on Foundations of Computer Science}, 2014.

\bibitem[Bassily et~al.(2019)Bassily, Feldman, Talwar, and Guha~Thakurta]{bassily19}
Raef Bassily, Vitaly Feldman, Kunal Talwar, and Abhradeep Guha~Thakurta.
\newblock Private stochastic convex optimization with optimal rates.
\newblock In \emph{Advances in Neural Information Processing Systems}, 2019.

\bibitem[Belkin et~al.(2019)Belkin, Hsu, Ma, and Mandal]{belkin2019}
Mikhail Belkin, Daniel Hsu, Siyuan Ma, and Soumik Mandal.
\newblock Reconciling modern machine-learning practice and the classical bias–variance trade-off.
\newblock \emph{Proceedings of the National Academy of Sciences}, 116\penalty0 (32):\penalty0 15849--15854, 2019.

\bibitem[Bhatia(2007)]{bhadia2007}
Rajendra Bhatia.
\newblock \emph{Positive Definite Matrices}.
\newblock Princeton University Press, 2007.
\newblock ISBN 9780691129181.

\bibitem[Bombari and Mondelli(2025)]{bombari2024privacy}
Simone Bombari and Marco Mondelli.
\newblock Privacy for free in the overparameterized regime.
\newblock \emph{Proceedings of the National Academy of Sciences}, 122\penalty0 (15):\penalty0 e2423072122, 2025.
\newblock \doi{10.1073/pnas.2423072122}.

\bibitem[Brown et~al.(2024)Brown, Dvijotham, Evans, Liu, Smith, and Thakurta]{brown2024private}
Gavin~R Brown, Krishnamurthy~Dj Dvijotham, Georgina Evans, Daogao Liu, Adam Smith, and Abhradeep~Guha Thakurta.
\newblock Private gradient descent for linear regression: Tighter error bounds and instance-specific uncertainty estimation.
\newblock In \emph{International Conference on Machine Learning}, 2024.

\bibitem[Bu et~al.(2023)Bu, Wang, Dai, and Long]{bu2023on}
Zhiqi Bu, Hua Wang, Zongyu Dai, and Qi~Long.
\newblock On the convergence and calibration of deep learning with differential privacy.
\newblock \emph{Transactions on Machine Learning Research}, 2023.
\newblock ISSN 2835-8856.

\bibitem[Bun and Steinke(2016)]{bun2016concentrated}
Mark Bun and Thomas Steinke.
\newblock Concentrated differential privacy: Simplifications, extensions, and lower bounds.
\newblock In \emph{Theory of cryptography conference}, pages 635--658. Springer, 2016.

\bibitem[Bun et~al.(2014)Bun, Ullman, and Vadhan]{bun2014fingerprinting}
Mark Bun, Jonathan Ullman, and Salil Vadhan.
\newblock Fingerprinting codes and the price of approximate differential privacy.
\newblock In \emph{Proceedings of the forty-sixth annual ACM symposium on Theory of computing}, pages 1--10, 2014.

\bibitem[Cai et~al.(2021)Cai, Wang, and Zhang]{cai21}
T.~Tony Cai, Yichen Wang, and Linjun Zhang.
\newblock {The cost of privacy: Optimal rates of convergence for parameter estimation with differential privacy}.
\newblock \emph{The Annals of Statistics}, 49\penalty0 (5):\penalty0 2825 -- 2850, 2021.

\bibitem[Cai et~al.(2025)Cai, Wang, and Zhang]{cai2025scoreattacklowerbound}
T.~Tony Cai, Yichen Wang, and Linjun Zhang.
\newblock Score attack: A lower bound technique for optimal differentially private learning.
\newblock \emph{arXiv preprint arXiv:2303.07152}, 2025.

\bibitem[Caponnetto and De~Vito(2007)]{caponnetto2007optimal}
Andrea Caponnetto and Ernesto De~Vito.
\newblock Optimal rates for the regularized least-squares algorithm.
\newblock \emph{Foundations of Computational mathematics}, 7\penalty0 (3):\penalty0 331--368, 2007.

\bibitem[Celentano et~al.(2021)Celentano, Cheng, and Montanari]{celentano2021high}
Michael Celentano, Chen Cheng, and Andrea Montanari.
\newblock The high-dimensional asymptotics of first order methods with random data.
\newblock \emph{arXiv preprint arXiv:2112.07572}, 2021.

\bibitem[Chaudhuri and Monteleoni(2008)]{chauduri2008}
Kamalika Chaudhuri and Claire Monteleoni.
\newblock Privacy-preserving logistic regression.
\newblock In \emph{Advances in Neural Information Processing Systems}, 2008.

\bibitem[Chaudhuri et~al.(2011)Chaudhuri, Monteleoni, and Sarwate]{chaudhuri11a}
Kamalika Chaudhuri, Claire Monteleoni, and Anand~D. Sarwate.
\newblock Differentially private empirical risk minimization.
\newblock \emph{Journal of Machine Learning Research}, 12\penalty0 (29):\penalty0 1069--1109, 2011.

\bibitem[Chen et~al.(2020)Chen, Wu, and Hong]{Xiangyi2020}
Xiangyi Chen, Steven~Z. Wu, and Mingyi Hong.
\newblock Understanding gradient clipping in private sgd: A geometric perspective.
\newblock In \emph{Advances in Neural Information Processing Systems}, 2020.

\bibitem[Cheng and Montanari(2024)]{cheng2024dimension}
Chen Cheng and Andrea Montanari.
\newblock {Dimension free ridge regression}.
\newblock \emph{The Annals of Statistics}, 52\penalty0 (6):\penalty0 2879 -- 2912, 2024.
\newblock \doi{10.1214/24-AOS2449}.

\bibitem[Collins-Woodfin and Seroussi(2025)]{collins2025exact}
Elizabeth Collins-Woodfin and Inbar Seroussi.
\newblock Exact dynamics of multi-class stochastic gradient descent.
\newblock \emph{arXiv preprint arXiv:2510.14074}, 2025.

\bibitem[Collins-Woodfin et~al.(2024{\natexlab{a}})Collins-Woodfin, Paquette, Paquette, and Seroussi]{collins2024hitting}
Elizabeth Collins-Woodfin, Courtney Paquette, Elliot Paquette, and Inbar Seroussi.
\newblock Hitting the high-dimensional notes: an ode for sgd learning dynamics on glms and multi-index models.
\newblock \emph{Information and Inference: A Journal of the IMA}, 13\penalty0 (4):\penalty0 iaae028, 2024{\natexlab{a}}.
\newblock ISSN 2049-8772.
\newblock \doi{10.1093/imaiai/iaae028}.

\bibitem[Collins-Woodfin et~al.(2024{\natexlab{b}})Collins-Woodfin, Seroussi, Garc{\'\i}a~Malaxechebarr{\'\i}a, Mackenzie, Paquette, and Paquette]{collins2024high}
Elizabeth Collins-Woodfin, Inbar Seroussi, Bego{\~n}a Garc{\'\i}a~Malaxechebarr{\'\i}a, Andrew~W Mackenzie, Elliot Paquette, and Courtney Paquette.
\newblock The high line: Exact risk and learning rate curves of stochastic adaptive learning rate algorithms.
\newblock \emph{Advances in Neural Information Processing Systems}, 37:\penalty0 6500--6548, 2024{\natexlab{b}}.

\bibitem[Cummings et~al.(2019)Cummings, Gupta, Kimpara, and Morgenstern]{cummings2019compatibility}
Rachel Cummings, Varun Gupta, Dhamma Kimpara, and Jamie Morgenstern.
\newblock On the compatibility of privacy and fairness.
\newblock In \emph{Adjunct publication of the 27th conference on user modeling, adaptation and personalization}, pages 309--315, 2019.

\bibitem[Das et~al.(2023)Das, Kale, Xu, Zhang, and Sanghavi]{das2023}
Rudrajit Das, Satyen Kale, Zheng Xu, Tong Zhang, and Sujay Sanghavi.
\newblock Beyond uniform lipschitz condition in differentially private optimization.
\newblock In \emph{International Conference on Machine Learning}, 2023.

\bibitem[De et~al.(2022)De, Berrada, Hayes, Smith, and Balle]{de2022}
Soham De, Leonard Berrada, Jamie Hayes, Samuel~L. Smith, and Borja Balle.
\newblock Unlocking high-accuracy differentially private image classification through scale.
\newblock \emph{arXiv preprint arXiv:2204.13650}, 2022.

\bibitem[Defilippis et~al.(2024)Defilippis, Loureiro, and Misiakiewicz]{defilippis2024dimension}
Leonardo Defilippis, Bruno Loureiro, and Theodor Misiakiewicz.
\newblock Dimension-free deterministic equivalents and scaling laws for random feature regression.
\newblock In \emph{Advances in Neural Information Processing Systems}, 2024.

\bibitem[Deng et~al.(2022)Deng, Kammoun, and Thrampoulidis]{deng2022model}
Zeyu Deng, Abla Kammoun, and Christos Thrampoulidis.
\newblock A model of double descent for high-dimensional binary linear classification.
\newblock \emph{Information and Inference: A Journal of the IMA}, 11\penalty0 (2):\penalty0 435--495, 2022.

\bibitem[Desfontaines(2021)]{desfontainesblog20211001}
Damien Desfontaines.
\newblock A list of real-world uses of differential privacy.
\newblock \url{https://desfontain.es/blog/real-world-differential-privacy.html}, 10 2021.
\newblock Ted is writing things (personal blog).

\bibitem[Dicker(2016)]{dicker2016ridge}
Lee~H. Dicker.
\newblock Ridge regression and asymptotic minimax estimation over spheres of growing dimension.
\newblock \emph{Bernoulli}, 22\penalty0 (1):\penalty0 1--37, 2016.
\newblock ISSN 13507265.

\bibitem[Dobriban and Wager(2018)]{dobriban2018high}
Edgar Dobriban and Stefan Wager.
\newblock High-dimensional asymptotics of prediction: Ridge regression and classification.
\newblock \emph{The Annals of Statistics}, 46\penalty0 (1):\penalty0 247--279, 2018.

\bibitem[Duchi et~al.(2013)Duchi, Wainwright, and Jordan]{duchi2013local}
John Duchi, Martin~J Wainwright, and Michael~I Jordan.
\newblock Local privacy and minimax bounds: Sharp rates for probability estimation.
\newblock \emph{Advances in neural information processing systems}, 2013.

\bibitem[Duchi and Ruan(2024)]{duchi2024right}
John~C Duchi and Feng Ruan.
\newblock The right complexity measure in locally private estimation: It is not the fisher information.
\newblock \emph{The Annals of Statistics}, 52\penalty0 (1):\penalty0 1--51, 2024.

\bibitem[Duchi et~al.(2018)Duchi, Jordan, and Wainwright]{Duchi2018minimax}
John~C. Duchi, Michael~I. Jordan, and Martin~J. Wainwright.
\newblock Minimax optimal procedures for locally private estimation.
\newblock \emph{Journal of the American Statistical Association}, 113\penalty0 (521):\penalty0 182--201, 2018.

\bibitem[Dwork and Roth(2014)]{Dwork2014}
Cynthia Dwork and Aaron Roth.
\newblock The algorithmic foundations of differential privacy.
\newblock \emph{Foundations and Trends{\textregistered} in Theoretical Computer Science}, 9\penalty0 (3--4):\penalty0 211--407, 2014.

\bibitem[Dwork et~al.(2006)Dwork, McSherry, Nissim, and Smith]{dwork2006}
Cynthia Dwork, Frank McSherry, Kobbi Nissim, and Adam Smith.
\newblock Calibrating noise to sensitivity in private data analysis.
\newblock In \emph{Theory of Cryptography, Third Theory of Cryptography Conference, TCC 2006}, 2006.

\bibitem[Dwork et~al.(2015)Dwork, Smith, Steinke, Ullman, and Vadhan]{dwork2015robust}
Cynthia Dwork, Adam Smith, Thomas Steinke, Jonathan Ullman, and Salil Vadhan.
\newblock Robust traceability from trace amounts.
\newblock In \emph{2015 IEEE 56th Annual Symposium on Foundations of Computer Science}, pages 650--669. IEEE, 2015.

\bibitem[Dwork et~al.(2025)Dwork, Tankala, and Zhang]{dwork2025}
Cynthia Dwork, Pranay Tankala, and Linjun Zhang.
\newblock Differentially private learning beyond the classical dimensionality regime.
\newblock In \emph{Theory of Cryptography Conference}, pages 321--355. Springer, 2025.

\bibitem[Feldman(2020)]{feldman2020does}
Vitaly Feldman.
\newblock Does learning require memorization? a short tale about a long tail.
\newblock In \emph{Proceedings of the 52nd annual ACM SIGACT symposium on theory of computing}, pages 954--959, 2020.

\bibitem[Feldman et~al.(2018)Feldman, Mironov, Talwar, and Thakurta]{feldman2018iteration}
Vitaly Feldman, Ilya Mironov, Kunal Talwar, and Abhradeep Thakurta.
\newblock {Privacy Amplification by Iteration}.
\newblock In \emph{2018 IEEE 59th Annual Symposium on Foundations of Computer Science (FOCS)}, pages 521--532, 2018.

\bibitem[Feldman et~al.(2020)Feldman, Koren, and Talwar]{feldman2020linear}
Vitaly Feldman, Tomer Koren, and Kunal Talwar.
\newblock Private stochastic convex optimization: optimal rates in linear time.
\newblock In \emph{Proceedings of the 52nd Annual ACM SIGACT Symposium on Theory of Computing}, STOC 2020, 2020.
\newblock ISBN 9781450369794.

\bibitem[Ferbach et~al.(2025)Ferbach, Everett, Gidel, Paquette, and Paquette]{ferbach2025dimensionadapted}
Damien Ferbach, Katie Everett, Gauthier Gidel, Elliot Paquette, and Courtney Paquette.
\newblock Dimension-adapted momentum outscales {SGD}.
\newblock In \emph{The Thirty-ninth Annual Conference on Neural Information Processing Systems}, 2025.

\bibitem[Fioretto et~al.(2022)Fioretto, Tran, Van~Hentenryck, and Zhu]{fioretto2022differential}
Ferdinando Fioretto, Cuong Tran, Pascal Van~Hentenryck, and Keyu Zhu.
\newblock Differential privacy and fairness in decisions and learning tasks: A survey.
\newblock \emph{arXiv preprint arXiv:2202.08187}, 2022.

\bibitem[Gerbelot et~al.(2024)Gerbelot, Troiani, Mignacco, Krzakala, and Zdeborova]{gerbelot2024rigorous}
Cedric Gerbelot, Emanuele Troiani, Francesca Mignacco, Florent Krzakala, and Lenka Zdeborova.
\newblock Rigorous dynamical mean-field theory for stochastic gradient descent methods.
\newblock \emph{SIAM Journal on Mathematics of Data Science}, 6\penalty0 (2):\penalty0 400--427, 2024.

\bibitem[Golatkar et~al.(2022)Golatkar, Achille, Wang, Roth, Kearns, and Soatto]{Golatkar2022}
Aditya Golatkar, Alessandro Achille, Yu-Xiang Wang, Aaron Roth, Michael Kearns, and Stefano Soatto.
\newblock Mixed differential privacy in computer vision.
\newblock In \emph{Proceedings of the IEEE/CVF Conference on Computer Vision and Pattern Recognition (CVPR)}, pages 8376--8386, June 2022.

\bibitem[Han(2024)]{han2024entrywise}
Qiyang Han.
\newblock Entrywise dynamics and universality of general first order methods.
\newblock \emph{arXiv preprint arXiv:2406.19061}, 2024.

\bibitem[Hastie et~al.(2019)Hastie, Montanari, Rosset, and Tibshirani]{hastie2019surprises}
Trevor~J. Hastie, Andrea Montanari, Saharon Rosset, and Ryan~J. Tibshirani.
\newblock Surprises in high-dimensional ridgeless least squares interpolation.
\newblock \emph{Annals of statistics}, 50 2:\penalty0 949--986, 2019.

\bibitem[Hawes(2020)]{Hawes2020Implementing}
Michael~B. Hawes.
\newblock Implementing {Differential} {Privacy}: Seven {Lessons} {From} the 2020 {United} {States} {Census}.
\newblock \emph{Harvard Data Science Review}, 2\penalty0 (2), apr 30 2020.
\newblock https://hdsr.mitpress.mit.edu/pub/dgg03vo6.

\bibitem[Hoffmann et~al.(2022)Hoffmann, Borgeaud, Mensch, Buchatskaya, Cai, Rutherford, de~las Casas, Hendricks, Welbl, Clark, Hennigan, Noland, Millican, van~den Driessche, Damoc, Guy, Osindero, Simonyan, Elsen, Vinyals, Rae, and Sifre]{hoffmann2022an}
Jordan Hoffmann, Sebastian Borgeaud, Arthur Mensch, Elena Buchatskaya, Trevor Cai, Eliza Rutherford, Diego de~las Casas, Lisa~Anne Hendricks, Johannes Welbl, Aidan Clark, Tom Hennigan, Eric Noland, Katherine Millican, George van~den Driessche, Bogdan Damoc, Aurelia Guy, Simon Osindero, Karen Simonyan, Erich Elsen, Oriol Vinyals, Jack~William Rae, and Laurent Sifre.
\newblock An empirical analysis of compute-optimal large language model training.
\newblock In \emph{Advances in Neural Information Processing Systems}, 2022.

\bibitem[Huber(1973)]{Huber1973robust}
Peter~J. Huber.
\newblock {Robust Regression: Asymptotics, Conjectures and Monte Carlo}.
\newblock \emph{The Annals of Statistics}, 1\penalty0 (5):\penalty0 799 -- 821, 1973.
\newblock \doi{10.1214/aos/1176342503}.

\bibitem[Jain and Thakurta(2014)]{jain14}
Prateek Jain and Abhradeep~Guha Thakurta.
\newblock (near) dimension independent risk bounds for differentially private learning.
\newblock In \emph{International Conference on Machine Learning}, 2014.

\bibitem[Kamath et~al.(2019)Kamath, Li, Singhal, and Ullman]{kamath19a}
Gautam Kamath, Jerry Li, Vikrant Singhal, and Jonathan Ullman.
\newblock Privately learning high-dimensional distributions.
\newblock In \emph{Proceedings of the Thirty-Second Conference on Learning Theory}. PMLR, 2019.

\bibitem[Kamath et~al.(2020)Kamath, Singhal, and Ullman]{kamath20a}
Gautam Kamath, Vikrant Singhal, and Jonathan Ullman.
\newblock Private mean estimation of heavy-tailed distributions.
\newblock In \emph{Proceedings of Thirty Third Conference on Learning Theory}, 2020.

\bibitem[Kaplan et~al.(2020)Kaplan, McCandlish, Henighan, Brown, Chess, Child, Gray, Radford, Wu, and Amodei]{kaplan2020scaling}
Jared Kaplan, Sam McCandlish, Tom Henighan, Tom~B Brown, Benjamin Chess, Rewon Child, Scott Gray, Alec Radford, Jeffrey Wu, and Dario Amodei.
\newblock Scaling laws for neural language models.
\newblock \emph{arXiv preprint arXiv:2001.08361}, 2020.

\bibitem[Kifer et~al.(2012)Kifer, Smith, and Thakurta]{kifer12}
Daniel Kifer, Adam Smith, and Abhradeep Thakurta.
\newblock Private convex empirical risk minimization and high-dimensional regression.
\newblock In \emph{Conference on Learning Theory}, 2012.

\bibitem[Kurakin et~al.(2022)Kurakin, Song, Chien, Geambasu, Terzis, and Thakurta]{Kurakin2022}
Alexey Kurakin, Shuang Song, Steve Chien, Roxana Geambasu, Andreas Terzis, and Abhradeep Thakurta.
\newblock Toward training at imagenet scale with differential privacy.
\newblock \emph{arXiv preprint arXiv:2201.12328}, 2022.

\bibitem[Li et~al.(2022)Li, Tramer, Liang, and Hashimoto]{li2022large}
Xuechen Li, Florian Tramer, Percy Liang, and Tatsunori Hashimoto.
\newblock Large language models can be strong differentially private learners.
\newblock In \emph{International Conference on Learning Representations}, 2022.

\bibitem[Lin et~al.(2024)Lin, Wu, Kakade, Bartlett, and Lee]{lin2024scaling}
Licong Lin, Jingfeng Wu, Sham~M. Kakade, Peter~L. Bartlett, and Jason~D. Lee.
\newblock Scaling laws in linear regression: Compute, parameters, and data.
\newblock In \emph{Advances in Neural Information Processing Systems}, 2024.

\bibitem[Lin et~al.(2025{\natexlab{a}})Lin, Wu, and Bartlett]{lin2025improved}
Licong Lin, Jingfeng Wu, and Peter~L Bartlett.
\newblock Improved scaling laws in linear regression via data reuse.
\newblock \emph{arXiv preprint arXiv:2506.08415}, 2025{\natexlab{a}}.

\bibitem[Lin et~al.(2025{\natexlab{b}})Lin, Kolaczyk, Smith, and Paquette]{lin2025high}
Shurong Lin, Eric~D Kolaczyk, Adam Smith, and Elliot Paquette.
\newblock High-dimensional privacy-utility dynamics of noisy stochastic gradient descent on least squares.
\newblock \emph{arXiv preprint arXiv:2510.16687}, 2025{\natexlab{b}}.

\bibitem[Liu et~al.(2022)Liu, Kong, and Oh]{liu22b}
Xiyang Liu, Weihao Kong, and Sewoong Oh.
\newblock Differential privacy and robust statistics in high dimensions.
\newblock In \emph{Conference on Learning Theory}, 2022.

\bibitem[Liu et~al.(2023)Liu, Jain, Kong, Oh, and Suggala]{liu2023near}
Xiyang Liu, Prateek Jain, Weihao Kong, Sewoong Oh, and Arun Suggala.
\newblock Label robust and differentially private linear regression: Computational and statistical efficiency.
\newblock In \emph{Advances in Neural Information Processing Systems}, 2023.

\bibitem[Marshall et~al.(2025)Marshall, Xiao, Agarwala, and Paquette]{marshall2025clip}
Noah Marshall, Ke~Liang Xiao, Atish Agarwala, and Elliot Paquette.
\newblock To clip or not to clip: the dynamics of {SGD} with gradient clipping in high-dimensions.
\newblock In \emph{The Thirteenth International Conference on Learning Representations}, 2025.

\bibitem[McKenna et~al.(2025)McKenna, Huang, Sinha, Balle, Charles, Choquette-Choo, Ghazi, Kaissis, Kumar, Liu, Yu, and Zhang]{mckenna2025}
Ryan McKenna, Yangsibo Huang, Amer Sinha, Borja Balle, Zachary Charles, Christopher~A. Choquette-Choo, Badih Ghazi, George Kaissis, Ravi Kumar, Ruibo Liu, Da~Yu, and Chiyuan Zhang.
\newblock Scaling laws for differentially private language models.
\newblock \emph{arXiv preprint arXiv:2501.18914}, 2025.

\bibitem[McMahan et~al.(2018)McMahan, Ramage, Talwar, and Zhang]{brendan2018learning}
H.~Brendan McMahan, Daniel Ramage, Kunal Talwar, and Li~Zhang.
\newblock Learning differentially private recurrent language models.
\newblock In \emph{International Conference on Learning Representations}, 2018.

\bibitem[Mei and Montanari(2022)]{mei2022generalization}
Song Mei and Andrea Montanari.
\newblock The generalization error of random features regression: Precise asymptotics and the double descent curve.
\newblock \emph{Communications on Pure and Applied Mathematics}, 75\penalty0 (4):\penalty0 667--766, 2022.

\bibitem[Milionis et~al.(2022)Milionis, Kalavasis, Fotakis, and Ioannidis]{milionis22a}
Jason Milionis, Alkis Kalavasis, Dimitris Fotakis, and Stratis Ioannidis.
\newblock Differentially private regression with unbounded covariates.
\newblock In \emph{International Conference on Artificial Intelligence and Statistics}, 2022.

\bibitem[Mironov(2017)]{Mironov2017}
Ilya Mironov.
\newblock Rényi differential privacy.
\newblock In \emph{2017 IEEE 30th Computer Security Foundations Symposium (CSF)}, pages 263--275, 2017.

\bibitem[Montanari et~al.(2025)Montanari, Ruan, Sohn, and Yan]{montanari2025generalization}
Andrea Montanari, Feng Ruan, Youngtak Sohn, and Jun Yan.
\newblock The generalization error of max-margin linear classifiers: Benign overfitting and high dimensional asymptotics in the overparametrized regime.
\newblock \emph{The Annals of Statistics}, 53\penalty0 (2):\penalty0 822--853, 2025.

\bibitem[Mourtada(2022)]{mourtada2022}
Jaouad Mourtada.
\newblock {Exact minimax risk for linear least squares, and the lower tail of sample covariance matrices}.
\newblock \emph{{Annals of Statistics}}, 50\penalty0 (4), 2022.
\newblock \doi{10.1214/22-AOS2181}.

\bibitem[Paquette et~al.(2022)Paquette, Paquette, Adlam, and Pennington]{paquette2022implicit}
Courtney Paquette, Elliot Paquette, Ben Adlam, and Jeffrey Pennington.
\newblock Implicit regularization or implicit conditioning? exact risk trajectories of sgd in high dimensions.
\newblock In \emph{Advances in Neural Information Processing Systems}, 2022.

\bibitem[Paquette et~al.(2024{\natexlab{a}})Paquette, Paquette, Adlam, and Pennington]{paquette24homogenization}
Courtney Paquette, Elliot Paquette, Ben Adlam, and Jeffrey Pennington.
\newblock Homogenization of sgd in high-dimensions: exact dynamics and generalization properties.
\newblock \emph{Mathematical Programming}, 2024{\natexlab{a}}.
\newblock \doi{10.1007/s10107-024-02171-3}.

\bibitem[Paquette et~al.(2024{\natexlab{b}})Paquette, Paquette, Xiao, and Pennington]{paquette20244+}
Elliot Paquette, Courtney Paquette, Lechao Xiao, and Jeffrey Pennington.
\newblock 4+ 3 phases of compute-optimal neural scaling laws.
\newblock \emph{Advances in Neural Information Processing Systems}, 2024{\natexlab{b}}.

\bibitem[Pichapati et~al.(2019)Pichapati, Suresh, Yu, Reddi, and Kumar]{pichapati2019}
Venkatadheeraj Pichapati, Ananda~Theertha Suresh, Felix~X. Yu, Sashank~J. Reddi, and Sanjiv Kumar.
\newblock Adaclip: Adaptive clipping for private sgd.
\newblock \emph{arXiv preprint arXiv:1908.07643}, 2019.

\bibitem[Ren et~al.(2025)Ren, Nichani, Wu, and Lee]{ren2025emergence}
Yunwei Ren, Eshaan Nichani, Denny Wu, and Jason~D. Lee.
\newblock Emergence and scaling laws in {SGD} learning of shallow neural networks.
\newblock In \emph{The Thirty-ninth Annual Conference on Neural Information Processing Systems}, 2025.

\bibitem[Rohde and Steinberger(2020)]{rohde2020geometrizing}
Angelika Rohde and Lukas Steinberger.
\newblock Geometrizing rates of convergence under local differential privacy constraints.
\newblock \emph{The Annals of Statistics}, 48\penalty0 (5):\penalty0 2646--2670, 2020.

\bibitem[Rudi and Rosasco(2017)]{rudi2017generalization}
Alessandro Rudi and Lorenzo Rosasco.
\newblock Generalization properties of learning with random features.
\newblock In \emph{Advances in Neural Information Processing Systems}, 2017.

\bibitem[Rényi(1961)]{Renyi61}
Alfred Rényi.
\newblock On measures of entropy and information.
\newblock In \emph{Proceedings of the fourth Berkeley symposium on mathematical statistics and probability}, volume~1, pages 547--561, 1961.

\bibitem[Song et~al.(2013)Song, Chaudhuri, and Sarwate]{song13dpsgd}
Shuang Song, Kamalika Chaudhuri, and Anand~D. Sarwate.
\newblock Stochastic gradient descent with differentially private updates.
\newblock In \emph{2013 IEEE Global Conference on Signal and Information Processing}, 2013.

\bibitem[Song et~al.(2021)Song, Steinke, Thakkar, and Thakurta]{song2021evading}
Shuang Song, Thomas Steinke, Om~Thakkar, and Abhradeep Thakurta.
\newblock Evading the curse of dimensionality in unconstrained private glms.
\newblock In \emph{International Conference on Artificial Intelligence and Statistics}, 2021.

\bibitem[Stojnic(2013)]{stojnic2013framework}
Mihailo Stojnic.
\newblock A framework to characterize performance of lasso algorithms.
\newblock \emph{arXiv preprint arXiv:1303.7291}, 2013.

\bibitem[Sur and Cand{\`e}s(2019)]{sur2019modern}
Pragya Sur and Emmanuel~J Cand{\`e}s.
\newblock A modern maximum-likelihood theory for high-dimensional logistic regression.
\newblock \emph{Proceedings of the National Academy of Sciences}, 116\penalty0 (29):\penalty0 14516--14525, 2019.

\bibitem[Teschl(2012)]{teschl2012}
Gerald Teschl.
\newblock \emph{Ordinary differential equations and dynamical systems}.
\newblock American Mathematical Society, 2012.

\bibitem[Thrampoulidis et~al.(2015)Thrampoulidis, Oymak, and Hassibi]{thrampoulidis2015regularized}
Christos Thrampoulidis, Samet Oymak, and Babak Hassibi.
\newblock Regularized linear regression: A precise analysis of the estimation error.
\newblock In \emph{Conference on Learning Theory}, pages 1683--1709. PMLR, 2015.

\bibitem[Tsybakov(2009)]{Tsybakov2009nonparametric}
Alexandre~B. Tsybakov.
\newblock \emph{Introduction to Nonparametric Estimation}.
\newblock Springer Series in Statistics. Springer, New York, 2009.
\newblock ISBN 978-0-387-79052-7.
\newblock \doi{10.1007/b13794}.

\bibitem[Varshney et~al.(2022)Varshney, Thakurta, and Jain]{varshney2022nearly}
Prateek Varshney, Abhradeep Thakurta, and Prateek Jain.
\newblock (nearly) optimal private linear regression for sub-gaussian data via adaptive clipping.
\newblock In \emph{Conference on Learning Theory}, 2022.

\bibitem[Vershynin(2018)]{vershynin2018high}
Roman Vershynin.
\newblock \emph{High-dimensional probability: An introduction with applications in data science}.
\newblock Cambridge university press, 2018.

\bibitem[Wainwright(2019)]{wainwright2019high}
Martin~J Wainwright.
\newblock \emph{High-dimensional statistics: A non-asymptotic viewpoint}, volume~48.
\newblock Cambridge university press, 2019.

\bibitem[Wang(2018)]{wang2018revisiting}
Yu-Xiang Wang.
\newblock Revisiting differentially private linear regression: optimal and adaptive prediction \& estimation in unbounded domain.
\newblock \emph{arXiv preprint arXiv:1803.02596}, 2018.

\bibitem[Wu et~al.(2026)Wu, Chen, Misiakiewicz, and Mondelli]{wu2026improved}
Diyuan Wu, Lehan Chen, Theodor Misiakiewicz, and Marco Mondelli.
\newblock Improved scaling laws via weak-to-strong generalization in random feature ridge regression.
\newblock \emph{arXiv preprint arXiv:2603.05691}, 2026.

\end{thebibliography}

}

\newpage

\appendix

\section{Proofs on differential privacy}\label{app:DP}

The notion of zCDP can be converted to $(\varepsilon, \delta)$-DP, which in turn is defined below.
\begin{definition}[$(\varepsilon, \delta)$-DP \citep{dwork2006}]\label{def:dp}
A randomized algorithm $\mathcal A$ 
satisfies $(\varepsilon, \delta)$-differential privacy if for any pair of adjacent datasets $D, D'$, 
and for any subset of the parameters space $S \subseteq \R^d$, we have
\begin{equation}
    \P \left( \mathcal A (D) \in S \right) \leq e^{\varepsilon} \P \left( \mathcal A (D') \in S \right) + \delta.
\end{equation}
\end{definition}
For completeness, we also provide the following definition.
\begin{definition}[Rényi DP \citep{Mironov2017}]\label{def:renyi}
Given $\alpha \in (1, +\infty)$ and $\varepsilon \geq 0$, an algorithm $\mathcal A$ satisfies $(\alpha, \varepsilon)$ Rényi DP if for any pair of adjacent datasets $D, D'$ we have $D_{\alpha} \left( \mathcal{A}(D) \,\|\, \mathcal{A}(D') \right) \leq \varepsilon$, where $D_{\alpha} \left( \mathcal{A}(D) \,\|\, \mathcal{A}(D') \right)$ is the Rényi Divergence \citep{Renyi61} between the probability distributions induced by the randomness of $\mathcal A$, i.e.,
\begin{equation}
    D_{\alpha} \left( \mathcal{A}(D) \,\|\, \mathcal{A}(D') \right) = \frac{1}{\alpha - 1} \ln \int \left( \frac{p_{\mathcal{A}(D)} (\theta)}{p_{\mathcal{A}(D')} (\theta)} \right)^\alpha p_{\mathcal{A}(D')} (\theta)  \diff \theta.
\end{equation}
\end{definition}
Note that Definitions \ref{def:zcdp} and \ref{def:renyi} imply that an algorithm is $\rho$-zCDP if, for any $\alpha > 1$, it is also $(\alpha, \rho \alpha)$ Rényi DP. The following proposition allows to translate Rényi DP and zCDP to $(\varepsilon, \delta)$-DP.
\begin{proposition}[Proposition 1.3 in \cite{bun2016concentrated}]\label{prop:zCDP}
    If $\mathcal{A}$ satisfies $\rho^2 / 2$-zCDP, it also satisfies $\left( \rho^2 / 2 + \rho \sqrt{ 2 \ln(1/\delta) }, \delta \right)$-DP, for any $\delta \in (0 , 1)$.
\end{proposition}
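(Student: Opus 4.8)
The plan is to follow the classical route from Rényi-divergence bounds to approximate DP via a Chernoff estimate on the privacy loss random variable, and then optimize over the Rényi order. Fix a pair of adjacent datasets $D, D'$ and let $p = p_{\mathcal A(D)}$ and $q = p_{\mathcal A(D')}$ denote the densities of the two outputs. Define the privacy loss random variable $Z = \ln\bigl(p(\theta)/q(\theta)\bigr)$ with $\theta \sim \mathcal A(D)$. By Definition~\ref{def:zcdp}, the hypothesis that $\mathcal A$ is $(\rho^2/2)$-zCDP means that for every $\alpha \in (1,\infty)$ we have $D_\alpha\bigl(\mathcal A(D)\,\|\,\mathcal A(D')\bigr) \le \alpha\rho^2/2$, and unwinding the definition of $D_\alpha$ this is exactly the moment generating function bound
\[
  \E_{\theta\sim\mathcal A(D)}\!\bigl[e^{(\alpha-1)Z}\bigr] \;=\; e^{(\alpha-1)\,D_\alpha(\mathcal A(D)\,\|\,\mathcal A(D'))} \;\le\; e^{(\alpha-1)\alpha\rho^2/2}.
\]

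Next I would apply Markov's inequality to $e^{(\alpha-1)Z}$: for any $\varepsilon > 0$ and $\alpha>1$,
\[
  \Pr_{\theta\sim\mathcal A(D)}\!\bigl(Z > \varepsilon\bigr) \;\le\; e^{-(\alpha-1)\varepsilon}\,\E\!\bigl[e^{(\alpha-1)Z}\bigr] \;\le\; \exp\!\Bigl((\alpha-1)\bigl(\tfrac{\alpha\rho^2}{2} - \varepsilon\bigr)\Bigr).
\]
Writing $s = \alpha - 1 > 0$, the exponent is $\tfrac{s^2\rho^2}{2} - s\bigl(\varepsilon - \tfrac{\rho^2}{2}\bigr)$, which for $\varepsilon > \rho^2/2$ is minimized at $s^\star = (\varepsilon - \rho^2/2)/\rho^2 > 0$ and equals $-(\varepsilon - \rho^2/2)^2/(2\rho^2)$. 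Hence $\Pr(Z > \varepsilon) \le \exp\bigl(-(\varepsilon-\rho^2/2)^2/(2\rho^2)\bigr)$, and choosing $\varepsilon = \rho^2/2 + \rho\sqrt{2\ln(1/\delta)}$ makes the right-hand side exactly $\delta$.

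Finally I would invoke the standard characterization of approximate DP in terms of the privacy loss: for any measurable $S \subseteq \R^d$,
\[
  \Pr(\mathcal A(D)\in S) - e^\varepsilon\Pr(\mathcal A(D')\in S) \;\le\; \E_{\theta\sim\mathcal A(D)}\!\bigl[(1 - e^{\varepsilon - Z(\theta)})_+\bigr] \;\le\; \Pr_{\theta\sim\mathcal A(D)}\!\bigl(Z > \varepsilon\bigr) \;\le\; \delta,
\]
where the first inequality is sharp for $S = \{\theta : p(\theta) > e^\varepsilon q(\theta)\}$; since adjacency is symmetric the same bound holds with $D$ and $D'$ swapped, so Definition~\ref{def:dp} is met with the claimed $\varepsilon$. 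The only points needing a little care are the measure-theoretic bookkeeping ($\mathcal A(D) \ll \mathcal A(D')$, which follows from finiteness of any $D_\alpha$, so that $Z$ and the MGF identity are well defined) and the privacy-loss characterization lemma above; there is no genuine obstacle, and indeed the whole statement is Proposition~1.3 of \cite{bun2016concentrated}, which one may simply cite.
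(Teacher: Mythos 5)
Your proof is correct and is essentially the standard argument for this conversion: the paper does not reprove the statement but simply cites Proposition 1.3 of Bun--Steinke, whose proof is exactly your route (the zCDP bound as a sub-Gaussian MGF bound on the privacy loss $Z$, a Chernoff/Markov tail bound optimized over the Rényi order, and the privacy-loss characterization of $(\varepsilon,\delta)$-DP, applied in both directions by symmetry of adjacency). The measure-theoretic caveat you flag (absolute continuity so that $Z$ is well defined) is handled the same way in the reference, so nothing is missing.
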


Then, if we consider $\delta$ such that $\rho \leq \sqrt{\ln(1 / \delta)}$, we achieve $(\varepsilon, \delta)$-DP if we have
\begin{equation}
    \rho^2 / 2 + \rho \sqrt{ 2 \ln(1/\delta) } \leq 2 \rho \sqrt{\ln(1 / \delta)} \leq \varepsilon ,
\end{equation}
which means that for algorithms respecting $\rho^2 / 2$-zCDP, we can replace $2 \rho$ by $\varepsilon / \sqrt{\ln(1 / \delta)}$ in the error bounds to evaluate the cost of privacy in terms of $(\varepsilon , \delta)$-DP.

\subsection{Proof of Proposition \ref{prop:DPguarantees}}
    Let us define the family of functions $\ell_{k, \clip}(\cdot): \R \to \R$, for all $k \in [n]$, such that $\ell_{k, \clip}(0) = 0$, and
    \begin{equation}\label{eq:lclip}
        \ell_{k, \clip}'(z) = z \min \left( 1, \frac{\clip}{\left| z \right| {\norm{x_k}_2}} \right).
    \end{equation}
    In words, $\ell_{k, \clip}(z)$ is a quadratic function, but linearized for sufficiently large values of $|z|$, such that it is $\clip / \norm{x_k}_2$-Lipschitz. Then, 
    we have
    \begin{equation}\label{eq:equivclippedloss}
    \begin{aligned}
        \bar g_{k} &= g_{k} \min \left(1, \frac{\clip}{\norm{g_k}_2} \right)\\
        &= x_k \left( x_{k}^\top \theta_{k-1} - y_k \right) \min \left(1, \frac{\clip}{\norm{x_k}_2 \left|  x_{k}^\top \theta_{k-1} - y_k  \right| } \right) \\
        &= x_k \ell'_{k, \clip} \left( x_k^\top \theta_{k-1} - y_k\right) \\
        &= \nabla_{\theta} \ell_{k, \clip} \left( x_k^\top \theta_{k-1} - y_k\right),
    \end{aligned}
    \end{equation}
    where the first step follows from the definition of $\bar g_{k}$ in Algorithm \ref{alg:dp-sgd}. Furthermore, for any $\theta, \theta'\in \R^d$, we have
    \begin{equation}\label{eq:smoothclippedloss}
    \begin{aligned}
        & \norm{\nabla_\theta \ell_{k, \clip}(x_k^\top \theta - y_k) - \nabla_\theta \ell_{k, \clip}(x_k^\top \theta' - y_k)}_2 \\
        & \qquad = \norm{x_k}_2 \left| \ell'_{k, \clip}(x_k^\top \theta - y_k) - \ell'_{k, \clip}(x_k^\top \theta' - y_k) \right| \\
        & \qquad \leq \norm{x_k}_2 \left| x_k^\top \left( \theta - \theta' \right) \right| \\
        & \qquad \leq \norm{x_k}_2^2 \norm{\theta - \theta'}_2,
    \end{aligned}
    \end{equation}
    where the second step follows from the fact that $\ell_{k, \clip}'(z)$ is a 1-Lipschitz function. Let us now define
    \begin{equation}\label{eq:barredloss}
        \bar \ell_{k, \clip}(z) = \min \left( 1 , \frac{2}{\norm{x_k}_2^2 \eta_k} \right) \ell_{k, \clip}(z).
    \end{equation}
    Then, we have that every iteration of Algorithm \ref{alg:dp-sgd} takes the form
    \begin{equation}
        \theta_k = \theta_{k-1} - \eta_k \nabla_\theta \bar \ell_{k, \clip}(x_k^\top \theta_{k -1} - y_k) + 2 \clip \sigma_k b_k,
    \end{equation}
    where $\bar \ell_{k, \clip}(x_k^\top \theta - y_k)$ is $\clip$-Lipschitz 
    with respect to $\theta$, and it is $2 /\eta_k$-smooth, i.e.,
    \begin{equation}
        \norm{\nabla_\theta \bar \ell_{k, \clip}(x_k^\top \theta - y_k) - \nabla_\theta \bar \ell_{k, \clip}(x_k^\top \theta' - y_k)}_2 \leq \frac{2}{\eta_k} \norm{\theta - \theta'}_2,
    \end{equation}
    due to \eqref{eq:smoothclippedloss} and \eqref{eq:barredloss}.
    Thus, the desired result follows from Theorem 3.1 in \cite{feldman2020linear}, after setting their batch sizes $\{ B_k \}$ identically equal to 1, and their projection set $\mathcal K$ equal to all $\R^d$. \qed

\section[Auxiliary lemmas]{Auxiliary lemmas on $\mu_c(\theta)$ and $\nu_c(\theta)$}\label{app:munu}

\begin{lemma}\label{lemma:munu}
Let Assumption \ref{ass:data} hold, and let $\mu_c(\theta)$ and $\nu_c(\theta)$ be defined according to \eqref{eq:munu}. Then, we have
\begin{align}
    \mu_c(\theta) &= \erf \left(\frac{c}{2 \sqrt{\mathcal P(\theta)}} \right),\label{eq:mucdef} \\
    \nu_c(\theta) &= \frac{c^2}{2 \mathcal P(\theta)} \left( 1 - \erf \left(\frac{c}{2 \sqrt{\mathcal P(\theta)}} \right) \right) + F \left( \frac{c}{\sqrt{2 \mathcal P(\theta)}} \right),
\end{align}
where
\begin{equation}
    \erf(z) = \frac{2}{\sqrt{\pi}} \int_{0}^{z} e^{-t^2} \diff t, \qquad  F(z) = \frac{1}{\sqrt{2 \pi}} \int_{-z}^{z} t^2 e^{-t^2 / 2} \diff t = \erf\left( \frac{z}{\sqrt 2} \right) - \sqrt{\frac{2}{\pi}} z e^{-z^2 / 2}.
\end{equation}
In particular, 
$\mu_c(\theta)$ and $\nu_c(\theta)$ depend only on $c$ and the test risk $\mathcal P(\theta)$ via the ratio $c / \sqrt{2 \mathcal P(\theta)}$.
\end{lemma}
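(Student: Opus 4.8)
\textbf{Proof plan for Lemma \ref{lemma:munu}.}

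The plan is to reduce both quantities to one-dimensional Gaussian integrals by exploiting the structure of the linear regression model under Assumption \ref{ass:data}. Write $\delta = \theta - \theta^*$, so that the residual is $r(\theta, x, y) = x^\top \delta - z$ where $x \sim \mathcal N(0, \Sigma)$ and $z \sim \mathcal N(0, \zeta^2)$ are independent. The key observation is that $r(\theta, x, y)$ is a mean-zero Gaussian with variance $\norm{\Sigma^{1/2}\delta}_2^2 + \zeta^2 = 2\mathcal P(\theta)$. Denote $s^2 := 2\mathcal P(\theta)$, so $r = s \, u$ with $u \sim \mathcal N(0,1)$.

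For the numerator of $\mu_c(\theta)$, first I would handle $\E[r(\theta,x,y)\,x]$. Decompose $x = \Sigma^{1/2}\xi$ with $\xi \sim \mathcal N(0, I)$, and similarly any unit direction; by Stein's lemma (Gaussian integration by parts), $\E[r_c(\theta,x,y)\,x] = \E[r_c'(\theta, x, y)]\,\Sigma\delta$ where the derivative is taken through the dependence of $r$ on $x$ via $x^\top\delta$; more carefully, conditioning on $z$ and writing $r = x^\top\delta - z$, one gets $\E[h(r)\,x] = \E[h'(r)]\,\Sigma\delta$ for smooth $h$, and then extends to $h = \mathrm{clip}_c$ by approximation (it is Lipschitz, so dominated convergence applies to a mollified sequence). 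Since $r_c(\theta,x,y) = \mathrm{clip}_c(r)$ has derivative $\mathbbm 1\{|r| \le c\}$ almost everywhere, this gives $\E[r_c\, x] = \P(|r|\le c)\,\Sigma\delta = \erf(c/(s\sqrt 2))\,\Sigma\delta$, using $\P(|su| \le c) = \P(|u| \le c/s) = \erf(c/(s\sqrt 2))$. Applying the same identity with $h(r) = r$ gives $\E[r\,x] = \Sigma\delta$. Taking norms, the factor $\norm{\Sigma\delta}_2$ cancels and $\mu_c(\theta) = \erf(c/(s\sqrt 2)) = \erf(c/(2\sqrt{\mathcal P(\theta)}))$, which is \eqref{eq:mucdef}.

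For $\nu_c(\theta)$, the denominator is simply $\E[r^2] = s^2 = 2\mathcal P(\theta)$. For the numerator, I would split according to whether clipping is active: $\E[r_c^2] = \E[r^2\,\mathbbm 1\{|r|\le c\}] + c^2\,\P(|r| > c)$. With $r = su$, the second term equals $c^2(1 - \erf(c/(s\sqrt 2))) = \tfrac{c^2}{2\mathcal P(\theta)} \cdot 2\mathcal P(\theta)(1 - \erf(c/(s\sqrt 2)))$; after dividing by $s^2 = 2\mathcal P(\theta)$ this contributes the first term of the claimed formula for $\nu_c$. The first term equals $s^2\,\E[u^2\,\mathbbm 1\{|u| \le c/s\}]$; dividing by $s^2$ gives exactly $\frac{1}{\sqrt{2\pi}}\int_{-c/s}^{c/s} t^2 e^{-t^2/2}\,dt = F(c/s) = F(c/\sqrt{2\mathcal P(\theta)})$, and the closed form for $F$ follows from integration by parts ($\int t^2 e^{-t^2/2}dt = -t e^{-t^2/2} + \int e^{-t^2/2}dt$). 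Summing the two contributions yields the stated expression, and since everything depends on $\theta$ only through $s = \sqrt{2\mathcal P(\theta)}$ and on $c$, the final remark follows.

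The only genuine obstacle is making the Stein/integration-by-parts step rigorous for the non-smooth clipping function: $\mathrm{clip}_c$ is Lipschitz but not $C^1$, so I would either invoke the Lipschitz version of Gaussian integration by parts directly, or mollify $\mathrm{clip}_c$, apply the smooth identity, and pass to the limit using that the mollified derivatives are bounded by $1$ and converge a.e.\ to $\mathbbm 1\{|r|\le c\}$ (the boundary $\{|r| = c\}$ having zero Lebesgue measure under the Gaussian law of $r$, which is nondegenerate since $\zeta^2 > 0$). Everything else is a bookkeeping computation with one-dimensional Gaussian moments.
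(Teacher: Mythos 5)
Your proof is correct and reaches exactly the formulas in the lemma; the overall strategy (reduce everything to one-dimensional Gaussian integrals through $s^2 = 2\mathcal P(\theta)$, and split the $\nu_c$ numerator into the clipped and unclipped regions) coincides with the paper's. The one place where you take a slightly different route is the numerator of $\mu_c$: the paper conditions component-wise, writing each coordinate as $\E[\clipc(\rho_1)\rho_2] = \frac{\Cov(\rho_1,\rho_2)}{\Var(\rho_1)}\E[\clipc(\rho_1)\rho_1]$ for the jointly Gaussian pair $(\rho_1,\rho_2) = (r, x^\top e_i)$, and then evaluates $\E[\clipc(s\hat\rho)\hat\rho]$ by an explicit integration by parts; you instead invoke the multivariate Stein identity $\E[h(r)\,x] = \E[h'(r)]\,\Sigma(\theta-\theta^*)$, which yields the cleaner probabilistic statement $\mu_c(\theta) = \P(|r|\le c)$ directly. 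The two are mathematically equivalent (the paper's integration by parts is the one-dimensional Stein identity in disguise), and the trade-off is that your version requires the extra care you correctly flag — extending Stein to the Lipschitz, non-$C^1$ function $\clipc$ by mollification, using that $\{|r|=c\}$ is a null set since $\zeta^2>0$ — whereas the paper's explicit integral computation avoids any regularity discussion at the cost of a longer calculation. Your $\nu_c$ computation, including the closed form for $F$ via integration by parts, matches the paper's.
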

\begin{proof}
Recall that 
\begin{equation}
    r(\theta, x, y) = x^\top \theta - y, \qquad r_c(\theta, x, y) = r(\theta, x, y) \min \left(1 , \frac{c}{\left| r(\theta, x, y) \right|} \right),
\end{equation}
\begin{equation}
    \mu_c(\theta) = \frac{\norm{\E_{x,y} \left[ r_c(\theta, x, y) \, x \right]}_2}{\norm{\E_{x,y} \left[ r(\theta, x, y) \, x \right]}_2}, \qquad \nu_c(\theta) = \frac{\E_{x,y} \left[ r_c(\theta, x, y)^2 \right]}{\E_{x,y} \left[r(\theta, x, y)^2 \right]}.
\end{equation}
Until the end of the proof, we will use the notation $\clipc(\cdot) : \R \to \R$ to denote the function such that
\begin{equation}
    \clipc(a) = a \min \left(1, \frac{c}{\left| a \right|} \right).
\end{equation}
In particular, 
$r_c(\theta, x, y) = \clipc \left( r(\theta, x, y) \right)$.

Let us look at the first entry of the vector $\E_{x,y} \left[ r_c(\theta, x, y) \, x \right]$,
\begin{equation}\label{eq:firstcomponent}
    \E_{x,y} \left[ r_c(\theta, x, y) x^\top e_1 \right] = \E_{\rho_1, \rho_2} \left[ \clipc(\rho_1) \rho_2 \right],
\end{equation}
where the second step introduced $\rho_1$ and $\rho_2$, defined as two mean-0 Gaussian random variables, such that
\begin{equation}
    \Var(\rho_1) = \norm{\Sigma^{1/2}(\theta - \theta^*)}_2^2 + \zeta^2, \qquad \Var(\rho_2) = \Sigma_{11}, \qquad \Cov(\rho_1, \rho_2) = e_1^\top \Sigma (\theta - \theta^*).
\end{equation}

Then, we have
\begin{equation}
\begin{aligned}
    \E_{\rho_1, \rho_2} \left[ \clipc(\rho_1) \rho_2 \right] &= \frac{\Cov(\rho_1, \rho_2)}{\Var(\rho_1)} \E_{\rho_1} \left[ \clipc(\rho_1) \rho_1 \right] \\
    &= \frac{\Cov(\rho_1, \rho_2)}{\sqrt{\Var(\rho_1)}} \E_{\hat \rho} \left[ \clipc( \sqrt{\Var(\rho_1)} \hat \rho) \hat \rho \right] \\
    &= \frac{e_1^\top \Sigma (\theta - \theta^*)}{\sqrt{2 \mathcal P(\theta)}}  \E_{\hat \rho} \left[ \clipc( \sqrt{2 \mathcal P(\theta)} \hat \rho) \hat \rho \right],
\end{aligned}
\end{equation}
where we used $\mathcal P(\theta) = \mathcal R(\theta) + \zeta^2 / 2 = \Var(\rho_1) / 2$ and we introduced the standard Gaussian random variable $\hat \rho$. As this argument holds for any component of the vector $\E_{x,y} \left[ r_c(\theta, x, y) \, x \right]$, plugging the equation above in \eqref{eq:firstcomponent} gives
\begin{equation}
\begin{aligned}
    \norm{\E_{x,y} \left[ r_c(\theta, x, y) \, x \right]}_2 &= \frac{\norm{\Sigma (\theta - \theta^*)}_2}{\sqrt{2 \mathcal P(\theta)}}  \E_{\hat \rho} \left[ \clipc( \sqrt{2 \mathcal P(\theta)} \hat \rho) \hat \rho \right] \\
    &= \frac{ \norm{\E_{x,y} \left[ r(\theta, x, y) \, x \right]}_2}{\sqrt{2 \mathcal P(\theta)}} \E_{\hat \rho} \left[ \clipc( \sqrt{2 \mathcal P(\theta)} \hat \rho) \hat \rho \right],
\end{aligned}
\end{equation}
where in the second step we used that $\E_{x,y} \left[ r(\theta, x, y) \, x \right] = \Sigma (\theta - \theta^*)$. Then, we also have
\begin{equation}\label{eq:muP}
    \mu_c(\theta) = \frac{\E_{\hat \rho} \left[ \clipc( \sqrt{2 \mathcal P(\theta)} \hat \rho) \hat \rho \right]}{\sqrt{2 \mathcal P(\theta)}}.
\end{equation}

Defining the shorthand $c'(\theta) = c / \sqrt{2 \mathcal P(\theta)}$, the numerator of the expression above yields
\begin{equation}
\begin{aligned}
\E_{\hat \rho} \left[ \clipc( \sqrt{2 \mathcal P(\theta)} \hat \rho) \hat \rho \right] &= \frac{\sqrt{2 \mathcal P(\theta)}}{\sqrt{2 \pi}} \int_{-c'(\theta)}^{c'(\theta)} {\hat \rho}^2 e^{-{\hat \rho}^2 / 2} \diff \hat \rho + \frac{2 c}{\sqrt{2 \pi}} \int_{c'(\theta)}^{+\infty} \hat \rho e^{-{\hat \rho}^2 / 2} \diff \hat \rho \\
& = \frac{\sqrt{2 \mathcal P(\theta)}}{\sqrt{2 \pi}} \left( \left. - \hat \rho e^{-{\hat \rho}^2 / 2} \right|_{- c'(\theta)}^{c'(\theta)} + \int_{-c'(\theta)}^{c'(\theta)} e^{-{\hat \rho}^2 / 2} \diff \hat \rho \right)
- \frac{2 c}{\sqrt{2 \pi}} \left. e^{-{\hat \rho}^2 / 2} \right|_{ c'(\theta)}^{+\infty} \\
& = \frac{\sqrt{2 \mathcal P(\theta)}}{\sqrt{2 \pi}} \left(- 2 c'(\theta) e^{-{c'(\theta)}^2 / 2} + \int_{-c'(\theta)}^{c'(\theta)} e^{-{\hat \rho}^2 / 2} \diff \hat \rho  \right) + \frac{2 c}{\sqrt{2 \pi}}  e^{-{c'(\theta)}^2 / 2} \\
& = \frac{\sqrt{\mathcal P(\theta)}}{\sqrt{\pi}} \int_{-c'(\theta)}^{c'(\theta)} e^{-{\hat \rho}^2 / 2} \diff \hat \rho \\
& = \frac{2 \sqrt 2 \sqrt{\mathcal P(\theta)}}{\sqrt{\pi}} \int_{0}^{c'(\theta) / \sqrt 2} e^{-{\hat \rho}^2} \diff \hat \rho \\
&= \sqrt{2 \mathcal P(\theta)} \, \erf \left( \frac{c}{\sqrt{4 \mathcal P(\theta)}} \right),
\end{aligned}
\end{equation}
which, plugged in \eqref{eq:muP}, gives the first part of the thesis.

For the second part of the thesis, following the same argument we used to write \eqref{eq:muP}, we have
\begin{equation}\label{eq:nuP}
    \nu_c(\theta) = \frac{\E_{\hat \rho} \left[ \clipc(\sqrt{2 \mathcal P(\theta)} \hat \rho)^2 \right]}{2 \mathcal P(\theta)},
\end{equation}
where, as before, $\hat \rho$ denotes a standard Gaussian random variable. Then, we have
\begin{equation}
\begin{aligned}
    \nu_c(\theta) & = \frac{1}{\sqrt{2 \pi}} \int_{-c'(\theta)}^{c'(\theta)} {\hat \rho}^2 e^{-{\hat \rho}^2 / 2} \diff \hat \rho + \frac{2 c'(\theta)^2}{\sqrt{2 \pi}} \int_{c'(\theta)}^{+\infty} e^{-{\hat \rho}^2 / 2} \diff \hat \rho \\
    & = \frac{1}{\sqrt{2 \pi}} \int_{-c'(\theta)}^{c'(\theta)} {\hat \rho}^2 e^{-{\hat \rho}^2 / 2} \diff \hat \rho + c'(\theta)^2 \left( 1 - \frac{2}{\sqrt{2 \pi}} \int_{0}^{c'(\theta)} e^{-{\hat \rho}^2 / 2} \diff \hat \rho \right) \\
    & = \frac{c^2}{2 \mathcal P(\theta)} \left( 1 - \erf \left( \frac{c}{2 \sqrt{\mathcal P(\theta)}} \right) \right) + F(c'(\theta)),
\end{aligned}
\end{equation}
which concludes the proof. 
\end{proof}

\begin{lemma}\label{lemma:munubounds}
Let Assumption \ref{ass:data} hold, and let $\mu_c(R)$ and $\nu_c(R)$ be defined according to \eqref{eq:munu}, where $R$ denotes a generic value of the test risk, since $\mu_c(\theta)$ and $\nu_c(\theta)$ depend only on $c$ and the test risk $\mathcal P(\theta)$ due to Lemma \ref{lemma:munu}. Then, for any $c > 0$, we have that
\begin{equation}\label{eq:bdcnu}
    \underline c_\mu (c, \zeta) < \frac{\mu_c(R) \sqrt{2R + \zeta^2}}{c} < \sqrt{\frac{2}{\pi}}, \qquad  \underline c_\nu (c, \zeta) < \frac{\nu_c(R) (2R + \zeta^2)}{c^2} < 1,
\end{equation}
where $\underline c_\mu (c, \zeta)$ and $\underline c_\nu (c, \zeta)$ denote two positive constants which depend on the values of $c$ and $\zeta$ and are monotonously decreasing in $c$.

We also have that, as $c / \zeta \to 0$,
\begin{equation}
    \left| \frac{\mu_c(R) \sqrt{2R + \zeta^2}}{c} - \sqrt{\frac{2}{\pi}} \right| = o_{c/\zeta} \left( 1 \right), \qquad \left| \frac{\nu_c(R) \left( 2R + \zeta^2 \right)}{c^2} - 1 \right| = o_{c/\zeta} \left(1\right).
\end{equation}
Furthermore, we have 
\begin{equation}\label{eq:bdsnu}
\begin{aligned}
    &\frac{\nu_c(R) (2R + \zeta^2)}{c^2} > \frac{1}{2}, \qquad \text{if } \frac{c}{\sqrt{2R + \zeta^2}} \leq 1, \\
    &\nu_c(R) > \frac{1}{2}, \qquad\qquad\qquad\,\,\,\,\, \text{if } \frac{c}{\sqrt{2R + \zeta^2}} > 1.
\end{aligned}    
\end{equation}
\end{lemma}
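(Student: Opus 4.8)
Every claim reduces to elementary monotonicity facts about the single scalar
\[
c' \;:=\; \frac{c}{\sqrt{2R+\zeta^2}} \;=\; \frac{c}{\sqrt{2\mathcal P(\theta)}} \;\in\; \bigl(0,\, c/|\zeta|\bigr],
\]
the upper bound coming from $\mathcal P(\theta) = R + \zeta^2/2 \ge \zeta^2/2$. Using the closed forms of Lemma~\ref{lemma:munu} together with the identities $\sqrt{2R+\zeta^2}/c = 1/c'$ and $c/(2\sqrt{\mathcal P(\theta)}) = c'/\sqrt 2$, the plan is to rewrite the two quantities of interest as
\[
\frac{\mu_c(R)\sqrt{2R+\zeta^2}}{c} \;=\; \frac{\erf(c'/\sqrt 2)}{c'}, \qquad
\frac{\nu_c(R)\,(2R+\zeta^2)}{c^2} \;=\; \E_{\hat\rho\sim\mathcal N(0,1)}\bigl[\min\bigl((\hat\rho/c')^2,\,1\bigr)\bigr] .
\]
The second identity follows from $\clipc(\sqrt{2\mathcal P(\theta)}\,\hat\rho)^2 = \min(2\mathcal P(\theta)\hat\rho^2,\, c^2)$ plugged into \eqref{eq:nuP} (equivalently, from the closed form, since $\E[\min(\hat\rho^2, z^2)] = F(z) + z^2(1-\erf(z/\sqrt2))$). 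In particular $\nu_c(R) = \E[\min(\hat\rho^2, (c')^2)]$, which I will use for the second line of \eqref{eq:bdsnu}.

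\textbf{Two-sided bounds and the $o(1)$ asymptotics.} The function $q(z) := \erf(z)/z$ is strictly decreasing on $(0,\infty)$ with $q(0^+)=2/\sqrt\pi$: its numerator $N(z) := z\,\erf'(z)-\erf(z)$ obeys $N(0)=0$ and $N'(z) = z\,\erf''(z) = -\tfrac{4}{\sqrt\pi}z^2 e^{-z^2} < 0$ for $z>0$. Hence $\frac{\erf(c'/\sqrt2)}{c'} = \tfrac{1}{\sqrt2}q(c'/\sqrt2) < \tfrac{1}{\sqrt2}\cdot\tfrac{2}{\sqrt\pi} = \sqrt{2/\pi}$, giving the upper bound for $\mu_c$; likewise $z\mapsto \E[\min((\hat\rho/z)^2,1)]$ is strictly decreasing in $z$ (pointwise monotonicity of the integrand) with limit $1$ at $0^+$ by dominated convergence, giving the upper bound $<1$ for the $\nu_c$-quantity. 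For the lower bounds, feed $c' \le c/|\zeta|$ into these two monotone functions at their largest admissible argument: this produces $\underline c_\mu(c,\zeta) = \tfrac{1}{\sqrt2}\,q\bigl(c/(\sqrt2|\zeta|)\bigr) > 0$ and $\underline c_\nu(c,\zeta) = \E[\min(\hat\rho^2\zeta^2/c^2,\,1)] > 0$, and both are strictly decreasing in $c$ by the same monotonicity applied in the variable $c$. The asymptotic statements follow immediately: as $c/\zeta\to 0$, the gaps $\sqrt{2/\pi} - \erf(c'/\sqrt2)/c'$ and $1 - \E[\min((\hat\rho/c')^2,1)]$ are increasing in $c'$, hence uniformly (over $R$) bounded by their values at $c' = c/|\zeta|$, which vanish by continuity of $q$ at $0$ and by dominated convergence, respectively.

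\textbf{The inequalities in \eqref{eq:bdsnu}.} If $c/\sqrt{2R+\zeta^2}=c' \le 1$, monotonicity gives $\frac{\nu_c(R)(2R+\zeta^2)}{c^2} = \E[\min((\hat\rho/c')^2,1)] \ge \E[\min(\hat\rho^2,1)]$; if $c' > 1$, then $\nu_c(R) = \E[\min(\hat\rho^2,(c')^2)] > \E[\min(\hat\rho^2,1)]$ since $\min(\hat\rho^2,(c')^2) \ge \min(\hat\rho^2,1)$ with strict inequality on the positive-probability event $\{|\hat\rho|>1\}$. In either case the quantity is bounded below by $\E[\min(\hat\rho^2,1)] = F(1) + \bigl(1-\erf(1/\sqrt2)\bigr) = 1 - \sqrt{2/\pi}\,e^{-1/2}$, and this exceeds $1/2$ because $\sqrt{2/\pi}\,e^{-1/2}<1/2 \iff 8 < \pi e$, which holds.

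\textbf{Main obstacle.} There is no substantive difficulty here; the work is bookkeeping. The two places needing care are keeping the $\sqrt2$ factors straight in the reduction to $c'$ (note that $\mu_c$ involves $c/(2\sqrt{\mathcal P})$ while the $\nu_c$-quantity and the thresholds in \eqref{eq:bdsnu} involve $c' = c/\sqrt{2\mathcal P}$), and verifying the sharp numerical constant $1-\sqrt{2/\pi}\,e^{-1/2}>1/2$ underpinning \eqref{eq:bdsnu}.
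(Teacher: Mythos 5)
Your proposal is correct and takes essentially the same route as the paper: reduce everything to the single variable $c' = c/\sqrt{2R+\zeta^2}$, use monotonicity in $c'$ (lower bounds by evaluating at the extreme $c' = c/\zeta$, upper bounds and the $o(1)$ asymptotics from the limits as $c'\to 0^+$), and settle \eqref{eq:bdsnu} via the value $1-\sqrt{2/(\pi e)}>1/2$ at $c'=1$. The only cosmetic difference is that you verify monotonicity through the representation $\nu_c(R)/(c')^2=\E\bigl[\min\bigl((\hat\rho/c')^2,1\bigr)\bigr]$ and dominated convergence, where the paper differentiates the closed-form expressions and invokes l'H\^{o}pital.
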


\begin{figure}
    \centering
    \includegraphics[width=\linewidth]{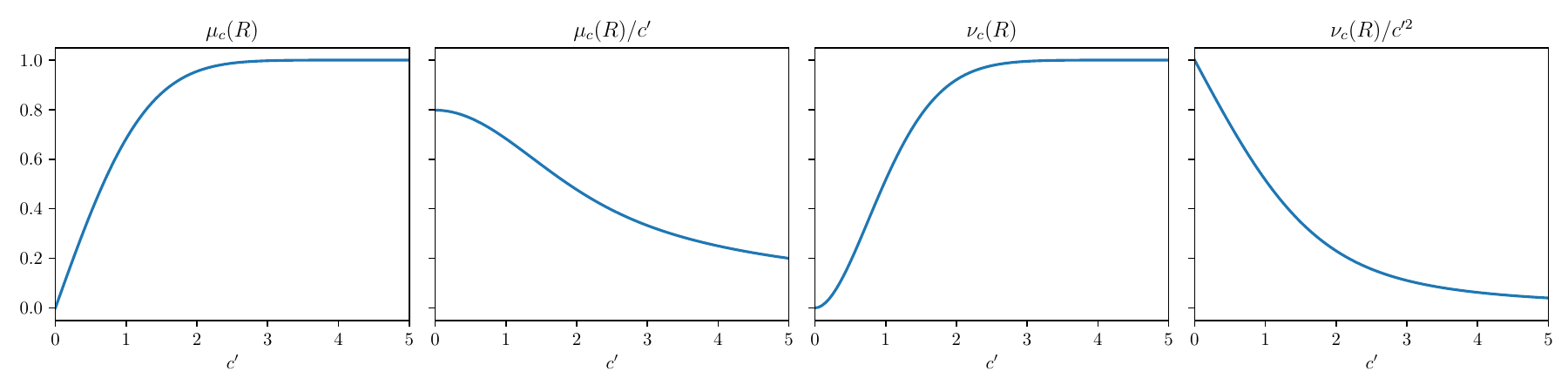} 
    \caption{The functions $\mu_c(R)$, $\mu_c(R) / c'$, $\nu_c(R)$, $\nu_c(R) / c'$, plotted as a function of $c' = c / \sqrt{2R + \zeta^2}$.}
    \label{fig:munu}
\end{figure}

\begin{proof}
    Note that, introducing the notation
    \begin{equation}\label{eq:c'}
        c' = \frac{c}{\sqrt{2R + \zeta^2}} \leq \frac{c}{\zeta},
    \end{equation}
    we have that
    \begin{equation}
        \mu_c(R) = \erf \left( c' / \sqrt 2 \right) < 1,
    \end{equation}
    and
    \begin{equation}
        \nu_c(R) = (c')^2 \left( 1 - \erf \left( c' / \sqrt 2 \right) \right) + F(c') < 1,
    \end{equation}
    where the last inequalities can be verified directly via the definitions in \eqref{eq:munu}.
    Furthermore, we have that both $\mu_c(R)$ and $\nu_c(R)$ are increasing functions of $c'$, equal to $0$ when $c' = 0$. This follows from the definition for $\mu_c(R)$, and can be promptly verified for $\nu_c(R)$ via derivation.

    We further have that, for $c' > 0$,
    \begin{equation}
        \frac{\mu_c(R) \sqrt{2R + \zeta^2}}{c} = \frac{1}{c'} \erf \left( c' / \sqrt 2 \right) < \sqrt{\frac{2}{\pi}},
    \end{equation}
    and
    \begin{equation}
        \frac{\nu_c(R) \left(2R + \zeta^2\right)}{c^2} = \left( 1 - \erf \left( c' / \sqrt 2 \right) \right) + \frac{F(c')}{(c')^2} < 1,
    \end{equation}
    where both the LHSs are decreasing functions of $c'$, going to $0$ for $c' \to + \infty$ (this can be seen via explicit derivation with respect to $c'$, and via the identity $0 \geq \int_0^z - 2t^2 e^{-t^2} \diff t = z e^{-z^2} - \int_0^z e^{-t^2} \diff t$), and where the last inequalities can be verified computing the limit for $c' \to 0^+$ via l'H\^{o}pital rule, which gives
    \begin{equation}
        \lim_{c' \to 0^+} \frac{\mu_c(R) \sqrt{2R + \zeta^2}}{c} = \sqrt{\frac{2}{\pi}}, \qquad \lim_{c' \to 0^+} \frac{\nu_c(R) \left(2R + \zeta^2\right)}{c^2} = 1.
    \end{equation}
    Note that, as $R \geq 0$, the above limit is achieved when $c / \zeta \to 0$. Then, due to the inequality in \eqref{eq:c'}, the first and second part of the thesis follow.

    Note that, for $c' = 1$, we have
    \begin{equation}
        \nu_c(R) = 1 - \sqrt{\frac{2}{\pi e}} \approx 0.516.
    \end{equation}
    Thus, the third and fourth part of the thesis follow from the monotonicity of $\nu_c(R) / c'$ and $\nu_c(R)$.
\end{proof}

\section{Proofs for Section \ref{sec:DP-GD}}\label{app:deteq}

We will use the notation $\|Z\|_{\psi_p} = \inf\{t>0: \E \exp(|Z|^p/t^p)\le 2\}$ to denote the Orlicz norm of a random variable $Z$ for any $p \ge 1$. We denote the inner product between two vectors $a$ and $b$ as $\ip{a,b} = a^\top b$ (with notation applicable to matrices $\ip{aa^\top, A} = a^\top A a$). Given two real-valued quantities $a, b$, we denote by $a \wedge b = \min(a, b)$. Given a quadratic functions $q : \mathbb{R}^d \to \mathbb{C}$ we define its $\|\cdot\|_{C^2}$ norm such that $\|q\|_{C^2} = \|\nabla^2 q\|_{\text{op}} + \|\nabla q(0)\|_2 + |q(0)|$. Furthermore, we will denote with $\gamma_n$ the ratio $d / n$ for a fixed value of $n$.

We start by proving a general result about homogenized DP-GD, informally defined in Remark \ref{remark:H} and formally defined below. 

\begin{definition}[Homogenized DP-GD]\label{def:hdpsgd}
For any $t \in [0, 1)$, we define \emph{homogenized DP-GD} (H-DP-GD) as the solution of the SDE
\begin{equation} 
\begin{aligned}
    \diff \Theta_t = - 2 \bar \eta(t) {\mu_c(\Theta_t)}\nabla \mathcal{P}(\Theta_t) \diff t 
    + \bar \eta(t) \sqrt{\frac{2 \nu_c (\Theta_t) \mathcal{P}(\Theta_t) \Sigma}{n }} \diff B^s_t + 2 \frac{\sqrt d}{n} c \tilde \sigma(t) \diff B^p_t,
\end{aligned}
\end{equation}
where $\Theta_0 = \theta_0 = 0$, $B^s_t$ and $B^p_t$ are two independent standard Brownian motions in $\R^d$, $\bar \eta(t) = \min(\tilde \eta(t), 2n / d)$, 
and $\tilde \sigma(t)$ is such that $\rho^2 \tilde \sigma ^2(t) = - \diff \tilde \eta^2(t) / \diff t$.
\end{definition}

Theorem \ref{thm:H-DP-SGD_q} below shows that DP-GD is close to H-DP-GD over the class of quadratic functions $Q$, given by 
\begin{align}\label{eq:Q_class}
    Q = \{q : \mathbb{R}^d \to \mathbb{C} \; \textup{ s.t .}  \;  q(v) = v^\top R(z; \Sigma) v / 2, \; \textup{ with } \; z \in \Omega \},
\end{align}
where $R(z; \Sigma) = (\Sigma - z I)^{-1}$ is the resolvent matrix of $\Sigma$ and  $\Omega := \left\{w\in\mathbb{C}:|w| = 2 \|\Sigma\|_{\text{op}} \right\}$. 

\begin{theorem}\label{thm:H-DP-SGD_q}
Let Assumptions \ref{ass:data} and \ref{ass:learning_rate_schedules} hold. Let $\rho = \Theta(1)$, $n, d \to \infty$ s.t.\ $d / n \to \gamma \in (0,\infty)$.
Denote by $\Theta_t$ and $\theta_k$ independent realizations of H-DP-GD (as per Definition \ref{def:hdpsgd}) and Algorithm \ref{alg:dp-sgd}.
Then, with overwhelming probability, we have
\begin{align}
    \sup_{t\in[0,1)} \left |q(\Theta_t - \theta^*)- q(\theta_{\lfloor tn \rfloor} - \theta^*)\right| =O \left( \frac{\log^2 n}{\sqrt n} \right),
\end{align}
uniformly for all functions $q \in Q$.
\end{theorem}

\begin{proof}
Consider the notation $u_k = \theta_k - \theta^*$. We have the following update rule for $k \in [n]$:
\begin{align}
u_{k} =  u_{k - 1} - \bar \eta_k \bar g_{k} + 2 c \sqrt{d} \sigma_k b_{k},
\end{align}
where we recall that 
\begin{equation}
\bar \eta_k = \min \left( \eta_k, \frac{2}{\norm{x_k}_2} \right), \qquad \bar g_{k} = g_{k} \min \left(1, \frac{c\sqrt{d}}{\|g_k\|_2} \right),
\qquad g_{k} = \left( \ip{x_{k}, u_k} - z_k \right) x_{k}.
\end{equation}
Let $q: \R^d \to \R$ be a generic quadratic function. Then, using a Taylor expansion, the update rule for $q(u_k)$ reads
\begin{align}\label{eq:updateq}
q(u_{k+1}) &=  q(u_{k}) - {\bar \eta_k} \bar g_{k}^\top\nabla q(u_{k}) + 2 c \sqrt{d} \sigma_k b_{k}^\top\nabla q(u_{k}) 
\\\nonumber &\qquad + \frac{1}{2}\tr\left(({2 c\sqrt{d} \sigma_k b_{k}}- \bar \eta_k \bar g_{k})^{\otimes 2}  \nabla^2 q(u_{k})\right).
\end{align}

Defining the $\sigma$-algebra $\cF_k := \sigma(\{u_i\}_{i=0}^k)$ generated by the iterates of DP-GD in \eqref{eq:updateq}, we write the Doob's decomposition of the above process: 
\begin{align}\label{eq:SGD_q_Doob}
q(u_{k+1}) -  q(u_{k}) &= - \frac{{\bar \eta(k/n)}}{n} \mu_c(u_k)\ip{\Sigma u_k,\nabla q(u_{k})} \\
\nonumber & \qquad +  \frac{{\bar \eta^2(k/n)}}{{2} n}\nu_c(u_k)\mathcal{P}(u_k)\frac{1}{n}\tr( \Sigma\nabla^2 q(u_{k}))
+\frac{1}{ n}  \frac{2d}{n^2}c^2\tilde{\sigma}(k/n)^{2}\tr(\nabla^2 q(u_{k}))
\\\nonumber & \qquad + \Delta \mathcal{M}_k^{\text{Grad}}(q)+
\Delta \mathcal{M}_k^{\text{Hess}}(q)+
\E[\Delta \mathcal{E}_k^{\text{Hess}}(q)\mid \cF_k] 
\\\nonumber & \qquad + \mathcal{M}_k^{\text{Noise}}(q) +\E [\Delta \mathcal{E}_k^{\text{Noise}} (q) \mid \cF_k] + \E [\Delta \mathcal{E}_k^{\text{Step}} (q) \mid \cF_k],
\end{align}
{where we recall the notation $\bar \eta(k/n) = \min \left( \tdeta(k / n), 2/ \gamma_n \right)$, $\eta_k = \tilde \eta(k / n) / n$,} 
and where we introduced the shorthands
\begin{align}\label{eq:martingales}
\Delta \mathcal{M}_k^{\text{Grad}}(q) &:= - \bar \eta_k \ip{\bar g_{k}, \nabla q(u_{k})} + \bar \eta_k \mu_c(u_k)\ip{\Sigma u_k,\nabla q(u_{k})}
\\\nonumber  
\Delta \mathcal{M}_k^{\text{Hess}}(q)&:=\frac{1}{2}\ip{ (\bar \eta_k \bar g_{k})^{\otimes 2} , \nabla^2 q(u_{k})} - \frac{1}{2}\ip{ \E[(\bar \eta_k \bar g_{k})^{\otimes 2} \mid \mathcal{F}_k], \nabla^2 q(u_{k})}
\\\nonumber 
\E[\Delta \mathcal{E}_k^{\text{Hess}}(q)\mid \cF_k] &:= \frac{1}{2}\ip{ \E[(\bar \eta_k \bar g_{k})^{\otimes 2} \mid \mathcal{F}_k], \nabla^2 q(u_{k})}
\\\nonumber
& \qquad - \frac{n \bar \eta_k^2}{2}\nu_c(u_k)\mathcal{P}(u_k)\frac{1}{n}\tr( \Sigma\nabla^2 q(u_{k}))
\\\nonumber 
\Delta \mathcal{M}_k^{\text{Noise}}(q)&:= 2 c\sqrt{{d}}{\sigma}_k\ip{b_{k},\nabla q(u_{k})} + \frac{1}{2 \sqrt{n}}\ip{ 2 c \sqrt{\frac{d}{n}}{\sigma}_k b_{k} {n \bar \eta_k} \bar g_{k}^\top , \nabla^2 q(u_{k})} \\\nonumber& \qquad + \frac{d}{2 }\ip{ (2 c{\sigma}_k b_{k})^{\otimes 2} , \nabla^2 q(u_{k})} - 2dc^2\ip{  {\sigma}_k^{2}I_d, \nabla^2 q(u_{k})}
\\\nonumber 
\E [\Delta \mathcal{E}_k^{\text{Noise}} (q) \mid \cF_k]&: =  {2d}c^2\ip{ \left(\sigma_k^2 - \frac{1}{ n^{3}}\tilde{\sigma}(k/n)^{2}\right)I_d, \nabla^2 q(u_{k})},
\\\nonumber 
{\E [\Delta \mathcal{E}_k^{\text{Step}} (q) \mid \cF_k]} &:= \left(\frac{\bar \eta(k/d)}{n} - \bar \eta_k \right) \mu_c(u_k)\ip{\Sigma u_k,\nabla q(u_{k})}
\\\nonumber
& \qquad + \left(\frac{\bar \eta_k^2}{2} - \frac{\bar \eta(k/n)^2 }{2 n^2}\right) \nu_c(u_k)\mathcal{P}(u_k)\tr( \Sigma\nabla^2 q(u_{k})).
\end{align}
We note that the first three terms are in common with the analysis in \cite{marshall2025clip}, while the last three are the result of the private noise and the adaptive learning rate step in Algorithm \ref{alg:dp-sgd}.

In a similar way, we introduce the shorthand $V_t = \Theta_t - \theta^*$ such that $V_0 = u_0$. Using It\^{o}'s formula on \eqref{eq:HDPSGD}, for any quadratic function $q$, we have that
\begin{equation}\label{eq:DPHSGD_q}
\begin{aligned}
    \diff q(V_t) = & - \bar \eta(t) \mu_c(V_t) \ip{\Sigma V_t, \nabla q(V_t)} \diff t 
    + \bar \eta^2(t) \nu_c (V_t) \mathcal{P}(V_t) \frac{1}{n} \tr \left( \Sigma \nabla^2 q(V_t) \right) \diff t \\
    & \qquad + 2 \frac{d}{n^2} c^2 \tilde \sigma^2(t)  \tr \left(\nabla^2 q (V_t) \right) \diff t + \diff \mathcal M_t^{\text{H-DP-GD}},
\end{aligned}
\end{equation}
where we introduced the shorthand
\begin{equation}
    \diff \mathcal M_t^{\text{H-DP-GD}}: = \ip{\nabla q(V_t), \sqrt{\frac{2 \bar \eta(t) ^2 \nu_c (\Theta_t) \mathcal{P}(\Theta_t) \Sigma}{n } + 4 \frac{ d}{n^2} c^2 \tilde \sigma(t)^2 I_d} \, \diff B_t},
\end{equation}
with $B_t$ being a $d$-dimensional standard Brownian motion.

Let $M$ be a positive constant that will be fixed later. The dynamic is first controlled up to the stopping time 
\begin{equation}\label{eq:taustopping}
    \tau := \inf\{k: \|u_k\|_2 \geq M \cup \lfloor tn \rfloor : \|V_t\|_2 \geq M\}.
\end{equation}
Then, we will denote the stopped processes $u_k^\tau = u_{k\wedge \tau}$ and $V_t^\tau = V_{t\wedge (\tau/n)}$, which will be the objects we will compare in the following arguments. This stopping time is introduced for technical reasons, 
and we will later show that $\tau \geq n$.

Taking the difference between \eqref{eq:SGD_q_Doob} and \eqref{eq:DPHSGD_q}, 
and following the same argument in Lemma 1 in \cite{marshall2025clip}, 
we get that 
there exists an absolute constants $C_1 = C_1(\|\Sigma\|_{\op}, c, \bar{\eta})$, 
such that 
\begin{multline}\label{eq:differenceintm}
\sup_{0\le t< 1} \left|q(u_{\lfloor t n\rfloor}^\tau) - q(V_t^\tau)\right|
\le C_1 \int_0^{1} \sup_{q\in Q}\left|q(u_{\lfloor s n\rfloor}^\tau) - q(V_s^\tau)\right|\diff s
\\+ \sup_{0\le t< (1\wedge (\tau/n))} \left( |\sum_{k=1}^{\lfloor tn \rfloor} \E[\Delta \mathcal{E}_k^{\textup{Hess}}(q)\mid \cF_k ]|+|\sum_{k=1}^{\lfloor tn \rfloor} \E[\Delta \mathcal{E}_k^{\textup{Noise}}(q)\mid \cF_k ]| + |\sum_{k=1}^{\lfloor tn \rfloor} \E [\Delta \mathcal{E}_k^{\text{Step}} (q) \mid \cF_k]| \right)
\\ +\sup_{0\le t< (1\wedge (\tau/n))}\left(|\mathcal{M}_{\lfloor t n\rfloor}^{\textup{Grad}}(q)|+|\mathcal{M}_{\lfloor t n\rfloor}^{\textup{Hess}}(q)|+|\mathcal{M}_{\lfloor t n\rfloor}^{\textup{Noise}}(q)|  +|\mathcal{M}_{t}^{\textup{H-DP-GD}}(q)| \right) + O(d^{-1}),
\end{multline}
where we introduced the notation $\mathcal M_k(q) = \sum_{j = 1}^k \Delta \mathcal M_j(q)$ and $\mathcal{M}_{t}^{\textup{H-DP-GD}}(q) = \int_0^t \diff \mathcal{M}_{s}^{\textup{H-DP-GD}}(q)$. The last term follows from transitioning \eqref{eq:SGD_q_Doob} to continuous times, which involves an additional discretization error of $O(d^{-1})$ (see also Section A.3 in \cite{collins2024hitting} for more details). 
Importantly, differently from Lemma 1 in \cite{marshall2025clip}, we miss a term proportional to $\left(\mathcal{R}(\Theta_s)+\mathcal{R}(\theta_{\lfloor sn\rfloor})\right)^{-1/2}$ in \eqref{eq:differenceintm}. This is possible since, as $\zeta > 0$, both $\mu_c(\theta)$ and $\nu_c(\theta) \mathcal P(\theta)$ are Lipschitz with respect to $\mathcal R(\theta)$ with constant-order Lipschitz constant due to Lemma \ref{lemma:munubounds}, strengthening the condition in their Assumption 5.

Denoting with $\mathcal M$ the sum of the last two lines in \eqref{eq:differenceintm}, by Lemma 2 in \cite{marshall2025clip} (for $|\mathcal{M}_{\lfloor t n\rfloor}^{\textup{Grad}}(q)|$, $|\mathcal{M}_{\lfloor t n\rfloor}^{\textup{Hess}}(q)|$ and $|\sum_{k=1}^{\lfloor tn \rfloor} \E[\Delta \mathcal{E}_k^{\textup{Hess}}(q)\mid \cF_k ]|$), Lemma \ref{lem:mar_noise} (for $|\mathcal{M}_{\lfloor t n\rfloor}^{\textup{Noise}}(q)|$ and $|\sum_{k=1}^{\lfloor tn \rfloor} \E[\Delta \mathcal{E}_k^{\textup{Noise}}(q)\mid \cF_k ]|$), Lemma \ref{lem:M_DPSGD} (for $|\mathcal{M}_{t}^{\textup{H-DP-GD}}(q)|$), and Lemma \ref{lemma:step} (for $|\mathcal M_{\lfloor t n\rfloor}^{\text{Step}} (q)|$), there are two constants $C_2(\|\Sigma\|_{\op}, \bar{\eta}, M, c, \gamma)$ and $C_3(\|\Sigma\|_{\op}, \bar{\eta}, M, c)>0$ such that 
\begin{equation}\label{eq:Msmall}
    \mathcal M \leq C_2 n^{-1/2} \left( \log^2 n + C_3 \right),
\end{equation}
with probability at least $1- e^{-\log^2 n}$. 

Then, following Lemma 3 in \cite{marshall2025clip}, we can define a set $\bar Q \subseteq Q$ with $|\bar Q| \le C_4(\|\Sigma\|_{\op}) d^{4}$, such that, for all $q \in Q$, there exists a $\bar q \in \bar Q$ that satisfies $\|q - \bar q \|_{C^2} \leq d^{-2}$. Then, taking the union bound over this set yields 
\begin{equation}\label{eq:beforeLemma3}
\sup_{q\in Q} \sup_{0\le t< 1} \left|q(u_{\lfloor t n\rfloor}^\tau) - q(V_t^\tau)\right|
\le C_1 \int_0^{1} \sup_{q\in Q}\left|q(u_{\lfloor s n\rfloor}^\tau) - q(V_s^\tau)\right|\diff s
 + \mathcal M \, + \, C_5 d^{-2},
\end{equation}    
with probability at least $1- C_4  d^4 e^{- \log^2 n}$. 
Thus, with this same probability, the application of Gronwall's inequality gives
\begin{equation}\label{eq:q_SGD_DPSGD}
     \sup_{q\in Q} \sup_{0\le t< 1}  \left|q(u_{\lfloor t n\rfloor}^\tau) - q(V_t^\tau)\right| \leq \mathcal (\mathcal M + C_5 d^{-2}) \exp \left( C_1 \right) = O \left( \frac{\log^2 n}{\sqrt n}\right),
\end{equation}  
where the last step hides a dependence on $M$ (see \eqref{eq:taustopping}). Then, we are left to show that $\tau \geq n$ for $M$ chosen to be a constant independent of $d$ and $n$. This follows the same approach as in Lemma 4 in \cite{marshall2025clip}, which is here formalized in Lemma \ref{lem:tau_removal}. In particular, in Lemma \ref{lem:tau_removal}, we show that there exists a constant $C_6(\|\Sigma\|_{\text{op}}, c, \bar \eta) > 0$, such that for any $r \ge 0$, with probability at least $1 - 2 e^{-r^2/2}$, it holds that $\sup_{t \in[0,1]} \|u_{\lfloor tn \rfloor}\|^2 \le \|u_0\|^2 e^{C_6 (1 + d^{-1/2}r)}$ and $\sup_{t\in[0,1]} \|V_t\|^2 \le \|V_0\|^2 e^{C_6 (1 + d^{-1/2}r)}$. Thus, $M$ can be chosen as a constant independent from $d$ and $n$, such that $\tau > n$ with overwhelming probability, and the thesis follows.
\end{proof}

\paragraph{Proof of Remark \ref{remark:H}.}
The result follows the same proof of Theorem \ref{thm:H-DP-SGD_q}, but considering $q( \theta - \theta^*) = \norm{\Sigma^{1 / 2} (\theta - \theta^*)}_2^2 / 2$. While this function is not included in $Q$, it is still such that $\norm{q}_{C^2} = O(1)$, allowing the same argument to go through. Alternatively, it can be recovered by the Cauchy integral formula taking $ \frac{i}{4 \pi} \oint_{\Omega} z q(\cdot) \diff z$ (see, e.g., \eqref{eq:firstcontours}). \qed



\subsection{Technical lemmas for Theorem \ref{thm:H-DP-SGD_q}}\label{app:lemmasdeter}

In this section, we present the statements and proofs of the technical lemmas used in the proof of Theorem \ref{thm:H-DP-SGD_q}. All notation is defined as needed throughout, and Assumptions \ref{ass:data} and \ref{ass:learning_rate_schedules} are assumed to hold unless otherwise specified. For notational simplicity, we use the shorthand $\|\cdot\|$ to denote the Frobenius norm for matrices. 

\begin{lemma}\label{lem:mar_noise}
For any quadratic $q \in Q$ such that $\|q\|_{C^2} \leq 1$, it holds that
\begin{align}\label{eq:firsteqfirstmart}
   \left | \sum_{k=1}^{\lfloor tn \rfloor} \E[\Delta \mathcal{E}_k^{\textup{Noise}}(q)\mid \cF_k ] \right|\le \frac{2 \gamma_n }{\rho^2 n} C_{\eta,2}, \quad \text{a.s.}
\end{align}
where $C_{\eta,2}$ denotes the upper bound on the absolute value of the second derivative of $\tilde \eta^2(t)$. In addition, there is a constant $C = C(c, \gamma, \rho, M)>0$, such that, for any $y\in[1, n]$,
\begin{align}
 \sup_{1\le k\le {(n\wedge \tau)}} \, |\mathcal{M}_k^{\textup{Noise}}(q) |  \le C n^{-\frac{1}{2}} y,
\end{align}
with probability at least $1-e^{-y}.$
\end{lemma}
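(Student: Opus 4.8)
The plan is to treat the two assertions separately: the first is a purely deterministic estimate, the second a maximal concentration bound for a sum of martingale differences.

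For \eqref{eq:firsteqfirstmart} I would start from the noise schedule \eqref{eq:schedulesdiff}, giving $\rho^2\sigma_k^2=\eta_k^2-\eta_{k+1}^2$ for all $k\le n-1$ (and since $t<1$ the sum in \eqref{eq:firsteqfirstmart} only involves such indices). Substituting $\eta_k=\tilde{\eta}(k/n)/n$ from \eqref{eq:tildeeta} and $\rho^2\tilde{\sigma}^2(s)=-(\tilde{\eta}^2)'(s)$ from Definition \ref{def:hdpsgd}, I would Taylor-expand $\tilde{\eta}^2$ at $k/n$ to second order: the value and first-derivative terms cancel exactly against $\eta_k^2-\eta_{k+1}^2$ and $-n^{-3}\tilde{\sigma}^2(k/n)$, leaving only the Lagrange remainder, so $|\sigma_k^2-n^{-3}\tilde{\sigma}^2(k/n)|\le C_{\eta,2}/(2\rho^2 n^4)$. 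Combining this with $|\tr(\nabla^2 q(u_k))|\le d\,\opnorm{\nabla^2 q(u_k)}\le d$ (from $\|q\|_{C^2}\le 1$) in the definition of $\E[\Delta\mathcal E_k^{\textup{Noise}}(q)\mid\cF_k]$ in \eqref{eq:martingales} gives an $O(d^2 C_{\eta,2}/(\rho^2 n^4))$ per-term bound; summing over at most $n$ indices and using $\gamma_n=d/n$ and $c=O(1)$ from Assumption \ref{ass:learning_rate_schedules} yields \eqref{eq:firsteqfirstmart}. This step is entirely mechanical.

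For the martingale bound I would split $\Delta\mathcal M_k^{\textup{Noise}}(q)$ according to its definition in \eqref{eq:martingales} as $T_k^{(1)}+T_k^{(2)}+T_k^{(3)}$, where $T_k^{(1)}=2c\sqrt d\,\sigma_k\ip{b_{k+1},\nabla q(u_k)}$ and $T_k^{(2)}=(c\sqrt d\,\sigma_k\tilde{\eta}(k/n)/n)\,\bar g_k^\top\nabla^2 q(u_k)\,b_{k+1}$ are linear in the privacy noise $b_{k+1}$, while $T_k^{(3)}=2dc^2\sigma_k^2\bigl(b_{k+1}^\top\nabla^2 q(u_k)b_{k+1}-\tr\nabla^2 q(u_k)\bigr)$ is a centered quadratic form in it. Each is an $\cF_k$-martingale difference — for $T^{(2)}$ because $b_{k+1}$ is independent of $(\cF_k,x_{k+1})$, so the non-$\cF_k$-measurable factor $\bar g_k$ does no harm. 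On $\{k\le\tau\}$ (cf. \eqref{eq:taustopping}) one has $\norm{u_k}_2\le M$, hence $\norm{\nabla q(u_k)}_2\le 1+M$ and $\opnorm{\nabla^2 q(u_k)}\le 1$, together with $\norm{\bar g_k}_2\le c\sqrt d$ by clipping and $\sigma_k^2=O(1/n^3)$ as above. Conditionally on $\cF_k$, $T_k^{(1)}$ is then Gaussian with variance $O(d\sigma_k^2)$, $T_k^{(2)}$ (conditionally also on $x_{k+1}$) Gaussian with variance $O(d^2\sigma_k^2/n^2)$, and by Hanson--Wright (Theorem 6.2.1 in \cite{vershynin2018high}) $\|T_k^{(3)}\|_{\psi_1}=O\bigl(dc^2\sigma_k^2\,\Fnorm{\nabla^2 q(u_k)}\bigr)=O(d^{3/2}\sigma_k^2)$.

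The key point I would exploit is that the noise scales telescope: $\sum_{k\le n-1}\sigma_k^2=\rho^{-2}(\eta_1^2-\eta_n^2)\le\tilde{\eta}^2(1/n)/(\rho^2 n^2)=O(1/n^2)$. Hence the predictable quadratic variation of $\sum T^{(1)}$ is $O(d/n^2)=O(\gamma_n/n)$ deterministically up to $\tau$, while those of $\sum T^{(2)},\sum T^{(3)}$ are $O(\gamma_n^2/n^2)$ and $O(\gamma_n^3/n^2)$, with $\max_k\|T_k^{(3)}\|_{\psi_1}=O(\gamma_n^{3/2}/n^{3/2})$. I would control $\sup_{k\le n\wedge\tau}|\sum_{j\le k}T_j^{(1)}|$ via the time-change/Dambis--Dubins--Schwarz representation of a conditionally Gaussian martingale (bounding it by $\sup_{s\le S}|B_s|$ with $S=O(\gamma_n/n)$ deterministic), which gives $O(\sqrt{\gamma_n/n}\cdot\sqrt y)\le Cn^{-1/2}y$ for $y\in[1,n]$ with probability at least $1-2e^{-y}$ once $C$ is large enough in terms of $\gamma_n$; a Freedman/Bernstein martingale inequality accommodating mixed sub-Gaussian ($T^{(2)}$) and sub-exponential ($T^{(3)}$) increments shows $\sum T^{(2)},\sum T^{(3)}$ contribute only $o(n^{-1/2}y)$ with the same probability. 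A union bound over the three pieces and a rescaling of $y$ (absorbed into $C=C(c,\gamma_n,\rho,M)$) finishes the proof. Structurally this mirrors Lemma 2 of \cite{marshall2025clip} and Appendix A of \cite{collins2024hitting} — in fact it is easier, since $b_{k+1}$ is exactly Gaussian — so I expect the only (and mild) obstacle to be bookkeeping: pairing each increment with the correct conditioning (notably $T^{(2)}$, with its non-$\cF_k$-measurable $\bar g_k$), propagating the stopping-time bound on $\norm{u_k}_2$ to $\norm{\nabla q(u_k)}_2$ and $\Fnorm{\nabla^2 q(u_k)}$, and invoking a single martingale tail inequality that covers both tail regimes simultaneously.
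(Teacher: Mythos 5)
Your proposal is correct and follows essentially the same route as the paper: the same three-way decomposition of $\Delta \mathcal{M}_k^{\textup{Noise}}(q)$ into the linear-in-$b_{k+1}$ gradient term, the mixed $b_{k+1}$--$\bar g_k$ term and the centered quadratic form, the same Taylor/telescoping estimates on $\sigma_k^2$, the stopping-time bound on $\|\nabla q(u_k)\|_2$, Hanson--Wright for the quadratic form, and a sub-Gaussian/sub-exponential maximal martingale inequality to conclude. The only immaterial differences are that you handle the three pieces with separate maximal inequalities (using the deterministic clipping bound $\|\bar g_k\|_2 \le c\sqrt d$ for the mixed term), whereas the paper lumps all increments into a single conditional $\psi_1$ bound of order $n^{-1}$ and invokes Lemma 5 of \cite{marshall2025clip} (treating the $\bar g_k$ term by splitting on the clipping indicator), and that your constant in the deterministic bound matches the stated $2\gamma_n C_{\eta,2}/(\rho^2 n)$ only up to factors of $c^2\gamma_n$ --- the same slack already present in the paper's own display.
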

\begin{proof}
Recall the definition in \eqref{eq:martingales}
\begin{equation}
\begin{aligned}
    \Delta \mathcal{M}_k^{\text{Noise}}(q) & = 2 c\sqrt{{d}}{\sigma}_k\ip{b_{k},\nabla q(u_{k})} + \frac{1}{2 \sqrt{n}} 2 c \sqrt{\frac{d}{n}}{\sigma}_k  n \bar \eta_k \bar g_{k}^\top \nabla^2 q(u_{k})b_{k} \\\nonumber& \qquad + 2d c^2 \sigma_k^2\tr( (b_{k})^{\otimes 2}  \nabla^2 q(u_{k})) - 2dc^2{\sigma}_k^{2}\tr(\nabla^2 q(u_{k})).
\end{aligned}
\end{equation}
Then, for any $k\le \tau,$ we rewrite the martingale as a combination of the following three terms
\begin{align} 
\Delta \mathcal{M}_k^{\textup{Noise}}(q)   = \frac{1}{n^{3/2}}\ip{b_{k}, A_{k,1} + A_{k,2}} + \frac{1}{ n^2}\tr\left(b_{k}^{\otimes 2 } - I_d\right),
\end{align}
where we introduced the shorthands
\begin{equation}
    A_{k,1} := 2 c n^{3/2} \sqrt{{d}}{\sigma}_k \nabla q(u_{k}), \; A_{k,2} := c n \sqrt{\frac{d}{n}}{\sigma}_k n\bar \eta_k \nabla^2 q(u_{k})\bar g_{k} , \; C_k := 2dn^2c^2{\sigma}_k^{2}\nabla^2 q(u_{k}).
\end{equation}

We will separately bound the contribution of each term in terms of its Orlicz norm. Let us start with the second term, and 
consider $q\in Q$. This is a quadratic function of the iterates; its Hessian, therefore, does not depend on the iterates and explicitly on $k.$ In addition, we have that $\sup_{z\in \Omega}\|R(z;\Sigma)\|_{\text{op}} \le 2$. Since $b_{k}\sim\mathcal{N}(0,I_d)$ and independent, using the Hanson-Wright inequality (Theorem 6.2.1 in \cite{vershynin2018high}) 
we have that, for some $c>0$ and $K = \max_{i\in [d]}\|b_{k+1,i}\|_{\psi_2}$,
\begin{align*}
    \mathbb{P}\left(\frac{1}{ n^2}\tr\left((b_{k}^{\otimes 2 } - I_d\right) C_{k})\ge t\right) &\leq 2 \exp\left(-c \min\left(\frac{t^2 n^4}{K^4\|C_k\|^2}, \frac{tn^2}{K^2\|C_k\|_{\text{op}}}\right)\right)
    \\\nonumber &\le 
2 \exp\left(-c'\min\left(\frac{t^2 n^4}{d}, {t n^2}\right)\right),
\end{align*}
for some $c'>0$ independent of $k,d,n.$ To justify the last passage, note that, by the structure of the noise, $$\sigma_k^2 \le \frac{1}{\rho^2n^2}|\tdeta({k}/{n})^2 - \tdeta((k+1)/n)^2|\le \frac{2}{\rho^2n^3} \max_{x \in [\frac{k}{n},\frac{k+1}{n}]} \left|\frac{\diff}{\diff x}\tdeta^2 (x)\right| \le \frac{2}{\rho^2n^3} C_{\eta,1}, $$
as  $\left| \frac{\diff}{\diff x}\tdeta^2 (x) \right|\le C_{\eta,1} $ for some $C_{\eta,1}$ due to Assumption \ref{ass:learning_rate_schedules}. Using the above bound, we obtain that $\|C_k\|_{\op} \le  2dn^2 c^2 \sigma_k^2 \|q\|_{C^2} \le {8} c^2/\rho^2 C_{\eta,1},$ and similarly {$\|C_k\| \le 8\sqrt{d}c^2/\rho^2C_{\eta,1}.$}
We, therefore, have that 
$\|\frac{1}{n^2}\ip{b_{k}^{\otimes 2 } - I_d , C_{k}}\|_{\psi_1} \le C   n^{-1} ,$ for some constant $C(\rho, c, C_{\eta,1})>0.$

For the first term, by Eq. (89) in  \cite{marshall2025clip}, the norm of $q$ is bounded for the stopped process $u_k^\tau$ as follows:
\begin{align}\label{eq:qnorm_bound}
    \|\nabla q(u)\|\le \|\nabla^2 q\|_{\text{op}}\|u\|+ \|\nabla q(0)\|\le \|q\|_{C^2} (1+\|u\|) \le C(1+M).
\end{align} 
Hence, we have that $\|A_{k,1}\|\le 2 n^{3/ 2} c \sqrt{n}\sigma_k C(1+M) \le 4 \sqrt d c \sqrt{ C_{\eta,1} / \gamma_n}C(1+M)/\rho $, which implies that 
\begin{align}
\left\| \frac{1}{{n^{3/2}}}\ip{b_{k}, A_{k,1}}\right\|_{\psi_2} \le \frac{C}{n},    
\end{align}
with $C = C(M,\rho, \gamma_n, C_{\eta,1}, c)>0.$
Then,
we have
\begin{align} 
\left|\frac{1}{{n}^{3/2}}\ip{b_{k}, A_{k,2}} \right|& 
\le \frac{1}{ n^{3/2}}  c n \sqrt{\gamma_n}{\sigma}_k \tdeta(k/ n)  \left|  \ip{b_{k},\nabla^2 q(u_{k})g_{k}} \right| 
\\\nonumber &\le \frac{2 c\sqrt{C_{\eta,1}}}{ n \sqrt{\gamma_n} \rho} \left|\ip{b_{k},\nabla^2 q(u_{k})x_{k+1}}(\ip{x_{k+1},u_k} +z_{k+1}) \right|.
\end{align}
Due to \eqref{eq:qnorm_bound} and Assumptions \ref{ass:data} and \ref{ass:learning_rate_schedules}, we have $ \|\ip{b_{k},\nabla^2 q(u_{k})x_{k+1}} \|_{\psi_1}\le C\|b_{k}\|_{\psi_2}\|x_{k+1}\|_{\psi_2}\le C$ for some $C>0$ that depends on $\|\Sigma\|_{\op}.$ Finally, we note that $\|\ip{x_{k+1},u_k}\|_{\psi_2} \le C(1+M)$, and $\|z_{k}\|_{\psi_2}\le C\zeta$ for any $k\le \tau$.
Combining the above, we obtain that there is a constant $C = C(c, C_{\eta,1}, \rho, \gamma_n, M, \zeta)>0$ such that 
\begin{align}
\phi_{k,1} := \inf \{t>0: \, \E[\exp (|\Delta \mathcal{M}_k^{\text{Noise}}(q)|/t)\mid \mathcal{F}_{k-1} ] \le 2\}\le Cn^{-1} . 
\end{align}
We then apply Lemma 5 in \cite{marshall2025clip} for some absolute constant $C>0$ for all $t>0$: 
\begin{align}
 \P\left(\sup_{1\le k \leq n\wedge \tau } |\mathcal{M}_k^{\text{Noise}}(q) - \E \mathcal{M}_0^{\text{Noise}}(q)|\geq t\right) &\le 2\exp\left( - \min \{\frac{t}{C\max_{k\in [n]} \phi_{k,1}}, \frac{t^2}{C\sum_{i=1}^n \phi_{i,1}^2}\}\right) 
 \\\nonumber &\le  2\exp\left( - Cn\min \{t, t^2\}\right) .
\end{align}
As we assume that $n$ is proportional to $d$ and noting that $\E \mathcal{M}_0^{\text{Noise}}(q) = 0$ 
by definition, we then have that, for any $y\in[1, n]$,
\begin{align}
 \sup_{1\le k\le {(n\wedge \tau})} \, |\mathcal{M}_k^{\textup{Noise}}(q)|
 \le Cn^{-\frac{1}{2}} y,
\end{align}
with probability at least $1-e^{-y}$ for any $y\in [1, n].$

Next, we bound the error due to the discretization: 
\begin{align*}
   \left | \sum_{k=1}^{\lfloor tn \rfloor} \E[\Delta \mathcal{E}_k^{\textup{Noise}}(q)\mid \cF_k ] \right|&\le  {2d}c^2\sum_{k=1}^{\lfloor tn \rfloor} \left|\sigma_k^2 - \frac{1}{ n^{3}}\tilde{\sigma}(k/n)^{2}\right|\cdot \left|\tr(\nabla^2 q(u_{k}))\right| 
   \\\nonumber &\le \frac{2d^2 \| q\|_{C^2}}{n^2\rho^2}c^2\sum_{k=1}^{\lfloor tn \rfloor} \left|\tdeta(k/n)^2 - \tdeta(({k+1})/n)^2 - \frac{1}{ n}\tilde{\sigma}(k/n)^{2}\right|
   \\\nonumber & \le \frac{2 \gamma_n^2 c^2 \| q\|_{C^2}}{\rho^2 n^2}\sum_{k=1}^{\lfloor tn \rfloor} \max_{x \in (\frac{k}{n}, \frac{k+1}{n})}\left|\frac{\diff^2 }{\diff x^2}\tdeta(x)^2\right|,
\end{align*}
where we use the definition of the noise function as the derivative of the learning rate $\rho^2\tilde{\sigma}(x)^{2} = - \frac{\diff }{\diff x}\tdeta^2(x)$ at any point $x\in[0,1).$ Then, as $|\frac{\diff^2}{\diff x^2}\tdeta^2 (x) | \le C_{\eta,2}$ for some constant $C_{\eta,2}$, the desired result follows.
\end{proof}

\begin{lemma}\label{lem:M_DPSGD}
    Denote by $\mathcal{M}_t^{\textup{H-DP-GD},\tau}$ the H-DP-GD martingale in which the stopping time is imposed. There is a constant $C = C(c,C_{\eta,1},\gamma_n,\|\Sigma\|_{\op}, M)>0$, such that for any quadratic $q\in Q$ with $\|q\|_{C^2}\le 1,$ and for any $y\in[1, n]$, we have
\begin{align}
 \sup_{0\le  t< {1}} \, |\mathcal{M}_t^{\textup{H-DP-GD},\tau}(q)|  \le C n^{-\frac{1}{2}} y,
\end{align}
with probability at least $1-e^{-y}.$ 
\end{lemma}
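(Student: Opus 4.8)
The plan is to view $\mathcal{M}_t^{\textup{H-DP-SGD},\tau}(q)$ as a continuous, square-integrable martingale --- a stochastic integral against the $d$-dimensional Brownian motion driving \eqref{eq:HDPSGD} --- and to bound its running supremum by a deterministic control on its quadratic variation combined with a standard exponential (sub-Gaussian) martingale inequality. Writing the diffusion coefficient matrix of H-DP-SGD as
\begin{equation}
    \Xi_t := \frac{2\,\tilde\eta^2(t)\,\nu_c(\Theta_t)\,\mathcal P(\Theta_t)}{n}\,\Sigma + 4\,\frac{d}{n^2}\,c^2\,\tilde\sigma^2(t)\,I_d ,
\end{equation}
we have, by It\^{o}'s formula and symmetry of $\Xi_t^{1/2}$, that the stopped martingale equals $\mathcal{M}_t^{\textup{H-DP-SGD},\tau}(q) = \int_0^{t\wedge(\tau/n)}\ip{\Xi_s^{1/2}\nabla q(V_s),\,\diff B_s}$, with quadratic variation $\int_0^{t\wedge(\tau/n)}\|\Xi_s^{1/2}\nabla q(V_s)\|_2^2\,\diff s$. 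The first --- and main --- step is thus to bound this integrand uniformly on $\{s \le \tau/n\}$.

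On that event $\|V_s\|_2 \le M$, hence $\|\Theta_s\|_2 \le M + \|\theta^*\|_2 = O(1)$ and $\mathcal P(\Theta_s) = \mathcal R(\Theta_s) + \zeta^2/2 \le \tfrac12\|\Sigma\|_{\op}M^2 + \tfrac{\zeta^2}{2} = O(1)$; moreover $\nu_c(\Theta_s) \le 1$ by Lemma \ref{lemma:munubounds}, $\tilde\eta^2(s)$ is bounded by a constant and $\tilde\sigma^2(s) = -\rho^{-2}\diff\tilde\eta^2(s)/\diff s$ is bounded by $C_{\eta,1}/\rho^2$ by Assumption \ref{ass:learning_rate_schedules} (with $\rho$ an order-one constant throughout), and $\|\nabla q(V_s)\|_2 \le \|q\|_{C^2}(1+\|V_s\|_2) \le 1+M$ by \eqref{eq:qnorm_bound}. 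Since $\|\Sigma\|_{\op} = O(1)$ and $d/n^2 = \gamma_n/n$, these estimates give $\|\Xi_s\|_{\op} \le C/n$ --- the SGD-noise term contributing $O(1/n)$ via $\|\Sigma\|_{\op}/n$, and the new private-noise term $4dc^2\tilde\sigma^2(s)/n^2$ contributing $O(\gamma_n/n) = O(1/n)$ --- whence $\|\Xi_s^{1/2}\nabla q(V_s)\|_2^2 \le \|\Xi_s\|_{\op}(1+M)^2 \le C/n$ for a constant $C = C(c,C_{\eta,1},\gamma_n,\|\Sigma\|_{\op},M)>0$. Integrating over $s \in [0,t)$ with $t<1$ yields the deterministic bound $\langle\mathcal{M}^{\textup{H-DP-SGD},\tau}(q)\rangle_1 \le C/n$ almost surely; in particular the stopped integral is a genuine $L^2$ martingale.

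Given this, for every $\lambda \in \R$ the process $\exp\!\bigl(\lambda\,\mathcal{M}_t^{\textup{H-DP-SGD},\tau}(q) - \tfrac{\lambda^2}{2}\langle\mathcal{M}^{\textup{H-DP-SGD},\tau}(q)\rangle_t\bigr)$ is a nonnegative supermartingale, so Doob's maximal inequality gives $\P\!\bigl(\sup_{t<1}\mathcal{M}_t^{\textup{H-DP-SGD},\tau}(q)\ge x\bigr) \le \exp(-\lambda x + \tfrac{\lambda^2 C}{2n})$; optimizing in $\lambda$ and repeating for $-\mathcal{M}$ yields $\P\!\bigl(\sup_{t<1}|\mathcal{M}_t^{\textup{H-DP-SGD},\tau}(q)| \ge x\bigr) \le 2\exp(-nx^2/(2C))$. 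Taking $x = C'\,y/\sqrt n$ for a suitably large $C'$ and using $y^2\ge y$ for $y\ge 1$ absorbs the factor $2$ and gives the claim with probability at least $1-e^{-y}$; alternatively one may invoke the Dambis--Dubins--Schwarz representation $\mathcal{M}_t^{\textup{H-DP-SGD},\tau}(q) = W_{\langle\mathcal{M}\rangle_t}$ for a standard Brownian motion $W$ and bound $\sup_{s\le C/n}|W_s|$ by the reflection principle. The only genuine obstacle is the per-term verification in the second step, and in particular controlling the extra private-noise contribution $4dc^2\tilde\sigma^2(t)/n^2$ --- which has no counterpart in \cite{marshall2025clip} --- and checking that it, like the SGD term, enters the quadratic variation at scale $1/n$; one should also note this is the $t\in[0,1)$ statement, so no separate treatment of the endpoint $t=1$ (where $\tilde\sigma$ blows up, cf.\ \eqref{eq:heuristic}) is needed.
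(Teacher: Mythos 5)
Your proposal is correct and follows essentially the same route as the paper: bound the quadratic variation of the stopped martingale deterministically by $O(1/n)$ using $\|V_s\|_2\le M$, $\nu_c\le 1$, the bounds on $\tilde\eta^2$, $\tilde\sigma^2$ and $\|\nabla q\|_2$, together with $d/n^2=\gamma_n/n$ for the private-noise term, and then apply the standard exponential-supermartingale (Gaussian) concentration, which the paper invokes by reference to Section B.6 of \cite{marshall2025clip} and you spell out explicitly.
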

\begin{proof}
To bound the martingale error from H-DP-GD under some general statistic $q\in Q$, differently from the argument to obtain Eq. (72) in Lemma 2 in \cite{marshall2025clip}, we need to control the additional term due to the additive noise in Algorithm \ref{alg:dp-sgd}.

Using It\^{o}'s formula for any quadratic function $q$:
\begin{equation}\label{eq:DPHSGD_q_lemma}
\begin{aligned}
    \diff q(V_t) = & - \bar \eta(t) \mu_c(\Theta_t) \nabla \mathcal{P}(\Theta_t)^\top \nabla q(V_t) \diff t 
    + \bar \eta^2(t) \nu_c (\Theta_t) \mathcal{P}(\Theta_t) \frac{1}{n} \tr \left( \Sigma \nabla^2 q \right) \diff t \\
    & \qquad + 2 \frac{d}{n^2} c^2 \tilde \sigma^2(t)  \tr \left(\nabla^2 q \right) \diff t + \diff \mathcal M_t^{\text{H-DP-GD}},
    \\
    \diff \mathcal M_t^{\text{H-DP-GD}}&: = \ip{\nabla q(V_t), \sqrt{\frac{2\bar \eta(t) ^2 \nu_c (\Theta_t) \mathcal{P}(\Theta_t) \Sigma}{n } + 4 \frac{ d}{n^2} c^2 \tilde \sigma(t)^2 I_d} \,  \diff B_t},
\end{aligned}
\end{equation}
with $B_t$ being a standard $d$ dimensional Brownian motion. 
The quadratic variation of the martingale is then bounded a.s. 
$$\ip{\mathcal{M}(q)}_t\le \frac{C t}{n} + \frac{d}{n^2} (4c^2\bar{\eta}^2\|\Sigma\|_{\op} (1+M)^2),$$
for some constant $C = C(\|\Sigma\|_{\op}, M, C_{\eta,1}, c, \gamma_n)>0,$
where we used Assumption \ref{ass:learning_rate_schedules} which gives $|\rho^2 \tilde \sigma(t) | = |\frac{\diff }{\diff t}\tdeta^2(t)| \le C_{\eta,1}$.
The claim is then proved by an application of Gaussian concentration inequalities as in Section B.6 in \cite{marshall2025clip}.
\end{proof}

\begin{lemma}\label{lem:tau_removal}
    For any $r \geq 0$, with probability at least $1 - 2 e^{-r^2/2}$, we jointly have
    \begin{equation}
        \sup_{t\in[0,1)} \|V_t\|^2 \le \|V_0\|^2 e^{C ( 1 + d^{-1/2}r)}, \qquad \sup_{t\in[0,1)} \|u_{\lfloor nt \rfloor}\|^2 \le \|u_0\|^2 e^{C ( 1 + d^{-1/2}r)}.
    \end{equation}
\end{lemma} 
\begin{proof}
The proof follows a path similar to the one of Lemma 4 in \cite{marshall2025clip}. In particular, consider the function $\varphi(V_t) = \log(1+\|V_t\|^2).$  Then, by application of It\^{o}'s Lemma to \eqref{eq:HDPSGD},   
\begin{equation}
\begin{aligned}
    \diff \varphi(V_t) = & - \bar \eta(t) \frac{\mu_c(\Theta_t)}{1+\|V_t\|^2} \nabla \mathcal{P}(\Theta_t)^\top V_t\diff t \\
    & \qquad + {2}\bar \eta^2(t) \frac{\nu_c (\Theta_t) \mathcal{P}(\Theta_t)}{(1+\|V_t\|^2)^2} \frac{1}{n} \tr \left( \Sigma (I_d (1+\|V_t\|^2) - V_t \otimes V_t )\right) \diff t \\
    & \qquad + 2 \frac{d}{n} c^2 \tilde \sigma^2(t) \frac{1}{(1+\|V_t\|^2)^2} \diff t + \diff \mathcal{M}_t(\varphi) , 
\end{aligned} 
\end{equation}
with 
\begin{equation}
    \diff \mathcal{M}_t(\varphi) =  \frac{1}{(1+\|V_t\|^2)}\ip{V_t, \sqrt{\frac{2\bar \eta(t) ^2 \nu_c (\Theta_t) \mathcal{P}(\Theta_t) \Sigma}{n } + 4 \frac{ d}{n^2} c^2 \tilde \sigma(t)^2 I_d} \, \diff B_t}.
\end{equation}
The drift terms and the quadratic variation terms can be bounded by some $C(c,\|\Sigma\|_{\op }, \zeta, \gamma_n, M)$. The quadratic variation of the martingale term is bounded as 
$\ip{M}_t \le \frac{C t}{n}.$ We then have, by Gaussian concentration, \begin{align}
    \P\left(\sup_{0\le t< 1} \varphi(V_t)\ge C(1 + r/\sqrt{n}) \right) \le e^{-r^2/2},
\end{align} 
which proves the first part of the claim. The bound on $\sup_{t\in[0,1)} \|u_{\lfloor nt \rfloor}\|^2$ follows the same strategy as the argument following Lemma 4 in \cite{marshall2025clip}.
\end{proof}

\begin{lemma}\label{lemma:step}
For any quadratic $q \in Q$ such that $\|q\|_{C^2} \le 1$, 
\begin{align}
 \sup_{1\le k\le {(n\wedge \tau)}} \, |\sum_{k=1}^{n} \E [\Delta \mathcal{E}_k^{\textup{Step}} (q) \mid \cF_k]|  \le C n^{-\frac{1}{2}} \log^2 n,
\end{align}
with probability at least $1-e^{-c \log^2 d}$.
\end{lemma}
\begin{proof}
Recall that
\begin{align}
\E [\Delta \mathcal{E}_k^{\textup{Step}} (q) \mid \cF_k] &:= \left(\frac{\bar \eta(k/d)}{n} - \bar \eta_k \right) \mu_c(u_k)\ip{\Sigma u_k,\nabla q(u_{k})}
\\\nonumber
& \qquad + \left(\frac{\bar \eta_k^2}{2} - \frac{\bar \eta(k/n)^2}{2 n^2}\right) \nu_c(u_k)\mathcal{P}(u_k)\tr( \Sigma\nabla^2 q(u_{k})),
\end{align}
and $\norm{u_k}_2^2 < M$ by the definition of $\tau$. Thus, this implies
\begin{equation}
    \left| \mu_c(u_k)\ip{\Sigma u_k,\nabla q(u_{k})} \right| = O(1), \qquad \left| \nu_c(u_k)\mathcal{P}(u_k)\tr( \Sigma\nabla^2 q(u_{k})) \right| = O(d).
\end{equation}

First, notice that
\begin{equation}
    \frac{\bar \eta(k/d)}{n} - \bar \eta_k = \min \left( \frac{\tilde \eta(k / d)}{n} , \frac{2}{d} \right) - \min \left( \frac{\tilde \eta(k / d)}{n} , \frac{2}{\norm{x_k}_2^2} \right),
\end{equation}
which implies
\begin{equation}
    \left| \frac{\bar \eta(k/d)}{n} - \bar \eta_k \right| \leq 2 \left| \frac{1}{d} - \frac{1}{\norm{x_k}_2^2}\right| =  \frac{2 \left| \norm{x_k}_2^2 - d \right|}{d \norm{x_k}_2^2}.
\end{equation}
Similarly, it can be shown that
\begin{equation}
    \frac{1}{2} \left| \frac{\bar \eta(k/d)^2}{n^2} - \bar \eta_k^2 \right| \leq \frac{2 \left| \norm{x_k}_2^2 - d \right| \left( \norm{x_k}_2^2 + d  \right)}{d^2 \norm{x_k}_2^4}.
\end{equation}


Hanson-Wright inequality gives 
\begin{equation}
    \P \left( \left| \norm{x_k}_2^2 - d \right| > \sqrt d \log d \right) \leq 2 \exp \left( - c_1 \log^2 d \right),
\end{equation}
which is sufficient to show that, with probability $1 - 2 \exp \left( - c_1 \log^2 d \right)$,
\begin{equation}
    \left| \E [\Delta \mathcal{E}_k^{\textup{Step}} (q) \mid \cF_k] \right| = O \left( \frac{\log d}{d^{3/ 2}} \right),
\end{equation}
which yields the thesis after performing a union bound over $k \in [n]$.
\end{proof}

\subsection{Proof of Theorem \ref{thm:deteq}}\label{app:deteqchange}


Consider \eqref{eq:DPHSGD_q_lemma}, which reads
\begin{equation}
\begin{aligned}
    \diff q(V_t) = & - \bar \eta(t) \mu_c(\Theta_t) \nabla \mathcal{P}(\Theta_t)^\top \nabla q(V_t) \diff t 
    + \bar \eta^2(t) \nu_c (\Theta_t) \mathcal{P}(\Theta_t) \frac{1}{n} \tr \left( \Sigma \nabla^2 q \right) \diff t \\
    & \qquad + 2 \frac{d}{n^2} c^2 \tilde \sigma^2(t)  \tr \left(\nabla^2 q \right) \diff t + \diff \mathcal M_t^{\text{H-DP-GD}}(q),
\end{aligned}
\end{equation}
where we recall the notation $V_t = \Theta_t - \theta^*$, and $q(V_t) = V_t^\top (\Sigma-zI)^{-1} V_t / 2$, and where in Lemma \ref{lem:M_DPSGD} we showed that $\sup_{0\le t< {1}} \, | \int_0^t \mathcal{M}_\tau^{\textup{H-DP-GD}}(q) \diff \tau| \le C n^{-\frac{1}{2}} y$ with probability at least $1-e^{-y}$. Then, we have
\begin{equation}
\begin{aligned}
    \diff q(V_t) = & - \bar \eta(t) \mu_c(\Theta_t) V_t^\top \Sigma (\Sigma - zI)^{-1} V_t \diff t 
    + \bar \eta^2(t) \nu_c (\Theta_t) \mathcal{P}(\Theta_t) \frac{1}{n} \tr \left( \Sigma (\Sigma - zI)^{-1} \right) \diff t \\
    & \qquad + 2 \frac{d}{n^2} c^2 \tilde \sigma^2(t)  \tr \left((\Sigma - zI)^{-1}\right) \diff t + \diff \mathcal M_t^{\text{H-DP-GD}}(q),
\end{aligned}
\end{equation}
which, using the identity $\Sigma (\Sigma - zI)^{-1} = I + z(\Sigma - zI)^{-1}$, can be written as
\begin{equation}\label{eq:resolvantwithmartingale}
\begin{aligned}
    \diff q(V_t) = & - \bar \eta(t) \mu_c(\Theta_t) \left(\norm{V_t}_2^2 + z q(V_t) \right) \diff t 
    + \bar \eta^2(t) \nu_c (\Theta_t) \mathcal{P}(\Theta_t) \gamma_n \frac{\tr \left( \Sigma (\Sigma - zI)^{-1} \right)}{d} \diff t \\
    & \qquad + 2  c^2 \tilde \sigma^2(t) \gamma_n^2 \frac{\tr \left((\Sigma - zI)^{-1}\right)}{d} \diff t + \diff \mathcal M_t^{\text{H-DP-GD}}(q).
\end{aligned}
\end{equation}

Notice that
\begin{equation}\label{eq:firstcontours}
    q(V_t) = 
    \frac{1}{2} \sum_{i = 1}^d \frac{\ip{V_t, \omega_i}^2}{\lambda_i - z}, \qquad \norm{V_t}_2^2 = \frac{i}{\pi} \oint_\Omega q(V_t), \qquad \mathcal R(\Theta_t) = \frac{i}{2 \pi} \oint_\Omega z q(V_t),
\end{equation}
where the last two equations follow from Cauchy integral formula. Then, we can define the following (deterministic) integro-differential equation after removing the martingale term from \eqref{eq:DPHSGD_q_lemma}: 
\begin{equation}\label{eq:resolvantwithoutmartingale}
\begin{aligned}
    \diff \hat q(z, t) = & - \bar \eta(t) \mu_c \left( \frac{i}{2 \pi} \oint_\Omega z \hat q(z, t) \right) \left(\frac{i}{\pi} \oint_\Omega \hat q(z, t) + z \hat q(z, t)  \right) \diff t \\
    & \qquad + \bar \eta^2(t) \nu_c \left( \frac{i}{2 \pi} \oint_\Omega z \hat q(z, t) \right) \left(\frac{\zeta^2}{2} + \frac{i}{2 \pi} \oint_\Omega z \hat q(z, t) \right) \gamma_n \frac{\tr \left( \Sigma (\Sigma - zI)^{-1} \right)}{d} \diff t \\
    & \qquad + 2  c^2 \tilde \sigma^2(t) \gamma_n^2 \frac{\tr \left((\Sigma - zI)^{-1}\right)}{d} \diff t,
\end{aligned}
\end{equation}
where all differentials are taken with respect to time. \eqref{eq:resolvantwithoutmartingale} is defined for $z \in \Omega$, $t \in [0, 1]$, and is such that $\hat q(z, 0) = q(V_0) = {\theta^*}^\top (\Sigma - z I)^{-1} \theta^* / 2$.

Using the same argument as in the proof of Theorem \ref{thm:H-DP-SGD_q} based on Gronwall's inequality, relying on $\zeta > 0$ which makes both $\mu_c(\theta)$ and $\nu_c(\theta) \mathcal P(\theta)$ Lipschitz with respect to $\mathcal R(\theta)$ with constant-order Lipschitz constant, and since the length of $\Omega$ is bounded, we have
\begin{equation}
    \sup_{t\in[0,1)} \left| \mathcal{R}(\Theta_t) - R(t) \right| = O \left( \frac{ \log^2 n}{\sqrt n} \right),
\end{equation}
with overwhelming probability, where we have introduce the shorthand $R(t) = \frac{i}{4 \pi} \oint_\Omega z \hat q(z, t)$.

Then, due to Lemma 4.1 in \cite{collins2024hitting}, we have that the unique solution of \eqref{eq:resolvantwithoutmartingale} is
\begin{equation}
    \hat q(z, t) = \frac{1}{d} \sum_{i = 1}^d \frac{D_i(t)}{\lambda_i - z},
\end{equation}
where $D_i(t)$ is defined according to \eqref{eq:ODEi}. Multiplying both sides by $z$, integrating it over $\Omega$ and using Cauchy formula, together with Theorem \ref{thm:H-DP-SGD_q}, gives the first desired result, i.e.,
\begin{equation}\label{eq:thesisprob}
    \sup_{t\in[0,1)} \left| \mathcal{R}(\theta_{\lfloor tn\rfloor}) - R(t) \right| = O \left( \frac{\log^2 n}{\sqrt n}  \right),
\end{equation}
with overwhelming probability.

To prove the same inequality in expectation, define the event $\mathcal E$ such that 
\begin{equation}
    \Delta := \sup_{t\in[0,1)} \left| \mathcal{R}(\theta_{\lfloor tn\rfloor}) - R(t) \right| \geq C_1 \frac{\log^2 n}{\sqrt n}.
\end{equation}
By \eqref{eq:thesisprob}, there exists a constant $C_1$ such that $\mathbb P(\mathcal E) \leq 2 e^{- c_1 \log^2 d}$. Then
\begin{equation}\label{eq:expectationsplit}
    \sup_{t\in[0,1)} \left| \E \left[ \mathcal{R}(\theta_{\lfloor tn\rfloor}) \right] - R(t) \right| \leq \sup_{t\in[0,1)} \E \left[  \left| \mathcal{R}(\theta_{\lfloor tn\rfloor}) - R(t) \right|  \right] \leq  C_1 \frac{\log^2 n}{\sqrt n} + \E \left[ \Delta \mathbf{1}_{\mathcal E} \right].
\end{equation}
Notice that
\begin{equation}
    \Delta \leq \sup_{t\in[0,1)} \left| \mathcal{R}(\theta_{\lfloor tn\rfloor}) \right| + C_2 \leq C_3 \left( 1 + \sup_{t\in[0,1)} \norm{u_{\lfloor nt \rfloor}}_2^2 \right),
\end{equation}
since $R(t)$ exists in $t \in [0, 1]$ and does not depend on $d$ and $n$, and since $\opnorm{\Sigma} = O(1)$. Thus, Cauchy-Schwartz inequality yields
\begin{equation}
    \E \left[ \Delta \mathbf{1}_{\mathcal E} \right]^2 \leq \E \left[ \Delta^2 \right] \P(\mathcal E) \leq 2 C_4 e^{- c_1 \log^2 d} \left( 1 + \E \left[ \sup_{t\in[0,1)} \norm{u_{\lfloor nt \rfloor}}_2^4 \right] \right).
\end{equation}
By Lemma \ref{lem:tau_removal}, we have that for any $r \ge 0$, $Z = \sup_{t\in[0,1]} \norm{u_{\lfloor nt \rfloor}}_2^2 \le \|u_0\|^2 e^{C_4(1 + d^{-1/2}r)}$ with probability at least $1 - 2 e^{-r^2/2}$. This bound implies that $2 \log Z$ has a dimension free sub-Gaussian tail, which in turn implies that 
$\E[e^{2 \log Z}] = \E \left[ \sup_{t\in[0,1)} \norm{u_{\lfloor nt \rfloor}}_2^4 \right] = O(1)$, giving
\begin{equation}
    \E \left[ \Delta \mathbf{1}_{\mathcal E} \right]^2 \leq  C_5 e^{- c_1 \log^2 d} = O\left(\frac{1}{n}\right).
\end{equation}
Merging in \eqref{eq:expectationsplit} gives the desired result. \qed

\subsection{Proof of Proposition \ref{prop:laststeprisk}}

Due to the update rule in Algorithm \ref{alg:dp-sgd}, we have
\begin{equation}\label{eq:removeeta}
\begin{aligned}
    2 \mathcal R(\theta_n) &= \norm{\Sigma^{1/ 2} \left(\theta_n - \theta^* \right)}_2^2 \\
    &= \norm{\Sigma^{1/ 2} \left(\theta_{n-1} - \bar \eta_n \bar g_{n} + 2 \clip \sigma_n b_n - \theta^* \right)}_2^2 \\
    &= \norm{\Sigma^{1/ 2} \left(\theta_{n-1} - \theta^*  + 2 \clip \sigma_n b_n\right)}_2^2
    + \norm{ \Sigma^{1/ 2} \bar \eta_n \bar g_{n}}_2^2 \\
    &\qquad - 2 \bar \eta_n \bar g_{n}^\top \Sigma \left(\theta_{n-1}  - \theta^* \right) - 4 \bar \eta_n \bar g_{n}^\top \Sigma \clip \sigma_n b_n.
\end{aligned}
\end{equation}
By Assumptions \ref{ass:data} and \ref{ass:learning_rate_schedules}, and due to the definition of $\bar g_n$, we have
\begin{equation}
    \norm{ \Sigma^{1/ 2} \bar \eta_n \bar g_{n}}_2 \leq \opnorm{\Sigma}^{1/ 2} \left |\bar \eta_n \right| \norm{\bar g_{n}}_2 = O \left( \frac{\sqrt d}{n} \right) = O \left( \frac{1}{\sqrt d} \right).
\end{equation}
Theorem 3.1.1 in \cite{vershynin2018high} also guarantees that $\norm{b_n}_2 = O(\sqrt d + u)$ with probability at least $1 - 2 \exp \left( - c_1 u^2 \right)$, for some absolute constant $c_1>0$. Thus, with this probability, we have
\begin{equation}
    \left| 4 \bar \eta_n \bar g_{n}^\top \Sigma \clip \sigma_n b_n \right| = \left| 4 \bar \eta_n \bar g_{n}^\top \Sigma \clip \frac{\eta_n}{\rho} b_n \right| = O \left( \frac{1}{n} \, \sqrt d \, \sqrt d \, \frac{1}{n} (\sqrt d + u) \right) = O \left( \frac{1}{\sqrt d} + \frac{u}{d} \right),
\end{equation}
which gives
\begin{equation}
    \left| 2 \mathcal R(\theta_n) -  \norm{\Sigma^{1/ 2} \left(\theta_{n-1}  - \theta^*  + 2 \clip \sigma_n \Sigma^{1 / 2} b_n\right)}_2^2 \right| = O \left( \frac{1 + \sqrt{\mathcal R(\theta_{n- 1})} }{\sqrt d} + \frac{u}{d} \right).
\end{equation}

Similarly, we have 
\begin{equation}
\begin{aligned}
    \norm{\Sigma^{1/ 2} \left(\theta_{n-1} - \theta^*  + 2 \clip \sigma_n b_n\right)}_2^2 &= \norm{\Sigma^{1/ 2} \left(\theta_{n-1} - \theta^* \right)}_2^2 + \norm{2 \clip \sigma_n \Sigma^{1/ 2} b_n}_2^2 \\
    \qquad + 4 \clip \sigma_n b_n^\top \Sigma \left(\theta_{n-1}  - \theta^* \right).
\end{aligned}
\end{equation}
However, since $b_n$ is a standard Gaussian vector, we have that, with probability at least $1 - 2 \exp \left( -c_2 u^2\right)$,
\begin{equation}
\begin{aligned}
    \left| 4 \clip \sigma_n b_n^\top \Sigma \left(\theta_{n-1} - \theta^* \right) \right| &\leq 4 \clip \sigma_n \opnorm{\Sigma^{1/2}} \norm{\Sigma^{1/ 2} \left(\theta_{n-1} - \theta^* \right)}_2 u \\
    &= 4 c \sqrt d \frac{\tilde \eta(1)}{\rho n} \opnorm{\Sigma^{1/2}} \norm{\Sigma^{1/ 2} \left(\theta_{n-1} - \theta^* \right)}_2 u \\
    &= O \left( \frac{\sqrt d u}{n} \right) \norm{\Sigma^{1/ 2} \left(\theta_{n-1} - \theta^* \right)}_2 \\
    &= O \left(  \frac{\sqrt{\mathcal R(\theta_{n- 1})} u  }{\sqrt d} \right).
\end{aligned}
\end{equation}
Then, an application of the Hanson-Wright inequality (see Theorem 6.2.1 in \cite{vershynin2018high}) yields
\begin{equation}
    \left| \norm{2 \clip \sigma_n \Sigma^{1/2} b_n}_2^2 - 4 c^2 d \frac{\tilde \eta^2(1)}{\rho^2 n^2} \tr(\Sigma) \right| \leq 4 c^2 d \frac{\tilde \eta^2(1)}{\rho^2 n^2} \|\Sigma\| 
    u = O \left( \frac{u}{\sqrt d}\right),
\end{equation}
with probability at least $1 - 2 \exp \left( -c_3 \min( \sqrt d u, u^2 ) \right)$. 

Putting everything together gives
\begin{equation}\label{eq:udistrlaststep}
    \left| \mathcal R(\theta_n) - \mathcal R(\theta_{n- 1}) - 2 c^2 \tilde \eta^2(1) \gamma_n^2 / \rho^2 \right| = O \left(\frac{\left( 1 + \sqrt{\mathcal R(\theta_{n - 1})} \right) (u + 1)}{\sqrt d}  \right),
\end{equation}
with probability at least $1 - 2 \exp \left( -c_4 \min( \sqrt d u, u^2 ) \right)$.

Setting $u = \log d$, we have that
\begin{equation}
    \left| \mathcal R(\theta_n) - \mathcal R(\theta_{n- 1}) - 2 c^2 \tilde \eta^2(1) \gamma_n^2 / \rho^2 \right| = O \left(\frac{\log d}{\sqrt d} \right),
\end{equation}
with overwhelming probability, where we used $\tr(\Sigma) = d$ and Theorem \ref{thm:deteq}, which implies $\mathcal R(\theta_{n- 1}) = O(1)$ with overwhelming probability. Then, the first part of Theorem \ref{thm:deteq} and the triangle inequality concludes the first part of the proof.

For the result in expectation, notice that we have
\begin{equation}\label{eq:forlaststepexp}
\begin{aligned}
    \left| \E \left[  \mathcal R(\theta_n) \right] - R(1) - 2 c^2 \tilde \eta^2(1) \gamma_n^2 / \rho^2 \right| \leq \, & \E \left[  \left|  \mathcal R(\theta_n) - \mathcal R(\theta_{n-1}) - 2 c^2 \tilde \eta^2(1) \gamma_n^2 / \rho^2 \right| \right] \\
    & \qquad + \left| \E \left [\mathcal R(\theta_{n - 1}) \right] - R(1) \right|.
\end{aligned} 
\end{equation}
The second term on the RHS is $O(\log^2 n / \sqrt n)$ because of the second part of the statement of Theorem \ref{thm:deteq}, and due to continuity of $R(t)$ in $t = 1$. The argument of the first term of the RHS is the product of two random variables. The first one is such that
\begin{equation}\label{eq:forlaststepexp1}
    \E \left[ \left ( 1 + \sqrt{\mathcal R(\theta_{n-1})} \right)^2 \right] \leq 2 + 2 \E \left[ \mathcal R(\theta_{-1}) \right] = O(1),
\end{equation}
due to the second part of the statement of Theorem \ref{thm:deteq}. The second one, due to \eqref{eq:udistrlaststep}, is a random variable $Z$ such that $Z \leq (1 + u) / \sqrt d$ with probability at least $1 - 2 \exp \left( -c_4 \min( \sqrt d u, u^2 ) \right)$. Thus, $\sqrt d Z$ is sub-exponential with norm smaller than a constant, which implies
\begin{equation}\label{eq:forlaststepexp2}
    \E \left[ Z^2 \right] = O \left( \frac{1}{d} \right).
\end{equation}
Plugging \eqref{eq:forlaststepexp1} and \eqref{eq:forlaststepexp2} in \eqref{eq:forlaststepexp}, and using Cauchy-Schwartz inequality, gives the desired result. \qed

\section{Proofs for Section \ref{sec:optimalrates}}\label{app:ODEs}

In this section, all asymptotic notations are with respect to the limit $\gamma \to 0$.

\begin{proposition}\label{prop:conditionsandwhich}
Define $\overline R(t), \underline R(t): [0, 1] \to \R$ as the unique solutions of the following ODEs
\begin{equation}\label{eq:upperlowerODEsbody}
\begin{aligned}
    \diff \overline {R}(t) &= - 2 \lambda_{\min} \tilde{\eta}(t) \mu_c(\overline R) \overline{R} \diff t +  \lambda_{\max} \tilde{\eta}^2(t) \nu_c(\overline R) (\overline {R} + \zeta^2/2) \gamma \diff t + 2 c^2 \tilde \sigma^2(t) \gamma^2 \diff t, \\
    \diff \underline {R}(t) &= - 2 \lambda_{\max} \tilde{\eta}(t) \mu_c(\underline R) \underline{R} \diff t + \tilde{\eta}^2(t) \nu_c(\underline R) (\underline {R} + \zeta^2/2) \gamma \diff t + 2 c^2 \tilde \sigma^2(t) \gamma^2 \diff t,
\end{aligned}
\end{equation}
where $\overline R(0) = \underline R(0) = R(0) = \|\Sigma^{1 /2} \theta^*\|_2^2 /2$. Then, for every $t \in [0,1]$, we have
\begin{equation}
    \underline R(t) \leq R(t) \leq \overline R(t),
\end{equation}
where $R(t)$ is defined in Definition \ref{def:deteq} after taking $\gamma_n \to \gamma$.
\end{proposition}
\begin{proof}
Until the end of the proof, we will define more auxiliary ODEs, such that the RHS is uniformly Lipschitz in the dependent variable at all times $t \in [0 , 1]$. Then, by the extension of the Picard–Lindel\"{o}f theorem (see Corollary 2.6 in \cite{teschl2012}),
we have that their solutions exist and are unique, and therefore also have continuous derivatives.
Then, defining
\begin{equation}\label{eq:ODEbarsi}
\begin{aligned}
    \diff \overline D_i &= - 2 \lambda_{\min} \tilde{\eta}(t) \mu_c(R(t)) \overline D_i \diff t + \lambda_{i}\tilde{\eta}^2(t) \nu_c(R(t)) (R(t) + \zeta^2/2) \gamma \diff t + 2 c^2 \tilde \sigma^2(t) \gamma^2 \diff t, \\
    \diff \underline D_i &= - 2 \lambda_{\max} \tilde{\eta}(t) \mu_c(R(t)) \underline D_i \diff t + \lambda_{i} \tilde{\eta}^2(t) \nu_c(R(t)) (R(t) + \zeta^2/2) \gamma \diff t + 2 c^2 \tilde \sigma^2(t) \gamma^2 \diff t,
\end{aligned}
\end{equation}
by standard ODE comparison arguments (see Theorem 1.3 in \cite{teschl2012})
we have that 
$\overline D_i(t) \geq D_i(t) \geq \underline D_i(t)$ for all $t \in [0, 1]$.
Then, averaging over $i$ (weighting by $\lambda_i$) the equations in \eqref{eq:ODEbarsi} and \eqref{eq:ODEi}, and defining $\overline R'(t) = \frac{1}{d} \sum_{i=1}^d \lambda_i \overline D_i(t)$ and $\underline R'(t) = \frac{1}{d} \sum_{i=1}^d \lambda_i \underline D_i(t)$, we get $\overline R'(t) \geq R(t) \geq \underline R'(t)$ for all $t \in [0, 1]$, with
\begin{equation}\label{eq:ODEprimed}
\begin{aligned}
    \diff \overline R' &= - 2 \lambda_{\min} \tilde{\eta}(t) \mu_c(R(t)) \overline R' \diff t + \frac{\tr(\Sigma^2)}{d} \tilde{\eta}^2(t) \nu_c(R(t)) (R(t) + \zeta^2/2) \gamma \diff t + 2 c^2 \tilde \sigma^2(t) \gamma^2 \diff t, \\
    \diff \underline R' &= - 2 \lambda_{\max} \tilde{\eta}(t) \mu_c(R(t)) \underline R' \diff t + \frac{\tr(\Sigma^2)}{d} \, \tilde{\eta}^2(t) \nu_c(R(t)) (R(t) + \zeta^2/2) \gamma \diff t + 2 c^2 \tilde \sigma^2(t) \gamma^2 \diff t.
\end{aligned}
\end{equation}
Furthermore, for a fixed value of $c$, we have that the functions $\mu_c(R)$ and $\nu_c(R) (R + \zeta^2/2)$ are respectively monotonically decreasing and increasing with respect to $R$ (see Lemma \ref{lemma:munu}). Then, since by Jensen inequality we have that $\tr(\Sigma^2) \geq \tr(\Sigma)=d$ (where the last step holds due to Assumption \ref{ass:data}), and since we also have $\tr(\Sigma^2) \geq \lambda_{\max} \tr(\Sigma) = \lambda_{\max} d$,
by defining
\begin{equation}\label{eq:ODEnoprimed}
\begin{aligned}
    \diff \overline R &= - 2 \lambda_{\min} \tilde{\eta}(t) \mu_c(\overline R) \overline R \diff t + \lambda_{\max} \tilde{\eta}^2(t) \nu_c(\overline R) (\overline R + \zeta^2/2) \gamma \diff t + 2 c^2 \tilde \sigma^2(t) \gamma^2 \diff t, \\
    \diff \underline R &= - 2 \lambda_{\max} \tilde{\eta}(t) \mu_c(\underline R) \underline R \diff t + \tilde{\eta}^2(t) \nu_c(\underline R)(\underline R + \zeta^2/2) \gamma \diff t + 2 c^2 \tilde \sigma^2(t) \gamma^2 \diff t,
\end{aligned}
\end{equation}
again due to Theorem 1.3 in \cite{teschl2012} we get that $\overline R(t) \geq \overline R'(t) \geq R(t) \geq \underline R'(t) \geq \underline R(t)$ for all $t \in [0, 1]$, which gives the desired result.    
\end{proof}

\subsection{Proofs on output perturbation and constant private noise}\label{app:proofalpha012}

All the ODEs defined in this (and the following) section will be such that their RHS is uniformly Lipschitz in the dependent variable at all times $t \in [0 , 1]$, which in turn guarantees they have a unique solution. Furthermore, the RHSs will also be uniformly Lipschitz with respect to the variable $t$ due to Assumption \ref{ass:learning_rate_schedules}. Thus, if $R(0) = R'(0)$, and
\begin{equation}
    \diff R = f(t , R), \qquad \diff R' = f'(t, R'),
\end{equation}
with
\begin{equation}
    f'(t, R'(t)) \geq f(t, R'(t)),
\end{equation}
for all $t \in [0, 1]$, we have that
\begin{equation}\label{eq:chaplygin}
    R'(t) \geq R(t) \, \text{ for all } \, t \in [0, 1].
\end{equation}
The same statement holds also for the opposite inequality, and will be used extensively to bound the solutions of $R(t)$ for different schedules. This is a direct application of Theorem 1.3 in \cite{teschl2012}.

To ease the presentation, we introduce the notation $v = \tilde \eta(0)$. We will provide the proof separately for $\alpha = 0$ and $\alpha = 1/2$, and the proof for $\alpha \geq 1$ in the next section. We will also denote the test risk at initialization $R(0) = \norm{\Sigma^{1/2} \theta^*}_2^2 / 2$, and all asymptotic notations will be with respect to the limit $\gamma \to 0$.

The proof of Proposition \ref{prop:alpha012} is given separately for the case $\alpha = 0$ and $\alpha = 1/2$, in the general case where $\lambda_{\max}$ and $\lambda_{\min}$ are allowed to depend on $\gamma$. The results are stated separately in Propositions \ref{prop:onlyalpha0} and \ref{prop:onlyalpha12}, and Proposition \ref{prop:alpha012} is a direct consequence of them. At the end of this section we will discuss the sub-optimality of $c = \omega(1)$, to prove \eqref{eq:suboptimalitycbody}.

\begin{lemma}\label{lemma:integral}
Let $\beta < 1$ and $E_\beta(x) : \R^+ \to \R^+$ be the exponential integral function defined as
\begin{equation}\label{eq:expintegral}
    E_\beta(x) := \int_1^{ + \infty} \frac{e^{-xt}}{t^\beta} \, \diff t.
\end{equation}
Then, we have that
\begin{equation}\label{eq:s1}
    \lim_{x \to 0^+} x^{1 - \beta} E_\beta(x) = \Gamma(1 - \beta),
\end{equation}
where $\Gamma(\cdot)$ denotes the Euler Gamma function
\begin{equation}\label{eq:Gamma}
    \Gamma(s) := \int_0^{\infty} e^{-t} t^{s-1} \, \diff t.
\end{equation}
Furthermore, we have that, for all $x\ge 0$,
\begin{equation}\label{eq:ubEI}
    E_{\beta}(x)\le \Gamma(1-\beta)x^{-1+\beta}.
\end{equation}
Finally, if either $\beta \geq 0$ or $x \geq - 2 \beta$, we also have that 
\begin{equation}\label{eq:s2}
E_\beta(x) \leq \frac{2 e^{-x}}{x}.   
\end{equation}
\end{lemma}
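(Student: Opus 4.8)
The plan is to reduce the three claims \eqref{eq:s1}, \eqref{eq:ubEI}, \eqref{eq:s2} to elementary facts about the incomplete Gamma integral, via the change of variables $s = xt$. Assume first $x > 0$; the case $x = 0$ is trivial for \eqref{eq:ubEI} and \eqref{eq:s2}, since there $E_\beta(0) = \int_1^{\infty} t^{-\beta}\,\diff t = +\infty$ and, because $\beta < 1$, the corresponding right-hand sides $\Gamma(1-\beta)\,0^{\beta-1}$ and $2e^{0}/0$ are also $+\infty$. For $x > 0$, substituting $s = xt$ in \eqref{eq:expintegral} gives
\begin{equation}\label{eq:planEsub}
    E_\beta(x) = x^{\beta-1} \int_x^{\infty} e^{-s} s^{-\beta}\,\diff s,
\end{equation}
and since $\beta < 1$ the function $s \mapsto e^{-s} s^{-\beta}$ is integrable on $(0, \infty)$ with $\int_0^{\infty} e^{-s} s^{-\beta}\,\diff s = \Gamma(1-\beta)$ by \eqref{eq:Gamma}.

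Claims \eqref{eq:s1} and \eqref{eq:ubEI} then follow immediately from \eqref{eq:planEsub}. Writing $G(x) := x^{1-\beta} E_\beta(x) = \int_x^{\infty} e^{-s} s^{-\beta}\,\diff s$, the integrand is nonnegative, so $G(x) = \Gamma(1-\beta) - \int_0^{x} e^{-s} s^{-\beta}\,\diff s \le \Gamma(1-\beta)$ for every $x \ge 0$, which is exactly \eqref{eq:ubEI}. Letting $x \to 0^+$ and using $\int_0^{x} e^{-s} s^{-\beta}\,\diff s \to 0$ (integrability near $0$ being precisely where $\beta < 1$ is used) gives $G(x) \to \Gamma(1-\beta)$, i.e.\ \eqref{eq:s1}.

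For \eqref{eq:s2} I would integrate by parts in $I(x) := \int_x^{\infty} e^{-s} s^{-\beta}\,\diff s$ (finite for $x > 0$), taking $u = s^{-\beta}$ and $\diff v = e^{-s}\,\diff s$, which yields
\begin{equation}\label{eq:planIBP}
    I(x) = x^{-\beta} e^{-x} - \beta \int_x^{\infty} e^{-s} s^{-\beta - 1}\,\diff s.
\end{equation}
If $\beta \ge 0$, the last term in \eqref{eq:planIBP} is nonpositive, so $I(x) \le x^{-\beta} e^{-x}$. If $\beta < 0$ and $x \ge -2\beta$, then on $[x, \infty)$ we have $s^{-\beta - 1} = s^{-\beta} s^{-1} \le x^{-1} s^{-\beta}$, hence $-\beta \int_x^{\infty} e^{-s} s^{-\beta-1}\,\diff s \le \tfrac{-\beta}{x} I(x) \le \tfrac{1}{2} I(x)$, and rearranging \eqref{eq:planIBP} gives $I(x) \le 2 x^{-\beta} e^{-x}$. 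In either case $E_\beta(x) = x^{\beta - 1} I(x) \le 2 x^{\beta-1} x^{-\beta} e^{-x} = 2 e^{-x}/x$, which is \eqref{eq:s2}.

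The argument is entirely elementary, so there is no substantial obstacle; the only two points requiring a little care are the convergence of $\int_0^{\infty} e^{-s} s^{-\beta}\,\diff s$ at $s = 0$ (where the hypothesis $\beta < 1$ enters and the Gamma function is evaluated), and, for \eqref{eq:s2}, first verifying $I(x) < \infty$ so that the inequality $\tfrac{-\beta}{x} I(x) \le \tfrac{1}{2} I(x)$ may legitimately be combined with \eqref{eq:planIBP} and the term $\tfrac{1}{2} I(x)$ subtracted from both sides.
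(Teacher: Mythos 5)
Your proof is correct and follows essentially the same route as the paper: the substitution $s=xt$ reducing $E_\beta$ to the upper incomplete Gamma integral, nonnegativity of the integrand for \eqref{eq:ubEI} and \eqref{eq:s1}, and the same integration by parts with the bound $s^{-\beta-1}\le x^{-1}s^{-\beta}$ on $[x,\infty)$ (giving the factor $2$ when $x\ge-2\beta$) for \eqref{eq:s2}. The only cosmetic difference is that you rearrange $I(x)\le x^{-\beta}e^{-x}+\tfrac12 I(x)$ where the paper writes the equivalent bound $\Gamma(1-\beta,x)\le e^{-x}x^{-\beta}/(1+\beta/x)$.
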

\begin{proof}
The change of variable $u = xt$ in the definition of $E_\beta(x)$ yields
\begin{equation}\label{eq:exptogamma}
\begin{aligned}
    E_\beta(x) &= x^{\beta - 1} \int_x^{+ \infty} \frac{e^{-u}}{u^\beta} \, \diff u \\
    &= x^{\beta - 1} \left( \int_0^{ + \infty} e^{-u} u^{-\beta} \, \diff u - \int_0^{x} e^{-u} u^{-\beta} \, \diff u \right) \\
    &= x^{\beta - 1} \left( \Gamma(1 - \beta) - \int_0^x e^{-u} u^{-\beta} \, \diff u \right).
\end{aligned}
\end{equation}
Then, \eqref{eq:s1} and \eqref{eq:ubEI} readily follow from the fact that the last term in the equation above is bounded by
\begin{equation}
 0\le    \int_0^x e^{-u} u^{-\beta} \, \diff u \leq \int_0^x u^{-\beta} \, \diff u = \frac{x^{1 - \beta}}{1 - \beta}.
\end{equation}

For the upper bound, denoting with
\begin{equation}
    \Gamma(s, x) := \int_x^{\infty} e^{-t} t^{s-1} \, \diff t
\end{equation}
the upper incomplete Euler gamma function, \eqref{eq:exptogamma} allows us to write
\begin{equation}\label{eq:expincomplgamma}
    E_\beta(x) = \frac{1}{x^{1 - \beta}} \Gamma(1 - \beta, x).
\end{equation}
Via an integration by parts, we have
\begin{equation}\label{eq:gammaparts}
    \Gamma(1 - \beta, x) = \int_x^{\infty} e^{-t} t^{-\beta} \, \diff t = e^{-x} x^{-\beta} - \beta \int_x^{\infty} e^{-t} t^{-\beta - 1} \, \diff t.
\end{equation}
If $\beta \geq 0$, we have $\Gamma(1 - \beta, x) \leq e^{-x} x^{-\beta}$, which together with \eqref{eq:expincomplgamma} gives \eqref{eq:s2}. If $\beta < 0$, the second term in the equation above is positive, and since $t \geq x$, it can be upper bounded as
\begin{equation}
    - \beta \int_x^{\infty} e^{-t} t^{-\beta - 1} \, \diff t \leq - \frac{\beta}{x} \int_x^{\infty} e^{-t} t^{-\beta} \, \diff t = - \frac{\beta}{x} \, \Gamma(1 - \beta, x),
\end{equation}
which, if plugged in \eqref{eq:gammaparts}, for $x \geq - 2 \beta$ gives
\begin{equation}
    \Gamma(1 - \beta, x) \leq \frac{e^{-x} x^{-\beta}}{1 + \beta / x} \leq 2 e^{-x} x^{-\beta},
\end{equation}
and the thesis again follows from \eqref{eq:expincomplgamma}.
\end{proof}


\paragraph{$\alpha = 0$: output perturbation.} Recall that in the setting $\alpha = 0$, we have
\begin{equation}\label{eq:upperlowerODEsalpha0}
\begin{aligned}
    \diff \overline {R}(t) &= - 2 \lambda_{\min} v c \frac{\mu_c(\overline R)}{c} \overline{R} \diff t +  \lambda_{\max} (v c)^2 \frac{\nu_c(\overline R) (\overline {R} + \zeta^2/2)}{c^2} \gamma \diff t, \\
    \diff \underline {R}(t) &= - 2 \lambda_{\max} vc \frac{\mu_c(\underline R)}{c} \underline{R} \diff t + (vc)^2 \frac{\nu_c(\underline R) (\underline {R} + \zeta^2/2)}{c^2} \gamma \diff t.
\end{aligned}
\end{equation}
Importantly, recall that the risk $\mathcal R(\theta^p)$ in this setting is not well approximated by $R(1)$, due to Proposition \ref{prop:laststeprisk}.

\begin{proposition}\label{prop:onlyalpha0}
    Let Assumptions \ref{ass:data} and \ref{ass:learning_rate_schedules} hold, and let $\theta^p$ be the solution obtained with Algorithm \ref{alg:dp-sgd}, with the schedule defined in \eqref{eq:polynomialsched} for $\alpha = 0$, in the setting $\gamma = d / n \to 0$. Furthermore, assume
    \begin{equation}\label{eq:conditionsfirstpropalpha0}
        \frac{\lambda_{\max}}{\lambda_{\min}^2} \ln (1 / \gamma) \gamma = o(1).
    \end{equation}
    Then, by setting 
    \begin{equation}
        c = O(1), \qquad vc = \frac{C \ln(1/\gamma)}{\lambda_{\min}}, \qquad v \le 2 / \gamma, 
    \end{equation}
    for a large enough constant $C$ which does not depend on $\gamma, \rho, \Sigma$, we have that, with overwhelming probability,
    \begin{equation}
        \mathcal R(\theta^p) = O \left( \frac{\lambda_{\max} \gamma \ln(1 / \gamma)}{\lambda_{\min}^2} + \frac{\gamma^2 \ln^2(1 / \gamma)}{\rho^2 \lambda_{\min}^2} \right). 
    \end{equation}
    Furthermore, suppose there exists $h >0$ such that $\rho = \Omega \left( \gamma^{1 - h} \right)$. Then, for any choice of the hyper-parameters $c$ and $v$ such that $v \le 2 / \gamma$, we have that
    \begin{equation}\label{eq:upperboundalg1}
         \mathcal R(\theta^p) = \Omega \left( \frac{\gamma \ln(1 / \gamma)}{\lambda_{\max}^2} + \frac{\gamma^2 \ln^2(1 / \gamma)}{\rho^2 \lambda_{\max}^2} \right).
    \end{equation}
\end{proposition}

\begin{proof}
Let us introduce the shorthands
\begin{equation}
\begin{aligned}
    \overline f(t, \overline R) &= - 2 \lambda_{\min} v c \frac{\mu_c(\overline R)}{c} \overline{R} \diff t +  \lambda_{\max} (v c)^2 \frac{\nu_c(\overline R) (\overline {R} + \zeta^2/2)}{c^2} \gamma \diff t,  \\
    \underline f(t, \underline R) &= - 2 \lambda_{\max} vc \frac{\mu_c(\underline R)}{c} \underline{R} \diff t + (vc)^2 \frac{\nu_c(\underline R) (\underline {R} + \zeta^2/2)}{c^2} \gamma \diff t,
\end{aligned}
\end{equation}
corresponding to the RHSs of the ODEs of interest. Then, consider the auxiliary ODEs
\begin{equation}\label{eq:auxiliaries}
    \diff \overline R'  = - \overline a \overline R' \diff t + \overline b \diff t =: \overline f'(t, \overline R') \diff t, \qquad \diff \underline R'  = - \underline a \,\underline R' \diff t + \underline b \diff t=: \underline f'(t, \underline R')\diff t,
    \qquad \overline a, \overline b, \underline a, \underline b > 0,
\end{equation}
with initial conditions $\overline R'(0) = \overline R(0) = R(0) = \underline R(0) = \underline R'(0)$.

Notice that $\overline R'(t)$ admits the closed form solution
\begin{equation}\label{eq:solaux}
    \overline R'(t) = \left( R(0) - \overline b / \overline a \right) e^{-\overline at} + \overline b / \overline a. 
\end{equation}
Similarly, we have that
\begin{equation}\label{eq:solauxlb}
    \underline R'(t) = \left( R(0) - \underline b / \underline a \right) e^{-\underline at} + \underline b / \underline a. 
\end{equation}

Since $c = O(1)$, by Lemma \ref{lemma:munubounds}, we have that 
\begin{equation}\label{eq:boundsmunu}
    \frac{\underline c_\mu(c, \zeta)}{\sqrt { 2 R + \zeta^2}} < \frac{\mu_c(R)}{c} < \frac{1}{\sqrt {\pi \left( R + \zeta^2 /2 \right)}}, \qquad \underline c_\nu(c, \zeta) < \frac{2 \nu_c(R) \left( R + \zeta^2 / 2 \right)}{c^2} < 1.
\end{equation}
Then, let us set
\begin{equation}\label{eq:ab1}
    \overline a = 2 \lambda_{\min} vc \frac{\underline c_\mu(c, \zeta)}{\sqrt{R(0) + 1 + \zeta^2 }}, \quad \overline b = \lambda_{\max} \frac{v^2 c^2 \gamma}{2}, \quad \underline a= 2 vc \frac{\lambda_{\max}}{ \sqrt{\pi \zeta^2 / 2}}, \quad \underline b=\frac{v^2 c^2 \gamma\underline c_\nu(c, \zeta) }{2}.
\end{equation}
As $cv = C \ln(1/ \gamma) / \lambda_{\min}$, the choice in \eqref{eq:ab1} ensures that $\underline b / \underline a \leq 1$ and $\overline b / \overline a \leq 1$ as long as $\lambda_{\max} / \lambda_{\min}^2 \ln (1 / \gamma) \gamma = o(1)$, which in turn guarantees that 
\begin{equation}
    \overline R'(t) \in [0, R(0) + 1], \qquad \underline R'(t) \in [0, R(0) + 1].
\end{equation}
Thus, \eqref{eq:ab1} guarantees
\begin{equation}\label{ineq:fg}
    \overline f(t, \overline R'(t)) < \overline f'(t, \overline R'(t)), \qquad \underline f(t, \underline R'(t)) > \underline f'(t, \underline R'(t)), \qquad \text{ for all } \, t \in [0, 1].
\end{equation}
Thus, by \eqref{eq:chaplygin}, we have
\begin{equation}\label{eq:chap1}
  \underline R'(t) < \underline R(t), \qquad  \overline R'(t) > \overline R(t) \qquad \text{ for all } \, t \in (0,1].
\end{equation}

In particular, plugging $vc = C \ln(1 / \gamma) / \lambda_{\min}$ and \eqref{eq:ab1} in \eqref{eq:solaux}, we have that 
\begin{equation}
    \overline R(1) < \overline R'(1)<e^{-\overline a} + \frac{\overline b}{\overline a} = O \left(\frac{\lambda_{\max}}{\lambda_{\min}^2} \ln (1 / \gamma) \gamma\right),
\end{equation}
as long as $C$ is chosen to be large enough. Then, the upper bounds follows from Theorem \ref{thm:deteq} and Propositions \ref{prop:laststeprisk} and \ref{prop:conditionsandwhich},
as the risk at the last iterate roughly increases by $2 c^2 v^2 \gamma^2 / \rho^2 = O (\gamma^2 \ln^2(1/\gamma) / (\rho^2 \lambda_{\min}^2))$.

For the lower bound, let us first suppose $c \leq 1$, which implies that \eqref{eq:boundsmunu} holds with $\underline c_\mu(c, \zeta)$ and $\underline c_\nu(c, \zeta)$ lower bounded by constants independent of $\gamma$. Pick $\underline a, \underline b$ as in \eqref{eq:ab1}. 

First, let us consider the case $\underline a \leq 1$. We have that \eqref{eq:chap1} gives
\begin{equation}\label{eq:lbR1asmall}
    \underline R(1) > \underline R'(1)= R(0) e^{-\underline a}+\frac{\underline b}{\underline a}-\frac{\underline b}{\underline a}e^{-\underline a}> R(0) e^{-\underline a} \geq R(0) / e = \Omega(1).
\end{equation}
In the case $\underline a > 1$, \eqref{eq:chap1} gives
\begin{equation}\label{eq:lbR1}
    \underline R(1) > \underline R'(1)= R(0) e^{-\underline a}+\frac{\underline b}{\underline a}-\frac{\underline b}{\underline a}e^{-\underline a}> R(0) e^{-\underline a}+\frac{\underline b}{\underline a}(1 - e^{-1}) = R(0) \left(e^{-v c \lambda_{\max} a} + \frac{v c b \gamma}{\lambda_{\max} a}\right),
\end{equation}
where $ a, b$ are positive constants which do not depend on $\gamma, v, c$ and the spectrum of $\Sigma$. 
Then, let us consider the following quantity
\begin{equation}\label{eq:lower01}
    \underline{\mathcal R} := R(0) \left(e^{-v c \lambda_{\max} a} + \frac{v c b \gamma}{\lambda_{\max} a}\right) + 2 v^2 c^2 \frac{\gamma^2}{\rho^2}.
\end{equation}
We have that
\begin{equation}
    \arg \min_{vc} e^{-v c \lambda_{\max} a} + \frac{v c b \gamma}{\lambda_{\max} a} = \frac{1}{\lambda_{\max} a} \ln \left( \frac{(\lambda_{\max} a)^2}{b \gamma} \right),
\end{equation}
which implies
\begin{equation}\label{eq:lower02}
    \underline{\mathcal R} > R(0) \left(e^{-v c \lambda_{\max} a} + \frac{v c b \gamma}{\lambda_{\max} a}\right) = \Omega\left( \frac{\gamma \ln ( 1/ \gamma)}{\lambda_{\max}^2}\right).
\end{equation}

Note that, assuming $\rho = \Omega(\gamma^{1 - h})$, we have
\begin{equation}\label{eq:lower03}
\begin{split}    
    \min_{vc \ge \frac{\ln(\rho^2/\gamma^2)}{2a \lambda_{\max}}}& e^{-vc \lambda_{\max} a}+v^2 c^2 \frac{\gamma^2}{\rho^2} \ge \min_{vc \ge \frac{\ln(\rho^2/\gamma^2)}{2a \lambda_{\max}}} v^2 c^2 \frac{\gamma^2}{\rho^2} =  \Omega\left(\frac{\gamma^2 \ln^2 \left( 1 / \gamma \right)}{\rho^2 \lambda^2_{\max}}\right),\\
    \min_{0 \le vc \le \frac{\ln(\rho^2/\gamma^2)}{2a \lambda_{\max}}}& e^{-vc \lambda_{\max} a}+v^2 c^2 \frac{\gamma^2}{\rho^2} \ge\ \frac{\gamma}{\rho} = \Omega\left(\frac{\gamma^2 \ln^2 \left( 1 / \gamma \right)}{\rho^2}\right) = \Omega\left(\frac{\gamma^2 \ln^2 \left( 1 / \gamma \right)}{\rho^2 \lambda^2_{\max}}\right).
\end{split}
\end{equation}

Then, merging \eqref{eq:lower01}, \eqref{eq:lower02} and \eqref{eq:lower03} yields
\begin{equation}\label{eq:lowerbound0part1}
    \underline{\mathcal R} = \Omega \left(\frac{\gamma \ln ( 1/ \gamma)}{\lambda_{\max}^2} + \frac{\gamma^2 \ln^2 \left( 1 / \gamma \right)}{\rho^2 \lambda^2_{\max}}\right).
\end{equation}
The result in \eqref{eq:lbR1} together with Theorem \ref{thm:deteq} and Propositions \ref{prop:laststeprisk} and \ref{prop:conditionsandwhich} imply that, with overwhelming probability,
\begin{equation}
    \mathcal R(\theta^p) = \Omega \left(\frac{\gamma \ln ( 1/ \gamma)}{\lambda_{\max}^2} + \frac{\gamma^2 \ln^2 \left( 1 / \gamma \right)}{\rho^2 \lambda^2_{\max}}\right).
\end{equation}

It remains to prove the lower bound for $c > 1$. From \eqref{eq:mucdef}, one readily has that
    $0<\mu_c(R) < 1$. Note that
\begin{equation}\label{eq:munubigc}
    \nu_c(R) (R + \zeta^2 / 2) \geq \max \left( \frac{\nu_c(R) (R + \zeta^2 / 2)}{c^2}, \nu_c(R) \zeta^2 / 2 \right),
\end{equation}
which combined with \eqref{eq:bdsnu} gives that $\nu_c(R) (R + \zeta^2 / 2) > b_1$, for a positive constant $b_1$ independent of $\gamma, v, c$ and the spectrum of $\Sigma$. Furthermore, \eqref{eq:bdcnu} implies that  $\nu_c(R) (R + \zeta^2 / 2)<c^2/2$. Thus, the solution of $\underline R$ is lower bounded by that of the ODE below:
\begin{equation}\label{eq:forremark1}
    \diff \underline R'' = - 2 \lambda_{\max} v \underline R'' \diff t + v^2 b_1 \gamma \diff t,
\end{equation}
with initial condition $\underline R''(0)=R(0)$.
Thus, following the same steps we used to achieve \eqref{eq:lowerbound0part1}, it can be shown that, as $c \geq 1$, we have, with overwhelming probability,
\begin{equation}
        \mathcal R(\theta^p) = \Omega \left(\frac{\gamma \ln ( 1/ \gamma)}{\lambda_{\max}^2} + \frac{\gamma^2 \ln^2 \left( 1 / \gamma \right)}{\rho^2 \lambda^2_{\max}}\right),
\end{equation}
which gives the desired result.
\end{proof}

\paragraph{$\alpha = 1/2$: constant private noise.}
In the setting $\alpha = 1/2$, \eqref{eq:polynomialsched} gives
\begin{equation}\label{eq:upperlowerODEsalpha12}
\begin{aligned}
    \diff \overline {R}(t) &= - 2 \lambda_{\min} v c \sqrt{1 - t} \frac{\mu_c(\overline R)}{c} \overline{R} \diff t +  \lambda_{\max} (v c)^2 (1 - t) \frac{\nu_c(\overline R) (\overline {R} + \zeta^2/2)}{c^2} \gamma \diff t + 2 (vc)^2 \frac{\gamma^2}{\rho^2} \diff t, \\
    \diff \underline {R}(t) &= - 2 \lambda_{\max} vc \sqrt{1 - t} \frac{\mu_c(\underline R)}{c} \underline{R} \diff t + (vc)^2 (1 - t) \frac{\nu_c(\underline R) (\underline {R} + \zeta^2/2)}{c^2} \gamma \diff t + 2 (vc)^2 \frac{\gamma^2}{\rho^2} \diff t.
\end{aligned}
\end{equation}
Importantly, recall that the risk $\mathcal R(\theta^p)$ in this setting is well approximated by $R(1)$, due to Proposition \ref{prop:laststeprisk}, since $\tilde \eta(1) = 0$.

\begin{proposition}\label{prop:onlyalpha12}
    Let Assumptions \ref{ass:data} and \ref{ass:learning_rate_schedules} hold, and let $\theta^p$ be the solution obtained with Algorithm \ref{alg:dp-sgd}, with the schedule defined in \eqref{eq:polynomialsched} for $\alpha = 1/2$, in the setting $\gamma = d / n \to 0$. Furthermore, assume
    \begin{equation}\label{eq:condapp}
        \frac{\ln^2(1 / \gamma) \gamma}{\lambda_{\min}^2} \left( \lambda_{\max} + \frac{\gamma}{\rho^2}\right) = o(1).
    \end{equation}
    Then, by setting 
    \begin{equation}
        c = O(1), \qquad vc = \frac{C \ln(1/\gamma)}{\lambda_{\min}}, \qquad v \le 2 / \gamma, 
    \end{equation}
    for a large enough constant $C$ which does not depend on $\gamma, \rho, \Sigma$, we have that, with overwhelming probability,
    \begin{equation}
        \mathcal R(\theta^p) = O \left( \frac{\lambda_{\max} \gamma \ln^{2/3}(1 / \gamma)}{\lambda_{\min}^2} + \frac{\gamma^2 \ln^{4 / 3}(1 / \gamma)}{\rho^2 \lambda_{\min}^2} \right). 
    \end{equation}
    Furthermore, suppose there exists $h >0$ such that $\rho = \Omega \left( \gamma^{1 - h} \right)$. Then, for any choice of the hyper-parameters $c$ and $v$ such that $v \le 2 / \gamma$, we have that
    \begin{equation}
         \mathcal R(\theta^p) = \Omega \left( \frac{\gamma \ln^{2 / 3}(1 / \gamma)}{\lambda_{\max}^2} + \frac{\gamma^2 \ln^{4/3}(1 / \gamma)}{\rho^2 \lambda_{\max}^2} \right).
    \end{equation}
\end{proposition}

\begin{proof}
As before, let us introduce the shorthands
\begin{equation}
\begin{aligned}
    \overline f(t, \overline R) &= - 2 \lambda_{\min} v c \sqrt{1 - t} \frac{\mu_c(\overline R)}{c} \overline{R} +  \lambda_{\max} (v c)^2 (1 - t) \frac{\nu_c(\overline R) (\overline {R} + \zeta^2/2)}{c^2} \gamma + 2 (vc)^2 \frac{\gamma^2}{\rho^2}, \\
    \underline f(t, \underline R) &= - 2 \lambda_{\max} vc \sqrt{1 - t} \frac{\mu_c(\underline R)}{c} \underline{R} + (vc)^2 (1 - t) \frac{\nu_c(\underline R) (\underline {R} + \zeta^2/2)}{c^2} \gamma  + 2 (vc)^2 \frac{\gamma^2}{\rho^2}.
\end{aligned}
\end{equation}
Then, consider the auxiliary ODEs
\begin{equation}\label{eq:auxiliaries2}
\begin{split}    
    \diff \overline R'  &=  - \overline a \sqrt{1-t} \overline R' \diff t + \overline b_1 (1-t) \diff t + \overline b_2 \diff t=: \overline f'(t, \overline R)\diff t,\qquad\overline a, \overline b_1,\overline b_2>0, \\
    \diff \underline R'  &=  - \underline a \sqrt{1-t}\underline R' \diff t + \underline b_1 (1-t) \diff t+\underline b_2\diff t=: \underline f'(t, \underline R)\diff t,\qquad\underline a, \underline b_1,\underline b_2>0, 
\end{split}
\end{equation}
with initial conditions $\overline R'(0) = \overline R(0) = R(0) = \underline R(0) = \underline R'(0)$, which admit the closed-form solutions
\begin{equation}\label{eq:closedformODEalpha12}
\begin{split}    
    \overline R' (t)=\frac{R(0)}{3} & e^{-2\overline a(1-(1-t)^{3/2})/3}\biggl(3 \, +\\
    &- \frac{2}{R(0)} \overline b_1e^{2\overline a/3}\left(E_{-1/3}\left(\frac{2\overline a}{3}\right)-(1-t)^2 E_{-1/3}\left(\frac{2\overline a(1-t)^{3/2}}{3}\right)\right)\\
    &- \frac{2}{R(0)} \overline b_2e^{2\overline a/3}\left(E_{1/3}\left(\frac{2\overline a}{3}\right)-(1-t)E_{1/3}\left(\frac{2\overline a(1-t)^{3/2}}{3}\right)\right)\biggr),
    \end{split}
\end{equation}
\begin{equation}
\begin{split}    
    \underline R'(t)=\frac{R(0)}{3}& e^{-2\underline a(1-(1-t)^{3/2})/3}\biggl(3  \, + \\
    & - \frac{2}{R(0)} \underline b_1e^{2\underline a/3}\left(E_{-1/3}\left(\frac{2\underline a}{3}\right)-(1-t)^2 E_{-1/3}\left(\frac{2\underline a(1-t)^{3/2}}{3}\right)\right)\\
    &- \frac{2}{R(0)} \underline b_2e^{2\underline a/3}\left(E_{1/3}\left(\frac{2\underline a}{3}\right)-(1-t)E_{1/3}\left(\frac{2\underline a(1-t)^{3/2}}{3}\right)\right)\biggr),
    \end{split}
\end{equation}
expressed in terms of the exponential integral functions $E_{-1/3}(\cdot)$, $E_{1/3}(\cdot)$ defined in \eqref{eq:expintegral}.

Note that, for $t\in [0, 1]$, $\overline R'(t)\ge 0$. In fact, if this is not the case, by continuity of $\overline R'$, there exists an interval $(t^*, t^*+\delta)\subseteq [0, 1]$ s.t.\  $\overline R'(t^*)=0$ and $\overline R'(t)<0$ for all $t\in (t^*, t^*+\delta)$. However, if $\overline R'(t^*)=0$, then the derivative of $\overline R'$ evaluated at $t^*$ is $\ge b_2>0$ by \eqref{eq:auxiliaries2}, which is a contradiction. A similar argument gives that, for $t\in [0, 1]$, $\underline R'(t)\ge 0$.

Next, we upper bound $\overline R'(t)$ as
\begin{equation}
\begin{split}
    \overline R'(t)\le R(0)&+\frac{2}{3} \overline b_1e^{2\overline a(1-t)^{3/2}/3}(1-t)^2 E_{-1/3}\left(\frac{2\overline a(1-t)^{3/2}}{3}\right)\\
    &+\frac{2}{3} \overline b_2e^{2\overline a(1-t)^{3/2}/3}(1-t) E_{1/3}\left(\frac{2\overline a(1-t)^{3/2}}{3}\right).
\end{split}
\end{equation}
If $2\overline a(1-t)^{3/2}/3\ge 2/3$, an application of \eqref{eq:s2} gives that
\begin{equation}
\begin{split}    
    e^{2\overline a(1-t)^{3/2}/3}(1-t)^2 & E_{-1/3}\left(\frac{2\overline a(1-t)^{3/2}}{3}\right)\le \frac{3}{\overline a}\sqrt{1-t}\le \frac{3}{\overline a},\\
    e^{2\overline a(1-t)^{3/2}/3}(1-t) & E_{1/3}\left(\frac{2\overline a(1-t)^{3/2}}{3}\right)\le \frac{3}{\overline a}(1-t)^{-1/2}\le \frac{3}{\overline a^{2/3}}.
    \end{split}
\end{equation}
Instead, if $2\overline a(1-t)^{3/2}/3< 2/3$, by using \eqref{eq:ubEI} we have
\begin{equation}
\begin{split}    
    e^{2\overline a(1-t)^{3/2}/3}(1-t)^2& E_{-1/3}\left(\frac{2\overline a(1-t)^{3/2}}{3}\right)\le e^{2/3}\Gamma\left(\frac{4}{3}\right)\left(\frac{2\overline a}{3}\right)^{-4/3},\\
    e^{2\overline a(1-t)^{3/2}/3}(1-t)& E_{1/3}\left(\frac{2\overline a(1-t)^{3/2}}{3}\right)\le e^{2/3}\Gamma\left(\frac{2}{3}\right)\left(\frac{2\overline a}{3}\right)^{-2/3}.
\end{split}
\end{equation}
Thus, the following upper bound holds for $t\in [0, 1]$:
\begin{equation}
\begin{split}
    \overline R'(t)\le  R(0) &+ \frac{2\overline b_1}{\overline a}+\frac{2\overline b_2}{\overline a^{2/3}}+e^{2/3}\left(\frac{2}{3}\right)^{-1/3}\Gamma\left(\frac{4}{3}\right)\frac{\overline b_1}{\overline a^{4/3}}+e^{2/3}\left(\frac{2}{3}\right)^{1/3}\Gamma\left(\frac{2}{3}\right)\frac{\overline b_2}{\overline a^{2/3}}.
\end{split}
\end{equation}
Following the same passages, we have that, for $t\in [0, 1]$,
\begin{equation}
\begin{split}
    \underline R'(t)\le R(0) &+ \frac{2\underline b_1}{\underline a}+\frac{2\underline b_2}{\underline a^{2/3}}+e^{2/3}\left(\frac{2}{3}\right)^{-1/3}\Gamma\left(\frac{4}{3}\right)\frac{\underline b_1}{\underline a^{4/3}}+e^{2/3}\left(\frac{2}{3}\right)^{1/3}\Gamma\left(\frac{2}{3}\right)\frac{\underline b_2}{\underline a^{2/3}}.
\end{split}
\end{equation}
Let us now pick
\begin{equation}\label{eq:setabc}
\begin{split}
    \overline a &= 2 vc \lambda_{\min} \frac{\underline c_\mu(c, \zeta)}{\sqrt{R(0) + 1 + \zeta^2 / 2}}, \qquad \overline b_1 = \lambda_{\max}  \frac{v^2 c^2 \gamma}{2}, \qquad \overline b_2 = 4 v^2 c^2\frac{\gamma^2}{\rho^2},\\
    \underline a &= 2 vc \lambda_{\max} \frac{1}{ \sqrt{\pi \zeta^2 / 2}}, \qquad \underline b_1 = \frac{v^2 c^2 \gamma\underline c_\nu(c, \zeta) }{2}, \qquad \underline b_2 = v^2 c^2\frac{\gamma^2}{\rho^2}.    
\end{split}
\end{equation}
Since $c = O(1)$, by Lemma \ref{lemma:munubounds}, we have that \eqref{eq:boundsmunu} holds. 
As $cv = C \ln(1/\gamma) / \lambda_{\min}$, the choice in \eqref{eq:setabc} ensures that $0\le \underline R'(t)\le \overline R'(t)\le 1 + R(0)$ for $t\in [0, 1]$, as long as we have $\overline b_1 + \overline b_2 = o(1)$, which holds due to \eqref{eq:condapp}.

Thus, \eqref{eq:setabc} guarantees
\begin{equation}\label{ineq:fg2}
    \overline f(t, \overline R'(t)) < \overline f'(t, \overline R'(t)), \qquad \underline f(t, \underline R'(t)) > \underline f'(t, \underline R'(t)),\qquad \text{ for all } \, t \in [0, 1].
\end{equation}
Note that $\overline f(t, R)$ and $\underline f(t ,R)$ are continuous in both variables in the intervals $t \in [0, 1]$ and $R \in [0, 1]$. Furthermore, they are also Lipschitz in $R$ in these same intervals. Thus, \eqref{eq:chaplygin} gives
\begin{equation}\label{eq:sand}
  \underline R'(t) < \underline R(t), \qquad \overline R(t) < \overline R'(t) ,\qquad \text{ for all } \, t \in (0,1].
\end{equation}

To prove the upper bound, due to Theorem \ref{thm:deteq} and Propositions \ref{prop:laststeprisk} and \ref{prop:conditionsandwhich}, it suffices to give an upper bound on $\overline R'(1)$. To this aim, note that Lemma \ref{lemma:integral} yields
\begin{equation}
    \begin{split}
\lim_{x\to 0}& \,x^2 \, E_{-1/3}\left(\frac{2ax^{3/2}}{3}\right)=\left(\frac{3}{2}\right)^{4/3}\Gamma\left(\frac{4}{3}\right)a^{-4/3},\\
\lim_{x\to 0}& \, x \, E_{1/3}\left(\frac{2ax^{3/2}}{3}\right)=\left(\frac{3}{2}\right)^{2/3}\Gamma\left(\frac{2}{3}\right)a^{-2/3},
    \end{split}
\end{equation}
where $\Gamma(\cdot)$ denotes the Euler Gamma function (defined in \eqref{eq:Gamma}). Thus,
\begin{equation}\label{eq:solaux2}
\begin{split}
    \overline R'(1)&=\frac{R(0)}{3}e^{-2\overline a/3}\biggl(3-\frac{2}{R(0)}\overline b_1e^{2\overline a/3}\left(E_{-1/3}\left(\frac{2\overline a}{3}\right)-\left(\frac{3}{2}\right)^{4/3}\Gamma\left(\frac{4}{3}\right)\overline a^{-4/3}\right)\\
    & \hspace{6em}- \frac{2}{R(0)} \overline b_2e^{2\overline a/3}\left(E_{1/3}\left(\frac{2\overline a}{3}\right)-\left(\frac{3}{2}\right)^{2/3}\Gamma\left(\frac{2}{3}\right)\overline a^{-2/3}\right)
    \biggr)\\
    &\le R(0) e^{-2\overline a/3}+\left(\frac{3}{2}\right)^{1/3}\Gamma\left(\frac{4}{3}\right)\overline b_1\overline a^{-4/3}+\left(\frac{3}{2}\right)^{-1/3}\Gamma\left(\frac{2}{3}\right)\overline b_2\overline a^{-2/3},
\end{split}
\end{equation}
where in the last line we have used the non-negativity of the exponential integral functions.
For $C$ sufficiently large, due to \eqref{eq:setabc}, the first term in the RHS is $o(\gamma)$ (recall that $vc = C \ln(1/ \gamma) / \lambda_{\min}$). The other two terms are $O( \lambda_{\max} \gamma \ln^{2 / 3} (1 / \gamma) / \lambda_{\min}^2)$ and $O( \gamma^2 \ln^{4 / 3} (1 / \gamma) / (\rho^2 \lambda_{\min}^2)$ respectively. Note that 
$\overline R(1) < \overline R'(1)$ and $\tilde \eta(1) = 0$. Thus, the desired result follows from Theorem \ref{thm:deteq} and Propositions \ref{prop:laststeprisk} and \ref{prop:conditionsandwhich}. 

To prove the lower bound, due to Theorem \ref{thm:deteq} and Propositions \ref{prop:laststeprisk} and \ref{prop:conditionsandwhich},
it suffices to show that the inequality in the thesis holds for $\underline R(1)$. Let us first suppose $c\le 1$, which implies that \eqref{eq:boundsmunu} holds. Pick $\underline a, \underline b_1, \underline b_2$ as in \eqref{eq:setabc}, and consider the ODE $\underline R'(t)$ defined in \eqref{eq:auxiliaries2} with the initial condition $\underline R'(0)=R(0)$, which is a lower bound on $\underline R(t)$ due to \eqref{eq:sand}, i.e.,
\begin{equation}
\begin{split}
    \underline R(1) > \underline R'(1)&=\frac{R(0)}{3}e^{-2\underline a/3}\biggl(3-\frac{2}{R(0)}\underline b_1e^{2\underline a/3}\left(E_{-1/3}\left(\frac{2\underline a}{3}\right)-\left(\frac{3}{2}\right)^{4/3}\Gamma\left(\frac{4}{3}\right)\underline a^{-4/3}\right)\\
    & \hspace{6em}- \frac{2}{R(0)} \underline b_2e^{2\underline a/3}\left(E_{1/3}\left(\frac{2\underline a}{3}\right)-\left(\frac{3}{2}\right)^{2/3}\Gamma\left(\frac{2}{3}\right)\underline a^{-2/3}\right)
    \biggr).
\end{split}
\end{equation}
We consider two additional cases depending on the value of $\underline a$. If $\underline a\ge 2$, then \eqref{eq:s2} gives that
\begin{equation}
    E_{-1/3}\left(\frac{2\underline a}{3}\right)\le \frac{2e^{-2\underline a/3}}{\frac{2\underline a}{3}},\qquad E_{1/3}\left(\frac{2\underline a}{3}\right)\le \frac{2e^{-2\underline a/3}}{\frac{2\underline a}{3}},
\end{equation}
which implies that 
\begin{equation}
\begin{split}    
    \underline R(1)\ge R(0) e^{-2\underline a/3}&+\left(\frac{3}{2}\right)^{1/3}\Gamma\left(\frac{4}{3}\right)\underline b_1\underline a^{-4/3}-2e^{-2\underline a/3}\underline b_1\underline a^{-1}\\
    &+\left(\frac{3}{2}\right)^{-1/3}\Gamma\left(\frac{2}{3}\right)\underline b_2\underline a^{-2/3}-2e^{-2\underline a/3}\underline b_2\underline a^{-1}.
\end{split}
\end{equation}
Note that 
\begin{equation}
\begin{split}    
 &   \left(\frac{3}{2}\right)^{1/3}\Gamma\left(\frac{4}{3}\right)\underline b_1\underline a^{-4/3}-2e^{-2\underline a/3}\underline b_1\underline a^{-1}\ge \underline b_1\underline a^{-4/3}-2e^{-2\underline a/3}\underline b_1\underline a^{-1}\ge \frac{1}{3} \underline b_1\underline a^{-4/3},\\
  &   \left(\frac{3}{2}\right)^{-1/3}\Gamma\left(\frac{2}{3}\right)\underline b_2\underline a^{-2/3}-2e^{-2\underline a/3}\underline b_2\underline a^{-1}\ge \underline b_2\underline a^{-2/3}-2e^{-2\underline a/3}\underline b_2\underline a^{-1}\ge \frac{1}{3} \underline b_2\underline a^{-2/3},\\
\end{split}
\end{equation}
where the inequalities on the right hold for $\underline a\ge 2$. Thus,
\begin{equation}\label{eq:lbR1int}
\begin{aligned}
    \underline R(1) \ge & \left( \frac{R(0)}{2} e^{-avc \lambda_{\max}} + \frac{b_1}{3} \lambda_{\max} ^{- 4 / 3} a^{-4/3}(vc)^{2/3}\gamma\right) \\
    & \qquad + \left( \frac{R(0)}{2} e^{-avc \lambda_{\max} }+ \frac{b_2}{3} \lambda_{\max} ^{- 2 / 3} a^{-2/3}(vc)^{4/3}\frac{\gamma^2}{\rho^2}\right),
\end{aligned}
\end{equation}
where $a, b_1, b_2$ are positive constants which do not depend on $\gamma, v, c, \rho$ or the spectrum of $\Sigma$.

Denoting with $a' = a \lambda_{\max}$, $\tau = vc$, we have that for a fixed $\beta \in \{2/3, 4/3 \}$, and for any $\delta = o(1)$,
\begin{equation}
    \min_{\tau \ge 0} e^{-\tau a'}+\frac{\tau^{2-\beta}\delta}{(a')^\beta}=\Omega \left(\frac{\delta \ln^{2-\beta}(1/\delta)}{(a')^2} \right),
\end{equation}
which follows from the following calculations:
\begin{equation}
\begin{split}    
    \min_{\tau \ge \ln(1/\delta)/(2a')}& e^{-\tau a'}+\frac{\tau^{2-\beta}\delta}{(a')^\beta} \ge \min_{\tau \ge \ln(1/\delta)/(2a')} \frac{\tau^{2-\beta} \delta}{(a')^\beta}=\Omega \left(\frac{\delta\ln^{2-\beta}(1/\delta)}{(a')^2} \right),\\
    \min_{0 \le \tau \le \ln(1/\delta)/(2a')}& e^{-\tau a'}+\frac{\tau^{2-\beta}\delta}{(a')^\beta} \ge \delta^{1/2} = \Omega(\delta\ln^{2-\beta}(1/\delta)) = \Omega \left(\frac{\delta\ln^{2-\beta}(1/\delta)}{(a')^2} \right).
\end{split}
\end{equation}
Then, since $\rho = \Omega(\gamma^{1 - h})$, we can set $\delta = \gamma$ and $\delta = \gamma^2 / \rho^2$ to obtain
\begin{equation}
    \underline R(1)=\Omega\left(\frac{\gamma\ln^{2/3}(1/\gamma)}{\lambda_{\max}^{2}} + \frac{\gamma^2\ln^{4/3}(\rho^2/\gamma^2)}{\rho^2 \lambda_{\max}^{2}}\right),
\end{equation}
which implies the desired result (due to Theorem \ref{thm:deteq} and Propositions \ref{prop:laststeprisk} and \ref{prop:conditionsandwhich}).

If $\underline a\le 2$, then \eqref{eq:ubEI} gives that
\begin{equation}
    E_{-1/3}\left(\frac{2\underline a}{3}\right)\le \Gamma\left(\frac{4}{3}\right)\left(\frac{2\underline a}{3}\right)^{-4/3},\qquad E_{1/3}\left(\frac{2\underline a}{3}\right)\le \Gamma\left(\frac{2}{3}\right)\left(\frac{2\underline a}{3}\right)^{-2/3},
\end{equation}
which implies that 
\begin{equation}
\begin{split}    
    \underline R(1)\ge R(0) e^{-2\underline a/3}=\Omega(1),
\end{split}
\end{equation}
thus again proving the desired claim (due to Theorem \ref{thm:deteq} and Propositions \ref{prop:laststeprisk} and \ref{prop:conditionsandwhich}).

Finally, for $c>1$, due to the same argument used to show \eqref{eq:munubigc}, the solution of the original ODE is lower bounded by that of the ODE below:
\begin{equation}\label{eq:forremark2}
    \diff \underline R'' = - 2 \lambda_{\max} v\sqrt{1-t} \underline R'' \diff t + v^2 b_1 \gamma(1-t) \diff t+2 v^2 c^2\frac{\gamma^2}{\rho^2}\diff t,
\end{equation}
with initial condition $\underline R''(0)=R(0)$, where $b_1$ is a positive constant independent of $\gamma, v, c, \rho$ and the spectrum of $\Sigma$. Thus, following the same steps above with $\underline a=2v \lambda_{\max}, \underline b_1=v^2b_1\gamma, \underline b_2=2v^2c^2\gamma^2/\rho^2$, one readily shows that 
\begin{equation}
    \underline R''(1) = \Omega \left(\frac{\gamma \ln^{2/3} ( 1/ \gamma)}{\lambda_{\max}^2} + \frac{\gamma^2 \ln^{4/3} \left( 1 / \gamma \right)}{\rho^2 \lambda^2_{\max}}\right),
\end{equation}
concluding the proof (due to Theorem \ref{thm:deteq} and Propositions \ref{prop:laststeprisk} and \ref{prop:conditionsandwhich}).
\end{proof}


\paragraph{Sub-optimality of $c = \omega_\gamma(1)$.} Note that, if $c > 1$, the ODE in \eqref{eq:forremark1} maps to the one in \eqref{eq:solauxlb}, with $\underline a, \underline b$ defined as in \eqref{eq:ab1} (except for absolute constants), with $vc \mapsto v$. This mapping also holds when defining \eqref{eq:lower01}, via $\rho \mapsto \rho / c$. Then, if we consider the further condition $\rho / c = \Omega(\gamma^{1 - h})$, for some $h > 0$, the lower bound in Proposition \ref{prop:alpha012} becomes
\begin{equation}\label{eq:suboptimalitycapp}
    \mathcal R(\theta^p_0) = \Omega \left(\frac{\gamma \ln ( 1/ \gamma)}{\lambda_{\max}^2} + \frac{\max(1, c^2) \gamma^2 \ln^2 \left( 1 / \gamma \right)}{\rho^2 \lambda^2_{\max}}\right).
\end{equation}
The same argument holds also for the ODE in \eqref{eq:forremark2}, providing analogous expression for $\mathcal R(\theta^p_{1/2})$. This quantifies the negative effects of using a large clipping constant $c = \omega_\gamma(1)$.

\subsection{Proofs on decaying private noise:  \texorpdfstring{$\alpha \geq 1$}{alpha gtr 1}}\label{sec:bigalpha}

In this section, we prove Proposition \ref{prop:alphabigger1body}. 
The result is stated in the general case where $\lambda_{\max}$ and $\lambda_{\min}$ are allowed to depend on $\gamma$. 

\begin{customthm}{Proposition~\ref{prop:alphabigger1body} (general version)}
    Let Assumptions \ref{ass:data} and \ref{ass:learning_rate_schedules} hold, and let $\theta^p_\alpha$ be the solution obtained with Algorithm \ref{alg:dp-sgd}, with $\tilde \eta(t)$ given by \eqref{eq:polynomialsched} for $\alpha \geq 1$. Consider the setting
\begin{equation}\label{eq:boundalpha12notexplode}
\gamma=\frac{d}{n}=o_\gamma(1),\qquad         \frac{\ln^2(1 / \gamma) \gamma}{\lambda_{\min}^2} \left( \lambda_{\max} \alpha + \frac{\gamma}{\rho^2}\right) = o_\gamma(1),
    \end{equation}
    and pick 
\begin{equation}
        c = O_\gamma(1), \qquad \tilde \eta(0)c = \frac{C \alpha \ln(1/\gamma)}{\lambda_{\min}}, \qquad \tilde \eta(0) \le \frac{2}{\gamma}, 
    \end{equation}
    for a large enough constant $C$ which does not depend on $\gamma, \rho, \Sigma, \alpha$. Then, we have that, with overwhelming probability,
   \begin{equation}
        \mathcal R(\theta^p_\alpha) =O_\gamma\left(\frac{\lambda_{\max}}{\lambda_{\min}^2} \alpha \gamma \ln^{\frac{1}{1+\alpha}}(1/\gamma)  +\frac{1}{\lambda_{\min}^2}\frac{\alpha^{2} \gamma^2 \ln^{\frac{2}{1+\alpha}}(1/\gamma)}{\rho^2} \right). 
    \end{equation}
\end{customthm}

\begin{proof}

Let $\overline{R}(t)$ be defined as in \eqref{eq:upperlowerODEsbody}, with the learning rate schedule in \eqref{eq:polynomialsched} for a generic $\alpha \geq 1$. Then, we have
\begin{equation}
\begin{aligned}
    \diff \overline R &= - 2 vc \lambda_{\min}\left(1 - t\right)^\alpha \frac{\mu_c(\overline R)}{c} \overline R \diff t + (v c)^2 \lambda_{\max} \left(1 - t\right)^{2 \alpha} \frac{\nu_c(\overline R) (\overline R + \zeta^2 / 2)}{c^2} \gamma \diff t \\
    &\qquad + 4 (v c)^2 \alpha \left(1 - t\right)^{2 \alpha - 1} \frac{\gamma^2}{\rho^2} \diff t.
\end{aligned}
\end{equation}
    The argument is similar to the one used to obtain Proposition \ref{prop:alpha012}, so we only highlight differences. Using \eqref{eq:chaplygin}, we have that $\overline R(t)$ is strictly bounded by the auxiliary ODEs
\begin{equation}\label{eq:auxiliaries3}
\begin{split}    
    \diff \overline R'  &=  - \overline a (1-t)^\alpha\overline R' \diff t + \overline b_1(1-t)^{2\alpha} \diff t+\overline b_2(1-t)^{2\alpha-1}\diff t=: \overline f'(t, \overline R' )\diff t,\qquad\overline a, \overline b_1,\overline b_2>0, \\
    \diff \underline R'  &=  - \underline a (1-t)^{\alpha}\underline R' \diff t + \underline b_1(1-t)^{2\alpha} \diff t+\underline b_2(1-t)^{2\alpha-1}\diff t=: \underline f'(t, \underline R')\diff t,\qquad\underline a, \underline b_1,\underline b_2>0, 
\end{split}
\end{equation}
with initial conditions $\overline R'(0) = \overline R(0) = R(0)$.
    
To establish a closed form solution for the ODEs in \eqref{eq:auxiliaries3}, we start by analyzing the ODE
\begin{equation}\label{eq:ODEnew}
    \diff \tilde R  = - \overline a (1-t)^\alpha \tilde R \diff t + \overline b_1 (1-t)^{2\alpha}\diff t,
\end{equation}
with initial condition $\tilde R(0)=R(0)$, which admits the closed form solution
\begin{equation}
\begin{split}    
    \tilde R(t)=\frac{R(0)}{(1+\alpha)}&e^{-\frac{\overline a(1-(1-t)^{1+\alpha})}{1+\alpha}}\biggl(1+\alpha-\frac{\overline b_1}{R(0)}e^{\frac{\overline a}{1+\alpha}}E_{-1+\frac{1}{1+\alpha}}\left(\frac{\overline a}{1+\alpha}\right)\\
    &\hspace{2em}+ \frac{\overline b_1}{R(0)} e^{\frac{\overline a}{1+\alpha}}(1-t)^{1+2\alpha} E_{-1+\frac{1}{1+\alpha}}\left(\frac{\overline a(1-t)^{1+\alpha}}{1+\alpha}\right)\biggr).
    \end{split}
\end{equation}
Let 
\begin{equation}\label{eq:Gronw}
    w(t)=\overline R'(t)-\tilde R(t),
\end{equation}
and note that
\begin{equation}
\begin{aligned}
    \frac{\diff w}{\diff t} &= \frac{\diff \overline R'}{\diff t}-\frac{\diff \tilde R}{\diff t} \\
    &= -\overline a(1-t)^\alpha \overline R+\overline a(1-t)^\alpha \tilde R +\overline b_2(1-t)^{2\alpha-1} \\
    &= -\overline a(1-t)^\alpha w+\overline b_2(1-t)^{2\alpha-1}. 
\end{aligned}
\end{equation}

Thus, by using the initial condition $w(0)=\overline R'(0)-\tilde R(0)=0$, we have
\begin{equation}
    w(t)= \frac{\overline b_2}{1+\alpha} e^{\frac{\overline a(1-t)^{1+\alpha}}{1+\alpha}}\left((1-t)^{2\alpha}E_{\frac{1-\alpha}{1+\alpha}}\left(\frac{\overline a(1-t)^{1+\alpha}}{1+\alpha}\right)-E_{\frac{1-\alpha}{1+\alpha}}\left(\frac{\overline a}{1+\alpha}\right)\right),
\end{equation}
which implies that
\begin{equation}
\begin{split}    
    \overline R'(t)=\frac{R(0)}{(1+\alpha)}&e^{-\frac{\overline a(1-(1-t)^{1+\alpha})}{1+\alpha}}\biggl(1+\alpha-\frac{\overline b_1}{R(0)}e^{\frac{\overline a}{1+\alpha}}E_{-1+\frac{1}{1+\alpha}}\left(\frac{\overline a}{1+\alpha}\right)\\
    &\hspace{2em}+ \frac{\overline b_1}{R(0)}e^{\frac{\overline a}{1+\alpha}}(1-t)^{1+2\alpha} E_{-1+\frac{1}{1+\alpha}}\left(\frac{\overline a(1-t)^{1+\alpha}}{1+\alpha}\right)\\
    +\frac{\overline b_2}{R(0)} &e^{\frac{\overline a}{1+\alpha}}\left((1-t)^{2\alpha}E_{\frac{1-\alpha}{1+\alpha}}\left(\frac{\overline a(1-t)^{1+\alpha}}{1+\alpha}\right)-E_{\frac{1-\alpha}{1+\alpha}}\left(\frac{\overline a}{1+\alpha}\right)\right)\biggr).
    \end{split}
\end{equation}
Similarly, we have that 
\begin{equation}
\begin{split}    
    \underline R'(t)=\frac{1}{(1+\alpha)}&e^{-\frac{\underline a(1-(1-t)^{1+\alpha})}{1+\alpha}}\biggl(R(0) (1+\alpha) -\underline b_1e^{\frac{\underline a}{1+\alpha}}E_{-1+\frac{1}{1+\alpha}}\left(\frac{\underline a}{1+\alpha}\right)\\
    &\hspace{2em}+ \underline b_1e^{\frac{\underline a}{1+\alpha}}(1-t)^{1+2\alpha} E_{-1+\frac{1}{1+\alpha}}\left(\frac{\underline a(1-t)^{1+\alpha}}{1+\alpha}\right)\\
    +\underline b_2 &e^{\frac{\underline a}{1+\alpha}}\left((1-t)^{2\alpha}E_{\frac{1-\alpha}{1+\alpha}}\left(\frac{\underline a(1-t)^{1+\alpha}}{1+\alpha}\right)-E_{\frac{1-\alpha}{1+\alpha}}\left(\frac{\underline a}{1+\alpha}\right)\right)\biggr).
    \end{split}
\end{equation}
For $t\in [0, 1]$, $\overline R(t), \underline R(t)\ge 0$. Furthermore, the following upper bounds hold for $t\in [0, 1]$:
\begin{equation}
\begin{split}    
\overline R'(t)\le R(0) +\frac{2\overline b_1}{\overline a}+\frac{\overline b_2}{\alpha}&+\overline b_1\left(\frac{1}{1+\alpha}\right)^{-1+\frac{1}{1+\alpha}}\Gamma\left(2-\frac{1}{1+\alpha}\right) \overline a^{-2+\frac{1}{1+\alpha}}\\
&+\overline b_2\Gamma\left(\frac{2\alpha}{1+\alpha}\right)(1+\alpha)^{\frac{\alpha-1}{1+\alpha}} \overline a^{-\frac{2\alpha}{1+\alpha}}.
\end{split}
\end{equation}
Let us take
\begin{equation}
    \overline a=C_1 vc \lambda_{\min}, \qquad \overline b_1=C_2 \gamma v^2 c^2 \lambda_{\max}, \qquad \overline b_2=C_3 v^2c^2\frac{\gamma^2}{\rho^2}\alpha,
\end{equation}
with $C_1, C_2, C_3$ constants independent from $\gamma, \rho, \alpha, v, c$ or the spectrum of $\Sigma$. Then, we have that $\overline R'(t)\le 1 + R(0)$ 
for $t\in [0, 1]$, as $\overline a = \Omega(\ln(1 / \gamma))$ and
\begin{equation}
    \frac{\ln^2(1 / \gamma) \gamma}{\lambda_{\min}^2} \left( \lambda_{\max} \alpha + \frac{\gamma}{\rho^2}\right) = o(1).
\end{equation}

As a result, we can apply the argument in \eqref{eq:chaplygin} to obtain that
\begin{equation}\label{eq:dec}
    \overline R (1) \le \overline R'(1)= \tilde R(1) + w(1).
\end{equation}

Note that Lemma \ref{lemma:integral} yields 
\begin{equation}
    \begin{split}
\lim_{x\to 0}&x^{1+2\alpha} E_{-1+\frac{1}{1+\alpha}}\left(\frac{\overline ax^{1+\alpha}}{1+\alpha}\right)=\left(\frac{1}{1+\alpha}\right)^{-2+\frac{1}{1+\alpha}}\Gamma\left(2-\frac{1}{1+\alpha}\right)\overline a^{-2+\frac{1}{1+\alpha}}.
    \end{split}
\end{equation}
Thus,
\begin{equation}\label{eq:ubtildeR}
    \begin{split}
        \tilde R(1)=\frac{1}{(1+\alpha)}e^{-\frac{\overline a}{1+\alpha}}\biggl(R(0) &(1+\alpha) -\overline b_1e^{\frac{\overline a}{1+\alpha}}E_{-1+\frac{1}{1+\alpha}}\left(\frac{\overline a}{1+\alpha}\right)\\
        &+\overline b_1e^{\frac{\overline a}{1+\alpha}}\left(\frac{1}{1+\alpha}\right)^{-2+\frac{1}{1+\alpha}}\Gamma\left(2-\frac{1}{1+\alpha}\right)\overline a^{-2+\frac{1}{1+\alpha}}\biggr)\\
        &\hspace{-11em}\le R(0) e^{-\frac{\overline a}{1+\alpha}}+\left(\frac{1}{1+\alpha}\right)^{-1+\frac{1}{1+\alpha}} \Gamma\left(2-\frac{1}{1+\alpha}\right)\overline b_1\overline a^{-2+\frac{1}{1+\alpha}}\\
        &\hspace{-11em}\le R(0) e^{-\frac{\overline a}{1+\alpha}}+(1+\alpha) \overline b_1\overline a^{-2+\frac{1}{1+\alpha}},
    \end{split}
\end{equation}
where in the second line we have used the non-negativity of the exponential integral function and in the third line we have used that $\Gamma\left(2-\frac{1}{1+\alpha}\right)\le \Gamma(2)= 1$ 
for all $\alpha \ge 1$. Next, we bound $w(1)$ as
\begin{equation}\label{eq:ubw}
\begin{split}    
    w(1)&=\frac{\overline b_2}{1+\alpha}\left(\Gamma\left(\frac{2\alpha}{1+\alpha}\right)(1+\alpha)^{\frac{2\alpha}{1+\alpha}}\overline a^{-\frac{2\alpha}{1+\alpha}}-E_{\frac{1-\alpha}{1+\alpha}}\left(\frac{\overline a}{1+\alpha}\right)\right)\\
    &\le \frac{\overline b_2}{1+\alpha}\Gamma\left(\frac{2\alpha}{1+\alpha}\right)(1+\alpha)^{\frac{2\alpha}{1+\alpha}}\overline a^{-\frac{2\alpha}{1+\alpha}}\\        &\le (1+\alpha)\overline b_2\overline a^{-\frac{2\alpha}{1+\alpha}},
    \end{split}
\end{equation}
where in the last line we have used that $\Gamma\left(\frac{2\alpha}{1+\alpha}\right)\le 1$ for $\alpha\ge 1$.
By combining \eqref{eq:dec}, \eqref{eq:ubtildeR} and \eqref{eq:ubw}, we conclude that 
\begin{equation}
\begin{split}    
     R(1)& \le R(0)e^{-\frac{\overline a}{1+\alpha}}+(1+\alpha)^{1/3} \overline b_1\overline a^{-\frac{2\alpha+1}{1+\alpha}}+(1+\alpha)\overline b_2\overline a^{-\frac{2\alpha}{1+\alpha}}\\
     &=O\left(\frac{\lambda_{\max} \alpha \gamma \ln^{\frac{1}{1+\alpha}}(1/\gamma)}{\lambda_{\min}^2}   +\frac{\alpha^{2} \gamma^2 \ln^{\frac{2}{1+\alpha}}(1/\gamma)}{\rho^2 \lambda_{\min}^2} \right),
\end{split}
\end{equation}
which gives the thesis, after applying Theorem \ref{thm:deteq} and Propositions \ref{prop:laststeprisk} and \ref{prop:conditionsandwhich}, since $\tilde \eta(1) = 0$.
\end{proof}

\paragraph{Proof of \eqref{eq:boundopts2}.}
By taking $\alpha = \ln\ln(1/\gamma)$, we have
\begin{equation}
\begin{aligned}
    & \alpha \gamma \ln^{\frac{1}{1+\alpha}}(1/\gamma) + \frac{\alpha^{2} \gamma^2 \ln^{\frac{2}{1+\alpha}}(1/\gamma)}{\rho^2} \\
    & \qquad = \gamma \ln\ln(1/\gamma) e^{\frac{\ln\ln(1/\gamma)}{1 + \ln\ln(1/\gamma)}} + \frac{\gamma^2}{\rho^2}(\ln\ln(1/\gamma))^{2} e^{\frac{2 \ln \ln (1 / \gamma)}{1 + \ln \ln (1 / \gamma)}} \\
    & \qquad \le  \gamma (\ln\ln(1/\gamma)) e + \frac{\gamma^2}{\rho^2}(\ln\ln(1/\gamma))^{2} e^2,
\end{aligned}
\end{equation}
which readily gives \eqref{eq:boundopts2}, as $\lambda_{\max}, \lambda_{\min}= \Theta_\gamma(1)$. \qed

\subsection{Proofs on the harmonic schedule}\label{app:harmonic}

In this section, we prove Theorem \ref{thm:harmonicbody}, providing a more general result that allows for $\lambda_{\max}$ and $\lambda_{\min}$ to depend on $\gamma$.

\begin{lemma}\label{lemma:ODEclosed}
    Consider the ODE
\begin{equation}\label{eq:aux-lemma}
\begin{split}    
    \diff \tilde R  &=  - a \tilde\eta(t) \tilde R \diff t + b_1 \tilde\eta^2(t) \diff t -  b_2 \frac{\diff \tilde\eta^2(t)}{\diff t}\diff t,\qquad a, b_1, b_2>0, 
\end{split}
\end{equation}
with initial condition $\tilde R(0)$. Then,
\begin{equation}
    \tilde R (t)= e^{- a F(t)}\left(\tilde R(0) + \int_0^t e^{ a F(s)}\bigl( b_1\tilde\eta^2(s)-2  b_2\tilde\eta(s)\tilde\eta'(s)\bigr)\diff s\right),
\end{equation}
where $\tilde\eta'(t)$ denotes the derivative of $\tilde\eta(t)$ and $F(t):=\int_0^t \tilde\eta(s)\diff s$. 
\end{lemma}

\begin{proof}
    Define
    \begin{equation*}
    u(t):=\tilde R(t)+b_2\tilde\eta^2(t).
    \end{equation*}
    Then,
    \begin{equation*}
        \diff u(t)=-a\tilde\eta(t)u(t)\diff t +b_1\tilde\eta^2(t)\diff t+ab_2\tilde\eta^3(t)\diff t,
    \end{equation*}
which implies 
    \begin{equation*}
        \diff \left(u(t)e^{aF(t)}\right)=b_1\tilde\eta^2(t)e^{aF(t)}\diff t+ab_2\tilde\eta^3(t)e^{aF(t)}\diff t.
    \end{equation*}
    By integrating the equation above, we obtain
    \begin{equation*}
        u(t)e^{aF(t)}=u(0)e^{aF(0)}+\int_0^t e^{aF(s)}\bigl(b_1\tilde\eta^2(s)+ab_2\tilde\eta^3(s)\bigr)\diff s.
    \end{equation*}
    Noting that $u(0)=\tilde R(0)+b_2\tilde\eta^2(0)$ and $F(0)=0$ gives 
    \begin{equation}\label{eq:neweq1}
    \tilde R (t)=- b_2\tilde\eta^2(t)  +e^{- a F(t)}\tilde R(0) + e^{- a F(t)} b_2\tilde\eta^2(0)+e^{- a F(t)}\int_0^t e^{ a F(s)}\bigl( b_1\tilde\eta^2(s)+ a b_2\tilde\eta^3(s)\bigr)\diff s.
\end{equation}
Furthermore, we have
\begin{equation*}
    \frac{\diff}{\diff s}\left(e^{aF(s)}\tilde\eta^2(s)\right)=ae^{aF(s)}\tilde\eta^3(s)+2e^{aF(s)}\tilde\eta(s)\tilde\eta'(s),
\end{equation*}
which implies that
\begin{equation}\label{eq:neweq2}
\int_0^t a e^{aF(s)}\tilde\eta^3(s)\diff s = e^{aF(t)}\tilde\eta^2(t)-\tilde\eta^2(0)-2\int_0^te^{aF(s)}\tilde\eta(s)\tilde\eta'(s)\diff s.    
\end{equation}
Combining \eqref{eq:neweq1} and \eqref{eq:neweq2} gives
    the desired result.
\end{proof}

\begin{customthm}{Theorem~\ref{thm:harmonicbody} (general version)} 
    Let Assumptions \ref{ass:data} and \ref{ass:learning_rate_schedules} hold, and let $\theta^p$ be the solution obtained with Algorithm \ref{alg:dp-sgd}. Assume that 
    \begin{equation}
        \frac{\lambda_{\max}}{\lambda_{\min}^2}\gamma+\frac{1}{\lambda_{\min}^2}\frac{\gamma^2}{\rho^2}=o_\gamma(1).
    \end{equation}
Consider the schedule
\begin{equation}\label{eq:optsched}
    \tilde \eta(t) = \frac{\alpha}{t+\tau}, \qquad \mbox{with }\alpha:=\frac{3\sqrt{A+90}}{\lambda_{\min}}, \quad \tau:=\max\left(\frac{9(A+90)\lambda_{\max}}{\lambda_{\min}^2}\gamma, \frac{12\sqrt{2(A+90)}}{\lambda_{\min}}\frac{\gamma}{\rho}\right),
\end{equation}
where $A$ is a universal constant s.t.\ $R(0)+\zeta^2/2\le A$.  
    Then, by setting $c=3\sqrt{2(A+90)}/\lambda_{\min}$, we have that, with overwhelming probability,
    \begin{equation}
        \mathcal R(\theta^p) = O_\gamma \left( \frac{\lambda_{\max} }{\lambda_{\min}^4}\gamma + \frac{1}{\lambda_{\min}^4}\frac{\gamma^2}{\rho^2} \right). 
    \end{equation}
\end{customthm}


\begin{proof}
    Let us introduce the shorthands
\begin{equation*}
\begin{aligned}
    \overline f(t, \overline R) &= - 2 \lambda_{\min}  \tilde\eta(t) \mu_c(\overline R) \overline{R} +  \lambda_{\max} c^2 \tilde\eta^2(t) \frac{\nu_c(\overline R) (\overline {R} + \zeta^2/2)}{c^2} \gamma - 2 c^2 \frac{\gamma^2}{\rho^2}\frac{\diff \tilde\eta^2(t)}{\diff t}, \\
    \underline f(t, \underline R) &= - 2 \lambda_{\max} \tilde\eta(t) \mu_c(\underline R) \underline{R} + c^2 \tilde\eta^2(t) \frac{\nu_c(\underline R) (\underline {R} + \zeta^2/2)}{c^2} \gamma  - 2 c^2 \frac{\gamma^2}{\rho^2}\frac{\diff \tilde\eta^2(t)}{\diff t}.
\end{aligned}
\end{equation*}
Then, consider the auxiliary ODEs 
\begin{equation}\label{eq:aux-opt}
\begin{split}    
    \diff \overline R'  &=  - \overline a \tilde\eta(t) \overline R' \diff t + \overline b_1 \tilde\eta^2(t) \diff t - \overline b_2 \frac{\diff \tilde\eta^2(t)}{\diff t}\diff t=: \overline f'(t, \overline R')\diff t,\qquad\overline a, \overline b_1,\overline b_2>0, \\
    \diff \underline R'  &=  - \underline a \tilde\eta(t)\underline R' \diff t + \underline b_1 \tilde\eta^2(t) \diff t-\underline b_2\frac{\diff \tilde\eta^2(t)}{\diff t}\diff t=: \underline f'(t, \underline R')\diff t,\qquad\underline a, \underline b_1,\underline b_2>0, 
\end{split}
\end{equation}
with initial conditions $\overline R'(0) = \overline R(0) = R(0) = \underline R(0) = \underline R'(0)$. By Lemma \ref{lemma:ODEclosed}, we have
\begin{equation*}
\begin{split}    
    \overline R' (t)=e^{-\overline a F(t)}\left(R(0)+\int_0^t e^{\overline a F(s)}\bigl(\overline b_1\tilde\eta^2(s)-2\overline b_2\tilde\eta(s)\tilde\eta'(s)\bigr)\diff s\right),  \\
    \underline R' (t)=e^{-\underline a F(t)}\left(R(0)+\int_0^t e^{\underline a F(s)}\bigl(\underline b_1\tilde\eta^2(s)-2\underline b_2\tilde\eta(s)\tilde\eta'(s)\bigr)\diff s\right),
    \end{split}
\end{equation*}
where
\begin{equation*}
   F(t):=\int_0^t \tilde\eta(s)\diff s. 
\end{equation*}
By plugging in the schedule defined in \eqref{eq:optsched}, we obtain
\begin{equation*}
F(t)=\alpha\log\frac{t+\tau}{\tau}, \qquad \tilde\eta'(t)=-\frac{\alpha}{(t+\tau)^2}.
\end{equation*}
Thus, assuming that $\overline a\alpha>2$ (which will be verified later), we have that
\begin{equation*}
    \begin{split}    
    \overline R' (t)=\left(\frac{\tau}{t+\tau}\right)^{\overline a\alpha}\biggl(R(0)&+\overline b_1\frac{\alpha^2}{\tau^{\overline a\alpha}(\overline a\alpha-1)}\left((t+\tau)^{\overline a\alpha-1}-\tau^{\overline a\alpha-1}\right)\\
    &+2\overline b_2\frac{\alpha^2}{\tau^{\overline a\alpha}(\overline a\alpha-2)}\left((t+\tau)^{\overline a\alpha-2}-\tau^{\overline a\alpha-2}\right)\biggr).
    \end{split}
\end{equation*}
This readily gives the upper bound
\begin{equation}\label{eq:ubR1}
\begin{split}    
    \overline R'(t)&\le R(0) +\overline b_1\frac{\alpha^2}{(t+\tau)(\overline a\alpha-1)}+2\overline b_2\frac{\alpha^2}{(t+\tau)^2(\overline a\alpha-2)}\\
    &\le R(0) +\overline b_1\frac{\alpha^2}{\tau(\overline a\alpha-1)}+2\overline b_2\frac{\alpha^2}{\tau^2(\overline a\alpha-2)}
\end{split}
\end{equation}
valid for all $t\in [0, 1]$. Furthermore, for $t=1$, we also obtain the upper bound
\begin{equation}\label{eq:ubR2}
\begin{split}    
    \overline R'(1)&\le \left(\frac{\tau}{1+\tau}\right)^{\overline a\alpha}R(0) +\overline b_1\frac{\alpha^2}{(1+\tau)(\overline a\alpha-1)}+2\overline b_2\frac{\alpha^2}{(1+\tau)^2(\overline a\alpha-2)}\\
    &\le \tau^{\overline a\alpha}R(0) +\overline b_1\frac{\alpha^2}{\overline a\alpha-1}+2\overline b_2\frac{\alpha^2}{\overline a\alpha-2}.
\end{split}
\end{equation}
Let us now pick
\begin{equation}\label{eq:setabc-u}
    \overline a = \frac{\lambda_{\min}}{\sqrt{A+90}}, \qquad \overline b_1 = \lambda_{\max}  \frac{c^2 \gamma}{2}, \qquad \overline b_2 = 4 c^2\frac{\gamma^2}{\rho^2}.
\end{equation}
This implies that
\begin{equation}
\tau=\max\left(\overline b_1, 2\sqrt{\overline b_2}\right),\qquad \overline a\alpha=3,     
\end{equation}
which combined with \eqref{eq:ubR1} gives
\begin{equation}\label{eq:bdtraj}
    \overline R'(t)\le R(0) +\alpha^2=R(0)+\frac{9(A+90)}{\lambda_{\min}^2}.
\end{equation}
We now verify that, for all $t\in [0, 1]$,
\begin{equation}\label{eq:verconda}
    \overline a\le 2\lambda_{\min}\mu_c(\overline R(t))=2\lambda_{\min}\erf\left(\frac{c}{2\sqrt{\overline R(t)+\zeta^2/2}}\right).
\end{equation}
For \eqref{eq:verconda} to hold, it suffices that
\begin{equation}\label{eq:verconda1}
    \overline a\le2\lambda_{\min}\erf\left(\frac{c}{2\sqrt{A+9(A+90)/\lambda_{\min}^2}}\right),
\end{equation}
due to \eqref{eq:bdtraj} and to the fact that $R(0)+\zeta^2/2\le A$. Note that 
\begin{equation*}
    \erf(z)=\frac{2}{\sqrt{\pi}}\int_0^z e^{-t^2}\diff t\ge \frac{2z}{\sqrt{\pi e}}\ge \frac{2z}{9},
\end{equation*}
where the inequalities holds for all $z\in [0, 1/\sqrt{2}]$. Furthermore, 
as $c=3\sqrt{2(A+90)}/\lambda_{\min}$, one can readily verify that 
\begin{equation*}
    \frac{c}{2\sqrt{A+9(A+90)/\lambda_{\min}^2}}\le \frac{1}{\sqrt{2}}.
\end{equation*}
Thus, \eqref{eq:verconda1} is implied by
\begin{equation}
    \overline a \le \frac{2c\lambda_{\min}}{9\sqrt{A+9(A+90)/\lambda_{\min}^2}},
\end{equation}
which holds due to the definitions of $c, \overline a$ and to the fact that $\lambda_{\min}\le 1$.

By Lemma \ref{lemma:munubounds}, we have that 
\begin{equation*}
   \frac{2 \nu_c(R) \left( R + \zeta^2 / 2 \right)}{c^2} < 1,
\end{equation*}
which implies that
\begin{equation}
    \overline b_1>\lambda_{\max}c^2   \frac{\nu_c(R) \left( \overline R(t) + \zeta^2 / 2 \right)}{c^2}\gamma,
\end{equation}
for all $t\in [0, 1]$.
Now, note that $\diff\tilde\eta^2(t)/\diff t\le -2\alpha^2/(1+\tau)^3< 0$ for $t\in [0, 1]$. Thus, $\overline R'(t)\ge 0$. In fact, if this is not the case, by continuity of $\overline R'$, there exists an interval $(t^*, t^*+\delta)\subseteq [0, 1]$ s.t.\  $\overline R'(t^*)=0$ and $\overline R'(t)<0$ for all $t\in (t^*, t^*+\delta)$. However, if $\overline R'(t^*)=0$, then the derivative of $\overline R'$ evaluated at $t^*$ is $>0$, which is a contradiction. 

As a result, we have
\begin{equation}\label{ineq:fg2-new}
    \overline f(t, \overline R'(t)) < \overline f'(t, \overline R'(t)),\qquad \text{ for all } \, t \in [0, 1].
\end{equation}
Again, by Lemma \ref{lemma:munubounds}, we have that 
\begin{equation*}
     \frac{\mu_c(R)}{c} < \frac{1}{\sqrt {\pi \left( R + \zeta^2 /2 \right)}}, \qquad \underline c_\nu(c, \zeta) < \frac{2 \nu_c(R) \left( R + \zeta^2 / 2 \right)}{c^2}.
\end{equation*}
Thus, by setting 
\begin{equation*}
\begin{split}
    \underline a &= 2 c \lambda_{\max} \frac{1}{ \sqrt{\pi \zeta^2 / 2}}, \qquad \underline b_1 = \frac{ c^2 \gamma\underline c_\nu(c, \zeta) }{2}, \qquad \underline b_2 =  c^2\frac{\gamma^2}{\rho^2},    
\end{split}
\end{equation*}
we have that 
\begin{equation}\label{ineq:fg2-new2}
\underline f(t, \underline R'(t)) > \underline f'(t, \underline R'(t)),\qquad \text{ for all } \, t \in [0, 1].
\end{equation}
Note that $\overline f(t, R)$ and $\underline f(t ,R)$ are continuous in both variables in the intervals $t \in [0, 1]$ and $R \in [0, 1]$. Furthermore, they are also Lipschitz in $R$ in these same intervals. Thus, by Theorem 1.3 in \cite{teschl2012}, 
\begin{equation} 
  \underline R'(t) < \underline R(t), \qquad \overline R(t) < \overline R'(t) ,\qquad \text{ for all } \, t \in (0,1].
\end{equation}
Note that 
\begin{equation}
    \sup_{t\in [0, 1]}\tilde\eta(t)=\frac{\alpha}{\tau}<\frac{2}{\gamma},
\end{equation}
which follows from
\begin{equation}
    \frac{\alpha}{\tau}\le \frac{3\sqrt{A+90}}{\lambda_{\min}}\frac{\lambda_{\min}^2}{9\gamma\lambda_{\max}(A+90)}\le \frac{1}{3\gamma\sqrt{A+90}}\le \frac{1}{3\gamma\sqrt{90}}.
\end{equation}
Thus, noting that $$\overline R(1)+\frac{c^2\tilde\eta^2(1)\gamma^2}{\rho^2}= O_\gamma \left( \frac{\lambda_{\max} }{\lambda_{\min}^4}\gamma + \frac{1}{\lambda_{\min}^4}\frac{\gamma^2}{\rho^2} \right),$$ the desired result follows from Theorem \ref{thm:deteq} and Propositions \ref{prop:laststeprisk} and \ref{prop:conditionsandwhich}.
\end{proof}

\subsection{Proofs on the minimax rates}\label{app:lowerbound}

In this section, we consider the notation $D = (X, Y)$ and $D'_i = (X'_i, Y'_i)$, with $X, X'_i \in \R^{n \times d}$ and $Y, Y'_i \in \R^{n}$, where the two neighboring datasets differ in their $i$-th row $(x_i, y_i)$ and $(x_i', y_i')$. Assume that all data satisfies Assumption \ref{ass:data}, i.e., $(x_i, y_i)$ are sampled i.i.d. such that $x_i \sim \mathcal N(0, \Sigma)$, $y_i = x_i^\top \theta^* + z_i$, and $(x'_i, y'_i)$ is sampled from this same distribution independently. 
In this section, the asymptotic notation will hide only numerical absolute constants, independent of $\zeta$ and $\gamma$.

Denote by $\mathcal M$ a generic $\rho^2 / 2$-zCDP algorithm. $\mathcal M$ applied to $D$ (or $D'$) outputs a random variable in the probability space of the data and the private mechanism.
Let $p_{\mathcal{M}(D)}$ ($p_{\mathcal{M}(D'_i)}$) be the probability density function of the random variable $\mathcal{M}(D)$ ($\mathcal{M}(D'_i)$) when taking into account the randomness of $\mathcal M$, $X$ and $Y$ ($X'_i$ and $Y'_i$).
Define $L_i$ to be the random variable
\begin{align}\label{eq:Li}
    L_i = \frac{p_{\mathcal{M}(D)} ({\theta}')}{p_{\mathcal{M}(D'_i)} ({\theta}')},
\end{align}
where ${\theta}'$ is set to be the random variable ${\theta}' = \mathcal{M}(D'_i)$. 
Then, due to the definition of zCDP (see Definition \ref{def:zcdp}), we have that, for all $D$, $D'_i$ and $\alpha > 1$,
\begin{align}
    \E\left[ L^\alpha_i \, | \, D, D'_i \right] \leq \exp \left( \frac{\alpha (\alpha - 1) \rho^2 }{2}\right).
\end{align}
Thus, taking the expectation with respect to $D$ and $D'_i$, we have
\begin{align}
    \E\left[ L^\alpha_i \right] \leq \exp \left( \frac{\alpha (\alpha - 1) \rho^2 }{2}\right).
\end{align}
As in \cite{cai21}, let us define the tracing attack
\begin{align}\label{eq:tracingattack}
    A_i(\theta) = z_i x_i^\top \left(\theta - \theta^* \right).
\end{align}

Unless differently stated, in this section, when we use the notation $\E$, we refer to expectations on the randomness of the data $(X, Y)$ and of the algorithm $\mathcal M$.

\begin{lemma}\label{lemma:omegad}
Let Assumption \ref{ass:data} hold and let $\zeta^2$ be an absolute positive constant.
Then, there exists a prior $\pi$ on $\theta^*$ with support in $\norm{\theta^*}_2 < 1$ and an absolute constant $C > 0$ such that, for any 
algorithm $\mathcal M$ with $\sup_{\norm{\theta^*}_2 < 1} \E \left[ \norm{\mathcal M}_2^2 \right] < \infty$, either one of the following holds
\begin{equation}\label{eq:twoconditions}
    \E_\pi \left[ \sum_{i \in [n]} \E \left[ A_i (\mathcal M(D)) \right] \right] \geq C d, \qquad \textup{or} \qquad \E_{\pi, X, Y, \mathcal M} \left[ \norm{ \mathcal M(D) - \theta^* }_2^2 \right] > C.
\end{equation}
\end{lemma}
\begin{proof}


Consider a generic algorithm $\mathcal M$. We have
\begin{align}\label{eq:sumoverj}
    \sum_{i = 1}^n \E \left[ A_i(\mathcal M(D)) \right] =  \sum_{i = 1}^n \E \left[ z_i x_i^\top  \mathcal M(D) \right] = \sum_{j \in [d]} \E \left[ \mathcal M(D)_j \sum_{i \in [n]} x_{ij} (y_i - x_i^\top \theta^*) \right],
\end{align}
since the term $\theta^*$ vanishes in expectation from the definition of $A_i$ in \eqref{eq:tracingattack} and where we replace $z_i = y_i - x_i^\top \theta^*$ and introduce the notation $x_{ij} \in \R$ to denote the $j$-th entry of the data vector $x_i$.
By Assumption \ref{ass:data}, we have
\begin{align}
    p(y_i | x_i, \theta^*) = \mathcal N(x_i^\top \theta^*, \zeta^2), \qquad \frac{\partial}{\partial \theta^*_j} p(y_i | x_i, \theta^*) = \frac{x_{ij} (y_i - x_i^\top \theta^*)}{\zeta^2} p(y_i | x_i, \theta^*),
\end{align}
which give
\begin{equation}
\begin{aligned}
    \zeta^2 \frac{\partial}{\partial \theta^*_j} \log p(y_i | x_i, \theta^*) &= x_{ij} (y_i - x_i^\top \theta^*), \\
    \zeta^2 \frac{\partial}{\partial \theta^*_j} \log p(Y | X, \theta^*) &= \sum_{i \in [n]} x_{ij} (y_i - x_i^\top \theta^*),
\end{aligned}
\end{equation}
as $p(Y | X, \theta^*) = \Pi_i p(y_i | x_i, \theta^*)$. Thus, we have (dropping the dependence of $\mathcal M$ on $X$ and $Y$)
\begin{equation}\label{eq:swapintder}
\begin{split}    \E & \left[ \mathcal M_j \sum_{i \in [n]} x_{ij} (y_i - x_i^\top \theta^*) \right] = \E \left[ \mathcal M_j \zeta^2 \frac{\partial}{\partial \theta^*_j} \log p(Y | X, \theta^*) \right] \\
    &= \int \mathcal M_j \zeta^2 \left(\frac{1}{p(Y | X, \theta^*)} \frac{\partial}{\partial \theta^*_j} p(Y | X, \theta^*)\right) p(X) p(Y|X, \theta^*) p(\mathcal M_j | X, Y) \diff X \diff Y \diff \mathcal M_j  \\
    &= \int \mathcal M_j \zeta^2 \left(\frac{\partial}{\partial \theta^*_j} p(Y | X, \theta^*)\right) p(X)  p(\mathcal M_j | X, Y) \diff X \diff Y \diff \mathcal M_j \\
    &= \frac{\partial}{\partial \theta^*_j} \int \mathcal M_j \zeta^2 p(X) p(Y | X, \theta^*) p(\mathcal M_j | X, Y) \diff X \diff Y \diff \mathcal M_j \\
    &= \zeta^2 \frac{\partial}{\partial \theta^*_j} \E \left[ \mathcal M_j \right],
\end{split}
\end{equation}
where the fourth line holds since $p(Y | X)$ is the only term that depends on $\theta^*$, and the derivative is swapped with the integration due to dominated convergence, as in the open ball $\norm{\theta^*}_2 < 1$ we uniformly have
\begin{equation}\label{eq:regularityM}
    \E \left[ \left( \frac{\partial}{\partial \theta^*_j} \log p(Y | X, \theta^*) \right)^2\right] < \infty, \qquad \E \left[ \mathcal M_j^2 \right] < \infty.
\end{equation}
Here, the first condition follows from Assumption \ref{ass:data}, and the second one by hypothesis. Notice that the previous equation, by Cauchy-Schwartz inequality, implies the condition for completeness in Theorem 2.1 in \cite{cai2025scoreattacklowerbound}, and that this condition also implies the function $g_j(\theta^*) := \E \left[ \mathcal M_j \right]$ being differentiable for all $\theta^*$ in $\norm{\theta^*}_2 < 1$. Then, plugging \eqref{eq:swapintder} in \eqref{eq:sumoverj} and taking the expectation over the prior $\pi$ of $\theta^*$, we get
\begin{align}\label{eq:fromntod}
    \E_\pi \left[ \sum_{i \in [n]} \E \left[ A_i (\mathcal M(D)) \right] \right] =  \zeta^2 \E_\pi \left[ \sum_{j \in [d]} \frac{\partial}{\partial \theta^*_j}  g_j(\theta^*) \right].
\end{align}

Consider the prior $\pi$ on $\theta^*$ such that all the entries $\theta^*_j$ are independent and distributed according to 
\begin{equation}
    \pi_j(z) = \frac{15 d^{5 / 2}}{16} \left( \frac{1}{d} - z^2 \right)^2,
\end{equation}
with support in $z \in (- d^{- 1 / 2}, d^{- 1 / 2})$. This prior is such that the support is $\norm{\theta^*}_2 < 1$, it vanishes at the boundary and 
\begin{equation}\label{eq:sortofvariance}
    - \E_{z \sim \pi_j} \left[ - z \frac{\pi'_j(z)}{\pi_j(z)} \right] = 1, \qquad \E_{z \sim \pi_j} \left[ \left( \frac{\pi'_j(z)}{\pi_j(z)} \right) ^2 \right] = 10 d.
\end{equation}

Let us define the one dimensional function
\begin{align}
    h_j(\theta^*_j) := \E_{\theta^*_{-j}}\left[ g_j(\theta^*) \right], 
\end{align}
where the notation $\E_{\theta^*_{-j}}$ takes expectation over all the (independent) entries of $\theta^*$, except for the $j$-th. Then, due to \eqref{eq:regularityM}, we can apply Stein's Lemma as stated in Lemma 2.1 in \cite{cai2025scoreattacklowerbound}, which yields
\begin{align}\label{eq:steinlemma}
    \E_\pi \left[ \frac{\partial}{\partial \theta^*_j} g_j(\theta^*) \right] = \E_{\theta^*_j} \left[ \frac{\partial}{\partial \theta^*_j} h_j(\theta^*_j) \right] = \E_{\theta^*_j} \left[ - h_j(\theta^*_j) \frac{\pi_j'(\theta^*_j)}{\pi_j(\theta^*_j)} \right] = \E_\pi \left[ - g_j(\theta^*) \frac{\pi_j'(\theta^*_j)}{\pi_j(\theta^*_j)} \right].
\end{align}
By triangle inequality, we have
\begin{align}\label{eq:trianglethetastar}
    - g_j(\theta^*) \frac{\pi_j'(\theta^*_j)}{\pi_j(\theta^*_j)} \geq - \theta^*_j \frac{\pi_j'(\theta^*_j)}{\pi_j(\theta^*_j)} - \left| (g_j(\theta^*) - \theta^*_j)  \frac{\pi_j'(\theta^*_j)}{\pi_j(\theta^*_j)} \right|.
\end{align}

Plugging in \eqref{eq:steinlemma} and using \eqref{eq:sortofvariance}, we get
\begin{align}
    \E_\pi \left[ \frac{\partial}{\partial \theta^*_j} g_j(\theta^*) \right] \geq 1 - \E_\pi \left[ \left| (g_j(\theta^*) - \theta^*_j)  \frac{\pi_j'(\theta^*_j)}{\pi_j(\theta^*_j)} \right| \right].
\end{align}

Summing over $j$ we get 
\begin{equation}\label{eq:bunchofCS}
    \begin{aligned}
    \sum_{j \in [d]} \E_\pi \left[ \frac{\partial}{\partial \theta^*_j} g_j(\theta^*) \right] &\geq d - \E_\pi \left[ \sum_{j \in [d]} \left| (g_j(\theta^*) - \theta^*_j)  \frac{\pi_j'(\theta^*_j)}{\pi_j(\theta^*_j)} \right| \right] \\
    &\geq d - \E_\pi \left[ \sqrt{ \sum_{j \in [d]} \left(g_j(\theta^*) - \theta^*_j \right)^2} \sqrt{ \sum_{j \in [d]} \left( \frac{\pi_j'(\theta^*_j)}{\pi_j(\theta^*_j)} \right)^2 } \right] \\
    &\geq d - \sqrt{ \E_\pi \left[ \sum_{j \in [d]} \left(g_j(\theta^*) - \theta^*_j \right)^2 \right]} \sqrt{ \E_\pi \left[ \sum_{j \in [d]} \left( \frac{\pi_j'(\theta^*_j)}{\pi_j(\theta^*_j)} \right)^2 \right] } \\
    & \geq d - \sqrt{ \E_\pi \left[ \norm{\E \left[ \mathcal M(D) \right] - \theta^* }_2^2 \right]} \sqrt{10 d^2} \\
    & \geq d \left( 1 - \sqrt{10 \, \E_{\pi, X, Y, \mathcal M} \left[ \norm{ \mathcal M(D) - \theta^* }_2^2 \right]} \right).
    \end{aligned}
\end{equation}
Plugging this inequality in \eqref{eq:fromntod} we get
\begin{equation}
    \E_\pi \left[ \sum_{i \in [n]} \E \left[ A_i (\mathcal M(D)) \right] \right] \geq \zeta^2 d \left( 1 - \sqrt{10 \, \E_{\pi, X, Y, \mathcal M} \left[ \norm{ \mathcal M(D) - \theta^* }_2^2 \right]} \right).
\end{equation}
Suppose $\E_{\pi, X, Y, \mathcal M} \left[ \norm{ \mathcal M(D) - \theta^* }_2^2 \right] \leq 1 / 40$. Then, we have $\E_\pi \left[ \sum_{i \in [n]} \E \left[ A_i (\mathcal M(D)) \right] \right] \geq \zeta^2 d / 2$. Thus, the desired result follows by taking $C = \min(\zeta^2 / 2, 1 / 40)$.
\end{proof}

\begin{lemma}\label{lemma:privateminimax}
    Let Assumption \ref{ass:data} hold, $\zeta^2$ be an absolute positive constant and $d^2 / (\rho^2 n^2 \lambda_{\min}) = O(1)$. 
    Define $\mathbb M$ to be the set of all $\rho^2 / 2$-zCDP algorithms such that $\sup_{\norm{\theta^*}_2 < 1} \E \left[ \norm{\mathcal M}_2^2 \right] < \infty$.
    Then, there exists a prior $\pi$ on $\theta^*$ with support in $\norm{\theta^*}_2 < 1$ such that 
    \begin{equation}
        \inf_{\mathcal M \in \mathbb M} \E_{\pi, \mathcal M, X, Y} \left[ \mathcal R(\mathcal M(X, Y)) \right] = \Omega \left( \frac{d^2}{n^2 \left( e^{\rho^2} - 1 \right)} \right).
    \end{equation}
\end{lemma}
\begin{proof}
    
Note that, according to \eqref{eq:tracingattack} and \eqref{eq:Li}, we have
\begin{align}
    \E \left[ A_i(\mathcal M(D'_i)) \right] = 0, \qquad  \E \left[ (L_i - 1) A_i(\mathcal M(D'_i)) \right] =  \E \left[ L_i A_i(\mathcal M(D'_i)) \right] = \E \left[ A_i(\mathcal M(D)) \right].
\end{align}
The first one holds because $z_i$, $x_i$ and $\left(\mathcal M(D_i') - \theta^* \right)$ are independent, and the first one is mean-0. The last equality follows from
\begin{equation}
    \begin{aligned}
    \E \left[ L_i A_i(\mathcal M(D'_i))  \right] &=  \int \frac{p_{\mathcal{M}(D)} (\theta)}{p_{\mathcal{M}(D'_i)} (\theta)} A_i(\theta) p_{\mathcal{M}(D'_i)} (\theta)  \diff \theta  \\
    &= \int A_i (\theta) p_{\mathcal{M}(D)} (\theta)  \diff \theta  \\
    &= \E \left[ A_i(\mathcal M(D))  \right].
    \end{aligned}
\end{equation}

Then, we have
\begin{equation}\label{eq:cslowerbound}
\begin{aligned}
    \sum_{i = 1}^n \E \left[ A_i(\mathcal M(D)) \right] &= \sum_{i = 1}^n \E \left[ (L_i - 1) A_i( \mathcal M(D'_i) )\right] \\
    &\le \sum_{i = 1}^n \sqrt{\E \left[ (L_i - 1)^2 \right]} \sqrt{\E \left[ A_i( \mathcal M(D'_i))^2 \right]} \\
    &\leq n \sqrt{\E \left[ (L_1 - 1)^2 \right]} \sqrt{\E \left[ A_1( \mathcal M(D'_1))^2 \right]},
\end{aligned}
\end{equation}
where we used Cauchy-Schwartz in the second line and the fact that the samples are equally distributed in the last line.

Since $\E[L_1] = 1$, we have
\begin{equation}\label{eq:boundsrhoandattack}
\begin{aligned}
    \E \left[ (L_1 - 1)^2 \right] &= \E \left[ L_1^2 \right] - 1 \leq \exp \left( \rho^2 \right) - 1, \\
    \E \left[ A_1( \mathcal M(D'_1))^2 \right] &= \E \left[ z_1^2 \left(x_1^\top \left(\mathcal M(D'_1) - \theta^* \right)\right)^2 \right] \\
    &= \zeta^2 \E \left[ \norm{\Sigma^{1/2} \left(\mathcal M(D'_1) - \theta^* \right)}_2^2 \right] \\
    &= 2 \zeta^2 \E \left[ \mathcal R \left( \mathcal M(D) \right) \right],
\end{aligned}
\end{equation}
where the steps in the third line are again motivated by the fact that $z_1$, $x_1$ and $\left(\mathcal M(D'_1) - \theta^* \right)$ are independent.


Let us now focus on the LHS of \eqref{eq:cslowerbound}. Suppose
\begin{equation}
    \E_{\pi, \mathcal M, X, Y} \left[ \norm{\mathcal M(D) - \theta^*}_2^2 \right] \leq \frac{2 \, \E_{\pi, \mathcal M, X, Y} \left[ \mathcal R( \mathcal M(D)) \right]}{\lambda_{\min}} < C_1.
\end{equation}
Setting $C_1$ to be the same constant as in Lemma \ref{lemma:omegad}, by \eqref{eq:twoconditions} we have that
\begin{align}
    \E_\pi \left[ \sum_{i \in [n]} \E \left[ A_i (\mathcal M(D)) \right] \right] \geq C_1 d.
\end{align}
This, together with \eqref{eq:boundsrhoandattack} and \eqref{eq:cslowerbound}, after taking the expectation with respect to $\pi$ yields
\begin{align}
    \E_{\pi, \mathcal M, X, Y} \left[ \mathcal R \left( \mathcal M(D) \right) \right] = \Omega \left( \frac{d^2}{n^2 \left( e^{\rho^2} - 1 \right)} \right),
\end{align}
which yields the desired result.

Suppose instead that
\begin{equation}
    \frac{2 \, \E_{\pi, \mathcal M, X, Y} \left[ \mathcal R( \mathcal M(D)) \right]}{\lambda_{\min}} \geq C_1.
\end{equation}
Then, by hypothesis, we would have
\begin{equation}
    \E_{\pi, \mathcal M, X, Y} \left[ \mathcal R( \mathcal M(D)) \right] \geq \frac{\lambda_{\min} C_1}{2} = \Omega \left( \frac{d^2}{n^2 \rho^2} \right) = \Omega \left( \frac{d^2}{n^2 \left( e^{\rho^2} - 1 \right)}  \right),
\end{equation}
which again yields the desired result.


\end{proof}



\begin{lemma}\label{lemma:nonprivateminimax}
    Let Assumption \ref{ass:data} hold, 
    $\zeta^2$ be an absolute positive constant, $d < n = O(d^2)$, $\lambda_{\min} > 0$ and $\lambda_{\max} = O(1)$. Define $\mathbb M$ to be the set of all algorithms such that $\sup_{\norm{\theta^*}_2 < 1} \E \left[ \norm{\mathcal M}_2^2 \right] < \infty$. 
    Then, we have
    \begin{equation}\label{eq:minmaxnonprivate}
    \inf_{\mathcal M \in \mathbb M} \sup_{\norm{\theta^*}_2 < 1} \E \left[ \mathcal R(\mathcal M(X, Y)) \right] = \Omega \left( \frac{d}{ n} \right).
    \end{equation}
\end{lemma}
\begin{proof}
Let us denote $\pi_B$ as the Gaussian distribution in $d$ dimensions $\mathcal N(0, \zeta^2 / (\lambda n) I)$, with $\lambda = 2 \lambda_{\max}$, truncated at $\norm{\theta^*}_2 \geq 1$. Then, we have
\begin{equation}\label{eq:maxtoexp}
    \inf_{\mathcal M \in \mathbb M} \sup_{\norm{\theta^*}_2 < 1} \E \left[ \mathcal R(\mathcal M(X, Y)) \right] \geq \inf_{\mathcal M \in \mathbb M} \E_{\theta^* \sim \pi_B} \E \left[ \mathcal R(\mathcal M(X, Y)) \right].
\end{equation}
Let $B$ be the closed unit ball, and let $\mathcal B$ be the ellipsoid $\Sigma^{1 / 2} B$. Given any algorithm $\mathcal M$, define the composition $\mathcal M_B =  \Sigma^{-1/2} \operatorname{Proj}_{\mathcal B} \Sigma^{1/2} \circ \mathcal M$. This implies that, for all $(X, Y)$, $\mathcal M_B(X, Y) \in B$. Furthermore, for all $(X, Y)$ and $\theta^* \in B$, we have 
\begin{equation}
\begin{aligned}
    \norm{\Sigma^{1/2}(\mathcal M_B(X, Y) - \theta^*)}_2^2 &= \norm{\operatorname{Proj}_{\mathcal B} \Sigma^{1/2} \mathcal M(X, Y) - \Sigma^{1/2} \theta^*}_2^2 \\
    &= \norm{\operatorname{Proj}_{\mathcal B} \Sigma^{1/2} \mathcal M(X, Y) - \operatorname{Proj}_{\mathcal B} \Sigma^{1/2}  \theta^* }_2^2 \\
    &\leq \norm{\Sigma^{1/2} \left( \mathcal M(X, Y) - \theta^* \right)}_2^2,
\end{aligned}
\end{equation}
where the second step holds since $\Sigma^{1/2} \theta^* \in \mathcal B$, and the last step holds since projections on closed convex sets are non-expansive. Thus, we have $\mathcal R(\mathcal M_B(X, Y)) \leq \mathcal R(\mathcal M(X, Y))$. Then, since the support of $\pi_B$ is included in $B$, it holds that
\begin{equation}\label{eq:project}
     \inf_{\mathcal M \in \mathbb M} \E_{\theta^* \sim \pi_B} \E \left[ \mathcal R(\mathcal M(X, Y)) \right] = \inf_{\mathcal M\in \mathbb M} \E_{\theta^* \sim \pi_B} \E \left[ \mathcal R(\mathcal M_B(X, Y)) \right],
\end{equation}
where the equality follows from the fact that the set of algorithms $\{ \mathcal M_B \, | \, \mathcal M \in \mathbb M \}$ is a subset of $\mathbb M$.

Let us denote with $\pi = \mathcal N(0, \zeta^2 / (\lambda n) I)$ the non-truncated Gaussian distribution and with $A$ the event that $\theta^* \sim \pi$ is such that $\norm{\theta^*}_2 < 1$. Then, dropping the notation $\mathbb M$ and the dependence of $\mathcal M$ from $(X, Y)$, one has 
\begin{equation}\label{eq:changeofprior}
\begin{aligned}
    \inf_{\mathcal M} \E_{\theta^* \sim \pi_B} \E \left[ \mathcal R(\mathcal M_B) \right] &= \frac{1}{\mathbb P(A)} \inf_{\mathcal M} \E_{\theta^* \sim \pi} \E \left[ \mathcal R(\mathcal M_B) \mathbf{1}_A \right] \\
    &= \frac{1}{\mathbb P(A)} \inf_{\mathcal M} \E_{\theta^* \sim \pi} \E \left[ \mathcal R(\mathcal M_B) (1 - \mathbf{1}_{A^c}) \right] \\
    &\geq \frac{1}{\mathbb P(A)} \inf_{\mathcal M} \E_{\theta^* \sim \pi} \E \left[ \mathcal R(\mathcal M_B) \right] - \frac{1}{\mathbb P(A)} \sup_{\mathcal M} \E_{\theta^* \sim \pi} \E \left[ \mathcal R(\mathcal M_B) \mathbf{1}_{A^c} \right].
\end{aligned}
\end{equation}

Note that 
\begin{equation}\label{eq:smallPcomple}
    \sup_{\mathcal M} \E_{\theta^* \sim \pi} \E \left[ \mathcal R(\mathcal M_B) \mathbf{1}_{A^c} \right] \leq \sup_{\mathcal M} \sqrt{ \E_{\theta^* \sim \pi} \E \left[ \mathcal R(\mathcal M_B)^2 \right]} \sqrt{\mathbb P(A^c)} \leq 2 \lambda_{\max} \sqrt{\mathbb P(A^c)},
\end{equation}
where the first step follows from Cauchy-Schwartz inequality and the second step holds since $\norm{\mathcal M_B}_2 \leq 1$. Furthermore, since $\lambda > 2$ and $n > d$, due to concentration of the norm of Gaussian vectors we have that $\mathbb P(A) \geq 1 - 2 e^{-c_1 d}$ for some absolute constant $c_1$. Then, we also have
\begin{equation}\label{eq:frommourta}
\begin{aligned}
    \inf_{\mathcal M} \E_{\theta^* \sim \pi} \E \left[ \mathcal R(\mathcal M_B) \right] &\geq \inf_{\mathcal M} \E_{\theta^* \sim \pi} \E \left[ \mathcal R(\mathcal M) \right] \\
    &= \frac{\zeta^2}{n} \E \left[ \tr \left( (X^\top X / n + \lambda I)^{-1} \Sigma \right) \right] \\
    &\geq \frac{\zeta^2}{n}  \tr \left( \left( \E \left[ \Sigma^{- 1/2}(X^\top X / n + \lambda I) \Sigma^{-1/2} \right] \right)^{-1} \right) \\
    &= \frac{\zeta^2}{n}  \tr \left( \left( I + \lambda \Sigma^{-1} \right)^{-1} \right) \\
    &\geq \frac{\zeta^2}{2 \lambda n} \tr \left(\Sigma \right) \\
    &= \frac{\zeta^2 d}{4 \lambda_{\max} n},
\end{aligned}
\end{equation}
where the second step follows from the closed form of the Bayes estimator under Gaussian prior (see, e.g., Eq. (63) in \cite{mourtada2022}), the third step follows from Jensen inequality since the mapping $A \mapsto \tr(A^{-1})$ is convex if $A$ is positive definite (see Eq. (1.33) in \cite{bhadia2007}), and the last two steps follow from the fact that $\lambda$ has been set to $2 \lambda_{\max}$ and $\tr(\Sigma) = d$.

Then, putting \eqref{eq:changeofprior}, \eqref{eq:smallPcomple} and \eqref{eq:frommourta} together, we get
\begin{equation}
    \inf_{\mathcal M} \E_{\theta^* \sim \pi_B} \E \left[ \mathcal R(\mathcal M_B) \right] \geq \frac{\zeta^2 d}{4 \lambda_{\max} n} - 4 \lambda_{\max} e^{-c_1 d / 2}.
\end{equation}
Since $n = O(d^2)$, merging with \eqref{eq:maxtoexp}, \eqref{eq:project} yields the desired result.
\end{proof}


\paragraph{Proof of Theorem \ref{thm:lowerbound}.}
Note that, if $\sup_{\norm{\theta^*}_2 < 1} \E \left[ \norm{\mathcal M}_2^2 \right]$ diverges, then $\sup_{\norm{\theta^*}_2 < 1} \E \left[ \mathcal R(\mathcal M(X, Y)) \right]$ diverges as well.
Thus, let us focus on the class of $\rho^2 / 2$-zCDP algorithms such that $\sup_{\norm{\theta^*}_2 < 1} \E \left[ \norm{\mathcal M}_2^2 \right] < \infty$. By Lemmas \ref{lemma:privateminimax} and \ref{lemma:nonprivateminimax}, and using the same argument as in \eqref{eq:maxtoexp}, we have
\begin{equation}\label{eq:minimaxiwithexp}
    \inf_{\mathcal M \in \mathbb M} \sup_{\norm{\theta^*}_2 < 1} \E \left[ \mathcal R(\mathcal M(X, Y)) \right] = \Omega \left( \frac{d}{n} + \frac{d^2}{(e^{\rho^2} - 1)n^2} \right).
\end{equation}
Notice that, if $\rho > 1$, 
\begin{equation}
    \frac{d^2}{(e^{\rho^2} - 1)n^2} = O \left( \frac{d}{n} \right), \qquad \frac{d^2}{\rho^2 n^2} = O \left( \frac{d}{n} \right),
\end{equation}
i.e., the first terms in the RHS of \eqref{eq:thesislowerboundapp} and \eqref{eq:minimaxiwithexp} dominate. On the other hand, if $\rho < 1$, we have
\begin{equation}
    \frac{d^2}{(e^{\rho^2} - 1)n^2} = \Theta \left( \frac{d^2}{\rho^2 n^2} \right).
\end{equation}
Thus, for any value of $\rho > 0$, we have that
\begin{equation}\label{eq:thesislowerboundapp}
    \inf_{\mathcal M \in \mathbb M} \sup_{\norm{\theta^*}_2 < 1} \E \left[ \mathcal R(\mathcal M(X, Y)) \right] = \Omega \left( \frac{d}{n} + \frac{d^2}{n^2 \rho^2} \right),
\end{equation}
which gives the desired result when taking sequentially the limits $d / n \to \gamma$ and $\gamma \to 0$. \qed

\section{Proofs for Section \ref{sec:scalinglaws}}\label{app:scalinglaws}


In this section, we always assume Assumptions \ref{ass:data}, \ref{ass:learning_rate_schedules} and \ref{ass:power_law_1} to hold. We consider the limit $\gamma \to 0$, and use the notation $f(\gamma) \asymp g(\gamma)$ to indicate two quantities such that $f(\gamma) = \Theta_\gamma(g(\gamma))$, where the notation $\Theta_\gamma$ does not track constants depending on $\phi$, $\psi$, $\zeta$, $\norm{\theta^*}_2^2$, and $\alpha$. Similarly, we will consider the notations $\gtrsim$ and $\lesssim$ for $\Omega_\gamma(\cdot)$ and $O_\gamma(\cdot)$ respectively.
Furthermore, unless differently specified, we will always consider schedules such that $\tilde \eta(t) \leq 2 / \gamma$ uniformly in $t$, so that $\bar \eta (t) = \tilde \eta(t)$.

We have
\begin{equation}\label{eq:GammaF}
\Gamma(t) = \int_0^t \tilde \eta(s) \mu_c(R(s)) \diff s, \qquad F(t) :=  \int_0^t \tilde \eta(s) \diff s,
\end{equation}
where we defined the shorthand $F(t)$. Then, \eqref{eq:implicitRt} reads
\begin{equation}\label{eq:Rtapp}
     R (t) = \cF(\Gamma(t)) + \int_0^t \left( \tilde \eta^2(s) \nu_c(R(s)) (R(s) + \zeta^2/2) \gamma \mathcal{K}(\Gamma(t)-\Gamma (s))  + 2 c^2 \tilde \sigma^2(s) \gamma^2 \mathcal J (\Gamma(t)-\Gamma (s)) \right) \diff s.
\end{equation}
Within this section, for convenience, we also introduce the shorthand $v = c \tilde \eta(0)$.

\begin{proposition}\label{prop:FK}
    Let $\cF(x)$, $\cK(x)$, $\mathcal J(x)$ be defined according to \eqref{eq:cFcK}. Consider the shorthands
    \begin{equation}
        \kappa_1 = 2 - \phi - \psi, \qquad  \kappa_2 = 3 - \phi, \qquad \kappa_3 = 2 - \phi,
    \end{equation}
    so that $\kappa_1 > 1$, $\kappa_2 > 2$, and $\kappa_3 > 1$ due to Assumption \ref{ass:power_law_1}.
    Then, we have that $\cF(x) \asymp x^{-\kappa_1}$, $\mathcal{K}(x) \asymp x^{-\kappa_2}$, and $\mathcal{J}(x) \asymp x^{-\kappa_3}$ uniformly for $x \geq 1$, and $\cF(x) \asymp \mathcal{K}(x) \asymp \mathcal J(x) \asymp 1$ uniformly for $x < 1$.
\end{proposition}
\begin{proof}
    Due to Assumption \ref{ass:power_law_1}, we have that, for $d \to \infty$,
    \begin{equation}
    \cF(x) \to \int p(\lambda) \lambda^{-\psi + 1} e^{- 2 \lambda x}\diff \lambda, \qquad \mathcal{K}(x) \to  \int p(\lambda) \lambda^2 e^{- 2 \lambda x}\diff \lambda, \qquad \mathcal{J}(x) \to  \int p(\lambda) \lambda e^{- 2 \lambda x}\diff \lambda.
    \end{equation}
    Then, for the second part of the statement, notice that $\cF(x)$, $\mathcal K(x)$ and $\mathcal J(x)$ are decreasing in $x$, and a direct computation for $x \in  \{0 , 1\}$ yields the desired result. For $x \geq 1$, the desired result is given by the same approach used in Lemma D.5 in \cite{collins2024high}.
\end{proof}




\begin{lemma}\label{lemma:approxforODElaw}
    Assume $c \lesssim 1$ and $\tilde \eta(t) \leq 2 / \gamma$ uniformly in $t \in [0, 1]$. Then, we have
    \begin{equation}\label{eq:boundsfromGammatocF}
    \begin{aligned}
        R (t) &\asymp \cF(\Gamma(t)) + \int_0^t \left(\tilde \eta^2(s) c^2 \gamma  \mathcal{K}(\Gamma(t)-\Gamma (s)) + c^2 \tilde \sigma^2(s) \gamma^2  \mathcal{J}(\Gamma(t)-\Gamma (s)) \right)\diff s \\
        &\gtrsim \cF(c F(t)) + \int_0^t \left(\tilde \eta^2(s) c^2 \gamma \mathcal{K} \left(c F(t) - c F(s) \right) + c^2 \tilde \sigma^2(s) \gamma^2  \mathcal{J} \left (c F(t)- c F(s) \right) \right)\diff s \\
        &\gtrsim \cF(c F(t)).
    \end{aligned}
    \end{equation}
    
    Furthermore, if $R(t)$ is uniformly upper bounded by a constant not dependent on $\gamma$, we have
    \begin{equation}\label{eq:exactfromGammatocF}
        R (t) \asymp \cF(c F(t)) + \int_0^t \left(\tilde \eta^2(s) c^2 \gamma \mathcal{K} \left(c F(t) - c F(s) \right) + c^2 \tilde \sigma^2(s) \gamma^2  \mathcal{J} \left (c F(t)- c F(s) \right) \right)\diff s.
    \end{equation}
    
\end{lemma}
\begin{proof}
By Lemma \ref{lemma:munubounds}, we have that $\mu_c(R) \lesssim c$. Then, multiplying both sides by $\tilde \eta(s)$ and integrating from $s$ to $t$, we get 
\begin{equation}\label{eq:GammatoF}
    \Gamma(t) - \Gamma(s) \lesssim c \left( F(t) - F(s) \right),
\end{equation}
and $\Gamma(t) \lesssim c F(t)$ when setting $s = 0$. Then, since $\cF$, $\mathcal K$ and $\mathcal J$ are all uniformly $\asymp$ to decreasing functions, we have
\begin{equation}\label{eq:ineqfromGammatoF}
\begin{aligned}
    \cF(\Gamma(t)) &\gtrsim \cF(c F(t)) \\
    \mathcal{K} \left ( \Gamma (t)- \Gamma(s) \right) &\gtrsim \mathcal{K} \left ( c F(t)- c F(s) \right) \\
    \mathcal{J} \left (\Gamma (t)- \Gamma(s)  \right) &\gtrsim \mathcal{J} \left (c F(t)- c F(s) \right).
\end{aligned}
\end{equation}
Furthermore, again due to Lemma \ref{lemma:munubounds}, since $c \lesssim 1$, we have $c^2 \asymp \nu_c(R(s)) (R(s) + \zeta^2/2)$. Thus, since all terms on the RHS of \eqref{eq:Rtapp} are positive, and dropping numerical constants, we get the first part of the thesis.

For the second part of the thesis, notice that Lemma \ref{lemma:munubounds} gives
\begin{equation}
    \frac{c}{\sqrt{\zeta^2 + 2 R}} \lesssim \mu_c(R) \lesssim c,
\end{equation}
which implies that, if $\sup_t R(t) \lesssim 1$, the relations $\lesssim$ in \eqref{eq:GammatoF} can be replaced by $\asymp$. Thus, the thesis follows from Proposition \ref{prop:FK}, since we have $\cF(x) \asymp \cF(ax)$ if $a \asymp 1$, with the same relation holding for $\mathcal K$ and $\mathcal J$.
\end{proof}



\begin{lemma}\label{lemma:systemcomparison}
    Assume $c \lesssim 1$ and $\tilde \eta(t) \leq 2 / \gamma$ uniformly in $t \in [0, 1]$. Then, if
    \begin{equation}\label{eq:Rprimestatement}
    \begin{aligned}
         \cF(c F(t)) + \int_0^t \left(\tilde \eta^2(s) c^2 \gamma \mathcal{K} \left(c F(t) - c F(s) \right) + c^2 \tilde \sigma^2(s) \gamma^2  \mathcal{J} \left (c F(t)- c F(s) \right) \right)\diff s \lesssim 1,
    \end{aligned}
    \end{equation}
    uniformly in $t \in [0, 1]$, we have that
    \begin{equation}
    \begin{aligned}
         R(t) \asymp \cF(c F(t)) + \int_0^t \left(\tilde \eta^2(s) c^2 \gamma \mathcal{K} \left(c F(t) - c F(s) \right) + c^2 \tilde \sigma^2(s) \gamma^2  \mathcal{J} \left (c F(t)- c F(s) \right) \right)\diff s.
    \end{aligned}
    \end{equation}
\end{lemma}
\begin{proof}
    Define the following system of ODEs
    \begin{equation}\label{eq:ODEiprime}
        \diff D_i'(t) = - \lambda_i \tilde {\eta}(t) c D'_i(t) \diff t + \lambda_i \tilde {\eta}^2(t) c^2 \gamma \diff t + c^2 \tilde \sigma^2(t) \gamma^2 \diff t,
    \end{equation}
    with $D'_i(0) = D_i(0) = d \left(\omega_i^\top \theta^* \right)^2 / 2$, and $R'(t) = \frac{1}{d} \sum_{i=1}^d \lambda_i D'_i(t)$. Then, following the same steps as in \eqref{eq:ODEi_sol} and \eqref{eq:implicitRt} we have that $R'(t)$ is equal to the LHS in \eqref{eq:Rprimestatement}.

    By hypothesis, there exists a large enough constant $C_1$ such that $R'(t) \leq C_1$ for all $t \in [0, 1]$. Then, consider the new system of ODEs
    \begin{equation}\label{eq:ODEiprimebar}
        \diff \underline D_i'(t) = - \lambda_i \tilde {\eta}(t) \sqrt{2 C_1 + \zeta^2} \frac{c}{\sqrt{2 \underline R'(t) + \zeta^2}} \underline D'_i(t) \diff t + \lambda_i \tilde {\eta}^2(t) c^2 \gamma \diff t + c^2 \tilde \sigma^2(t) \gamma^2 \diff t,
    \end{equation}
    with $\underline D'_i(0) = D'_i(0)$, and $\underline R'(t) = \frac{1}{d} \sum_{i=1}^d \lambda_i \underline D'_i(t)$. Then, by ODE comparison, we have $\underline R'(t) \leq R'(t)$ for all $t \in [0, 1]$. This follows from the same argument as in Theorem 1.3 in \cite{teschl2012}, since $R'(t)$ ($\underline R'(t)$) is non-decreasing in $D_i'(t)$ ($\underline D_i'(t)$) for all $i$.

    Due to Lemma \ref{lemma:munubounds}, since $c \lesssim 1$, we have that there exists a sufficiently large constant $C_2$ such that $\mu_c(R) C_2 \geq c / \sqrt{2 R + \zeta^2}$. Then, by ODE comparison, integrating as in \eqref{eq:ODEi_sol} and \eqref{eq:implicitRt}, and denoting $C_3 = C_2 \sqrt{2 C_1 + \zeta^2}$, we get
    \begin{equation}
    \begin{aligned}
        \underline R'(t) &\geq \cF(C_3 \Gamma(t)) + \int_0^t \left(\tilde \eta^2(s) c^2 \gamma \mathcal{K} \left(C_3 \Gamma(t) - C_3 \Gamma(s) \right) + c^2 \tilde \sigma^2(s) \gamma^2  \mathcal{J} \left (C_3 \Gamma(t)- C_3 \Gamma(s) \right) \right)\diff s \\
        &\asymp R(t),
    \end{aligned}
    \end{equation}
    where the last step holds since $c^2 \asymp \nu_c(R(s)) (R(s) + \zeta^2/2)$ (due to Lemma \ref{lemma:munubounds} since $c \lesssim 1$), and since $\cF(x) \asymp \cF(ax)$ if $a \asymp 1$ (with the same relation holding for $\mathcal K$ and $\mathcal J$) due to Proposition \ref{prop:FK}.

    Thus, we have
    \begin{equation}
        R(t) \gtrsim R'(t) \geq \underline R'(t) \gtrsim R(t),
    \end{equation}
    where the first inequality is a consequence of Lemma \ref{lemma:approxforODElaw}, which directly yields $R(t) \asymp R'(t)$, proving the thesis.
\end{proof}

\begin{lemma}\label{lemma:scalinglawpolyalpha}
    Let $\tilde \eta(t) = \tilde \eta(0) (1 - t)^\alpha$, with $\alpha \in \{0 , 1/2\}$ or $\alpha \geq 1$. Assume $c \lesssim 1$, $\tilde \eta(0) \leq 2 / \gamma$, $v = \tilde \eta(0) c = \omega_\gamma(1)$.
    Then, we have that 
    \begin{equation}\label{eq:riskwithgv}
        R(1) + \frac{2 c^2 \tilde \eta^2(1) \gamma^2}{\rho^2} \gtrsim v^{-\kappa_1} + \gamma v^{\frac{1}{\alpha + 1}} + \frac{\gamma^2}{\rho^2} g(v),
    \end{equation}
    where we introduced
    \begin{equation}\label{eq:casescostofprivacystatement}
      g(v) = \left\{\begin{array}{cc}
      \displaystyle  v^{\frac{2}{\alpha + 1}} & \mbox{if } \; \displaystyle  \phi < \frac{2}{\alpha + 1}, \vspace{0.3cm} \\
      \displaystyle  v^\phi & \mbox{if } \; \displaystyle  \phi > \frac{2}{\alpha + 1}, \vspace{0.3cm} \\
      \displaystyle  v^\phi \ln v & \mbox{if } \; \displaystyle  \phi = \frac{2}{\alpha + 1}.
    \end{array}\right.
    \end{equation}

    Furthermore, the relation $\gtrsim$ in \eqref{eq:riskwithgv} becomes $\asymp$ in the following cases: \emph{(i)} $\alpha = 0$, or \emph{(ii)} $\alpha = 1/2$ and $\gamma^2 v^{4 / 3} / \rho^2 \lesssim 1$, or \emph{(iii)} $\alpha \geq 1$ and $\gamma^2v / \rho^2 \lesssim 1$.
\end{lemma}
\begin{proof}
By Lemma \ref{lemma:approxforODElaw}, we have that
\begin{equation}\label{eq:Rtappnew}
    R (t) \gtrsim \cF(c F(t)) + \int_0^t \left(\tilde \eta^2(s) c^2 \gamma \mathcal{K} \left(c F(t) - c F(s) \right) + c^2 \tilde \sigma^2(s) \gamma^2  \mathcal{J} \left (c F(t)- c F(s) \right) \right)\diff s,
\end{equation}
where $\gtrsim$ can be replaced by $\asymp$ if the RHS is shown to be $\lesssim 1$ for all $t \in [0, 1]$, due to Lemma \ref{lemma:systemcomparison}.

Let us focus on the first term of the RHS of \eqref{eq:Rtappnew}. Due to \eqref{eq:polynomialsched}, \eqref{eq:GammaF} gives
\begin{equation}\label{eq:integralschedule}
    c F(t) = \frac{v}{(\alpha + 1)} (1 - (1 - t)^{\alpha + 1}).
\end{equation}
Then, due to Proposition \ref{prop:FK}, we have
\begin{equation}\label{eq:N1scaling}
    \cF(c F(1)) \asymp  v^{- \kappa_1},
\end{equation}
with $\cF(c F(t)) \lesssim 1$ for all $t \in [0, 1]$.

Let us now focus on the the second term of the RHS of \eqref{eq:Rtappnew}, i.e.,
\begin{equation}\label{eq:N1}
    N_2(t) :=  \gamma c^2 \int_{0}^t \tilde \eta^2(s) \cK (cF(t) - cF(s)) \diff s.
\end{equation}
Consider the change of variable $u = cF(t) - cF(s)$. This yields
\begin{equation}\label{eq:changeofvariable1}
    N_2(t) = \gamma c \int_{0}^{c F(t)} \tilde \eta(s(u)) \cK(u) \diff u.
\end{equation}
Furthermore, \eqref{eq:integralschedule} yields
\begin{equation}\label{eq:Cu}
    u = \frac{v}{\alpha + 1} \left( (1 - s)^{\alpha + 1} - (1 - t)^{\alpha + 1} \right), \qquad s = 1 - \left( \frac{u (\alpha + 1)}{v} + (1 - t)^{\alpha + 1} \right)^{\frac{1}{\alpha + 1}}.
\end{equation}
Then, plugging the last relation in \eqref{eq:polynomialsched} we get
\begin{equation}
    c \tilde \eta(s(u)) = v \left( \frac{u (\alpha + 1)}{v} + (1 - t)^{\alpha + 1} \right)^{\frac{\alpha}{\alpha + 1}}.
\end{equation}
Plugging this in \eqref{eq:changeofvariable1} we obtain
\begin{equation}
    N_2(t) = \gamma v^{\frac{1}{\alpha + 1}} \int_{0}^{c F(t)} \left(u (\alpha + 1) + v (1 - t)^{\alpha + 1} \right)^{\frac{\alpha}{\alpha + 1}} \cK(u) \diff u.
\end{equation}
Using Proposition \ref{prop:FK}, and the fact that $c F(1) \asymp v = \omega_\gamma(1)$, we have
\begin{equation}\label{eq:N2scaling}
\begin{aligned}
    N_2(1) &\asymp \gamma v^{\frac{1}{\alpha + 1}} \left( \int_{0}^{1} u^{\frac{\alpha}{\alpha + 1}} \cK(u) \diff u + \int_{1}^{c F(1)} u^{\frac{\alpha}{\alpha + 1}} \cK(u) \diff u \right) \\
    &\asymp \gamma v^{\frac{1}{\alpha + 1}} \left( \int_{0}^{1} u^{\frac{\alpha}{\alpha + 1}} \diff u + \int_{1}^{c F(1)} u^{\frac{\alpha}{\alpha + 1}} u^{- \kappa_2} \diff u \right) \\
    &\asymp \gamma v^{\frac{1}{\alpha + 1}},
\end{aligned}
\end{equation}
where the last line holds since $\frac{\alpha}{\alpha + 1} - \kappa_2 < -1$ for any $\alpha \geq 0$, since $\kappa_2 > 2$. Notice that, with a similar computation, we also have
\begin{equation}\label{eq:N2bound}
    N_2(t) \lesssim \gamma v \lesssim 1,
\end{equation}
uniformly in $t \in [0, 1]$, since $c \lesssim 1$ and $\tilde \eta(0) \leq 2 / \gamma$ by hypothesis.

Let us now focus on the third term in the RHS of \eqref{eq:Rtappnew} for $\alpha > 0$:
\begin{equation}
    N_3(t) := \gamma^2 c^2 \int_{0}^t \tilde \sigma^2(s) \mathcal J (c F(t) - c F(s)) \diff s,
\end{equation}
where we have
\begin{equation}
    \rho^2 \tilde \sigma^2(s) = 2 \alpha \tilde \eta(0)^2 (1 - s)^{2 \alpha - 1}.
\end{equation}
The change of variable $u = cF(t) - cF(s)$ yields
\begin{equation}\label{eq:changeofvariable1N3}
    N_3(t) = \gamma^2 c \int_{0}^{c F(t)} \frac{\tilde \sigma^2(s(u))}{\tilde \eta(s(u))} \mathcal J (u) \diff u \asymp  \frac{\gamma^2}{\rho^2} v \int_{0}^{c F(t)} (1 - s(u))^{\alpha - 1} \mathcal J (u) \diff u,
\end{equation}
which gives, plugging in \eqref{eq:Cu},
\begin{equation}\label{eq:N3scalingt}
    N_3(t) \asymp \frac{\gamma^2}{\rho^2} v \int_{0}^{c F(t)} \left( \frac{u}{v} + (1 - t)^{\alpha + 1} \right)^{\frac{\alpha - 1}{\alpha + 1}} \mathcal J (u) \diff u.
\end{equation}

Using Proposition \ref{prop:FK}, we have
\begin{equation}\label{eq:N3scaling}
\begin{aligned}
    N_3(1) &\asymp \frac{\gamma^2}{\rho^2} v^{\frac{2}{\alpha + 1}} \left( \int_{0}^{1} u^{\frac{\alpha - 1}{\alpha + 1}} \mathcal J(u) \diff u + \int_{1}^{c F(1)} u^{\frac{\alpha - 1}{\alpha + 1}} \mathcal J(u) \diff u \right) \\
    &\asymp \frac{\gamma^2}{\rho^2} v^{\frac{2}{\alpha + 1}} \left( \int_{0}^{1} u^{\frac{\alpha - 1}{\alpha + 1}} \diff u + \int_{1}^{c F(1)} u^{\frac{\alpha - 1}{\alpha + 1}} u^{- \kappa_3} \diff u \right).
\end{aligned}
\end{equation}
Notice that, since $c F(1) \asymp v = \omega_\gamma(1)$ and $\kappa_3 = 2 - \phi$, we have
\begin{equation}
    \int_{1}^{c F(1)} u^{\frac{\alpha - 1}{\alpha + 1}} u^{- \kappa_3} \diff u \asymp \left\{\begin{array}{cc}
      \displaystyle 1 & \mbox{if } \; \displaystyle \frac{\alpha - 1}{\alpha + 1} + \phi < 1, \vspace{0.3cm} \\
      \displaystyle v^{\frac{\alpha - 1}{\alpha + 1} + \phi - 1} & \mbox{if } \; \displaystyle \frac{\alpha - 1}{\alpha + 1} + \phi > 1, \vspace{0.3cm} \\
      \displaystyle \ln v & \mbox{if } \; \displaystyle \frac{\alpha - 1}{\alpha + 1} + \phi = 1.
    \end{array}\right.
\end{equation}
This yields, since $\int_{0}^{1} u^{\frac{\alpha - 1}{\alpha + 1}} \diff u \asymp 1$ for $\alpha > 0$,
\begin{equation}\label{eq:casescostofprivacy}
      N_3(1) \asymp \left\{\begin{array}{cc}
      \displaystyle \frac{\gamma^2}{\rho^2} v^{\frac{2}{\alpha + 1}} & \mbox{if } \; \displaystyle \frac{\alpha - 1}{\alpha + 1} + \phi < 1, \vspace{0.3cm} \\
      \displaystyle \frac{\gamma^2}{\rho^2} v^\phi & \mbox{if } \; \displaystyle \frac{\alpha - 1}{\alpha + 1} + \phi > 1, \vspace{0.3cm} \\
      \displaystyle \frac{\gamma^2}{\rho^2} v^{\phi} \ln v & \mbox{if } \; \displaystyle \frac{\alpha - 1}{\alpha + 1} + \phi = 1.
    \end{array}\right.
\end{equation}
Thus, if $\alpha \geq 1$, it is sufficient to have $\gamma^2 v / \rho^2 \lesssim 1$ to guarantee that $N_3(1) \lesssim 1$ (recall that $\phi < 1)$. Notice that, a direct computation from \eqref{eq:N3scalingt} shows that this condition is sufficient to guarantee $N_3(t) \lesssim 1$ uniformly in $t \in [0, 1]$. 

In the case $\alpha = 1/2$, only the first case in \eqref{eq:casescostofprivacy} is possible, and this gives the condition $\gamma^2 v^{4/3} / \rho^2 \lesssim 1$ to guarantee that $N_3(1) \lesssim 1$ (and this is also sufficient to guarantee $N_3(t) \lesssim 1$ uniformly in $t \in [0, 1]$).

Thus, the first part of the thesis follows from plugging \eqref{eq:N1scaling}, \eqref{eq:N2scaling} and \eqref{eq:casescostofprivacy} in \eqref{eq:Rtappnew}, since for $\alpha = 0$ we have $v = c \tilde \eta(1)$, and $\tilde \eta(1) = 0$ for $\alpha > 0$.

The second part of the thesis follows from the fact that the RHS of \eqref{eq:Rtappnew} is uniformly $\lesssim 1$ due to \eqref{eq:N1scaling} and \eqref{eq:N2bound} for \emph{(i)} $\alpha = 0$, or \emph{(ii)} $\alpha = 1/2$ and $\gamma^2 v^{4/3} / \rho^2 \lesssim 1$, or \emph{(iii)} $\alpha \geq 1$ and $\gamma^2 v / \rho^2 \lesssim 1$, due to the discussion following \eqref{eq:casescostofprivacy}.
\end{proof}

\begin{lemma}\label{lemma:optimalscalinglaws}
    Let $\tilde \eta(t) = \tilde \eta(0) (1 - t)^\alpha$, with $\alpha = \{0 , 1/2\}$ and $\alpha \geq 1$, such that $\tilde \eta(0) \leq 2 / \gamma$, and denote $v = \tilde \eta(0) c$.
    Assume $\rho \asymp \gamma^b$ with $b < 1$ independent of $\gamma$.

    Suppose $\phi (\alpha + 1) < 2$. Set $c \lesssim 1$ and $c \tilde \eta(0) = \gamma^a$, with
    \begin{equation}\label{eq:hyperparamscalinglawslowphilemma}
        a = \left\{\begin{array}{cc}
          \displaystyle - \frac{\alpha+1}{K+1} & \mbox{if } \; \displaystyle b\le\frac{K}{2 (K+1)}, \vspace{0.3cm} \\
          \displaystyle - \frac{2(1 - b)(\alpha + 1)}{K + 2} & \mbox{if } \; \displaystyle b > \frac{K}{2 (K+1)},
        \end{array}\right.
    \end{equation}
    where $K = (2 - \phi - \psi) (\alpha + 1)$. Then, we have that $R(1) + 2 c^2 \tilde \eta^2(1) \gamma^2 / \rho^2 \asymp \gamma^h$, with
    \begin{equation}\label{eq:scalinglawslowphilemma}
        h = \left\{\begin{array}{cc}
         \displaystyle \frac{K}{K+1}  & \mbox{if } \; \displaystyle b\le\frac{K}{2 (K+1)}, \vspace{0.3cm} \\
         \displaystyle \frac{2 K (1 - b)}{K + 2} & \mbox{if } \; \displaystyle b > \frac{K}{2 (K+1)}.
          \end{array}\right.
    \end{equation}

    Suppose $\phi (\alpha + 1) \geq 2$. Set $c \lesssim 1$, and $c \tilde \eta(0) = \gamma^a$. If
    \begin{equation}\label{eq:thresholdbhighphilemma}
         b \leq 1 - \frac{(\kappa_1 + \phi)(\alpha + 1)}{2 (K + 1)},
    \end{equation}
    then set $a$ as in the first case of \eqref{eq:hyperparamscalinglawslowphilemma} to achieve the same $h$ as in the first case of \eqref{eq:scalinglawslowphilemma}. If \eqref{eq:thresholdbhighphilemma} does not hold, then set
    \begin{equation}
        a = - \frac{2 (1 - b)}{\kappa_1 + \phi},
    \end{equation}
    which gives $R(1) + 2 c^2 \tilde \eta^2(1) \gamma^2 / \rho^2 \asymp \gamma^h$, with
    \begin{equation}
        h = \left\{\begin{array}{cc}
         \displaystyle \frac{2 \kappa_1 (1 - b)}{\kappa_1 + \phi} & \mbox{if } \; \displaystyle \phi (\alpha + 1) > 2, \vspace{0.3cm} \\
         \displaystyle \frac{2 \kappa_1 (1 - b)}{\kappa_1 + \phi} + \frac{\ln(a \ln \gamma)}{\ln \gamma} & \mbox{if } \; \displaystyle \phi (\alpha + 1) = 2.
          \end{array}\right.
    \end{equation}

    Furthermore, for each value of $\alpha$, we have $R(1) + 2 c^2 \tilde \eta^2(1) \gamma^2 / \rho^2 \gtrsim \gamma^h$ for all choices of $c$ and all choices of $a$ independent of $\gamma$.
\end{lemma}
\begin{proof}
Consider the case $c = \omega_\gamma(1)$. Due to Lemma \ref{lemma:munubounds} and the same argument used in \eqref{eq:munubigc}, we have that
\begin{equation}
\begin{aligned}
    R (t) &\gtrsim \cF(\Gamma(t)) + \int_0^t \left(\tilde \eta^2(s) \gamma \mathcal{K}(\Gamma(t) - \Gamma (s))  + c^2 \tilde \sigma^2(s) \gamma^2 \mathcal{J}(\Gamma(t) - \Gamma (s))  \right) \diff s \\
    &\gtrsim \cF(F(t)) + \int_0^t \left(\tilde \eta^2(s) \gamma \mathcal{K}(F(t) - F(s))  + c^2 \tilde \sigma^2(s) \gamma^2 \mathcal{J}(F(t) - F(s))  \right) \diff s \\
    &\gtrsim \cF(F(t)) + \int_0^t \left(\tilde \eta^2(s) \gamma \mathcal{K}(F(t) - F(s))  + \tilde \sigma^2(s) \gamma^2 \mathcal{J}(F(t) - F(s))  \right) \diff s,
\end{aligned}
\end{equation}
where the the second step holds for an argument similar to the one in the proof of Lemma \ref{lemma:approxforODElaw}, as $\mu_c(R) < 1$. Then, the risk at all times is lower bounded by an expression which is identical in form to the setting $c \asymp 1$, which implies that the smallest rates for $R(1) + 2 c^2 \tilde \eta^2(1) \gamma^2 / \rho^2$ can be achieved for $c \lesssim 1$. Furthermore, if $c \lesssim 1$ and $v \lesssim 1$, \eqref{eq:N1scaling} gives $R(1) \gtrsim 1$, which is a vacuous rate.

Then, let us focus on $c \lesssim 1$ and $v = \omega_\gamma(1)$. Then, Lemma \ref{lemma:scalinglawpolyalpha} gives the lower bound in \eqref{eq:riskwithgv}. During the proof, we will suppose this bound to be tight, and we will find the value of $a$ (with $v = \gamma^a$) that minimizes this lower bound. Later, we will verify that $a \in [-1, 0)$ (such that it is possible to consider $c \lesssim 1$ and $\tilde \eta(0) < 2 / \gamma$), $2 - 2b + 4a/3 \geq 0$ for $\alpha = 1/2$, and $2 - 2b + a \geq 0$ for $\alpha \geq 1$, guaranteeing that indeed the lower bound in \eqref{eq:riskwithgv} is tight and that the choice of $a$ is optimal.

\begin{itemize}
    \item Suppose $\phi (\alpha + 1) < 2$. Then, Lemma \ref{lemma:scalinglawpolyalpha} gives
    \begin{equation}\label{eq:firstcasees}
         R(1) + \frac{2 c^2 \tilde \eta^2(1) \gamma^2}{\rho^2} \gtrsim \gamma^{e_1} + \gamma^{e_2} + \gamma^{e_3},
    \end{equation}
    with
    \begin{equation}\label{eq:e1e2e3}
        e_1 = - a \kappa_1, \qquad e_2 = \frac{a}{\alpha + 1} + 1, \qquad e_3 = 2 - 2b + \frac{2 a}{\alpha + 1}.
    \end{equation}
    Then, the problem of hyper-parameter optimization reduces to finding $a$ 
    such that $\min (e_1, e_2, e_3)$ is maximized, giving the minimum lower bound on the risk.
    Consider the value of $a$ for which $e_1 = e_2$. This gives
    \begin{equation}\label{eq:optimalanonprivate}
        a = - \frac{\alpha + 1}{K + 1} \in [-1, 0], 
    \end{equation}
    where the inclusion follows from $\alpha \geq 0$ and $K > \alpha$ since $\kappa_1 > 1$. Note that, for this value of $a$, if $b \le \frac{K}{2 (K+1)}$, then we have
\begin{equation}\label{eq:optimalrisknonprivate}
        e_1 = e_2 = \frac{K}{K+1} \leq e_3.
    \end{equation}
    Then, since $e_1$ is monotonically decreasing in $a$ and $e_2$ and $e_3$ are increasing in $a$ (see Figures \ref{fig:maxmin-nonprivatecost}, \ref{fig:maxmin-nonprivatecostbigkappa} and \ref{fig:maxmin-nonprivatecostbigalpha} for a graphical presentation of this case), \eqref{eq:optimalanonprivate} gives the optimal choice of $a$ and \eqref{eq:optimalrisknonprivate} the corresponding lower bound on the risk. Furthermore, note that, in this case, we have
    \begin{equation}
        2 - 2b \geq \frac{K + 2}{K + 1} = 1 + \frac{1}{K + 1} = 1 - \frac{a}{\alpha + 1}.
    \end{equation}
    Then, for $\alpha \geq 1$ this gives $2 - 2b + a \geq 0$, and for $\alpha = 1/2$ this gives $2 - 2b + 4a/3 \geq 0$, which implies that \eqref{eq:firstcasees} is tight, due to the second part of the thesis of Lemma \ref{lemma:scalinglawpolyalpha}.
    
    If $b > \frac{K}{2 (K+1)}$, the value of $a$ in \eqref{eq:optimalanonprivate} is such that $e_1 = e_2 \geq e_3$. Then, set $e_1 = e_3$, which yields
    \begin{equation}\label{eq:optimalaprivate}
        a = - \frac{2(1 - b)(\alpha + 1)}{K + 2} \in [-1, 0],
    \end{equation}
    where the inclusion follows from $b \leq 1$ and $\kappa_1 > 1$. Note that, for this value of $a$, since $b > \frac{K}{2 (K+1)}$ we have
    \begin{equation}\label{eq:optimalriskprivate}
        e_1 = e_3 = \frac{2 K (1 - b)}{K + 2} \leq e_2.
    \end{equation}
    Then, since $e_1$ is monotonically decreasing in $a$ and $e_2$ and $e_3$ are increasing in $a$ (see Figure \ref{fig:maxmin-privatecost} 
    for a graphical presentation of this case), \eqref{eq:optimalaprivate} gives the optimal choice of $a$ and \eqref{eq:optimalriskprivate} the corresponding lower bound on the risk. 
    Furthermore, in this case, we have that
    \begin{equation}
        2 - 2b + a = \frac{2(1 - b) (K + 1 - \alpha) }{K + 2} > 0,
    \end{equation}
    where the last step holds since $K > \alpha$. If $\alpha = 1/2$, we have that $2 - 2b + 4a / 3 > 0$ since $3 \kappa_1 / 2 + 2 > 3 / 2$, which implies that \eqref{eq:firstcasees} is tight due to the second part of the thesis of Lemma \ref{lemma:scalinglawpolyalpha}. This concludes the argument for the case $\phi (\alpha + 1) < 2$.

\begin{figure}[t]
\centering
\begin{minipage}{0.48\textwidth}
  \centering
  \includegraphics[width=\linewidth]{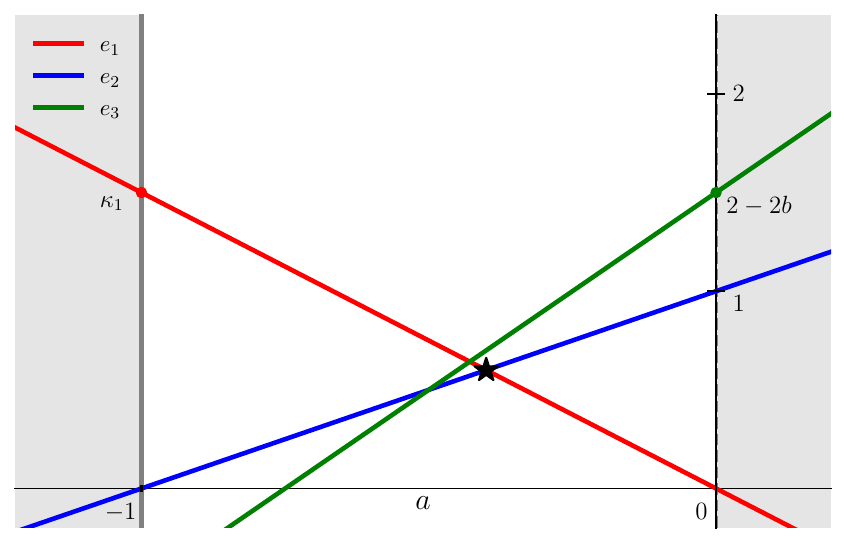}
  \caption{Plots of $e_1$, $e_2$ and $e_3$ as in \eqref{eq:e1e2e3} as a function of $a$ for $\kappa_1=1.5$, $\alpha=0$, $b=0.25$. Privacy comes for free, since $\argmax (\min(e_1, e_2, e_3)) = \argmax (\min(e_1, e_2))$.}
  \label{fig:maxmin-nonprivatecost}
\end{minipage}\hfill
\begin{minipage}{0.48\textwidth}
  \centering
  \includegraphics[width=\linewidth]{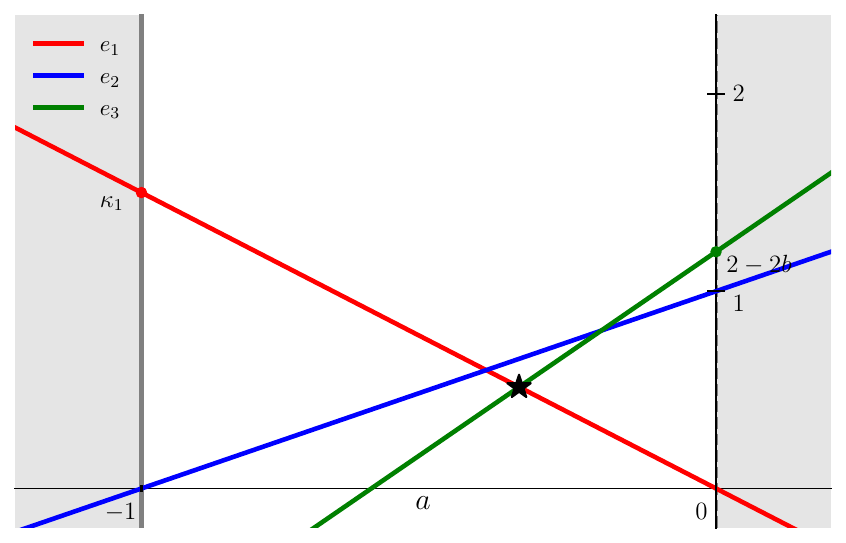}
  \caption{Plots of $e_1$, $e_2$ and $e_3$ as in \eqref{eq:e1e2e3} as a function of $a$ for $\kappa_1=1.5$, $\alpha=0$, $b=0.4$. Privacy comes with a performance cost, since $\argmax (\min(e_1, e_2, e_3)) = \argmax (\min(e_1, e_3))$.}
  \label{fig:maxmin-privatecost}
\end{minipage}
\end{figure}

    \item Suppose $\phi (\alpha + 1) > 2$. Then, Lemma \ref{lemma:scalinglawpolyalpha} gives that \eqref{eq:firstcasees} holds with
    \begin{equation}
        e_1 = - a \kappa_1, \qquad e_2 = \frac{a}{\alpha + 1} + 1, \qquad e_3 = 2 - 2b + a \phi.
    \end{equation}
    As before, set $a$ for which $e_1 = e_2$. This gives the result in \eqref{eq:optimalanonprivate}, and if
    \begin{equation}\label{eq:criticalbphi}
        b \leq 1 - \frac{(\kappa_1 + \phi)(\alpha + 1)}{2 (K + 1)},
    \end{equation}
    we have that \eqref{eq:optimalrisknonprivate} holds. Notice that the case $\phi(\alpha+1)>2$ cannot happen for either $\phi \leq 0$ or $\alpha \leq 1$, and for $\alpha > 1$, we have
    \begin{equation}
        2 - 2b + a \geq \frac{(\kappa_1 + \phi)(\alpha + 1)}{K + 1} + a = \frac{(\kappa_1 + \phi - 1)(\alpha + 1)}{K + 1} = \frac{(1 - \psi)(\alpha + 1)}{K + 1} > 0,
    \end{equation}
    where the last step holds since $1 - \psi > \phi$ by Assumption \ref{ass:power_law_1}. Then, this implies that \eqref{eq:firstcasees} is tight due to the second part of the thesis of Lemma \ref{lemma:scalinglawpolyalpha}.
    
    If \eqref{eq:criticalbphi} does not hold, the value of $a$ in \eqref{eq:optimalanonprivate} is such that $e_1 = e_2 \geq e_3$. Then, set $e_1 = e_3$, which yields
    \begin{equation}\label{eq:optimalaprivatephi}
        a = - \frac{2(1 - b)}{\kappa_1 + \phi} \in [-1, 0],
    \end{equation}
    where the inclusion follows from $b \leq 1$ and the lower bound on $b$. Notice that, for this value of $a$ and interval of $b$ we have
    \begin{equation}\label{eq:optimalriskprivatephi}
        e_1 = e_3 = \frac{2 \kappa_1 (1 - b)}{\kappa_1 + \phi} \leq e_2.
    \end{equation}
    Then, since $e_1$ is monotonically decreasing in $a$ and $e_2$ and $e_3$ are increasing in $a$,
    \eqref{eq:optimalaprivatephi} gives the optimal choice of $a$ and \eqref{eq:optimalriskprivatephi} the corresponding lower bound on the risk. 
    Furthermore, in this case, we have that
    \begin{equation}
        2 - 2b + a = \frac{2(1 - b) (\kappa_1 + \phi - 1) }{\kappa_1 + \phi} > 0,
    \end{equation}
    where the last step holds since $\phi > 0$, which implies that \eqref{eq:firstcasees} is tight due to the second part of the thesis of Lemma \ref{lemma:scalinglawpolyalpha}. This proves the thesis for the case $\phi (\alpha + 1) > 2$.

    \item Suppose $\phi (\alpha + 1) = 2$. Then, Lemma \ref{lemma:scalinglawpolyalpha} gives that \eqref{eq:firstcasees} holds with
    \begin{equation}
        e_1 = - a \kappa_1, \qquad e_2 = \frac{a}{\alpha + 1} + 1, \qquad e_3 = 2 - 2b + a \phi - \frac{\ln( | a| \ln (1/ \gamma))}{\ln (1 / \gamma)}.
    \end{equation}
    Since the last term on the RHS  of the last expression is $o_\gamma(1)$, the case given by \eqref{eq:criticalbphi} is identical to the previous one.
    
    When instead \eqref{eq:criticalbphi} does not hold, setting $a$ as in \eqref{eq:optimalaprivatephi}
    is also optimal as $\gamma \to 0$, since $\ln (1 / \gamma) = \omega_\gamma \left( \ln( | a | \ln( 1 / \gamma) \right)$. Thus, the desired result follows.
\end{itemize}
\end{proof}


\begin{figure}[t]
\centering
\begin{minipage}{0.48\textwidth}
  \centering
  \includegraphics[width=\linewidth]{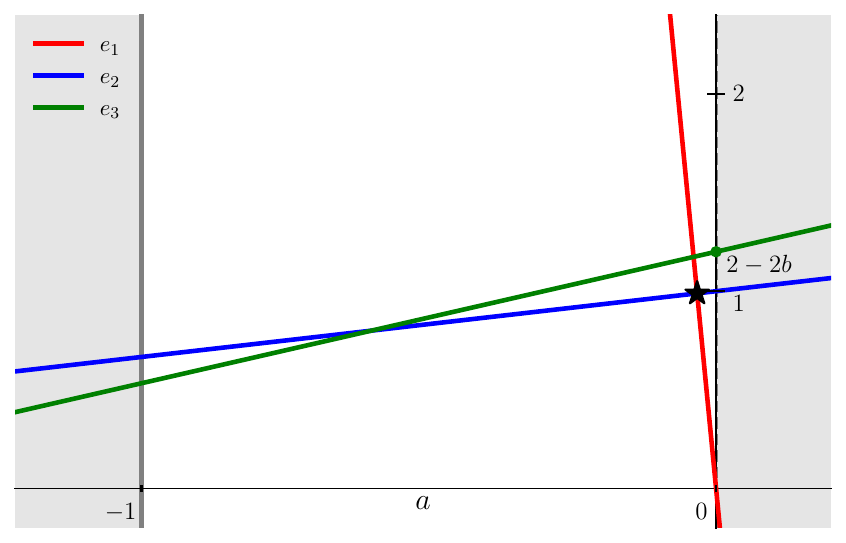}
  \caption{Plots of $e_1$, $e_2$ and $e_3$ as in \eqref{eq:e1e2e3} as a function of $a$ for $\kappa_1=30$, $\alpha=2$, $b=0.4$. Privacy comes for free, since $\argmax (\min(e_1, e_2, e_3)) = \argmax (\min(e_1, e_2))$.}
  \label{fig:maxmin-nonprivatecostbigkappa}
\end{minipage}\hfill
\begin{minipage}{0.48\textwidth}
  \centering
  \includegraphics[width=\linewidth]{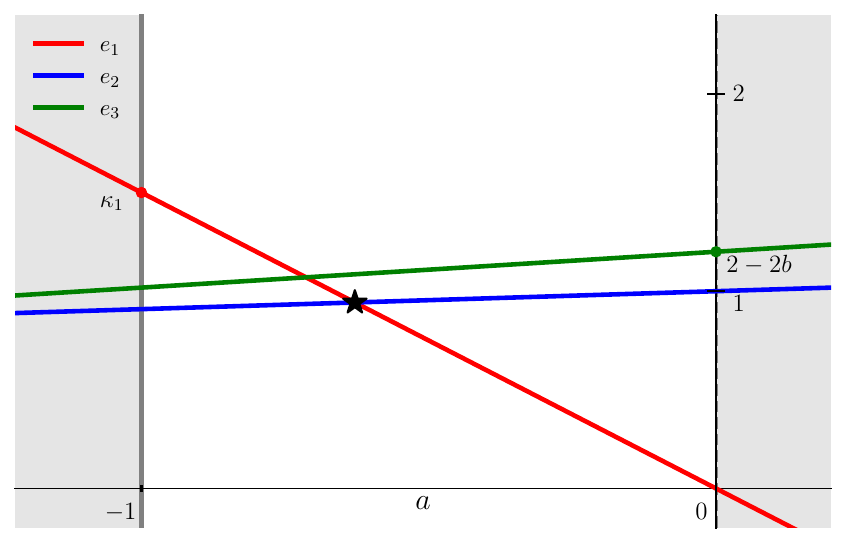}
  \caption{Plots of $e_1$, $e_2$ and $e_3$ as in \eqref{eq:e1e2e3} as a function of $a$ for $\kappa_1=1.5$, $\alpha=10$, $b=0.4$. Privacy comes for free, since $\argmax (\min(e_1, e_2, e_3)) = \argmax (\min(e_1, e_2))$.}
  \label{fig:maxmin-nonprivatecostbigalpha}
\end{minipage}
\end{figure}

\paragraph{Proof of Theorem \ref{thm:scalinglawsbody}.} The proof is a direct consequence of Lemma \ref{lemma:optimalscalinglaws} together with Theorem \ref{thm:deteq} and Proposition \ref{prop:laststeprisk}, after replacing $\kappa_1 = 2 - \phi - \psi$. \qed

\paragraph{Proof of Corollary \ref{cor:scalinglaws}.} If $\phi \leq 0$, for any $\alpha \geq 0$ we have that $\phi (\alpha + 1) \leq 0 < 2$. Then, taking $\alpha \to \infty$ (and consequently $K \to \infty$ as defined in Lemma \ref{lemma:optimalscalinglaws}), $h$ as defined in \eqref{eq:scalinglawslowphilemma} converges (from below) to $\min(1, 2 (1 - b))$, which gives the first part of the thesis.

If $\phi > 0$, it is still desirable to consider the limit $\alpha \to \infty$, as $h$ is non-decreasing in $\alpha$ (this can be seen from Lemma \ref{lemma:scalinglawpolyalpha}). Then, this setting reduces to the case $\phi (\alpha + 1) > 2$. As $\alpha\to\infty$, we have that the threshold in \eqref{eq:criticalbphi} approaches
\begin{equation}
    b^* = 1 - \frac{\kappa_1 + \phi}{2 \kappa_1} = \frac{1}{2} - \frac{\phi}{2 (2 - \phi - \psi)}
\end{equation}
from above. Furthermore, the exponent $h$ approaches 1 from below if $b \leq b^*$, and it is equal to $2 (1 - b) \frac{2 - \phi - \psi}{2 - \psi}$ otherwise, which concludes the proof. \qed

\end{document}